\newcommand{\arxiv}[1]{\iftoggle{icml}{}{#1}}
\newcommand{\icml}[1]{\iftoggle{icml}{#1}{}}
\global\togglefalse{icml}
\newcommand{\loose}{\looseness=-1}
\newcommand{\neutralize}[1]{\expandafter\let\csname c@#1\endcsname\count@}
\declaretheorem[name=Theorem,parent=section]{theorem}
\declaretheorem[name=Lemma,parent=section, numberlike=theorem]{lemma}
\declaretheorem[name=Definition, parent=section, numberlike=theorem]{definition}
\declaretheorem[name=Corollary, parent=section, numberlike=theorem]{corollary}
\declaretheorem[name=Remark, parent=section, numberlike=theorem]{remark}
\declaretheorem[name=Proposition, parent=section, numberlike=theorem]{proposition}
    \let\Cref\crtCref
    \let\cref\crtcref
  \renewenvironment{proof}[1][Proof]%
  {%
   \par\noindent{\bfseries\upshape {#1.}\ }%
  }%
  {\qed\newline}
\xpatchcmd{\proof}{\itshape}{\normalfont\proofnameformat}{}{}
\newcommand{\proofnameformat}{\bfseries}
\newcommand{\pfref}[1]{Proof of \cref{#1}}
\newcommand{\secshort}[1]{\texorpdfstring{\hyperref[#1]{\mbox{Sec. \ref*{#1}}}}{Sec. \ref*{#1}}}
\renewcommand{\eqref}[1]{\texorpdfstring{\hyperref[#1]{(\ref*{#1})}}{(\ref*{#1})}}
\newcommand{\lineref}[1]{\hyperref[#1]{Line~\ref*{#1}}}
\Crefname{assumption}{Assumption}{Assumptions}
\crefname{fact}{Fact}{Facts}
\DeclareDocumentCommand{\XDeclarePairedDelimiter}{mm}
 {
  \__egreg_delimiter_clear_keys: %
  \keys_set:nn { egreg/delimiters } { #2 }
  \use:x %
   {
    \exp_not:n {\NewDocumentCommand{#1}{sO{}m} }
     {
      \exp_not:n { \IfBooleanTF{##1} }
       {
        \exp_not:N \egreg_paired_delimiter_expand:nnnn
         { \exp_not:V \l_egreg_delimiter_left_tl }
         { \exp_not:V \l_egreg_delimiter_right_tl }
         { \exp_not:n { ##3 } }
         { \exp_not:V \l_egreg_delimiter_subscript_tl }
       }
       {
        \exp_not:N \egreg_paired_delimiter_fixed:nnnnn 
         { \exp_not:n { ##2 } }
         { \exp_not:V \l_egreg_delimiter_left_tl }
         { \exp_not:V \l_egreg_delimiter_right_tl }
         { \exp_not:n { ##3 } }
         { \exp_not:V \l_egreg_delimiter_subscript_tl }
       }
     }
   }
 }
\XDeclarePairedDelimiter{\supnorm}{
  left=\lVert,
  right=\rVert,
  subscript=\infty
  }
\definecolor{lightgray}{gray}{0.9}
\newcommand{\multiline}[1]{\parbox[t]{\dimexpr\linewidth-\algorithmicindent}{#1}}
\DeclareFontFamily{U}{jkpmia}{}
\DeclareFontShape{U}{jkpmia}{m}{it}{<->s*jkpmia}{}
\DeclareFontShape{U}{jkpmia}{bx}{it}{<->s*jkpbmia}{}
\DeclareMathAlphabet{\mathfrak}{U}{jkpmia}{m}{it}
\SetMathAlphabet{\mathfrak}{bold}{U}{jkpmia}{bx}{it}
\DeclarePairedDelimiter{\abr}{\lvert}{\rvert} %
\DeclarePairedDelimiter{\sbr}{[}{]}
\DeclarePairedDelimiter{\cbr}{\{}{\}}
\DeclarePairedDelimiter{\rbr}{(}{)}
\DeclarePairedDelimiter{\abs}{\lvert}{\rvert} %
\DeclarePairedDelimiter{\brk}{[}{]}
\DeclarePairedDelimiter{\crl}{\{}{\}}
\DeclarePairedDelimiter{\prn}{(}{)}
\let\Pr\undefined
\DeclareMathOperator{\En}{\mathbb{E}}
\DeclareMathOperator{\Pr}{Pr}
\DeclareMathOperator*{\argmin}{arg\,min} %
\DeclareMathOperator*{\argmax}{arg\,max}
\newcommand{\wt}[1]{\widetilde{#1}}
\newcommand{\wh}[1]{\widehat{#1}}
\def\ddefloop#1{\ifx\ddefloop#1\else\ddef{#1}\expandafter\ddefloop\fi}
\def\ddef#1{\expandafter\def\csname bb#1\endcsname{\ensuremath{\mathbb{#1}}}}
\def\ddefloop#1{\ifx\ddefloop#1\else\ddef{#1}\expandafter\ddefloop\fi}
\def\ddef#1{\expandafter\def\csname b#1\endcsname{\ensuremath{\mathbf{#1}}}}
\def\ddef#1{\expandafter\def\csname sf#1\endcsname{\ensuremath{\mathsf{#1}}}}
\def\ddef#1{\expandafter\def\csname c#1\endcsname{\ensuremath{\mathcal{#1}}}}
\def\ddef#1{\expandafter\def\csname h#1\endcsname{\ensuremath{\widehat{#1}}}}
\def\ddef#1{\expandafter\def\csname hc#1\endcsname{\ensuremath{\widehat{\mathcal{#1}}}}}
\def\ddef#1{\expandafter\def\csname t#1\endcsname{\ensuremath{\widetilde{#1}}}}
\def\ddef#1{\expandafter\def\csname tc#1\endcsname{\ensuremath{\widetilde{\mathcal{#1}}}}}
\def\ddefloop#1{\ifx\ddefloop#1\else\ddef{#1}\expandafter\ddefloop\fi}
\def\ddef#1{\expandafter\def\csname scr#1\endcsname{\ensuremath{\mathscr{#1}}}}
\newcommand{\veps}{\varepsilon}
\newcommand{\ldef}{\vcentcolon=}
\newcommand{\rdef}{=\vcentcolon}
\newcommand{\Left}{\textbf{Left}}
\newcommand{\Right}{\textbf{Right}}
\newcommand{\Center}{\textbf{Center}}
\newcommand{\monotonic}{scaling-monotonic\xspace}
\newcommand{\monotonicity}{scaling-monotonicity\xspace}
\newcommand{\pipes}{\pihat_{\texttt{Pes}}}
\newcommand{\pichis}{\pi^{\chi}_{\beta}}
\newcommand{\pikl}{\pi^{\texttt{KL}}_{\beta}}
\newcommand{\pichistil}{\wt{\pi}^{\chi}_{\beta}}
\newcommand{\Cmax}{\cC^{\texttt{max}}}
\newcommand{\Chat}{\Cone[\pihat]}
\newcommand{\pimax}{\pi_{\texttt{max}}}
\newcommand{\Cpar}{C^{\star}}
\newcommand{\querycomp}{query complexity\xspace}
\newcommand{\framework}{sample-and-evaluate\xspace}
\newcommand{\pref}{\texttt{pref}}
\newcommand{\bon}{\texttt{BoN}\xspace} 
\newcommand{\bonlong}{Best-of-N\xspace}
\newcommand{\gsmk}{\texttt{GSM8K}\xspace}
\newcommand{\mmlu}{\texttt{MMLU}\xspace}
\newcommand{\mathk}{\texttt{MATH}\xspace}
\newcommand{\qone}{\textrm{\textbf{Q1}}}
\newcommand{\qtwo}{\textrm{\textbf{Q2}}}
\newcommand{\mainalg}{\texttt{InferenceTimePessimism}\xspace}
\newcommand{\normalg}{\texttt{ComputeNormConstant}\xspace}
\newcommand{\mainalglong}{Inference-Time Pessimism\xspace}
\newcommand{\vepsrm}{\veps_{\texttt{RM}}}
\newcommand{\vepsrms}{\veps^2_{\texttt{RM}}}
\newcommand{\bonalg}{\texttt{BoN-Alignment}\xspace}
\newcommand{\bonalglong}{Best-of-N Alignment\xspace}
\newcommand{\Phihat}{\wh\Phi}
\newcommand{\Mlong}{rejection threshold\xspace}
\newcommand{\Edivlong}{$\cE_M$-divergence\xspace}
\newcommand{\wlong}{importance weight\xspace}
\newcommand{\sig}[2][\beta]{\relu\rbr*{{#1}^{-1}\rbr*{#2}}}
\newcommand{\rejalg}{\texttt{RejectionSampling}\xspace}
\newcommand{\rejalglong}{Rejection Sampling \xspace}
\newcommand{\rejalgNM}[1][w]{\texttt{RejectionSampling}_{N,M}({{#1}}\midsem{}\piref,x)}
\newcommand{\Ediv}[3]{\cE_{#1}\prn*{#2,#3}}
\newcommand{\Epi}[1][M]{\Ediv{{#1}}{\pi}{\piref}}
\newcommand{\Epix}[1][M]{\Ediv{{#1}}{\pi(x)}{\piref(x)}}
\newcommand{\Epistarx}[1][M]{\Ediv{{#1}}{\pistar(x)}{\piref(x)}}
\newcommand{\Epistx}[1][M]{\Epistarx[{#1}]}
\newcommand{\Epist}[1][M]{\Epistar[{#1}]}
\newcommand{\Epistar}[1][M]{\Ediv{{#1}}{\pistar}{\piref}}
\newcommand{\Mstar}[2]{\cM_{{#1}}^{#2}}
\newcommand{\Mpi}[1][\veps]{\Mstar{{#1}}{\pi}}
\newcommand{\Mpistar}[1][\veps]{\Mstar{{#1}}{\pistar}}
\newcommand{\Mxpistar}[1][\veps]{\Mstar{x,{#1}}{\pistar}}
\newcommand{\Mxpi}[1][\veps]{\Mstar{x,{#1}}{\pi}}
\newcommand{\Mxpist}[1][\veps]{\Mxpistar[{#1}]}
\newcommand{\pisrej}{\pistar_{\mathsf{R}}}
\newcommand{\ybad}{y_{\mathsf{bad}}}
\newcommand{\uterm}[1]{\mathrm{(T{#1})}}
\newcommand{\one}[1]{\bbI\sbr*{#1}}
\newcommand{\bad}{\mathsf{bad}}
\newcommand{\Calpha}[1][\pi]{\cC_\alpha^{#1}}
\newcommand{\Dela}{\Delta_\alpha}
\newcommand{\pil}[1][\lambdahat]{\pi_{#1}}
\newcommand{\piN}{\pihat_{\texttt{BoN}}}
\newcommand{\Yhat}[1][N]{\wh\cY_{#1}}
\newcommand{\xiN}[1][N]{\wh\xi_{#1}}
\newcommand{\ystar}{y^\star}
\newcommand{\istar}{i^\star}
\newcommand{\pirej}[1][\pi]{{#1}_{\mathsf{R}}}
\newcommand{\term}{\mathrm{stop}}
\newcommand{\Yout}[1][M]{\cY_{#1}}
\newcommand{\Youtx}[1][M]{\cY_{#1}(x)}
\newcommand{\st}{\text{ s.t. }}
\newcommand{\PrN}{\Pr_{\Yhat,\,\xiN}}
\newcommand{\unif}{\texttt{unif}}
\newcommand{\yhat}{\wh{y}}
\newcommand{\reg}{\texttt{Reg}}
\newcommand{\Rmax}{R_{\mathsf{max}}}
\newcommand{\Cstar}{\Cone[\pistar]}
\newcommand{\piref}{\pi_{\mathsf{ref}}}
\newcommand{\rhat}{\wh{r}}
\newcommand{\Jr}[1][\rhat]{J_{#1}}
\newcommand{\chimix}{\chi_{\textsf{mix}}}
\newcommand{\Jmixg}[1][r]{J_{\beta,\gamma}^{\chimix}}
\newcommand{\pistarb}[1][r]{\pistar_{\sss{\beta}}}
\newcommand{\rstar}{r^\star}
\newcommand{\Cone}[1][\pi]{\cC^{#1}}
\newcommand{\Cinf}[1][\pi]{\cC_{\infty}^{#1}}
\newcommand{\chis}{$\chi^2$}
\newcommand{\chipo}{\texttt{$\chi$PO}\xspace} %
\newcommand{\pitil}{\wt{\pi}}%
\newcommand{\relu}{\mathsf{relu}}
\newcommand{\M}[1]{^{{\scriptscriptstyle M}}}  %
\newcommand{\sss}[1]{{\scriptscriptstyle#1}}
\newcommand{\pistar}{\pi^{\star}}
\newcommand{\pihat}{\wh{\pi}}
\newcommand{\midsem}{\,;}
\newcommand{\approxleq}{\lesssim}
\newcommand{\approxgeq}{\gtrsim}
\newcommand{\bigoh}{O}
\newcommand{\bigoht}{\wt{O}}
\newcommand{\bigom}{\Omega}
\newcommand{\bigomt}{\wt{\Omega}}
\renewcommand{\Pr}{\bbP}
\newcommand{\poly}{\mathrm{poly}}
\newcommand{\Dkl}[2]{D_{\mathsf{KL}}\prn*{#1\,\|\,#2}}
\newcommand{\Dchis}[2]{D_{\chi^2}\prn*{#1\dmid{}#2}}
\newcommand{\Dtv}[2]{D_{\mathsf{TV}}\prn*{#1,#2}}
\newcommand{\Ber}{\mathrm{Ber}}
\newcommand{\dmid}{\;\|\;}
\newcommand{\supp}{\mathrm{supp}}
\newcommand{\mathand}{\quad\text{and}\quad}
\def\multiset#1#2{\ensuremath{\left(\kern-.3em\left(\genfrac{}{}{0pt}{}{#1}{#2}\right)\kern-.3em\right)}}
\newcommand{\grad}{\nabla}
\newcommand{\iid}{i.i.d.\xspace}
\newcommand{\lambdahat}{\wh{\lambda}}
\newcommand{\oasst}{\texttt{OASST}\xspace}
\newcommand{\gemmarm}{\texttt{GEMMA-RM}\xspace}
\newcommand{\llamarm}{\texttt{LLAMA-RM}\xspace}
\newcommand{\armorm}{\texttt{ARMO-RM}\xspace}
\newcommand{\gemma}{\texttt{GEMMA-2-2B}\xspace}
\newcommand{\llama}{\texttt{LLAMA-3-3B}\xspace}
\newcommand{\mistral}{\texttt{Mistral-7B}\xspace}
\newcommand{\phimini}{\texttt{Phi-3-Mini}\xspace}
\newcommand{\phismall}{\texttt{Phi-3-Small}\xspace}
\newcommand{\alpaca}{\texttt{AlpacaEval-2.0}\xspace}
 \newcommand{\dfc}[1]{}
 \newcommand{\ah}[1]{}
\let\OldStatex\Statex
\renewcommand{\Statex}[1][3]{%
  \setlength\@tempdima{\algorithmicindent}%
  \OldStatex\hskip\dimexpr#1\@tempdima\relax}
\let\oldparagraph\paragraph
\renewcommand{\paragraph}[1]{\oldparagraph{#1.}}
\newcommand{\papertitle}{Is Best-of-N the Best of Them? \\Coverage, Scaling, and
  Optimality in Inference-Time Alignment} %
\newcommand{\papertitleshort}{Coverage, Scaling, and Optimality in Inference-Time Alignment} %
\title{\papertitle}
\icmltitlerunning{\papertitleshort}
\date{}
\begin{document}
\arxiv{\maketitle}

\icml{
\twocolumn[
\icmltitle{\papertitle}

\icmlsetsymbol{equal}{*}

\begin{icmlauthorlist}
\icmlauthor{Audrey Huang}{xxx}
\icmlauthor{Qinghua Liu}{xxx}
\icmlauthor{Adam Block}{xxx}
\icmlauthor{Nan Jiang}{xxx}
\icmlauthor{Akshay Krishnamurthy}{yyy}
\icmlauthor{Dylan J. Foster}{yyy}
\end{icmlauthorlist}

\icmlaffiliation{xxx}{Department of XXX, University of YYY, Location, Country}
\icmlaffiliation{yyy}{Department of XXX, University of YYY, Location, Country}
\icmlaffiliation{comp}{Company Name, Location, Country}
\icmlaffiliation{sch}{School of ZZZ, Institute of WWW, Location, Country}

\icmlkeywords{Machine Learning, ICML}

\vskip 0.3in
]
\printAffiliationsAndNotice{}  %
}

\arxiv{
  \vspace{-4em}
  \begin{center}
    \large
    \setlength{\tabcolsep}{20pt}
    \begin{tabular}{ccc}
    \makecell{Audrey Huang \\ \small{\texttt{audreyh5@illinois.edu}}}
    &
    \makecell{Adam Block \\ \small{\texttt{blockadam@microsoft.com}}}
    &
    \makecell{Qinghua Liu \\ \small{\texttt{qinghualiu@microsoft.com}}}
    \end{tabular}\vspace{1em}
    \begin{tabular}{ccc}
    \makecell{Nan Jiang \\ \small{\texttt{nanjiang@illinois.edu}}}
    &
    \makecell{Akshay Krishnamurthy \\ \small{\texttt{akshaykr@microsoft.com}}}
    &
    \makecell{Dylan J. Foster \\ \small{\texttt{dylanfoster@microsoft.com}}}
    \end{tabular}
    \end{center}
    \vspace{2em}
}

\begin{abstract}
\emph{Inference-time computation} offers a powerful axis for scaling the performance of language models.
However, naively increasing computation in techniques like \bonlong sampling
can lead to performance degradation due to reward hacking.
Toward a theoretical understanding of how to
best leverage additional computation, 
we focus on inference-time alignment, which we formalize
as the problem of improving the quality of responses drawn from a pre-trained policy,
given a prompt of interest and access to an imperfect reward model.
We analyze the performance of inference-time alignment algorithms 
in terms of (i) response quality, and (ii) compute,
and provide new results that highlight the importance of the
pre-trained policy's \emph{coverage} over high-quality responses for
performance and compute scaling:\arxiv{\loose}
 \icml{(1) We show that Best-of-N alignment with an ideal $N$
  can achieve optimal performance under stringent notions of coverage,
  but provably suffers from reward hacking when $N$ is large, and
  fails to achieve tight guarantees under more
  realistic coverage conditions; (2) We introduce $\texttt{InferenceTimePessimism}$, a new algorithm which mitigates reward hacking through deliberate use of inference-time compute, implementing \emph{pessimism in the face of uncertainty}; we prove that its performance is \emph{optimal} and \emph{scaling-monotonic}, i.e., does ot degrade as $N$ increases.}
\arxiv{
\begin{enumerate}
\item We show that Best-of-N alignment with an ideal choice for $N$
can achieve optimal performance under stringent notions of coverage,
but provably suffers from reward hacking when $N$ is large, and
fails to achieve tight guarantees under more
realistic coverage conditions.
\item We introduce $\texttt{InferenceTimePessimism}$, a new algorithm which mitigates reward hacking through
  more sophisticated use of inference-time compute, implementing\arxiv{ the
    principle of} \emph{pessimism in the face of uncertainty}\arxiv{ via rejection
    sampling}; we prove that its performance is \emph{optimal} and
  does not degrade with $N$, a property that we call \emph{scaling-monotonic}.
\end{enumerate}
}
We complement our theoretical results with an experimental evaluation that demonstrate the benefits of \texttt{InferenceTimePessimism} across a variety of tasks and models.

\end{abstract}

\section{Introduction}
\label{sec:intro}

  \emph{Inference-time computation} has emerged as a new axis for scaling in language models,
  which has led to dramatic improvements in their capabilities
  \citep{brown2024large,snell2024scaling,wu2024empirical,openai2024o1,deepseek2025r1}
  and played a central role in recent AI breakthroughs \citep{chollet2024arc}. 
  While there are a multitude of ways in which additional computation can be utilized during inference---e.g., to
  generate long chains of thought \citep{wei2022chain, li2024chain}, 
  have the model rate or correct its own responses \citep{zheng2023judging, wu2024metarewarding}, 
  or implement planning and search \citep{yao2024tree,zhang2024rest}---even exceedingly simple methods like \emph{\bonlong (parallel) sampling} 
  can provide significant performance gains \citep{lightman2023lets, brown2024large},
  and enjoy provable representational benefits \citep{huang2024self}. Such algorithms are also widely used throughout post-training for frontier models \citep{deepseek2025r1,yang2024qwen2},
  and thereby serve as a cornerstone of fine-tuning and inference.\loose

  Toward developing a deeper understanding of the algorithmic landscape for inference-time computation,
  in this paper, we focus on the problem of \emph{inference-time alignment}:
  given a task and a reward model that is a proxy for task performance, 
  how can we best use inference-time computation to improve the quality of a response
  chosen from many candidates generated by the base model?  The \bonlong heuristic is one of the most widely used inference-time alignment methods, and proceeds by generating $N$ candidate responses for a given prompt,
  then returning the response with the highest reward under a reward model
  \citep{stiennon2020learning,nakano2021webgpt,touvron2023llama,gao2023scaling,eisenstein2023helping,mudgal2024controlled}.
  While attractive in its simplicity, 
  \bonlong and related heuristics are known to suffer from \emph{reward overoptimization} or \emph{reward hacking} when $N$ increases
  \citep{gao2023scaling,stroebl2024inference,chow2024inference,frick2024evaluate}.
  While one might hope that increasing computation to generate a larger number of candidates will 
  increase the likelihood of selecting a high-quality response,
  reward model errors at the tail of the response distribution 
  can cause \bonlong to return generations with high (modeled) reward, but poor task performance.  \loose

  This overoptimization phenomenon raises two fundamental questions, 
  which we investigate in this paper: 
    (1) what is the extent to which the imperfect reward model limits the performance of inference-time alignment methods;
    and (2) can more deliberate algorithm design 
    lead to performance gains that scale monotonically with increased computation? 
    Beyond guiding practical interventions for inference-time alignment, 
    answers to these questions contribute to a foundational understanding 
    of inference-time computation more broadly.\loose

\icml{\begin{figure*}[htp]}
\arxiv{\begin{figure*}[tp]}
  \centering
    \includegraphics[width=0.43\textwidth]{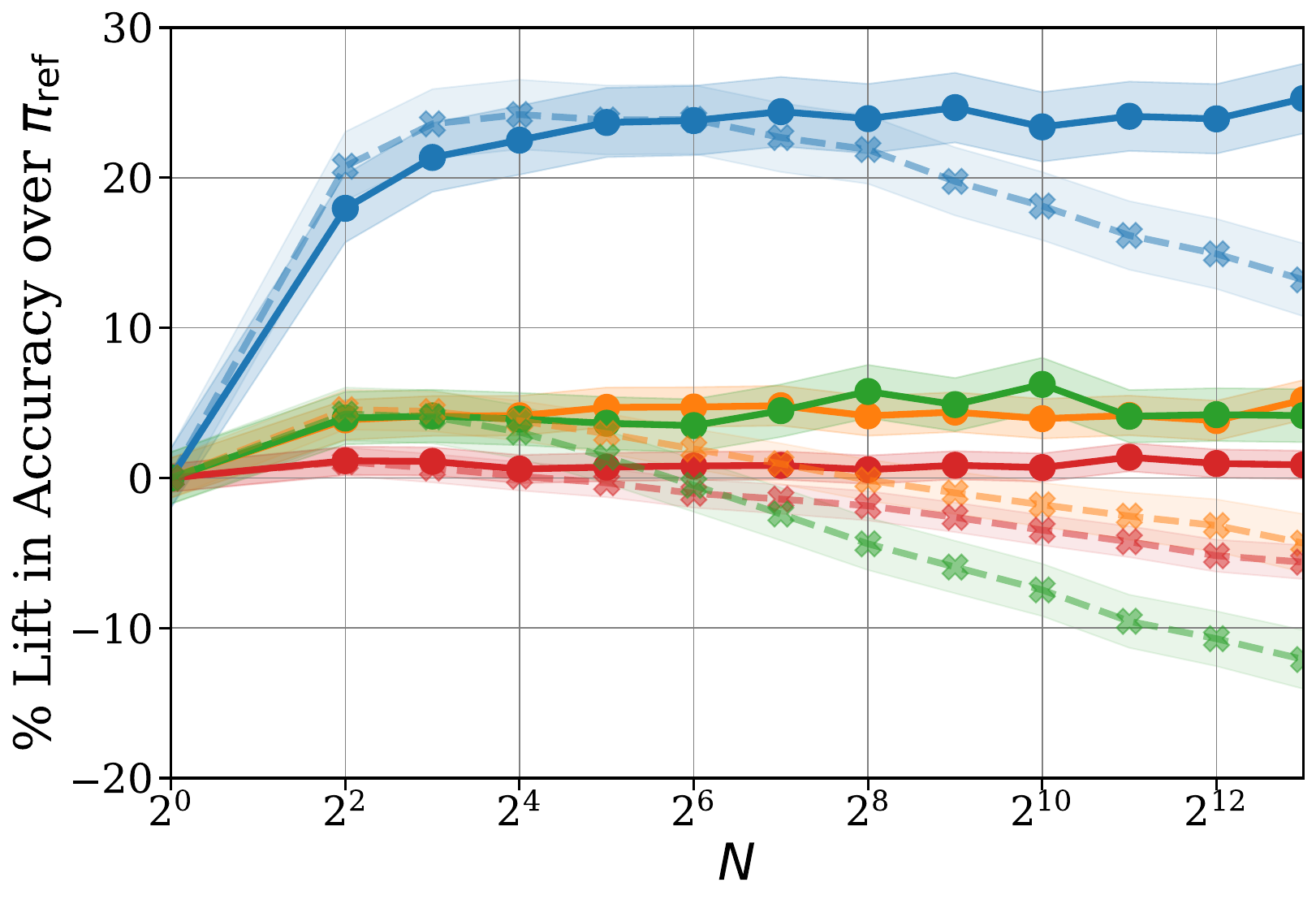}
    \label{sfig:bon-improvement} 
  \hfill
  \includegraphics[width=0.43\textwidth]{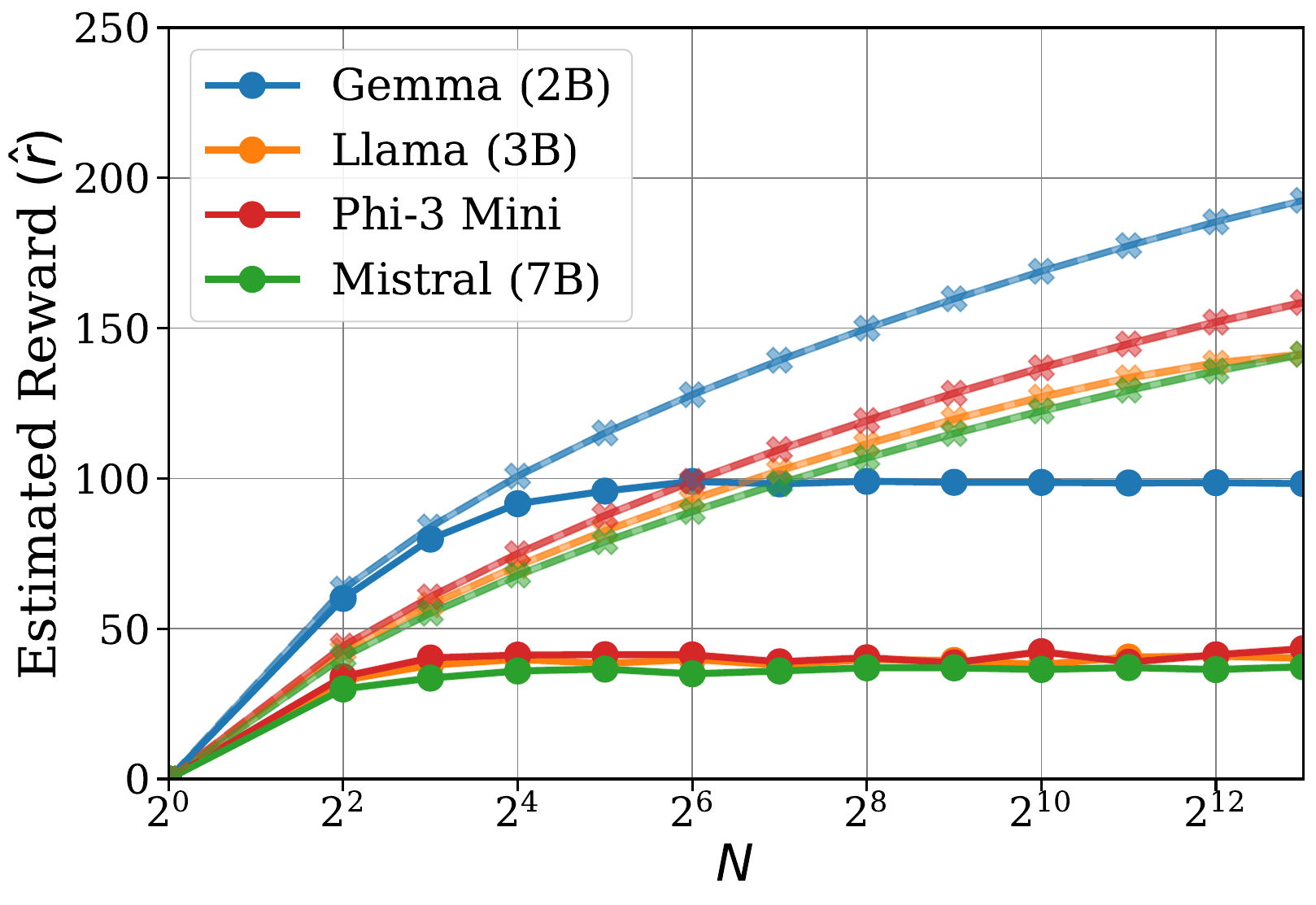}
  \label{sfig:bon-rhat-improvement}
  \icml{\vspace{-0.4cm}}
  \caption{Comparison between performance of \bonalg (dashed lines) and
    \mainalg algorithm (solid lines) on \gsmk with reward model \oasst as $\rhat$ and several different choices of $\piref$.  \Left: %
  $\bonalg$ initially improves accuracy over $\piref$, but eventually
  degrades as $N$ increases, while \mainalg is monotone, as predicted
  by our theory.  \Right: \bonalg overoptimizes the reward model, with
  high $\rhat$ but lower accuracy as $N$ increases, whereas \mainalg
  stops increasing $\rhat$ with $N$ beyond a certain threshold
  determined by a regularization parameter.}
  \icml{\vspace{-0.3cm}}
  \label{fig:intro}
\end{figure*}
  
\icml{\vspace{-0.1cm}}
\arxiv{\vspace{-0.15cm}}
\paragraph{Our framework}
To address the questions above, we pose \emph{inference-time
alignment}\icml{\footnote{Following prior work (e.g.,
\citet{rafailov2024direct,ye2024theoretical}), we adopt contextual
bandit/reinforcement learning terminology, interpreting the language
model as a policy.}} 
as the task of extracting a high-quality response $\yhat$ 
for a prompt $x$ 
from a pre-trained language model, or
\emph{base policy} $\piref: \cX\to\Delta(\cY)$, 
that maps a prompt $x$ to a distribution over responses $y \in \cY$.
The quality of the response $\yhat$ is determined by an underlying 
\emph{true reward function} $\rstar : \cX \times \cY \rightarrow [0, \Rmax]$, that expresses, for example,
the correctness of a math proof or the helpfulness of a chat response.
The algorithm designer does not know $\rstar$
and instead uses an \emph{imperfect reward model $\rhat$} (e.g., one learned from preference-based feedback 
\citep{christiano2017deep,ouyang2022training,wang2024interpretable}) to maximize performance as follows.
\icml{Using black-box access to $\piref$ and $\rhat$, i.e., 
\emph{sampling queries} $y \sim \piref(\cdot \mid{} x)$ and \emph{evaluation queries}
$\rhat(x,y)$ and $\piref(y \mid x)$, we aim to produce a response $\yhat$ that approximately maximizes the true reward (thereby minimizing \emph{regret} to the optimal policy), i.e.,
\begin{align}\label{eq:objective}
  \rstar(x, \yhat) \approx \max_{y \in \cY} \rstar(x,y)
\end{align}
}
\arxiv{
    \begin{tcolorbox}[enhanced,title=Inference-Time Alignment,
        colframe=blue!40!black,
        colback=blue!2!white,
        fonttitle=\bfseries,
      attach boxed title to top text left={xshift=30mm,yshift=-2.5mm},
      boxed title
      style={size=small,colframe=blue!40!black,colback=blue!40!black}]
      \label{box:def}
  Given a prompt $x\in\cX$, use black-box access to $\piref$ and $\rhat$ in the form of
(a) \emph{sampling queries} $y \sim \piref(\cdot \mid{} x)$, and (b) \emph{evaluation queries}
via reward labels $\rhat(x,y)$ and log-probabilities $\log\piref(y \mid x)$, to produce a response $\yhat$ that approximately maximizes the true reward (i.e., minimizes regret to the optimal response):\loose
\begin{align}\label{eq:objective}
  \rstar(x, \yhat) \approx \max_{y \in \cY} \rstar(x,y)
\end{align}
\end{tcolorbox}
}

This formulation presents two central questions:

\begin{itemize}[itemsep=0em]
  \item[\qone:] \textbf{Regret.} \emph{How close to optimal can we make the reward $\rstar(x,\yhat)$ in \cref{eq:objective},
    as a function of the quality of the\arxiv{ reward} model $\rhat$?}\loose
  \item[\qtwo:] \textbf{Compute.} \emph{What is the computational cost---measured
      by the number of sampling queries $y\sim\piref(\cdot\mid{}x)$, and
      evaluation queries $\rhat(x,y)$ and $\log\piref(y\mid{}x)$ used by the algorithm---required for optimal reward?}\loose
  \end{itemize}
  Note that while our goal is to maximize the true reward $\rstar$, 
  the quality of the reward model $\rhat$ creates an 
  inherent, information-theoretic barrier to optimizing $\rstar$, since the latter unobserved. 
  Intuitively, we should not be able to maximize quality under $\rstar$ if $\rhat$ is highly inaccurate.

\subsection{Contributions}

We show that \bonlong alignment and related heuristics may fail to achieve optimal regret, 
but that more sophisticated use of inference-time computation---namely, to extract additional information from the reward model and
quantify its uncertainty---can mitigate overoptimization,
and achieve optimal regret and compute scaling.\loose

\arxiv{\vspace{-2pt}}

\paragraph{Statistical framework and necessity of coverage \arxiv{(\cref{sec:framework})}\icml{(\secshort{sec:framework})}}

Our formal framework, summarized above, reformulates inference-time
alignment as a \emph{statistical problem} via query complexity. 
This allows us to derive fundamental limits on the performance of any inference-time alignment algorithm. 
We show that the best possible reward one can achieve, irrespective of computational cost, 
is determined by the base policy's \emph{coverage} over high-quality responses, 
along with the mean-squared error of $\rhat$. 
This serves as a skyline for our investigation into
improved algorithmic interventions.

\arxiv{\vspace{-2pt}}

\paragraph{Tight analysis of \bonalg \icml{(\secshort{sec:bon})}\arxiv{(\cref{sec:bon})}}
Within our framework, we offer the first theoretical analysis of the regret of 
Best-of-N alignment (\bonalg). We show that \bonalg can achieve optimal regret under a stringent
  notion of coverage (``uniform'' or $L_\infty$-type) when $N$ is tuned appropriately, but:
  \loose
  \begin{enumerate}%
\item provably suffers from overoptimization, degrading in
  performance once $N$ scales past a critical threshold; and\loose
\item fails to achieve tight guarantees under weaker
notions of coverage   (``average-case'' or $L_1$-type), 
thereby falling short of the skyline established in \cref{sec:framework}.
\end{enumerate}

\arxiv{\vspace{-2pt}}

  \paragraph{Optimal algorithm: \mainalg \arxiv{(\cref{sec:algorithm})}\icml{(\secshort{sec:algorithm})}}
Motivated by the shortcomings of \bonlong, we introduce an improved algorithm, \mainalg. We prove that \mainalg:
\begin{enumerate}
\item is \emph{regret-optimal}, in the sense that it can achieve the best possible reward in our framework, 
thereby matching the skyline in \cref{sec:framework}; and
\item is \emph{\monotonic}, in the sense that it is guaranteed to avoid
  overoptimization beyond a certain point (determined by the
  regularization parameter), even as $N\to\infty$. 
\end{enumerate}
To achieve this, our algorithm uses a novel rejection
sampling scheme to implement \chis-regularization---which is known to mitigate overoptimization via the principle of   
\emph{pessimism in the face of uncertainty} \citep{huang2024correcting}---purely at inference time. 
Beyond achieving optimal regret, we show that \mainalg uses near-optimal compute under our framework.
\loose

\arxiv{\vspace{-2pt}}

\paragraph{Empirical evaluation \arxiv{(\cref{sec:experiments})}\icml{(\secshort{sec:experiments})}}
  To demonstrate the benefits of \mainalg, we compare the algorithm to \bonalg across several tasks, base policies, and reward models.  
  As predicted by our theory, \bonalg degrades in performance as computation increases (via $N$), 
  while \mainalg is scaling-monotonic and does not suffer
  from this characteristic overoptimization phenomenon (\cref{fig:intro}). 
  We also observe several instances where \mainalg outperforms
  \bonalg in terms of maximal reward achieved when the computational budget is untuned, demonstrating the robustness of our algorithm.
  \Cref{app:experiments} contains further
  empirical results, including examinations of the reward model's tail behavior
  and demonstrations of \mainalg's robustness to choice of hyperparameter.\loose

\subsection{Related Work}
\label{sec:related_body}

To our knowledge, our work provides the first theoretical framework for understanding and mitigating
overoptimization in inference-time alignment.
Various works have analyzed
specific properties of the \bonlong alignment algorithm
\citep{yang2024asymptotics,beirami2024theoretical,mroueh2024information}
such as tradeoffs between reward and KL-divergence, but do not
ultimately provide guarantees on \arxiv{downstream }performance
when the estimated reward model and true reward are mismatched.
We focus on analyzing \bonlong specifically because it is the most widely used and foundational inference-time
alignment technique, but other algorithms~\citep{welleck2024decoding} including variants of
rejection sampling~\citep{khanov2024args,chen2024pad,shi2024decoding,liu2024decoding,jinnai2024regularized}
and Monte-Carlo Tree Search~\citep{feng2023alphazero,yao2024tree,zhang2024rest}, are also used in practice.\loose %

  Our work is also closely related to research on \emph{offline alignment at training time} \citep{christiano2014online,ouyang2022training,rafailov2024direct}, which involves fitting a reward model $\rhat$ (typically through pairwise feedback), and then maximizing it using \arxiv{policy optimization techniques}\icml{RL}. A growing body of theoretical works on offline
  alignment provide theoretical guarantees for mitigating reward
  overoptimization that---like our work---depend on notions of \emph{coverage} for the base policy~\citep{zhu2023principled,zhan2023provable,li2023reinforcement,xiong2024iterative,liu2024provably,cen2024value,
fisch2024robust,ji2024selfplay,huang2024correcting}. Our formulation for inference-time alignment can be viewed as a variant of this problem that abstracts away the reward model training; in particular, we take $\rhat$ as given and ask how to achieve the best possible performance on a \emph{per-instance} basis, with respect to the true reward $\rstar$ and the prompt $x \in \cX$. %

Our algorithm, \mainalg, is closely related to \chipo \citep{huang2024correcting}, a training-time offline alignment algorithm that aims to mitigate
overoptimization via regularization with the
\chis-divergence. \mainalg also leverages \chis-regularization,
but implements this purely at inference-time via a novel rejection sampling
scheme.\icml{\textbf{See \cref{sec:offline} for a detailed discussion of connections to
offline alignment.}\loose}
Finally, \arxiv{our analysis of \mainalg, as well as \bonalg,
  draws}\icml{our analysis draws} on the treatment of \emph{approximate rejection sampling} in \citet{block2023sample}, which tightly characterizes the performance of rejection sampling with unbounded likelihood ratios. %
\arxiv{Please see \cref{sec:related} for an extended discussion
of related work.}

\arxiv{\section{A Statistical Framework for Inference-Time Alignment}}
\icml{\section{Inference-Time Alignment Framework}}
\label{sec:framework}

In this section, we formally introduce our inference-time alignment
framework, then use it to prove lower bounds that highlight the
necessity of coverage. In our framework, the algorithm designer begins with a \emph{base policy}
$\piref:\cX\to\Delta(\cY)$, which is a conditional distribution
mapping \emph{prompts} $x\in\cX$ to
distributions over \emph{responses} $y\in\cY$,
and $\piref(y\mid{}x)$ is the probability that the base policy generates
a response $y$ given the prompt $x$.\footnote{The motivating special case is
autoregressive language modeling, where $\cY = \cV^H$ for a
  vocabulary space $\cV$ and sequence length $H$, and $\piref$ has the
  autoregressive structure $\piref(y_{1:H}\mid{}x) = \prod_{h=1}^H
  \piref(y_h \mid{}y_{1:h-1},x)$ for $y = y_{1:H} \in
  \cY$.} 
  The algorithm designer also has access to an \emph{imperfect reward model} 
  $\rhat : \cX \times \cY \rightarrow [0, \Rmax]$, 
  where $\Rmax \ge 1$, 
  and the reward label $\rhat(x,y)$ estimates the quality of the response $y$ for the prompt $x$.
  This serves as a proxy for the \emph{true reward model}
  $\rstar : \cX \times \cY \rightarrow [0, \Rmax]$, 
  which is unknown.

  Given the base policy $\piref$, the reward model $\rhat$, and a prompt $x\in\cX$,
  the goal is to produce a high quality response $\yhat\in\cY$
  as measured by the true reward $\rstar$, 
  in the sense that the following \emph{inference-time regret} is small.\loose
\begin{align}
  \label{eq:regret}
\reg_{\pistar}(\yhat\midsem{}x) \ldef  J(\pistar\midsem{}x) - J(\yhat\midsem{}x) \leq \veps,
\end{align}
Here, $\pistar$ can be viewed as a comparator policy with high reward,
and $J(\pi;x)\ldef{}\En_{y\sim\pi(\cdot\mid{}x)}\brk*{\rstar(x,y)}$
denotes the expected reward under $\rstar$ for any policy $\pi$. 
When the response $\yhat$ is chosen by a randomized algorithm $\cA$, 
we use $\pihat_\cA(\cdot\mid{}x)$ to denote the conditional distribution over 
responses induced by $\cA$, and write regret as 
$\reg_{\pistar}(\pihat\midsem{}x) \ldef
J(\pistar\midsem{}x) - J(\pihat_\cA\midsem{}x)$.\loose

In applications, the base policy $\piref$ represents a language model trained in some manner\arxiv{ (e.g., through next-token prediction and  post-training
steps such as SFT or RLHF)}. 
The true reward $\rstar$ can represent the extent to which $y$
agrees with human preference, passes a proof checker, or passes a unit test suite.
The reward model $\rhat$ can be an open
source reward model, a model trained by the algorithm designer
themselves, or even derived directly from $\piref$ itself
\citep{wang2022self,huang2024self,song2024understanding}.\loose

\paragraph{Reward model quality versus regret}
There is no hope of making the regret in \cref{eq:regret}
small without requirements on the fidelity of the reward model $\rhat$.
For a prompt $x\in\cX$, we measure the quality of $\rhat$ via 
the expected squared error with respect to $\rstar$, 
where responses are drawn by the base policy $\piref$:\loose
\begin{align}
  \label{eq:rm}
\vepsrms(x)\ldef{}  \En_{y\sim\piref(\cdot\mid{}x)}\brk*{\prn*{\rhat(x,y)-\rstar(x,y)}^2}.
\end{align}
Empirically and in theory, one can minimize
\cref{eq:rm} by fitting $\rhat$ to rewards or human
preference data for responses generated from $\piref$
\arxiv{\citep{zhu2023principled,zhan2023provable,li2023reinforcement,xiong2024iterative,liu2024provably,cen2024value, fisch2024robust,ji2024selfplay,huang2024correcting}}\icml{(cf. \cref{sec:offline})},
which is already standard in post-training pipelines.
As our focus is on \emph{inference-time} interventions,
we simply abstract the reward model training step away
and assume we have a reward model $\rhat$ with error $\vepsrms$,
as defined in \cref{eq:rm}.
We aim to understand how small we can make the regret in \cref{eq:regret}
as a function of the reward model quality $\vepsrms$,
and what algorithmic interventions are required to achieve this (cf. \qone).

\begin{remark}
  In practice, the reward estimation error in \cref{eq:rm}
  may not be the tightest notion of reward error, and thus our bounds may be conservative. 
  Nevertheless, it is arguably the most common formulation of reward mismatch \citep{zhu2023principled,zhan2023provable, xiong2024iterative,gao2024rebel, huang2024correcting},
  and serve as a foundation for future investigations 
  into other notions of error. 
\end{remark}

\paragraph{\mbox{Measuring computational efficiency via query complexity}}
The next question we consider (cf. \qtwo) is how much computation is required to
minimize the inference-time regret in \cref{eq:regret} for a given
prompt \arxiv{$x\in\cX$}\icml{$x$}.
To allow for computational understanding that is agnostic to the
architecture or description of $\piref$ and $\rhat$,
we draw inspiration from \citet{huang2024self} and abstract it as a \emph{statistical problem},
where computational efficiency is quantified via 
the number of samples and labels queried from the base policy and reward model.\loose

\begin{definition}[Sample-and-evaluate framework]\label{def:oracle-model}
  In the \textbf{\framework} framework, the algorithm designer does
  not have explicit access to the base policy $\piref$ or the reward
  model $\rhat$. Instead, they
access $\piref$ and $\rhat$ through \emph{sample-and-evaluate queries}: For a
given prompt $x\in\cX$, they can sample $N$
responses $y_1,y_2,\dots y_N \sim \piref(\cdot \mid x)$ and observe
the likelihood $\piref(y_i\mid{}x)$ and reward model value
$\rhat(x,y_i)$ for each such response. The efficiency
\arxiv{(\emph{\querycomp}) }of the algorithm is measured by the total number
of queries $N$.\loose
\end{definition}
This is a natural statistical abstraction for computation---analogous
to oracle/query complexity in optimization
\citep{nemirovski1983problem,traub1988information,raginsky2011information,agarwal2012information},
and learning theory \citep{blum1994weakly,kearns1998efficient,feldman2012complete,feldman2017general}---and encompasses \bonlong
alignment and other schemes such as
rejection sampling
\citep{khanov2024args,chen2024pad,shi2024decoding,liu2024decoding,jinnai2024regularized}.\loose

  \subsection{A Skyline\arxiv{ for Performance}: The Necessity of Coverage}
  \label{sec:coverage}

To motivate our main algorithmic results, and as a step toward
answering question \qone, we begin by proving a lower bound on the
best possible regret one can hope to achieve,
as determined by the reward model's error $\vepsrms(x)$.
This lower bound highlights the importance of the base policy's \emph{coverage},
defined formally for a comparator policy $\pistar$ via\loose
\begin{align}
  \label{eq:coverage}
  \Cone[\pistar](x) \ldef{} \En_{y\sim\pistar(\cdot\mid{}x)}\brk*{\frac{\pistar(y\mid{}x)}{\piref(y\mid{}x)}}.
\end{align}

\begin{proposition}[Necessity of coverage]
  \label{prop:coverage}
  Fix a prompt $x\in\cX$,
  and\arxiv{ base policy} $\piref:\cX\to\Delta(\cY)$. For any alignment
  algorithm $\cA$ and any
  $16\leq{}\Cpar\leq\max_{\pi:\cX\to\Delta(\cY)}\Cone$, there exists a
  reward function $\rstar$ and reward model $\rhat$ satisfying \cref{eq:rm} and comparator
  policy $\pistar$ with $\Cone[\pistar]\leq\Cpar$ such that\loose
  \begin{align}
    \label{eq:coverage_lower}
    J(\pistar;x) - J(\pihat_\cA;x) \geq \frac{1}{4}\cdot\sqrt{\Cpar\cdot{}\vepsrms(x)}.
  \end{align}
\end{proposition}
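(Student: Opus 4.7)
The plan is to prove the lower bound via a two-point argument that exploits a structural feature of the \framework framework: because the algorithm's queries depend only on $\piref$ and $\rhat$---and not on $\rstar$---the output distribution $\pihat_\cA(\cdot\mid x)$ is identical across any two instances that share the same $\piref$ and $\rhat$. This decouples the algorithm's decisions from $\rstar$ and lets me pair two instances with identical query distributions but distinct optimal comparators.

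First, I would partition a portion of the response space into two disjoint sets $B_0, B_1 \subseteq \cY$ of equal mass $\piref(B_0 \mid x) = \piref(B_1\mid x) = 1/\Cpar$. Existence of such sets follows from $\Cpar \leq \max_{\pi} \Cone[\pi]$ (which supplies enough small-probability responses) together with $\Cpar \geq 16$ (which ensures $2/\Cpar \leq 1$ with ample room for discretization slack). I then set $\rhat \equiv 0$ and define $\rstar_i(x,y) = \Delta\cdot \indic[y \in B_i]$ for $i \in \{0,1\}$, where $\Delta = \sqrt{\Cpar \cdot \vepsrms(x)}$. The squared error then satisfies $\En_{y\sim\piref}[(\rstar_i-\rhat)^2] = \Delta^2 \cdot \piref(B_i) = \vepsrms(x)$, matching the fidelity assumption in \cref{eq:rm}. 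Taking $\pistar_i$ to be $\piref$ conditioned on $y \in B_i$, a direct computation yields $\Cone[\pistar_i] = 1/\piref(B_i) = \Cpar$ and $J(\pistar_i;x) = \Delta$.

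The final step is the indistinguishability argument. Since $\pihat_\cA(\cdot\mid x)$ is the same random variable under both instances and $B_0\cap B_1 = \varnothing$, we have $\pihat_\cA(B_0\mid x) + \pihat_\cA(B_1\mid x) \leq 1$, so at least one---say $B_0$---satisfies $\pihat_\cA(B_0\mid x) \leq 1/2$. Under instance $0$, the algorithm's expected reward is at most $\Delta \cdot \pihat_\cA(B_0\mid x) \leq \Delta/2$, yielding regret at least $\Delta/2 = \frac{1}{2}\sqrt{\Cpar \cdot \vepsrms(x)}$, which is stronger than the claimed $\frac{1}{4}$ bound.

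The main obstacle is the existence of two disjoint sets with $\piref$-mass exactly $1/\Cpar$: for general (possibly atomic) $\piref$, this may not be achievable on the nose, and one must instead accept $\piref(B_i) \in [c/\Cpar, 1/\Cpar]$ for some absolute constant $c > 0$. This relaxation degrades the constants but is absorbed by the factor $1/4$ together with the assumption $\Cpar \geq 16$, which jointly provide the necessary slack. A secondary issue is ensuring $\Delta \leq \Rmax$ so that $\rstar_i$ is a valid reward; in the regime where $\vepsrms(x)$ is large enough that this fails, truncating $\Delta$ at $\Rmax$ only affects the bound outside the interesting regime.
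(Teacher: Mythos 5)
Your indistinguishability step is fine (with $\rhat\equiv 0$ the law of $\pihat_\cA$ is the same under all candidate instances, and the paper's own proof exploits exactly this), but the construction has a genuine gap at its foundation: the two disjoint sets $B_0,B_1$ with $\piref(B_i\mid x)\asymp 1/\Cpar$ need not exist, and this is not a matter of discretization slack absorbed by constants. The hypothesis $\Cpar\leq\max_{\pi:\cX\to\Delta(\cY)}\Cone$ is equivalent to $\min_y\piref(y\mid x)\leq 1/\Cpar$, i.e.\ it only guarantees that the single smallest atom is light; it gives no control over the granularity of $\piref$'s mass. Concretely, take $\cY=\{y_1,y_2\}$ with $\piref(y_1\mid x)=1-\delta$, $\piref(y_2\mid x)=\delta$ for $\delta$ arbitrarily small, and $\Cpar=16\leq 1/\delta$: no subset of $\cY$ has mass in $[c/\Cpar,\,1/\Cpar]$ for any universal $c$, so even your relaxed version fails. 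The natural fallbacks also break: using $B_1=\{y_2\}$ gives a comparator with $\Cone[\pistar_1]=1/\delta\gg\Cpar$, violating the coverage constraint, while using a large-mass set forces the bump height down to $\Delta\lesssim\vepsrm(x)$ and the bound degrades from $\sqrt{\Cpar\cdot\vepsrms(x)}$ to $\vepsrm(x)$. Worse, the failure is not only existential: with $\rhat\equiv 0$ and nonnegative ``bump'' rewards supported on fixed sets, an algorithm that places all of its mass on the rare response $y_2$ cannot be forced to incur regret $\Omega(\sqrt{\Cpar\cdot\vepsrms(x)})$ by any such two-point family once $\Cpar\ll 1/\delta$, because under the instance whose bump sits on $y_2$ that algorithm is optimal, and under the other instance the comparator's value is only $O(\vepsrm(x))$.

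The paper avoids this by never requiring a set of mass $1/\Cpar$. It fixes $\rhat\equiv 0$, treats $\pihat\ldef\pihat_\cA$ as determined, and then chooses $\rstar$ \emph{as a function of $\pihat$} via \cref{lem:coverage}: a signed perturbation proportional to $(\pistar(y\mid x)-\pihat(y\mid x))/\piref(y\mid x)$, normalized to have $\piref$-weighted squared error $\vepsrms(x)$. It then splits into cases on $\Cone[\pihat]$: if $\Cone[\pihat]>\Cpar/8$ it takes $\pistar=\piref$, so the perturbation directly penalizes the algorithm for over-concentrating on poorly covered responses (exactly the situation your nonnegative bumps cannot punish); otherwise it takes the mixture $\pistar=\lambda\pimax+(1-\lambda)\piref$ with $\lambda^2=\Cpar/(2\Cmax)$, which achieves $\Cone[\pistar]\in[\Cpar/2,\Cpar]$ for \emph{any} $\piref$, and lower-bounds the chi-square distance between $\pistar$ and $\pihat$ by $\Omega(\Cpar)$ via AM--GM. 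If you want to salvage a two-point (algorithm-independent) construction, you would need comparators of this mixture type and signed reward perturbations, not indicator bumps on equal-mass sets.
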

Coverage in the sense of \cref{eq:coverage} plays a central role in the
theoretical study of offline alignment \citep{zhu2023principled,zhan2023provable,li2023reinforcement,xiong2024iterative,liu2024provably,cen2024value,
  fisch2024robust,ji2024selfplay,huang2024self,huang2024correcting}, and
\cref{prop:coverage} shows that it plays a similar role for
inference-time alignment.
Recent works show that standard language models can indeed exhibit favorable coverage
over desirable responses in standard tasks of interest, 
which translate to performance gains at inference time \citep{brown2024large,snell2024scaling,wu2024empirical}.
In the remainder of the paper, we will explore the extent to which\arxiv{ the skyline in \cref{eq:coverage}} \icml{this skyline }can be achieved---through
existing techniques, or through new interventions.\loose

\paragraph{Additional notation}
\arxiv{For $n\in\bbN$, we let $[n]$ denote the set
$\{1,\dots,n\}$. For a set $\cX$, we let $\Delta(\cX)$ denote the
set of all probability distributions over $\cX$.} We adopt standard
big-oh notation, and write  
$f\approxleq{}g$ as shorthand for $f=\bigoh(g)$ and $f=\bigoht(g)$ to denote 
$f = \bigoh(g\cdot{}\max\crl*{1,\mathrm{polylog}(g)})$. \loose

\arxiv{\section{Understanding \bonalglong: Guarantees and
    Limitations}}
\icml{\section{Understanding \bonalglong}}
\label{sec:bon}

As our \arxiv{first step toward understanding of how to best leverage computation for
inference-time alignment}\icml{first result}, we give a sharp analysis of 
\bonlong alignment, highlighting the role of coverage in determining its scaling
  properties and (sub-)  optimality.\loose

\subsection{A Sharp Analysis of \bonalglong}

\begin{algorithm}[tp]
  \caption{\bonalglong (\bonalg)}
  \label{alg:bon}
  \begin{minipage}{\linewidth}
  \begin{algorithmic}[1]
    \Statex[0] \mbox{{\bfseries input:}
      Prompt $x$, base policy $\piref$,
      reward model $\rhat$,
      query size $N$.}
    \State Draw $\Yhat =(y_1,\ldots,y_N)\sim \piref(\cdot\mid{}x)$ \iid
    \State Query $\rhat$ for reward labels $\rbr*{\rhat(x, y_1), \ldots, \rhat(x,y_N)}$ 
    \State \textbf{return} response $y = y_{\istar}$, where $\istar \in  \argmax_{i \in [N]}{} \rhat(x, y_{i})$. 
    \label{line:rej-noterm}
\end{algorithmic}
\end{minipage}
\end{algorithm}

First, in \cref{alg:bon} we display the \bonlong algorithm for an input prompt $x \in \cX$.
\bonalg draws $N$ candidate responses $y_1,\ldots,y_N\sim\piref(\cdot\mid{}x)$ \iid,
and returns the response $y \in \Yhat$ that has the largest reward under the reward model $\rhat$.

Next, for a fixed prompt $x$ and sample size parameter $N$,
let $\piN(x)\in\Delta(\cY)$ denote the policy corresponding to the distribution over
responses output by \bonalg,
which is a random variable depending on draws of candidate $\Yhat \sim \piref$.
\icml{which, given $x\in\cX$, draws $N$ responses
$y_1,\ldots,y_N\sim\piref(\cdot\mid{}x)$ \iid, and returns the
response $\yhat=\argmax_{y_i}\rhat(x,y_i)$.}
Our main guarantee for \bonalg bounds the regret in terms of the
sample size $N$,
the coverage coefficient $\Cone[\pistar](x)$,
and the reward model error $\vepsrms(x)$.\loose
\begin{theorem}[Guarantee for \bonalg]
  \label{thm:bon}
  For any prompt $x\in\cX$, reward model error $\vepsrms(x) \in (0, 1]$,
  and comparator\arxiv{ policy} $\pistar$, whenever
  $N \ge c \cdot \Cone[\pistar](x) \cdot \log\rbr*{\Rmax / \vepsrm(x)}$ for a sufficiently large constant $c$,
  the \bonalg policy $\piN$ satisfies
  \icml{  \begin{align}
    \label{eq:bon1}
            &J(\pistar;x) - J(\piN;x) \\
            &\lesssim 
    \Rmax \cdot
    \frac{\Cone[\pistar](x)\log\rbr*{\nicefrac{\Rmax}{\vepsrm(x)}}}{N}
    + \sqrt{N \cdot \vepsrms(x)}.
  \end{align}} 
  \arxiv{\begin{align}\label{eq:bon1}
    J(\pistar;x) - J(\piN;x)
    &\lesssim 
    \Rmax \cdot \frac{\Cone[\pistar](x)\cdot\log\rbr*{\nicefrac{\Rmax}{\vepsrm(x)}}}{N}
    + \sqrt{N \cdot \vepsrms(x)}.
  \end{align}} 
  In particular, as long as
  $N \asymp \rbr*{\frac{\Rmax\cdot\Cone[\pistar]\log\rbr*{\Rmax / \vepsrm(x)}}{\vepsrm(x)}}^{\frac{2}{3}}$,\arxiv{ we have that}\footnote{We use $N \asymp \square$ to indicate that $C_1\square \leq N \leq C_2\cdot
  \square$ for any sufficiently large absolute constants $C_1\leq{}C_2$. }
  \icml{\begin{align}\label{eq:bon2}
    &J(\pistar;x) - J(\piN;x) \\
    &\lesssim
        \prn*{\Rmax\cdot \Cone[\pistar](x)\cdot \vepsrms(x)\cdot\log\rbr*{\nicefrac{\Rmax}{\vepsrm(x)}}}^{1/3}.
  \end{align}}
  \arxiv{\begin{align}\label{eq:bon2}
           J(\pistar;x) - J(\piN;x) 
           \lesssim
        \prn*{\Rmax\cdot \Cone[\pistar](x)\cdot \vepsrms(x)\cdot\log\rbr*{\nicefrac{\Rmax}{\vepsrm(x)}}}^{1/3}.
  \end{align}}
\end{theorem}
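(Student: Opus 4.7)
The strategy is a triangle-inequality decomposition that isolates a reward-hacking term from an approximation/optimism term. Writing $y_{i^\star}$ for the response returned by \bonalg and all outer expectations over $\Yhat\sim\piref(\cdot\mid{}x)$ iid, we have
\begin{align*}
J(\pistar;x) - J(\piN;x)
  = \underbrace{\En_{y\sim\pistar}\brk*{\rstar(x,y)} - \En\brk*{\rhat(x,y_{i^\star})}}_{\mathrm{(A)}}
  + \underbrace{\En\brk*{\rhat(x,y_{i^\star})-\rstar(x,y_{i^\star})}}_{\mathrm{(B)}}.
\end{align*}
Term $\mathrm{(B)}$ captures reward hacking: since $y_{i^\star}$ is an argmax over $N$ iid $\piref$-draws, the induced policy obeys the pointwise bound $\piN(y\mid{}x)\le N\piref(y\mid{}x)$ and hence $\Cone[\piN](x)\le N$. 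A single use of Cauchy--Schwarz, together with the identity $\En_{y\sim\piref}\brk*{(\piN/\piref)^2}=\Cone[\piN](x)$, then gives $|\mathrm{(B)}|\le\sqrt{\Cone[\piN](x)\vepsrms(x)}\le\sqrt{N\vepsrms(x)}$, exactly the second summand of \cref{eq:bon1}. This term grows with $N$, reflecting that aggressive compute scaling concentrates $\piN$ on tail responses where $\rhat$ is least reliable.

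Term $\mathrm{(A)}$ requires more work. I would first peel off the reward-model error on $\pistar$: Cauchy--Schwarz gives $|J(\pistar;x)-\En_{y\sim\pistar}\brk*{\rhat(x,y)}|\le\sqrt{\Cone[\pistar](x)\vepsrms(x)}$, which is absorbed into the $\sqrt{N\vepsrms(x)}$ bound once $N\gtrsim\Cone[\pistar](x)$. What remains is to lower-bound $\En\brk*{\max_i\rhat(x,y_i)}$ in terms of $\En_{y\sim\pistar}\brk*{\rhat(x,y)}$, and here I would invoke a clipping-and-coupling argument in the spirit of approximate rejection sampling. Fix a truncation level $M$ and let $\pistar_M$ denote $\pistar$ restricted to $\{y:\pistar(y\mid{}x)/\piref(y\mid{}x)\le M\}$; Markov's inequality on the density ratio yields $\Dtv{\pistar}{\pistar_M}\lesssim\Cone[\pistar](x)/M$. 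Since $\pistar_M/\piref\lesssim M$ pointwise, a standard rejection-sampling coupling attaches to each draw $y_i$ an independent Bernoulli trial $b_i$ of success probability $1/M$ such that, conditional on $b_i=1$, $y_i\sim\pistar_M$. The event $E=\{\exists i:b_i=1\}$ has probability $\ge 1-e^{-N/M}$; on $E$, $\max_j\rhat(x,y_j)$ dominates the reward of the first successful draw, whose unconditional expectation equals $\En_{y\sim\pistar_M}\brk*{\rhat(x,y)}$. Combining with the TV estimate yields
\begin{align*}
\En_{y\sim\pistar}\brk*{\rhat(x,y)}-\En\brk*{\max_i\rhat(x,y_i)}\;\lesssim\;\frac{\Rmax\,\Cone[\pistar](x)}{M}+\Rmax\,e^{-N/M}.
\end{align*}

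Choosing $M\asymp N/\log(\Rmax/\vepsrms(x))$---admissible precisely when $N\gtrsim\Cone[\pistar](x)\log(\Rmax/\vepsrms(x))$, matching the theorem's hypothesis---drives the second term below $\vepsrms(x)\le\sqrt{N\vepsrms(x)}$ and the first to $\lesssim\Rmax\Cone[\pistar](x)\log(\Rmax/\vepsrms(x))/N$, establishing \cref{eq:bon1}. The optimized bound \cref{eq:bon2} then follows by balancing the two summands of \cref{eq:bon1}: setting $\Rmax\Cone[\pistar](x)\log(\Rmax/\vepsrm(x))/N\asymp\sqrt{N\vepsrms(x)}$ gives $N\asymp(\Rmax\Cone[\pistar](x)\log(\Rmax/\vepsrm(x))/\vepsrm(x))^{2/3}$, at which point both terms evaluate to $(\Rmax\Cone[\pistar](x)\vepsrms(x)\log(\Rmax/\vepsrm(x)))^{1/3}$, as claimed. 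The principal technical obstacle is the coupling step for $\mathrm{(A)}$: one must convert the average-case ($L_1$-type) coverage $\Cone[\pistar]$ into a \emph{quantitative} guarantee on the chance that $N$ iid $\piref$-samples produce a representative of $\pistar$, and verify that the truncation bias and the exponential rejection-sampling failure rate balance cleanly against the reward-hacking penalty in $\mathrm{(B)}$.
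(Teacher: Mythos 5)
Your proposal is correct and takes essentially the same route as the paper: the paper also compares \bonalg against a rejection-sampling simulation of (a truncated version of) $\pistar$ at threshold $M$, uses the fact that the argmax over $\Yhat$ dominates the $\rhat$-value of any accepted sample, bounds the truncation bias by $\Cone[\pistar](x)/M$ and the failure probability by $e^{-N/M}$, controls the reward-hacking term via $\Cone[\piN](x)\le N$ plus Cauchy--Schwarz, and then tunes $M\asymp N/\log(\Rmax/\vepsrms(x))$ and balances the two $N$-dependent terms exactly as you do. The only differences are cosmetic (the paper routes the middle comparison through the law of the full rejection-sampling procedure and a TV bound, whereas you couple directly to the truncated comparator under $\rhat$), so no further changes are needed.
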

The main regret bound in \cref{eq:bon1} has two terms of interest.
The first, which scales as roughly $\frac{\Cone[\pistar](x)}{N}$,
decreases as the sample size increases,
and reflects the extent to which the set of responses
$y_1,\ldots,y_N\sim\piref(\cdot\mid{}x)$
contains enough information to compete with $\pistar$.
Intuitively, as $N$ grows, there is a higher probability of
drawing a response that, under the true reward $\rstar$,
is at least as good as one from the comparator $\pistar$.
However, $\rstar$ is not available to the learner,
who instead evaluates response quality using the imperfect proxy $\rhat$.
There becomes a greater risk of overfitting to errors in $\rhat$ as $N$ increases,
and candidates are drawn from the tail of the base distribution
where $\rhat$ is more error-prone.
The cost of this overoptimization is expressed in the second term of \cref{eq:bon1},
$\sqrt{N\cdot\vepsrms(x)}$,
which increases with sample size and leads to arbitrarily large regret as $N$ is scaled.\loose

Because sample size is the only parameter in \bonalg,
it plays dual, but opposing, roles in both performing regularization (smaller $N$ to stay on-support)
and increasing response quality (larger $N$ to draw good responses).
The optimal choice of $N$ must balance gains in quality with risk of overoptimization
and be large but not \emph{too} large,
which leads to the second guarantee in \cref{thm:bon}.
\cref{eq:bon2} resembles the idealized lower bound in \cref{prop:coverage} but has a slower rate,
scaling with $\vepsrm^{2/3}(x)$ instead of $\vepsrm(x)$.\footnote{The bound in \cref{eq:bon2} is larger than the
  bound in \cref{prop:coverage} in the non-trivial regime where $(\Cone[\pistar](x)\cdot\vepsrms(x))^{1/2}\leq\Rmax$.\arxiv{  A secondary
  drawback is that the bound in
  \cref{eq:bon2} scales with the reward range $\Rmax$, which
  \cref{prop:coverage} does not.}\loose}
The following result shows that this dependence is in fact tight.\loose
\begin{theorem}[Lower bound for \bonalg]\label{thm:lower-bon}
  For any $\vepsrm\in(0,\tfrac{1}{4}]$ and $N \gtrsim 1$,
  there exists a problem instance\arxiv{\footnote{We use the term ``problem instance'' to
    refer to a tuple $(\piref, \rhat, \rstar)$ of base policy, reward
    model, and true reward function.}}
  with $\vepsrm(x) \le \vepsrm$
  and comparator policy $\pistar$ with $\Cone[\pistar](x) = \wt\bigoh(1)$,
  such that, for any $x \in \cX$,
  \bonalg has regret\loose
  \begin{align}\label{eq:lb1}
    J(\pistar\midsem x) - J(\piN\midsem x) \ge \wt\Omega \rbr*{\sqrt{N\cdot\vepsrms}}.
  \end{align}
Further, for all $N\in\bbN$,
there exists a problem instance\arxiv{ with $\vepsrm(x)\leq\vepsrm$
and comparator policy $\pistar$ with $\Cone[\pistar]=\bigoht(1)$ such that,
for any $x\in\cX$, \bonalg has regret}\icml{ satisfying the same conditions such that \bonalg has\loose}
\begin{align}\label{eq:lb2}
    J(\pistar\midsem x) - J(\piN\midsem x) \ge \wt\Omega \rbr*{\vepsrm^{2 / 3 } }~.
  \end{align}
\end{theorem}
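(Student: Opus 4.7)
My plan is to prove both lower bounds by constructing adversarial two-response instances, each designed to force \bonalg into a failure mode tied to the probability $\piref$ places on a critical rare response. Since \bonalg deterministically returns $\argmax_{y_i\in\Yhat}\rhat(x,y_i)$, any response that is either missed by sampling or overestimated by $\rhat$ can inflate regret; the constructions exploit exactly one or both of these mechanisms.

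For the first bound, I take $\cY = \crl{\yp,\ym}$ with $\piref(\ym) = p := 1/N$ and $\piref(\yp) = 1-p$, rewards $\rstar(\yp) = \Delta$, $\rstar(\ym) = 0$ where $\Delta := \vepsrm\sqrt{N}$ (assume $\Delta\leq 1$, i.e.\ $N \leq 1/\vepsrm^2$; the complementary case is handled analogously using $p = \vepsrm^2$, $\Delta = 1$ to give $\wt\Omega(1)$ regret). I design $\rhat$ to marginally prefer $\ym$ over $\yp$, so that $\vepsrms(x) = p\Delta^2 = \vepsrm^2$ in the limit of vanishing reward gap (ties broken toward $\ym$). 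The comparator $\pistar$ concentrated on $\yp$ yields $\Cone[\pistar](x) = 1/(1-p) = 1 + O(1/N) = \wt O(1)$. Since \bonalg returns $\ym$ whenever $\ym\in\Yhat$---which occurs with probability $1 - (1-p)^N \geq 1-1/e$---the expected regret is at least $(1-1/e)\Delta = \wt\Omega(\vepsrm\sqrt{N}) = \wt\Omega(\sqrt{N\vepsrms})$, giving \cref{eq:lb1}.

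For the second bound, I split into regimes depending on $N$. When $N \geq N^\star := \vepsrm^{-2/3}$, I apply the same construction but with $p := \vepsrm^{2/3}$ and $\Delta := \vepsrm^{2/3}$ (in place of $p = 1/N$); this preserves $\vepsrms = \vepsrm^2$ and $\Cone[\pistar] = \wt O(1)$, and since $Np \geq 1$, the regret is $\wt\Omega(\vepsrm^{2/3})$. When $N$ is very small, I instead use a pure-coverage instance: set $\rhat = \rstar$ (so $\vepsrms = 0$) and $\piref(\yp) = q := 1 - \vepsrm^{2/(3N)}$, which makes \bonalg's regret exactly $(1-q)^N = \vepsrm^{2/3}$, with $\Cone[\pistar] = 1/q$ remaining $\wt O(1)$ so long as $N = O(\polylog(1/\vepsrm))$.

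The main obstacle will be closing the intermediate regime $\polylog(1/\vepsrm) \ll N \ll \vepsrm^{-2/3}$, where the pure-coverage coefficient grows beyond $\wt O(1)$ while the pure-overestimation construction produces only $\vepsrm\sqrt{N} < \vepsrm^{2/3}$ regret. I plan to handle this with a hybrid instance that combines a modest coverage gap with a rare overestimated trap response, tuning the probabilities and the reward gap so that coverage-induced regret ($\sim (1-q)^N$) and overestimation-induced regret ($\sim \Delta \cdot Np$) together sum to $\wt\Omega(\vepsrm^{2/3})$, while simultaneously satisfying $\Cone[\pistar] = \wt O(1)$ and $\vepsrms(x) \leq \vepsrm^2$. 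Verifying that such a balanced instance exists for every $N$ in the gap---and that the three families of instances together cover all $N \in \bbN$---is the most delicate step of the proof.
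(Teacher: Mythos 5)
Your construction for \cref{eq:lb1} is sound and is essentially the paper's: the paper proves it by instantiating its Theorem~\ref{thm:cinf-regret-lower} (Part~2) with $C=2$, i.e., a rare ``trap'' response with $\piref$-probability $\Theta(1/N)$ whose reward $\rhat$ overestimates $\rstar$ by $\approx\sqrt{N}\cdot\vepsrm$, so that \bonalg selects it whenever it is sampled (constant probability); your two-response version with $p=1/N$, $\Delta=\vepsrm\sqrt{N}$ is the same mechanism and the tie-breaking/boundary issues you flag are cosmetic. Likewise your large-$N$ regime of \cref{eq:lb2} ($p=\Delta=\vepsrm^{2/3}$) is fine.

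The genuine gap is exactly the intermediate regime you identify, and the hybrid you propose cannot close it. With a point-mass (or any constant-support, $\Cinf[\pistar]=\wt O(1)$) comparator, the two mechanisms you have are quantitatively capped: the coverage term is at most $\Rmax(1-q)^N\le e^{-\wt\Omega(N)}$ once $\Cone[\pistar]\approx 1/q=\wt O(1)$, and the trap term is at most $\Delta\cdot\min\{Np,1\}\le\vepsrm\sqrt{N}$ under the budget $p\Delta^2\le\vepsrms$; for $\polylog(1/\vepsrm)\ll N\ll\vepsrm^{-2/3}$ both are $o(\vepsrm^{2/3})$, so no tuning of $(q,p,\Delta)$ in that template reaches $\wt\Omega(\vepsrm^{2/3})$. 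The missing idea is to stop making $\pistar$ a point mass: take a comparator with $\Cone[\pistar]=\wt O(1)$ but $\Cinf[\pistar]\gg 1$, placing mass of order $\sqrt{\piref(y)}$ on responses with tiny $\piref(y)$, so the advantage that $N$ samples cannot reach decays only polynomially in $N$ rather than exponentially. The paper implements this with a countably infinite, geometrically tilted construction ($\piref(i)\propto 4^{-i}$, $\pistar(i)\propto 2^{-i}$, so $\Cone[\pistar]=O(\log(1/\vepsrm))$ while $\Cinf[\pistar]=O(1/\vepsrm)$), spreads the reward errors across levels with exponentially growing gaps so that $\vepsrms(x)\le\vepsrms$, and invokes an information-theoretic TV lower bound for any selection strategy (its approximate-rejection-sampling lemma) to get regret $\wt\Omega\rbr{(\Cone[\pistar]\vepsrms)^{1/3}}=\wt\Omega(\vepsrm^{2/3})$ for all $N\lesssim\vepsrm^{-2/3}$ (this is Theorem~\ref{thm:cone-regret-lower} with $p=1/3$, $C=\vepsrm^{-1}$). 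A minimal fix in your two-response style also works for \cref{eq:lb2}: set $\piref(y_1)=\delta^2$, $\pistar(y_1)=\delta$, $\pistar(y_0)=1-\delta$ with $\delta\asymp 1/N$, $\rhat=\rstar$, $\rstar(y_1)=1$, $\rstar(y_0)=0$; then $\Cone[\pistar]\le 2$, any algorithm (in particular \bonalg) returns $y_1$ with probability at most $N\delta^2$, and the regret is $\ge\delta-N\delta^2=\Omega(1/N)\ge\Omega(\vepsrm^{2/3})$ throughout $N\lesssim\vepsrm^{-2/3}$ --- but some construction exploiting the $\Cone$--$\Cinf$ separation is indispensable, and it is absent from your proposal as written.
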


\cref{eq:lb1} shows that the regret indeed grows as $N$ is scaled,
and is an algorithm-dependent lower bound that reflects the consequences of overfitting in \bonalg.
This is a serious concern since, when scaling $N$ in practice,
it may be impossible to know the sample size beyond which overoptimization occurs,
especially on a per-prompt basis. 
Then, building on \cref{eq:lb1},
the bound in \cref{eq:lb2} shows that $\vepsrm^{2/3}(x)$ is the best possible error achievable by \bonalg,
which matches the upper bound in \cref{eq:bon2}.
In other words, \emph{irrespective of the
  amount of computation, \bonalg may fail to achieve the optimal skyline for
  regret in \cref{prop:coverage}.}

The suboptimality stems from the dual role of $N$, which serves as the regularizer,
but is only a weak one at best.
In particular, $N$ cannot be safely increased to sample better candidates
without disproportionately increasing the risk of overfitting.
This insight motivates the algorithms we develop in \cref{sec:algorithm},
which mitigate overoptimization through a more refined form of regularization and use of  inference-time computation.

\begin{remark}[Proof technique]
  The lower bound in \cref{eq:lb1} is algorithm-specific,
  and the construction exposes the cost of overfitting to errors in $\rhat$,
  specifically, for responses with small probability under $\piref$
  that are drawn when $N$ is relatively large.
  To prove \cref{eq:lb2}, we first leverage an information-theoretic argument showing that if
  $N\ll\mathrm{poly}\rbr*{\Cone[\pistar](x),\,\frac{1}{\vepsrms(x)}}$,
  \emph{no algorithm} can extract enough information to compete with $\pistar$.
  Combining this regime with the one in \cref{eq:lb1} yields \cref{eq:lb2}.\loose
\end{remark}

\arxiv{\subsection{Stronger Guarantees for \bonalglong under Uniform
    Coverage}}
\icml{\subsection{Stronger Guarantees under Uniform Coverage}}

Per \cref{thm:lower-bon}, the regret of \bonalg must scale with $\vepsrm^{2 / 3}(x)$ in general.
However, it does enjoy tighter guarantees when a stronger form of coverage---the \emph{uniform coverage coefficient}---is bounded:
\icml{$\Cinf[\pistar](x)\ldef{}\sup_{y\in\cY}\frac{\pistar(y\mid{}x)}{\piref(y\mid{}x)}$.\loose}
\arxiv{\loose\[
\Cinf[\pistar](x)\ldef{}\sup_{y\in\cY}\frac{\pistar(y\mid{}x)}{\piref(y\mid{}x)}.
\]\loose}
\begin{theorem}[\arxiv{Guarantee for }\bonalg under uniform coverage]%
  \label{thm:bon_uniform}%
  For any $x\in\cX$ and comparator policy $\pistar$, if  $N
  \ge \Cinf[\pistar](x)$, the \bonalg policy $\piN$
  satisfies\loose
  \icml{\begin{align}
          & J(\pistar;x) - J(\piN;x) \\
          & \lesssim
    \Rmax \cdot \exp\prn*{ -\nicefrac{N}{\Cinf[\pistar](x)}}
    + \sqrt{N \cdot \vepsrms(x)} ~.
         \end{align}
         }
  \arxiv{\begin{align}
          J(\pistar;x) - J(\piN;x)
          & \lesssim
          \Rmax \cdot \exp\prn*{ -\nicefrac{N}{\Cinf[\pistar](x)}}
          + \sqrt{N \cdot \vepsrms(x)} ~.
         \end{align}
         }
  In particular, as long as $N \asymp \Cinf[\pistar]\log\rbr*{\nicefrac{\Rmax}{\vepsrm(x)}}$,\arxiv{ we have that}
  \icml{\begin{small}}
  \begin{align}
    J(\pistar;x) - J(\piN;x) 
    \lesssim 
    \sqrt{ \Cinf[\pistar](x) \cdot \vepsrms(x) \cdot\log\rbr*{\nicefrac{\Rmax}{\vepsrm(x)}}}.
  \end{align}
  \icml{\end{small}}
\end{theorem}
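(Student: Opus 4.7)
My plan is to decompose the regret into a \emph{sampling bias} term, which measures how well the best sample under the true reward competes with $\pistar$, and an \emph{estimation error} term, which captures the cost of ranking by $\rhat$ rather than $\rstar$. Let $\Yhat = (y_1,\dots,y_N) \iid \piref(\cdot\mid x)$, let $\istar_{\star} \in \argmax_{i} \rstar(x,y_i)$ and $\ytil = y_{\istar_{\star}}$ be the \emph{oracle} BoN choice, and let $\yhat = y_{\istar}$ with $\istar \in \argmax_i \rhat(x,y_i)$ denote the actual BoN output. Then
\[
J(\pistar;x) - J(\piN;x) = \underbrace{\bigl(J(\pistar;x) - \En[\rstar(x,\ytil)]\bigr)}_{\mathrm{(I)}} + \underbrace{\bigl(\En[\rstar(x,\ytil)] - \En[\rstar(x,\yhat)]\bigr)}_{\mathrm{(II)}}.
\]

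For term (I), the key is a rejection sampling coupling that exploits the uniform coverage assumption $\pistar(y\mid x) \le \Cinf[\pistar](x) \cdot \piref(y\mid x)$. Specifically, I would attach to each $y_i$ an independent Bernoulli acceptance indicator $A_i$ with $\Pr[A_i = 1 \mid y_i] = \pistar(y_i\mid x) / (\Cinf[\pistar](x)\cdot \piref(y_i\mid x)) \in [0,1]$; conditioned on $A_i = 1$, the response $y_i$ is exactly distributed as $\pistar(\cdot\mid x)$. Since $\En[\Pr[A_i=1\mid y_i]] = 1/\Cinf[\pistar](x)$, the probability that no $y_i$ is accepted is $(1 - 1/\Cinf[\pistar](x))^N \le \exp(-N/\Cinf[\pistar](x))$. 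Because $\ytil$ maximizes $\rstar$ over $\Yhat$, it has reward at least as large as any accepted sample; taking expectations and using that rewards lie in $[0,\Rmax]$ yields $\En[\rstar(x,\ytil)] \ge (1 - \exp(-N/\Cinf[\pistar](x))) \cdot J(\pistar;x)$, so $\mathrm{(I)} \le \Rmax\cdot \exp(-N/\Cinf[\pistar](x))$.

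For term (II), I use the standard argmax-swap inequality. Since $\rhat(x,\ytil) - \rhat(x,\yhat) \le 0$,
\[
\rstar(x,\ytil) - \rstar(x,\yhat) \le \abs{\rhat(x,\ytil) - \rstar(x,\ytil)} + \abs{\rhat(x,\yhat) - \rstar(x,\yhat)} \le 2 \max_{i\in[N]} \abs{\rhat(x,y_i) - \rstar(x,y_i)}.
\]
Taking expectations over $\Yhat \iid \piref(\cdot\mid x)$ and applying Jensen's inequality,
\[
\En\brk*{\max_{i\in[N]} \abs{\rhat(x,y_i) - \rstar(x,y_i)}} \le \En\brk*{\sqrt{\sum_{i=1}^N (\rhat(x,y_i) - \rstar(x,y_i))^2}} \le \sqrt{N \cdot \vepsrms(x)},
\]
which gives $\mathrm{(II)} \lesssim \sqrt{N\cdot \vepsrms(x)}$. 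Combining the two bounds yields the displayed regret inequality, and setting $N \asymp \Cinf[\pistar](x)\log(\Rmax/\vepsrm(x))$ makes the first (exponential) term at most $\vepsrm(x)$, which is dominated by the second term, yielding the final $\sqrt{\Cinf[\pistar](x)\cdot \vepsrms(x)\cdot\log(\Rmax/\vepsrm(x))}$ bound.

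The main subtlety is the coupling in term (I): one needs the $A_i$'s to be independent across samples and to ensure that \emph{conditional on} $A_i = 1$, the response $y_i$ is truly distributed as $\pistar$. This is standard for rejection sampling when $\Cinf[\pistar](x)<\infty$, but is the only place where the uniform coverage assumption is essential---under the weaker $\Cone[\pistar]$ coverage of Theorem~\ref{thm:bon}, this coupling fails and one must absorb an extra polynomial factor in $N$, which is precisely the source of the suboptimal $\vepsrm^{2/3}$ rate seen earlier. The other nontrivial check is that combining the bounds requires $N\ge \Cinf[\pistar](x)$ so the exponential bound $(1-1/\Cinf)^N \le \exp(-N/\Cinf)$ is non-vacuous, which is the stated hypothesis.
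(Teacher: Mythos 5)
Your proof is correct, and it reaches the theorem's bound by a route that differs in its details from the paper's. The paper obtains \cref{thm:bon_uniform} as a one-line corollary of its general regret bound (\cref{lem:x-regret-upper-general}) instantiated with rejection threshold $M=\Cinf[\pistar](x)$, so that the $\cE_M$-divergence vanishes: there, the intermediate comparator is the law $\pisrej$ of approximate rejection sampling targeting $\pistar$, the argmax property of \bonalg is used to couple $\En_{\pisrej}[\rhat]$ and $\En_{\piN}[\rhat]$ on the same batch $\Yhat$, the distance $\Dtv{\pistar}{\pisrej}$ is controlled by \cref{lem:x-tv-upper-bound}, and the $\rhat$-to-$\rstar$ conversion is done by change of measure (Cauchy--Schwarz with $\Cone[\pisrej]$ and with $\Cinf[\piN]\le N$), producing the extra $\sqrt{\Cone[\pistar](x)\cdot\vepsrms(x)}+\vepsrm(x)/2$ terms that are then absorbed. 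You instead compare against the oracle \bonlong choice $\ytil=\argmax_i\rstar(x,y_i)$ within the same batch: your acceptance-indicator coupling (valid exactly because $\Cinf[\pistar](x)$ uniformly bounds the density ratio, so the acceptance probabilities lie in $[0,1]$ and the accepted sample is exactly $\pistar$-distributed) gives the $\Rmax\exp(-N/\Cinf[\pistar](x))$ term directly, and your argmax-swap step bounds the ranking penalty by $2\max_i\abs{\rhat(x,y_i)-\rstar(x,y_i)}$, which the pointwise $\max\le\ell_2$ bound plus Jensen turns into $2\sqrt{N\cdot\vepsrms(x)}$. What your route buys is a shorter, self-contained argument that avoids the TV lemma and the per-batch coupling of $\rhat$-expectations, and even yields a marginally cleaner bound (no separate $\sqrt{\Cone\cdot\vepsrms}$ term); what the paper's route buys is generality, since \cref{lem:x-regret-upper-general} works for any truncation level $M$ via the $\cE_M$-divergence and therefore also delivers \cref{thm:bon} under $L_1$-coverage, whereas---as you correctly note---your exact coupling is specific to the uniform-coverage regime. (One cosmetic point: your first inequality in term (II) is the deterministic bound $\max_i\abs{z_i}\le(\sum_i z_i^2)^{1/2}$; Jensen is only needed for moving the expectation inside the square root.)
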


When we are willing to pay for uniform coverage,
the regret of \bonalg scales as $\sqrt{\vepsrms(x)}$,
which matches the statistical rate in the skyline of \cref{prop:coverage}.
This suggests that \bon may already be sufficient in some cases, at least when
  it is possible to tune $N$ optimally; we revisit
  this point empirically in \Cref{sec:experiments}.
Generally, however, we expect that $\Cinf[\pistar](x) \gg \Cone[\pistar](x)$,
making \cref{thm:bon_uniform} highly suboptimal relative to the skyline.
For example, softmax policies, which are normalized exponentials of the logits,
can have exponentially large $\Cinf[\pistar]$, while $\Cone[\pistar]$ is $\wt{O}(1)$.
This is (loosely) the case in the lower bound construction of \cref{thm:lower-bon},
where $\piref$ is an exponential policy,
and is upweighted by an exponential multiplier to form a $\pistar$ that is more sharply peaked on $\rstar$.

\icml{Lastly, \cref{sec:bon_iid} extends the results in this section
to the setting where prompts are drawn \iid from a distribution,
and shows that similar conclusions hold there for \bonalg with a fixed $N$.}

\arxiv{\section{An Optimal Algorithm: \mainalglong}}
\icml{\section{\mainalglong}}
\label{sec:algorithm}
We now present \mainalg, an optimal inference-time alignment method
that implements a statistically sound regularizer purely at inference time.
In contrast to \bonalg,\newline
\mainalg separates the scaling parameter ($N$) from the regularization parameter,
and is thereby able to achieve the performance skyline in \cref{prop:coverage}.
It is, as a result, also \emph{scaling-monotone}---as $N$ is increased,
it does not overfit or lose performance. 
We first give an \arxiv{overview of the
algorithm}\icml{algorithm overview}, then state the main theoretical guarantee.\loose

\subsection{The \mainalglong Algorithm}

\icml{\begin{algorithm}[tp]}
  \arxiv{\begin{algorithm}[tp]}
  \caption{\arxiv{\mainalglong (\mainalg)}\icml{\mainalg}}
  \label{alg:main}
  \begin{minipage}{\linewidth}
  \begin{algorithmic}[1]
    \Statex[0] \multiline{{\bfseries input:}
      Prompt $x$, 
      reference policy $\piref$, 
      reward model $\rhat$,
      query size $N$, 
      regularization \arxiv{coefficient}\icml{coeff.} $\beta>0$.}
    \State Draw $\Yhat\ldef{}(y_1,\ldots,y_N)\overset{\mathrm{\iid}}{\sim} \piref(\cdot\mid{}x)$.
    \State Using \normalg (\cref{alg:norm}), compute normalization
    constant $\lambdahat(x)$ such that
    \icml{\begin{small}}
    \begin{align}
      \label{eq:normalization_approx}
      \frac{1}{N}\sum_{ y \in \Yhat }\relu\rbr*{ \beta^{-1}\rbr*{ \rhat(x,y) - \lambdahat(x)} } = 1.
    \end{align}\icml{\end{small}}
    \State \mbox{Sample \icml{resp. }\arxiv{response }$y\sim \rejalg_{N,M}(\ahreplace{\pi}{w} \midsem{}\piref,x)$}
    (\cref{alg:rej}), where $M\ldef\frac{\Rmax-\lambdahat(x)}{\beta}$ and
    \icml{\begin{small}}
    \ahreplace{
      \begin{equation}
        \pi(y\mid{}x) \ldef \piref(y\mid{}x) \cdot \relu\rbr*{ \beta^{-1}\rbr*{ \rhat(x,y) - \lambdahat(x)} }.
      \end{equation}
    }{
      \begin{equation}
        w(y\mid{}x) \ldef \relu\rbr*{ \beta^{-1}\rbr*{ \rhat(x,y) - \lambdahat(x)} }.
      \end{equation} 
    }
    \icml{\end{small}}
    \State \textbf{return:}
    response $y$. 
\end{algorithmic}
\end{minipage}
\end{algorithm}

The \mainalg algorithm is displayed in \cref{alg:main}. For a
regularization parameter $\beta>0$, the algorithm is designed
to sample from the distribution
\icml{\begin{small}}
  \begin{align}
    \label{eq:chis_ideal}
    \pichis(y\mid{}x)\ldef{}\piref(y\mid{}x)\cdot\relu\prn*{
    \beta^{-1}(\rhat(x,y)-\lambda(x))
    },
  \end{align}%
\icml{\end{small}}%
where $\relu(z)\ldef{}\max\crl*{z,0}$, and $\lambda(x)$ is a
normalization constant chosen such that
\begin{align}
  \label{eq:normalization}
  \sum_{y\in\cY}\piref(y\mid{}x)\cdot\relu\prn*{
\beta^{-1}(\rhat(x,y)-\lambda(x))
  } = 1.
\end{align}
\arxiv{The distribution in \cref{eq:chis_ideal} is the exact solution to the
\emph{\chis-regularized reinforcement learning} objective,}
\icml{The distribution in \cref{eq:chis_ideal} is the exact solution to the
\emph{\chis-regularized reinforcement learning} objective: defining
$\Dchis{\pi(x)}{\piref(x)}=\frac{1}{2}\En_{y\sim\piref(\cdot\mid{}x)}\brk[\Big]{\prn[\big]{\frac{\pi(y\mid{}x)}{\piref(y\mid{}x)}-1}^2}=\frac{1}{2}\prn*{\Cone[\pi](x)-1}$
as the \chis-divergence, we have}
\begin{align}
  \label{eq:chis_objective}
  \pichis(x) = \argmax_{p\in\Delta(\cY)}\crl*{
  \En_{y\sim{}p}\brk*{\rhat(x,y)}
  - \beta\cdot\Dchis{p}{\piref(x)}
  }\arxiv{,}\icml{.}
\end{align}
\arxiv{which is regularized with the \chis-divergence,
$\Dchis{\pi(x)}{\piref(x)}=\frac{1}{2}\En_{y\sim\piref(\cdot\mid{}x)}\brk[\Big]{\prn[\big]{\frac{\pi(y\mid{}x)}{\piref(y\mid{}x)}-1}^2}=\frac{1}{2}\prn*{\Cone[\pi](x)-1}$.}
As we discuss in the sequel, the \chis-regularizer
adapts to the uncertainty in the reward model $\rhat(x,y)$,
so that the regularized policy in \cref{eq:chis_ideal} implements \emph{pessimism in the face of uncertainty},
a principle from offline reinforcement learning that carries strong guarantees
\citep{jin2021pessimism,rashidinejad2021bridging,huang2024correcting}.
In particular, \citet{huang2024correcting} showed that \chis-regularization at training time
can achieve the skyline in \cref{prop:coverage};
our method implements similar regularization but at inference-time, and attains those same guarantees. 

The pessimistic \chis-regularization prevents the algorithm from
overfitting to potentially misleading responses, e.g.,
those in the tail of the base distribution for which
$\piref(y\mid{}x)$ is small and $\rhat(x,y)$ is erroneously estimated to be large.
The parameter $\beta>0$ controls the degree of pessimism,
with small values inducing a greedier policy,
and large values inducing a conservative, heavy-tailed distribution over responses.
For any choice of $\beta$, however,
there is a set performance level the algorithm will never drop below, even as $N\to\infty$.\loose

To (approximately) sample from the optimal \chis-regularized policy in \cref{eq:chis_ideal},
\cref{alg:main} proceeds in two steps.
First, since the normalization constant $\lambda(x)$ in \cref{eq:normalization} is unknown,
the algorithm computes an approximate normalizer $\lambdahat(x)$.
It draws $N$ responses $\Yhat\ldef{}(y_1,\ldots,y_N)\overset{\mathrm{\iid}}{\sim} \piref(\cdot\mid{}x)$
and uses them to solve \cref{eq:normalization_approx},
an empirical approximation to \cref{eq:normalization}.
This is accomplished via the dynamic programming subroutine in \normalg (\cref{alg:norm}),
in $\bigoh(N\log N)$ time. \icml{See \cref{app:normalization} for details.}\arxiv{Further details are provided in \cref{app:normalization}.}

\arxiv{

\arxiv{\begin{algorithm}[tp]}
  \icml{\begin{algorithm}[tp]}
  \caption{\arxiv{Normalization constant computation (\normalg)}\icml{\normalg}}
  \label{alg:norm}
  \begin{minipage}{\linewidth}
  \begin{algorithmic}[1]
    \Statex[0] \multiline{{\bfseries input:}
      Prompt $x$, 
      reward model $\rhat$,
      regularization coefficient $\beta>0$,
      set of responses $\Yhat= (y_1,\ldots,y_N)$.
    }
    \State Sort and bucket $\Yhat$ into bins $\cY_{1},\ldots,\cY_{M}$ 
    according to the value of $\rhat(x,y)$, in ascending order, such that 
    \begin{align}
        \rhat(x,y) = \rhat_i, \quad\forall y \in \cY_{i},\forall i\in[M],\mathand
            \rhat_{i} < \rhat_{i+1}, \quad\quad\forall i\in[M]
    \end{align}
    \State Initialize $\rhat_0 = -\infty$, $J\leftarrow \sum_{i=1}^M \rhat_{i}\cdot\frac{\abr{\cY_{i}}}{N}$ and $Z\leftarrow 1$.
    \For{$i=1\ldots M$}
      \vspace{0.5em}
      \State Set $\lambda \leftarrow \frac{J - \beta}{Z}$.
      \vspace{0.5em}
      \If{$\rhat_{i-1} \leq \lambda < \rhat_{i}$ or $i = M$}
        \State \textbf{return} $\lambda$.
      \Else
        \State Update $J \leftarrow J -  \rhat_{i}\cdot\frac{\abr{\cY_{i}}}{N}$ and $Z \leftarrow Z - \frac{\abr{\cY_{i}}}{N}$.
      \EndIf
    \EndFor
\end{algorithmic}
\end{minipage}
\end{algorithm}

}

Next, \mainalg generates samples from (approximately) the policy in \cref{eq:chis_ideal} using
classical \emph{rejection sampling}
\citep{von1963various,block2023sample}\arxiv{, shown in \cref{alg:rej}}\icml{; see \cref{alg:rej} in \cref{app:rejection}}.
Given a rejection sampling threshold $M>0$, the algorithm draws another set of responses $\Yhat = y_1,\ldots,y_N$.
For each response $i$, it samples a Bernoulli random variable
$\xi_i\sim\Ber\prn*{\frac{\relu(\beta^{-1}(\rhat(x,y_i)-\lambdahat(x)))}{M}\wedge{}1}$,
and returns $y_i$ as the final response if $\xi_i=1$.
By the heavy-tailed nature of the idealized \chis-regularized distribution in \cref{eq:chis_ideal},
a rejection threshold of $M\approx\frac{\Rmax}{\beta}$ is sufficient.
Accounting for both the normalization constant computation and rejection sampling phase,
a total of $N=\bigoht\prn*{\frac{\Rmax}{\beta}}$ samples ensures that
the response distribution of \cref{alg:main} is a good approximation of \cref{eq:chis_ideal}.\loose

\arxiv{
\icml{\begin{algorithm}[htp]}
\arxiv{\begin{algorithm}[tp]}
  \caption{\rejalglong ($\rejalgNM$)}
  \label{alg:rej}
  \begin{minipage}{\linewidth}
  \begin{algorithmic}[1]
    \Statex[0] \mbox{{\bfseries input:}
      Prompt $x$, base policy $\piref$,
      \ahreplace{
      target policy $\pi$, 
      query sample size $N$,
      }{
        importance weight $w$, 
      } 
      truncation level $M$.} 
    \State Draw $\Yhat =(y_1,\ldots,y_N, y_{N+1})\sim \piref(\cdot\mid{}x)$ \iid
    \For{$i=1\ldots N$}
      \State Sample Bernoulli random variable $\xi_i$ such that 
      \ahreplace{
        $\Pr\rbr*{\xi_i = 1\mid{} y_i} = \min\cbr*{\frac{\pi(y_i\mid{}x)}{M \cdot \piref(y_i\mid{}x)}, 1}$
      }{
        $\Pr\rbr*{\xi_i = 1\mid{} y_i} = \min\cbr*{\frac{w(y_i\mid{}x)}{M}, 1}$
      }
      \If{$\xi_i=1$}
        \State \textbf{return} response $y=y_i$
        \label{line:rej-term}
      \EndIf
    \EndFor
    \State \textbf{return} response $y=y_{N+1}$. 
    \label{line:rej-noterm}
\end{algorithmic}
\end{minipage}
\end{algorithm}

}

\subsection{Theoretical Guarantees}

We now bound the regret for \mainalg,
which achieves the skyline rate of \cref{prop:coverage} with coverage coefficient $\Cone[\pistar](x)$.\loose
\begin{theorem}[Guarantee for \mainalg]
  \label{thm:main}
  For any $\beta > 0$ and $\vepsrms(x) \in (0, 1]$,
  if $N \geq c\cdot\frac{\Rmax}{ \beta}
  \log\rbr*{ \frac{\Rmax}{ \beta\cdot\vepsrm(x)} }$ for a sufficiently
    large \arxiv{absolute} constant $c$, \mainalg satisfies\loose
  \icml{  \begin{align}
    \label{eq:pes1}
            &J(\pistar ; x) - J(\pipes;x) \\&\lesssim \sqrt{ \Cone[\pistar](x)\cdot \vepsrms(x) } + \beta \cdot\Cone[\pistar](x) + \beta^{-1} \cdot\vepsrms(x).
  \end{align}}
\arxiv{  \begin{align}
    \label{eq:pes1}
    J(\pistar ; x) - J(\pipes;x) \lesssim \sqrt{ \Cone[\pistar](x)\cdot \vepsrms(x) } + \beta \cdot\Cone[\pistar](x) + \beta^{-1} \cdot\vepsrms(x).
  \end{align}}
Setting $\beta \asymp \sqrt{\frac{\vepsrms(x)}{\Cone[\pistar](x)}}$,
as long as $N \gtrsim \wt\Omega\rbr*{\sqrt{\frac{\Rmax^2\cdot \Cone[\pistar](x)}{\vepsrms(x)}}}$,
we have
\begin{equation}
  \label{eq:pes2}
    J(\pistar ; x) - J(\pipes;x) \lesssim \sqrt{ \Cone[\pistar](x) \cdot \vepsrms(x) }  .
  \end{equation}
\end{theorem}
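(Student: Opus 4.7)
The plan is to split the regret of \mainalg into two pieces:
\begin{equation}
  J(\pistar;x) - J(\pipes;x) = \underbrace{J(\pistar;x) - J(\pichis;x)}_{\text{(I): idealized }\chi^2\text{-regret}} + \underbrace{J(\pichis;x) - J(\pipes;x)}_{\text{(II): sampling error}},
\end{equation}
where $\pichis$ is the exact maximizer in \cref{eq:chis_objective}. Term (I) is a purely population-level quantity that isolates the statistical benefit of \chis-regularized pessimism, while (II) captures the cost of using finite samples to both solve the normalization equation \cref{eq:normalization_approx} and run the rejection sampler.

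For (I), I would start from the variational optimality of $\pichis$ under $\rhat$ to get $\En_{\pistar}[\rhat] - \En_{\pichis}[\rhat] \le \beta\,\Dchis{\pistar(x)}{\piref(x)} - \beta\,\Dchis{\pichis(x)}{\piref(x)}$, then rewrite $J(\pistar;x) - J(\pichis;x)$ by adding and subtracting $\rhat$. The two ``reward-mismatch'' terms $\En_{\pistar}[\rstar-\rhat]$ and $\En_{\pichis}[\rhat-\rstar]$ can each be written against $\piref$ and bounded by Cauchy--Schwarz as $\sqrt{2\Dchishort(\pi\|\piref)\cdot\vepsrms(x)}$ for $\pi\in\{\pistar,\pichis\}$. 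Applying AM--GM of the form $\sqrt{2\Dchishort(\pichis\|\piref)\vepsrms} \le \beta\,\Dchishort(\pichis\|\piref) + \vepsrms/(2\beta)$ lets the $\pichis$-divergence cancel against the $-\beta\,\Dchishort(\pichis\|\piref)$ produced by optimality, leaving a bound
\begin{equation}
  J(\pistar;x) - J(\pichis;x) \lesssim \sqrt{\Cone[\pistar](x)\vepsrms(x)} + \beta\,\Cone[\pistar](x) + \beta^{-1}\vepsrms(x),
\end{equation}
using $\Dchishort(\pistar\|\piref) \le \Cone[\pistar](x)$.

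For (II), the job is to show that the rejection-sampling scheme produces a law close to $\pichis$ in total variation, since then $|J(\pichis;x) - J(\pipes;x)| \le \Rmax\,\Dtv(\pichis(x),\pipes(x))$. I would first argue concentration of the empirical normalizer: \normalg solves the empirical version of \cref{eq:normalization}, and a uniform concentration argument over the $O(N)$ candidate thresholds $\rhat_i$ shows that with $N \gtrsim \frac{\Rmax}{\beta}\log(\frac{\Rmax}{\beta\vepsrms(x)})$, $|\lambdahat(x)-\lambda(x)|$ is small enough that the resulting importance weight $w(\cdot\mid x)$ differs from the idealized one in $L^1(\piref)$ by at most the target tolerance. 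Next, because $w(y\mid x) \le M = (\Rmax-\lambdahat(x))/\beta$ almost surely, \cref{alg:rej} is \emph{exact} rejection sampling against a bounded weight, so invoking the approximate rejection-sampling analysis of \citet{block2023sample} the output law has TV distance to $\pichis(x)$ at most the sum of the weight-approximation error and the failure probability $(1-1/M)^N$. Choosing $N = \wt{\Omega}(\Rmax/\beta)$ drives both to within $O(\vepsrms(x)/\Rmax)$, so (II) contributes at most $O(\vepsrms(x))$ on top of (I), which is absorbed into the existing terms.

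The main obstacle is the joint analysis of the empirical normalizer and the rejection step: $\lambdahat(x)$ is a data-dependent quantity that appears \emph{inside} the weight $w$ used in step 3, so the samples used to compute $\lambdahat(x)$ must either be shown independent of the rejection phase (fresh draws) or handled via a stability argument that controls how much a small perturbation of $\lambda$ can change the resulting sampling distribution. I expect to resolve this by analyzing the piecewise-constant structure of the map $\lambda \mapsto \En_{\piref}[\relu(\beta^{-1}(\rhat-\lambda))]$ and showing it is $\beta^{-1}$-Lipschitz in $\lambda$ in an $L^1$ sense, so that an $O(\beta/N^{1/2})$-accurate $\lambdahat$ suffices. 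The second claim \cref{eq:pes2} is then immediate by plugging the balancing choice $\beta \asymp \sqrt{\vepsrms(x)/\Cone[\pistar](x)}$ into \cref{eq:pes1} and verifying that the stated $N$ meets the hypotheses of the first part under this choice of $\beta$.
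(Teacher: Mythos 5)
Your decomposition through the \emph{exact} idealized policy $\pichis$ creates a gap in part (II) that the stated sample size cannot close. To make $J(\pichis;x)-J(\pipes;x)\le \Rmax\cdot\Dtv{\pichis(x)}{\pipes(x)}$ contribute only $O(\vepsrm(x))$, you need the law targeted by the rejection sampler—the normalized policy $\pil[\lambdahat]$ built from the empirical threshold—to be within TV distance roughly $\vepsrm(x)/\Rmax$ of $\pichis$. But the TV distance between $\pil[\lambdahat]$ and $\pichis$ is governed by how far the \emph{population} normalizer $\Phi(\lambdahat)=\En_{y\sim\piref}\brk[\big]{\relu(\beta^{-1}(\rhat(x,y)-\lambdahat))}$ is from $1$ (indeed $\Dtv{\pichis(x)}{\pil[\lambdahat](x)}$ can be of order $\abs{\Phi(\lambdahat)-1}$, essentially tightly), and with $N \asymp \frac{\Rmax}{\beta}\log\rbr*{\frac{\Rmax}{\beta\vepsrms(x)}}$ samples Bernstein-type concentration of the empirical equation $\Phihat(\lambdahat)=1$ only pins $\Phi(\lambdahat)$ down to a \emph{constant} multiplicative window (the summands have range $\approx\Rmax/\beta$, so driving the error below $\epsilon_0$ costs $N\gtrsim(\Rmax/\beta)\epsilon_0^{-2}$). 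Getting $\abs{\Phi(\lambdahat)-1}\lesssim\vepsrm(x)/\Rmax$ would require on the order of $\Rmax^{3}/(\beta\,\vepsrms(x))$ samples, far exceeding the theorem's requirement; with only constant accuracy, your $\Rmax\cdot\Dtv{\cdot}{\cdot}$ step yields an $\Omega(\Rmax)$ term and the bound collapses. Your proposed fix via Lipschitzness of $\lambda\mapsto\Phi(\lambda)$ and an ``$O(\beta/N^{1/2})$-accurate $\lambdahat$'' does not rescue this: $\Phi$ can be nearly flat above the true $\lambda$, so small error in $\Phi$ need not give small $\abs{\lambdahat-\lambda}$, and in any case the accuracy in $\Phi$ itself is the binding constraint.

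The paper avoids this entirely by never comparing to $\pichis$ in TV. Its decomposition runs through $\pil[\lambdahat]$ itself: a concentration lemma shows only that $\Phi(\lambdahat)\in[\tfrac12,\tfrac32]$ with high probability, and a sensitivity lemma (via the KKT characterization over measures of total mass $\alpha$) shows that any such approximately-normalized threshold policy still satisfies $\Jr(\pistar)-\Jr(\pil[\lambdahat])\le \tfrac{3\beta}{4}\Cone[\pistar](x)-\tfrac{\beta}{4}\Cone[{\pil[\lambdahat]}](x)$. The retained negative term $-\tfrac{\beta}{4}\Cone[{\pil[\lambdahat]}](x)$ is then what cancels the reward-estimation error $\En_{\pil[\lambdahat]}\brk{\rhat-\rstar}\le\sqrt{\Cone[{\pil[\lambdahat]}](x)\vepsrms(x)}$ via AM--GM—this is the pessimism mechanism—so only constant-factor normalization accuracy is ever needed, and rejection sampling is compared to $\pil[\lambdahat]$ (exact up to the $\exp(-N\beta/2\Rmax)$ failure probability). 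Your part (I) is fine and matches the idealized analysis of \citet{huang2024correcting}, but to repair part (II) within the stated query complexity you need a robustness argument of this kind rather than TV-closeness to $\pichis$.
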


\icml{\icml{\begin{figure*}[th]}
  \arxiv{\begin{figure*}[tp]}
    \centering
      \includegraphics[width=0.3\textwidth]{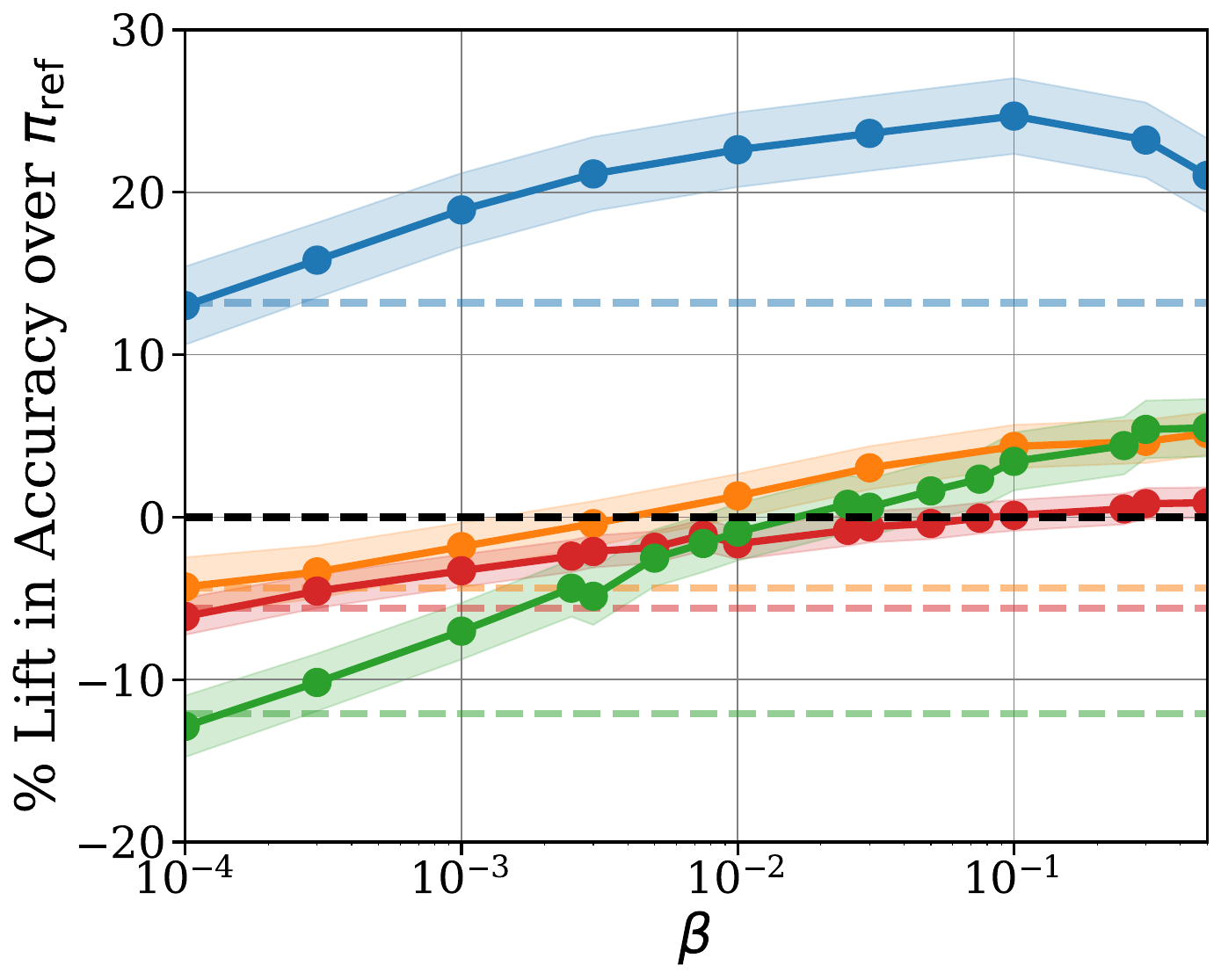}
    \hfill %
        \includegraphics[width=0.3\textwidth]{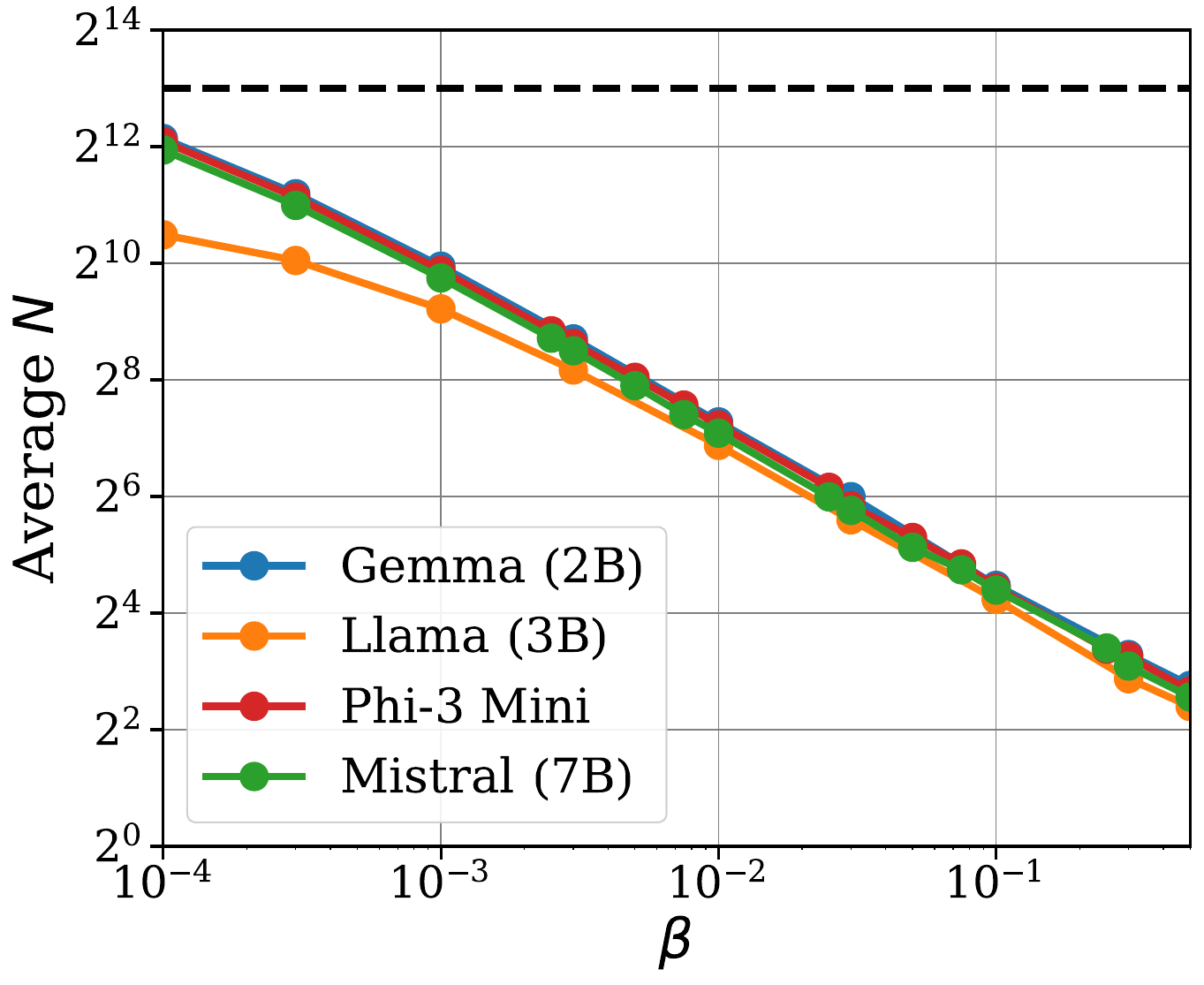}
    \hfill %
      \includegraphics[width=0.3\textwidth]{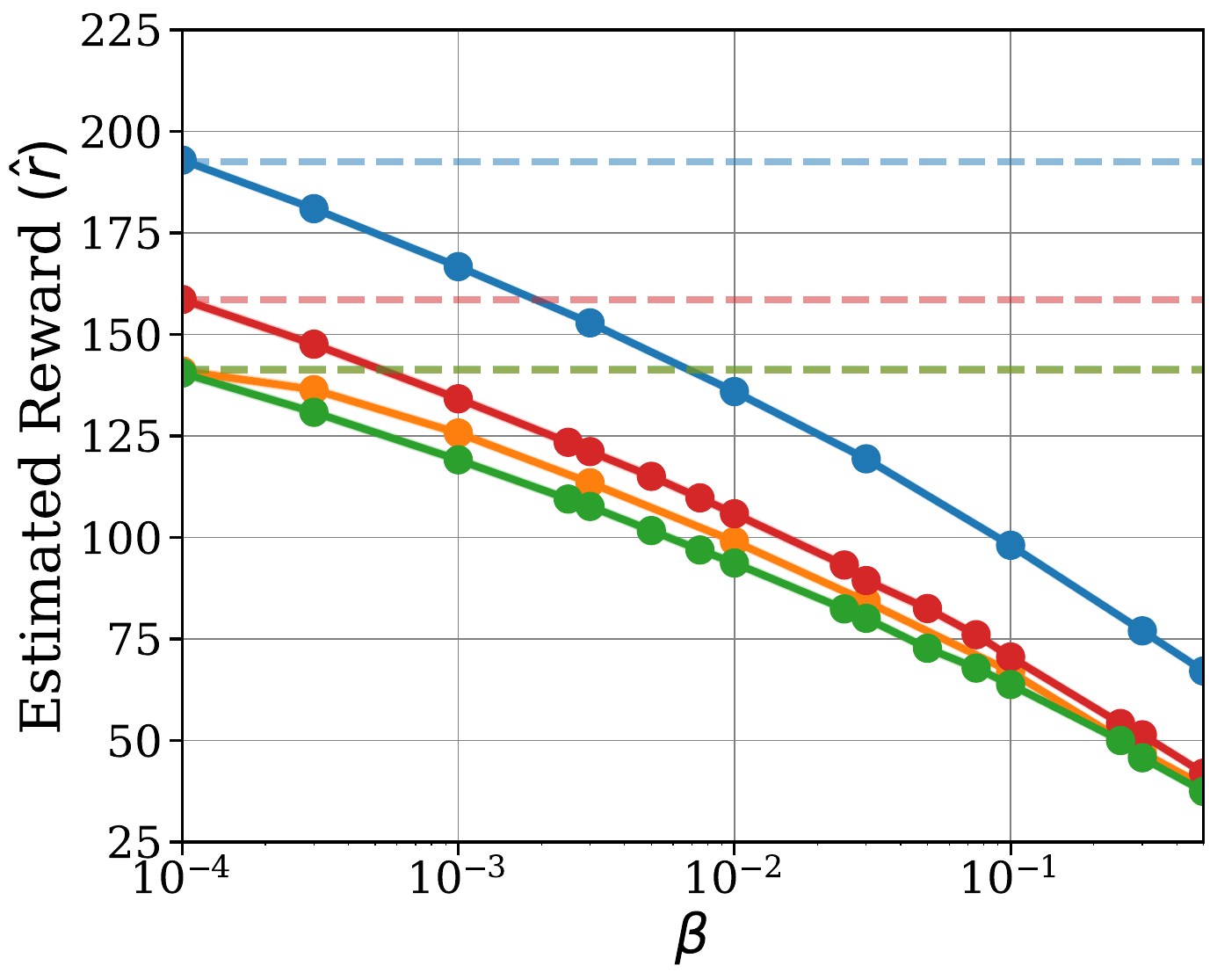}
    \icml{\vspace{-0.4cm}}
    \caption{Compute-normalized comparison of \mainalg (solid lines)
      and \bonalg (dashed lines) on \gsmk with \oasst as $\rhat$, as a
      function of regularization parameter $\beta$. We run \bonalg
      with $N=2^{13}$ and run \mainalg until
      rejection sampling accepts (capped to $N=2^{13}$).
      \Left: \arxiv{We see that }\mainalg can improve significantly
      over\arxiv{ na{\"i}ve} \bonalg for large $N$ due to the reward
      overoptimization.  \Center: Number of responses required for
      \mainalg to accept an answer decreases as $\beta$ increases, as predicted by our theory.  \Right: Estimated reward $\rhat$ for \mainalg decreases as $\beta$ increases.\loose}
      \vspace{-0.3cm}
    \label{fig:beta_varying}
  \end{figure*}

}

On the statistical side, for any choice of the regularization
parameter $\beta>0$, the regret bound in \cref{eq:pes1} balances
overoptimization (reflected in the term $\beta^{-1}\cdot\vepsrms(x)$)
with bias (reflected in the term
$\beta\cdot\Cone[\pistar](x)$). Choosing $\beta$ to balance these
terms leads to the regret bound in \cref{eq:pes2}, which matches the
lower bound in \cref{prop:coverage} up to absolute constants, showing
that \mainalg is \emph{regret-optimal}. 

Computationally, achieving the regret bound in \cref{eq:pes1} requires
$N\geq\bigomt\prn*{\frac{\Rmax}{\beta}}$. In contrast to
\bonlong (\cref{thm:bon}), \mainalg is robust to
overoptimization, in the sense that for any fixed $\beta>0$, the
guarantee in \cref{eq:pes1} holds \emph{for all $N$ sufficiently large}, and
there is no risk of dropping below the bound on the right-hand side of
 \cref{eq:pes1} as we scale computation; we refer to this property
 as \emph{\monotonicity}.\footnote{In practice, the performance of
   \mainalg may decay with $N$ if
   the algorithm initially out-performs the bound in \cref{eq:pes1},
   but it will never drop below the threshold in \cref{eq:pes1}.}

\paragraph{Lower bounds and compute-optimality}
 \mainalg requires
 $N \geq \bigomt\prn*{\sqrt{\Cone[\pistar](x)}\cdot
   \frac{\Rmax}{\vepsrm(x)}}$ samples
 to achieve the optimal regret bound \cref{eq:pes2},
 where $\beta \asymp \sqrt{\frac{\vepsrms(x)}{\Cone[\pistar](x)}}$.
 The following result shows that the $\vepsrm(x)^{-1}$ dependence is necessary for any algorithm in the sample-and-evaluate framework.
 \begin{theorem}[Query complexity lower bound]\label{thm:lower}
  
   For any $\vepsrm\in(0,1/4]$ and $N \lesssim \frac{1}{\vepsrm}$,
   there exists a problem instance with $\Rmax=1$,
   $\vepsrm(x)\leq\vepsrm$,
   and a comparator policy $\pistar$ with $\Cone[\pistar](x) = \Cone[\pistar] =\bigoht(1)$ such that
  any algorithm $\cA$ using at most $N$ \framework queries must have
    \begin{align}
      \icml{J(\pistar) - J(\pihat_\cA) = \wt\Omega\rbr*{1/N}.}
      \arxiv{J(\pistar) - J(\pihat_\cA) = \wt\Omega\rbr*{\frac{1}{N}}.}
    \end{align}
\end{theorem}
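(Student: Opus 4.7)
The plan is to prove the query-complexity lower bound through a two-point (Le Cam-style) construction, exploiting the fact that the \framework oracle only exposes $\piref$ and $\rhat$ and never the true reward $\rstar$. Specifically, I construct a family of problem instances $\{(\piref,\rhat,\rstar_b,\pistar_b):b\in\cB\}$ that share the same base policy and reward model but differ in the unobserved reward $\rstar_b$; since the joint law of any algorithm's $N$ oracle observations is identical across $b$, the output $\yhat$ of any $N$-query algorithm has the same distribution under every instance, and the Bayes-averaged regret lower-bounds the worst-instance regret by a standard pigeonhole/symmetrization argument.

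For the concrete construction, take $\cY = [K]$ with $\piref$ uniform and $\rhat\equiv c$ constant (hence completely uninformative about $\rstar$), and index the family by a hidden ``good response'' $b\in[K]$. Set $\rstar_b(y) = c + \Delta\cdot\indic[y=b]$, with the reward gap $\Delta = \Theta(\sqrt{K\vepsrm})$ chosen so that $\vepsrms(x) = \Delta^2/K \le \vepsrm$, and let the comparator $\pistar_b$ be a small perturbation of $\piref$ that upweights $b$ by $p = \Theta(1/\sqrt{K})$. This keeps $\Cone[\pistar_b] = \wt{O}(1)$ and yields $J(\pistar_b) - J(\piref) = \Theta(p\Delta)$. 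Because $\rhat$ carries no information about $b$ and $\piref$ is independent of $b$, the algorithm can only identify $b$ if it appears among its $N$ samples; averaging over $b$ uniform on $[K]$, $\Pr_b[\yhat = b] \lesssim (N+1)/K$, so the Bayes-averaged regret is at least $\Theta(p\Delta) - O(\Delta N/K) = \Delta\bigl(1/\sqrt{K} - O(N/K)\bigr)$. Choosing $K$ on the order of $N^2$ balances these two terms and produces a regret lower bound of $\wt\Omega(1/N)$.

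The main obstacle, and the most delicate step, is the quantifier order in the theorem: it demands a \emph{single} instance (depending only on $\vepsrm$) that is hard for \emph{every} $N\ge 1$, whereas the construction above naturally depends on $N$ through $K$. I handle this by treating the bound as a regime-dependent statement. In the regime $N \gtrsim 1/\sqrt{\vepsrm}$, the desired bound $\wt\Omega(1/N)$ is already implied by the statistical skyline of \cref{prop:coverage}, which gives regret $\Omega(\sqrt{\vepsrm}) \ge \Omega(1/N)$. In the complementary regime $N \lesssim 1/\sqrt{\vepsrm}$, I instantiate the two-point family with a single $K = \Theta(1/\vepsrm)$ depending only on $\vepsrm$, and rely on the symmetry of the family under permutations of $\cY$ (which fix $(\piref,\rhat)$) to ensure that a single $b^\star$ works against every algorithm rather than being algorithm-specific. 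Combining the two regimes yields the stated $\wt\Omega(1/N)$ lower bound for all $N\ge 1$.
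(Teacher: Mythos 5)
There is a genuine gap: your construction can never produce regret larger than $\wt{O}(\vepsrm)$, and therefore cannot certify $\wt\Omega(1/N)$ in the regime $N \ll 1/\vepsrm$, which is exactly the regime \cref{thm:lower} is about (the discussion after the theorem stresses that for $N \lesssim 1/\vepsrm(x)$ the regret must \emph{exceed} the reward error). With $\rhat$ constant and $\Cone[\pistar](x) = \wt{O}(1)$, the comparator's true advantage is pure reward-model error, so Cauchy--Schwarz gives $J(\pistar;x) - J(\piref;x) \le \rbr*{\sqrt{\Cone[\pistar](x)}+1}\cdot\vepsrm(x) = \wt{O}(\vepsrm)$, while the trivial algorithm that returns its first sample already attains $J(\piref;x)$. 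Concretely, in your notation the regret is at most $p\Delta \le \sqrt{Kp^2}\cdot\sqrt{\Delta^2/K} = \wt{O}(\vepsrm)$, so the choice $K \asymp N^2$ yields only $\wt\Omega\rbr*{\min\crl*{1/N,\,\vepsrm}}$, and your small-$N$ branch with $K = \Theta(1/\vepsrm)$ likewise gives $\wt\Omega(\vepsrm)$, which is strictly weaker than $1/N$ there. The regime split is also miscalibrated because you conflate $\vepsrm$ with $\vepsrms$: \cref{prop:coverage} gives $\Omega\rbr*{\sqrt{\Cpar\cdot\vepsrms(x)}} = \Omega(\vepsrm)$, hence covers only $N \gtrsim 1/\vepsrm$, not $N \gtrsim 1/\sqrt{\vepsrm}$; moreover \cref{prop:coverage} selects $\rstar$ \emph{after} the algorithm, so it does not by itself supply the instance-before-algorithm quantifier that \cref{thm:lower} requires.

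The paper escapes the $\vepsrm$ ceiling because its hardness mechanism is not indistinguishability of $\rstar$ (your Le Cam step) but a coverage/selection obstruction: the proof invokes \cref{thm:cone-regret-lower} (part 1) with $p=1/2$ and $\veps = 1/N$ or $\veps=\vepsrm$ depending on the regime, whose construction (\cref{lem:c1-tv-lower-bound}) uses a countably infinite $\cY$ with $\piref(i) \propto 4^{-i}$ and $\pistar(i)/\piref(i) \asymp 2^{i}$, so that $\Cone[\pistar](x) = O(\log(1/\vepsrm))$ while $\Cinf[\pistar](x)$ is exponentially larger. The comparator's advantage is carried by high-$\rstar$ tail responses that an $N$-query algorithm almost surely never samples, and since any algorithm in the framework must return one of its sampled responses (\cref{def:selection}), \cref{lem:approx-tv-lower-bound} forces it to miss roughly $2^{-k_\veps} \asymp \sqrt{\Cone[\pistar](x)}\cdot\veps$ of $\pistar$'s mass---a bound that holds even for an algorithm that knows the instance. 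The pointwise error $\abs*{\rhat-\rstar}$ is allowed to grow geometrically along the tail while the $\piref$-weighted mean-square error stays below $\vepsrm^2$, which is precisely how the regret breaks through the $\sqrt{\Cone[\pistar]\cdot\vepsrms}$ barrier. Any repair of your approach would have to import this structure (non-uniform $\piref$, advantage hidden in an unsampled tail, output restricted to the sampled set) rather than a uniform $\piref$ with an uninformative $\rhat$, for which the $\wt{O}(\vepsrm)$ cap is unavoidable.
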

This result implies that \emph{any} algorithm achieving the lower bound in \cref{prop:coverage} requires
$N\approxgeq \frac{1}{\vepsrm(x)}$,
which is matched by the guarantee for \mainalg (\cref{thm:main}).
Moreover, when $N \approxleq \frac{1}{\vepsrm(x)}$,
the regret can be even larger than the reward estimation error $\vepsrm$,
since $\frac{1}{N} \ge \vepsrm$.
We remark that the
computational cost here is comparable (though slightly larger) than
the cost of achieving the sub-optimal bound in \cref{eq:bon2} using
\bonlong (roughly $N\approxgeq\vepsrm^{-1}(x)$ versus $N\approxgeq \vepsrm^{-2/3}(x)$).\loose

 \paragraph{Parameter tuning and practicality}
As presented in \cref{alg:main},
\mainalg has two parameters, $\beta$ and $N$, compared to the single parameter $N$ used by \bonalg.
This separation enables the optimal and scale-monotonic guarantees for \mainalg,
and we also view it as a beneficial feature from a practical perspective:
by using two parameters, we achieve a clear separation between tuning the computational budget (through $N$)
and tuning the statistical performance (through $\beta$).
While to achieve \cref{eq:pes2} $\beta$ must be chosen based on potentially unknown parameters
($\Cone[\pistar](x)$ and $\vepsrms(x)$),
this represents a meaningful improvement over \bonalg,
which cannot be tuned to achieve \cref{eq:pes2} even when these parameters are known (\cref{thm:lower-bon}).
This is because \bonalg
conflates computational and statistical considerations in its single parameter $N$.\loose

Empirically, we find that for any fixed choice of $\beta$,
\mainalg is robust to overfitting when computation and sample size is increased,
with performance essentially monotonic as $N$ grows (\cref{fig:beta_robustness}),
as predicted by the guarantee in \cref{eq:pes1}.
Because of this robustness, it is easier to tune the parameter $\beta$,
which we also find is important to achieve strong performance;
further discussion and practical guidance is provided in \cref{sec:experiments}.
Altogether, we believe it is most natural to interpret \mainalg as a
``single-parameter'' algorithm where only $\beta$ needs to be tuned,
and $N$ is as large as the computational budget allows.\loose

\icml{  \begin{table*}[t]
    \centering 
    \caption{Performance of $\piref=$\phimini (\% Lift in Accuracy over $\piref$).}
    \begin{small}
      \begin{tabular}{lcccc}
        \toprule 
        \textbf{Task} & \textbf{\oasst} & \textbf{\gemmarm} & \textbf{\llamarm} & \textbf{\armorm}  \\ \midrule 
        \gsmk (Pessimism) & $0.87 \pm 0.94$ & $5.61 \pm 0.92$ & $12.03 \pm 0.90$ & $12.44 \pm 0.88$ \\ 
        \gsmk (BoN) & $-5.61 \pm 1.13$ & $4.10 \pm 1.08$ & $12.12 \pm 0.95$ & $13.12 \pm 0.96$ \\ 
        \rowcolor{lightgray} \mmlu (Pessimism) & $-0.71 \pm 4.76$ & $14.29 \pm 5.24$ & $24.48 \pm 5.55$ & $21.12 \pm 5.58$ \\ 
        \rowcolor{lightgray} \mmlu (BoN) & $-5.61 \pm 5.50$ & $7.57 \pm 6.05$ & $25.41 \pm 6.10$ & $16.20 \pm 6.42$ \\ 
        \mathk (Pessimism) & $3.41 \pm 3.84$ & $18.36 \pm 4.02$ & $41.47 \pm 4.30$ & $26.72 \pm 3.98$ \\ 
        \mathk (BoN) & $3.32 \pm 3.98$ & $15.36 \pm 4.27$ & $41.74 \pm 4.35$ & $21.72 \pm 4.19$ \\ 
        \bottomrule
      \end{tabular}
    \end{small}
    \vspace{-0.35cm}
    \label{tab:phi3mini}
  \end{table*}
  
}

\paragraph{Overview of analysis}
\arxiv{The analysis of \mainalg}\icml{The proof of \cref{thm:main}}, and has three parts.
First, we show that the idealized \chis-regularized distribution in
\cref{eq:chis_ideal} achieves\icml{ the regret bound in
  \cref{eq:pes1}.}\arxiv{ the regret bound in
\cref{thm:main}, i.e. \icml{$J(\pistar ; x) - J(\pichis;x) \lesssim$}\loose
\begin{equation}
  \label{eq:chis_regret_ideal}
  \arxiv{J(\pistar ; x) - J(\pichis;x) \lesssim} \sqrt{ \Cone[\pistar](x)\cdot \vepsrms(x) } + \beta \cdot\Cone[\pistar](x) + \beta^{-1} \cdot\vepsrms(x).
  \end{equation}}
This follows the same reasoning as the analysis of training-time
interventions based on \chis-regularization in
\citet{huang2024correcting}, 
  and uses the
property that for any function $\Delta(x,y)$ (we use $\Delta(x,y)=\abs*{\rhat(x,y)-\rstar(x,y)}$) and
  policy $\pi$, \icml{$\En_{\pi}\brk*{\Delta(x,y)}\approxleq$} \loose \[\arxiv{\En_{\pi}\brk*{\Delta(x,y)}\approxleq}\sqrt{(1+\Dchis{\pi(x)}{\piref(x)})\cdot\En_{y\sim\piref(x)}\brk*{\Delta^2(x,y)}}.\]%
  Of course, we cannot sample from 
  $\pichis$ itself because the distribution depends on the ``true''
  normalization constant $\lambda(x)$ in \cref{eq:normalization}.
  To address this, we prove a robustness result showing that,
  given a $\lambdahat$ that approximately normalizes the distribution in~\cref{eq:chis_ideal},
the \arxiv{approximate \chis-regularized }policy
$\pichistil(y\mid{}x) = \frac{\piref(y\mid{}x)\cdot\relu\prn*{
\beta^{-1}(\rhat(x,y)-\lambdahat)}}{  \sum_{y'\in\cY}\piref(y'\mid{}x)\cdot\relu\prn*{
\beta^{-1}(\rhat(x,y')-\lambdahat)
  }}$ achieves a regret bound that matches \arxiv{\cref{eq:chis_regret_ideal}}\icml{\cref{eq:pes1}} up to
absolute constants. From here, a concentration argument
implies that the $\lambdahat(x)$ computed in
\cref{eq:normalization_approx} is an approximate normalizer whenever
$N\approxgeq \frac{\Rmax}{\beta}$, with high probability.\loose

Finally, we leverage analysis for rejection sampling to show that, as long as
$N\approxgeq
\max_{y\in\cY}\frac{\pichistil(y\mid{}x)}{\piref(y\mid{}x)}\log(1/\delta)$,
the rejection sampling procedure will terminate and return
$y\sim\pichistil(\cdot\mid{}x)$ with probability at least $1-\delta$.
Critically, due to the heavy-tailed nature of the \chis-regularizer,
this density ratio is bounded as
$\frac{\pichistil(y\mid{}x)}{\piref(y\mid{}x)}\approxleq\frac{\Rmax}{\beta}$,
  which yields the claimed query complexity bound. This observation highlights an important computational benefit of
  \chis-regularization that goes beyond its 
  statistical benefits, illustrated below.\loose

  \begin{remark}[Comparison to KL-regularization]
    \citet{liu2023statistical,li2024cascade}
    use  
    rejection sampling to simulate samples from the
    KL-regularized distribution: \[\pikl(y\mid{}x)\ldef \argmax_{p\in\Delta(\cY)}\crl*{
  \En_{y\sim{}p}\brk*{\rhat(x,y)}
  - \beta\cdot\Dkl{p}{\piref(x)}
    },\] which satisfies
\arxiv{\loose\[
  \pikl(y\mid{}x)\propto\piref(y\mid{}x)\cdot\exp\prn*{\frac{\rhat(x,y)}{\beta}}.
\]}
This has two issues. First, as shown by \citet{huang2024correcting},
this distribution can fail to achieve the guarantee in \cref{eq:pes2},
no matter how $\beta$ is chosen. Second, the density ratio is
exponential in general, i.e.
$\frac{\pikl(y\mid{}x)}{\piref(y\mid{}x)}\geq\exp\prn*{\frac{\Rmax}{\beta}}$,
which means that $N\approxgeq \exp\prn*{\frac{\Rmax}{\beta}}$
\framework queries are required to simulate it with rejection sampling.
\end{remark}

\section{Experiments}
\label{sec:experiments}

\arxiv{}

\arxiv{}

In this section, we complement our theoretical results with a suite of experiments that investigate the practicality of \mainalg,
and compare its performance to that of \bonalg. 
We consider three standard tasks: the test split of the \arxiv{well-known }elementary school math dataset \gsmk \citep{cobbe2021training}; math and chemistry splits of \mmlu \citep{hendrycks2020measuring}, and the test split of the advanced math problems dataset \mathk \citep{hendrycks2021measuring}. We also present a preliminary study with \arxiv{the \alpaca task}\icml{\alpaca} \citep{alpaca_eval} in \cref{app:experiments}.  We consider four reward models in increasing order of size: \oasst (1.4B) \citep{kopf2024openassistant}, \gemmarm (2B) \citep{dong2023raft}, \llamarm (3B) \citep{yang2024regularizing}, and \armorm (7B) \citep{ArmoRM}.  Finally, for each task we consider a subset of four policies for the base model: \gemma \citep{team2024gemma}, \llama \citep{dubey2024llama}, \mistral \citep{jiang2023mistral}, and \phimini \citep{abdin2024phi}.  For each task-policy-reward triplet, and each prompt, we generate a large number of responses (20K) and conduct \arxiv{our }experiments by bootstrapping subsamples of this large set.  In all cases, we reuse the same samples for normalization constant estimation as for the rejection sampling step. \loose

To produce \Cref{fig:intro}, we compare the performance of \mainalg (with tuned $\beta$) and \bonalg for a range of $N$ in terms of both true reward (accuracy) and estimated reward ($\rhat$ is \oasst) on \gsmk, defaulting to \bonalg if \mainalg does not terminate.  As we scale $N$, we observe the characteristic dip (e.g., \citet{gao2023scaling}) in accuracy of \bonalg in the left panel; from the right panel, we can infer that this is caused by reward overoptimization.
On the other hand, we find that the accuracy of \mainalg monotonically increases with $N$, as predicted by \cref{thm:main}. To further investigate the effect that regularization has on \mainalg, in \Cref{fig:beta_varying}, we fix a compute budget of $N = 2^{13}$ and compare the performance of \bonalg to \mainalg as we vary the regularization parameter $\beta$.  As we increase $\beta$, we see that \mainalg leads to improved true reward (Left), a smaller computational budget (Center) and significantly less reward overoptimization (Right).\loose

\Cref{tab:phi3mini} collects similar results across all tasks \gsmk, \mmlu, and \mathk and reward models \oasst, \gemmarm, \llamarm, and \armorm, with \phimini as the base policy $\piref$. We use a fixed computational budget, and compare the na{\"i}ve \bonalg for $N = 2^{13}$ with \mainalg for the best $\beta$ (see \Cref{app:experiments} for further details). Here, we find that \mainalg tends to have higher average performance than \bonalg, although in many instances this difference is not statistically significant; we suspect this is because, when $\rstar$ is binary (as it is in all tasks we use), there is no separation between $\Cone[\pistar]$ and $\Cinf[\pistar]$ when $\pistar$ is the optimal policy (uniform over the set of correct answers); thus, we are in the regime where \cref{thm:bon_uniform} predicts near-optimal performance for \bon.
However, a different story may emerge under more refined evaluation metrics,
such as the correctness of proofs in addition to the final answer.
Here we should expect $\Cinf[\pistar] \gg \Cone[\pistar]$, and
we leave evaluation in more realistic environments to future work.
For the sake of space, we defer further empirical results to \cref{app:experiments}, including plots and tables analogous to \Cref{fig:intro,fig:beta_varying} and \Cref{tab:phi3mini} for other policies, tasks, and rewards (\cref{ssec:more_policies}) as well as additional experiments (\cref{ssec:further_exp}).\loose

\section{Conclusion}
\label{sec:conclusion}
Our results reveal the interplay between coverage, scaling, and
optimality in inference-time alignment,
\arxiv{and highlight the benefits of scaling computation at inference-time in more deliberate and sophisticated forms.}
\icml{and highlight the benefits of deliberate compute scaling.}
\arxiv{We believe that, in addition to the sample-and-evaluate framework,
the following technical contributions may be of greater interest,
e.g., for drawing samples from other desirable distributions at inference time:
(i) our methods for efficiently approximating samples from regularized distributions,
(ii) lower bounds on query complexity, and
(iii) regret analysis based on approximate rejection sampling.
}
Beyond providing optimal
algorithms (\mainalg) and insights into the performance of \bonalg,
our framework can serve as starting point toward a foundational
understanding of inference-time computation more broadly. \icml{In particular, our work raises a number of interesting
directions for future research, including moving beyond the worst-case assumption on $\rhat$ and designing inference-aware training procedures that optimize for \mainalg at generation time.}
\arxiv{In
particular, our work raises a number of interesting
directions for future research:
\begin{itemize}
\item \emph{Opening the black box.} Our formulation of the
  inference-time alignment problem treats the base policy $\piref$ as a
  black box, and does not take advantage of any specific properties of
  the policy outside of coverage. An important direction is to
  understand whether we can open this black box and use more refined
  properties of the policy (e.g., the learned representations) to
  derive improved inference-time alignment schemes.\loose
\item \emph{Co-design of training-time and inference-time alignment
    algorithms.} Recent work
  \citep{balashankar2024infalign,chow2024inference} has explored the
  design of ``inference-aware'' training-time procedures that train a policy attempting to optimize \bonlong performance at
  inference-time. Since our work shows that inference-time \bonlong can be suboptimal in general, a natural direction is to co-design better
  inference-time and training-time interventions in parallel.
\item \emph{Inference-time exploration.} Our formulation---where
  the learning algorithm must make decisions based on an imperfect
  reward model---is inspired by the offline reinforcement learning
  framework. An interesting direction, inspired by \emph{online}
  reinforcement learning, is whether one can develop inference-time algorithms
  that leverage exploration to learn as quickly as possible in the
  online setting where the learner has access to the true reward
  function or a high-quality verifier.
\end{itemize}
}

\newpage

\icml{
\subsection*{Impact Statement}
This paper presents work whose goal is to advance the field of Machine
Learning. There are many potential societal consequences of our work,
none which we feel must be specifically highlighted here.
}

\bibliography{refs}
\icml{
\bibliographystyle{icml2025}
}

\clearpage

\appendix

\icml{
\onecolumn
}

\renewcommand{\contentsname}{Contents of Appendix}

\renewcommand{\contentsname}{Contents of Appendix}
\addtocontents{toc}{\protect\setcounter{tocdepth}{2}}
{\hypersetup{hidelinks}
\tableofcontents
}
\newpage
\crefalias{section}{appendix} 
\part{Additional Discussion and Results}

\section{Additional Related Work}
\label{sec:related}

In this section, we discuss additional related work in greater detail.

\paragraph{Theoretical analysis of inference-time alignment}
Inference-time alignment has received limited theoretical
investigation so far. Most notably, various works have analyzed
specific properties of the \bonlong alignment algorithm
\citep{yang2024asymptotics,beirami2024theoretical,mroueh2024information}
such as tradeoffs between reward and KL-divergence, but do not
ultimately provide guarantees on downstream performance in the
presence of mismatch between the estimated reward model and true
reward. Our theoretical framework---which abstracts the role of the
base policy through \framework access---is inspired by
\citet{huang2024self}, who used a similar framework to give guarantees
for the complementary problem of language model self-improvement, but
our specific problem formulation and techniques are quite different.

\paragraph{Empirical algorithms for inference-time alignment}
Empirically,
the \bonlong alignment heuristic
\citep{stiennon2020learning,nakano2021webgpt,touvron2023llama,gao2023scaling,eisenstein2023helping,mudgal2024controlled}
is perhaps the most widely used inference-time alignment heuristic; specific works that have observed the overoptimization phenomenon for
\bonlong include \citet[Figure 1]{gao2023scaling}, \citet[Figure
3]{chow2024inference}, \citet[Figure 7]{frick2024evaluate}, and \citet{stroebl2024inference}.
There are also other algorithms based on more
sophisticated variants of \bonlong or other techniques such as
rejection sampling \citep{liu2023statistical,chen2024pad,chakraborty2024transfer,xu2024genarm,
  shi2024decoding,qiu2024treebon,jinnai2024regularized,zhao2024probabilistic},
though few of these works are explicitly designed to address these
issue of over-optimization. For example, most algorithms based on inference-time search, such as
Monte-Carlo Tree Search (MCTS) and
relatives \citep{feng2023alphazero,yao2024tree,zhang2024rest}, are designed with the complementary goal of maximizing a
fixed reward function of interest given exact access, and do not account for
overoptimization or mismatch between the reward function and task performance. Specific algorithms that
make use of rejection sampling include
\citet{liu2023statistical,li2024cascade,xiong2024iterative,zhao2024probabilistic,khaki2024rs},
though the specific distributions these works aim to sample from are
quite different from that used in \mainalg. See also~\citet{welleck2024decoding} for a survey of inference-time algorithms. \loose

Other related but complementary line of work include (1) distilling
inference-time procedures into policies, thereby giving training time
procedures
\citep{amini2024variational,sessa2024bond,gui2024bonbon,pace2024west},
and (2) designing ``inference-aware'' training procedures which change the
training process to optimize performance of downstream inference-time
such as \bonlong \citep{balashankar2024infalign,chow2024inference}.

  \subsection{Connection to Offline (Training-Time) Alignment}
  \label{sec:offline}

As discussed in \cref{sec:related_body}, our problem formulation and
algorithms are closed related to a growing body of research on
theoretical algorithms for \emph{offline alignment} \citep{zhu2023principled,zhan2023provable,li2023reinforcement,xiong2024iterative,liu2024provably,cen2024value,
  fisch2024robust,ji2024selfplay,huang2024correcting,rashidinejad2024sail},
which give training-time interventions that enjoy robustness to reward
model overoptimization under various notions of coverage. In
particular, our \mainalg algorithm can be viewed as implementing an
``idealized'' version of the \chis-regularized RLHF algorithms
introduced by \citet{huang2024correcting} at inference-time.

Our formulation of inference-time alignment can be viewed
as a variant of the offline alignment problem that abstracts away the
process of training the reward model. Rather than concerning ourselves
with the details of training $\rhat$ from $\cD_{\pref}$ to minimize
\cref{eq:rm}, we take $\rhat$ as a given (in the process, abstracting
away the dataset $\cD_{\pref}$), and ask how to achieve the best
possible regret on a \emph{per-instance} basis, both with respect to
the reward model $\rhat$ itself, and with respect to the prompt
$x\in\cX$ (which is arbitrary and fixed, rather than assumed to be
\iid as $x\sim\rho$). Naturally, our algorithms and analyses can be
combined with any reward estimation procedure that minimizes
$\En_{x\sim\rho}\brk*{\vepsrms(x)}$ from $\cD_{\pref}$ to derive
end-to-end sample complexity guarantees for offline alignment.

\paragraph{Online alignment}
A complementary line of theoretical research which is somewhat less related to our work
studies alignment with \emph{online feedback}
\citep{xu2020preference,novoseller2020dueling,pacchiano2021dueling,wu2023making,zhan2023query,chen2022human,wang2023rlhf,du2024exploration,das2024provably,ye2024theoretical,xie2024exploratory,cen2024value,
  xiong2024iterative,gao2024rebel,chang2024dataset,song2024understanding},
where feedback from the true reward model $\rstar(x,y)$ is available.\loose

\section{Further Empirical Results}
\label{app:experiments}
In this section, we expand on the experiments discussed in \Cref{sec:experiments}.  We begin by providing a complete description of our expeirmental setup in \Cref{ssec:exp_details}, before proceeding to expand the breadth of empirical results reported in the main body to more policies, tasks, and reward models in \Cref{ssec:more_policies}.  We continue by conducting a further investigation into the robustness of \mainalg to the regularization parameter $\beta$ as well as a distributional study of the estimated rewards $\rhat$ sampled from $\piref$ in \Cref{ssec:further_exp}.  Finally, we present preliminary results for the \alpaca task in \Cref{ssec:alpaca}.

\subsection{Further Experimental Details}\label{ssec:exp_details}

As we summarized in the main text, we conduct an extensive empirical suite by considering many tasks, reference policies, and estimated reward models.  We now detail each of these in turn.  The four tasks we consider are the following:
\begin{enumerate}
    \item \gsmk: We consider the test split of the popular
      grade-school math dataset introduced in
      \citet{cobbe2021training}.  This dataset consists of about 1K
      short math word problems.  We prompt all of our policies with
      Chain of Thought (CoT) prompting \citep{wei2022chain} but do not
      include any example demonstrations, i.e., we are zero-shot.  We
      measure correctness in the sense that we assign $\rstar(x,y)=1$
      if the resulting policy gets the correct mathematical answer and
      $\rstar(x,y)=0$ otherwise.
    \item \mmlu: We consider the college math and college chemistry
      splits of the \mmlu dataset introduced in
      \citet{hendrycks2020measuring}.  This dataset consists about 100
      questions each of math and chemistry at the college level with
      multiple choice answers.  Again, we use CoT with zero-shot prompting for all policies.  Correctness is measured in the sense that the resulting policy gets the correct multiple-choice answer.
    \item \mathk: We consider the randomly sampled set of 512 questions from the test split of the \mathk dataset introduced in \citet{hendrycks2021measuring}.  This dataset consists of hard mathematics problems.  As above, we use CoT and zero-shot prompting, with correctness measured in the sense that the resulting policy gets the correct mathematical answer.
    \item \alpaca: For a small subset of our policies and rewards, we
      consider the \alpaca task introduced in \citet{alpaca_eval},
      with 128 randomly sampled questions.  This task is a challenging
      LM benchmark where we compare a policy's generation to that of a
      benchmark LM, and define $\rstar$ according to \emph{win rate}
      against an
      evaluator LM.  In order to collect a denser signal, we compare
      win rate against generations sampled from $\piref$ for each
      $\piref$ we evaluate.  We use GPT-4o-mini \citep{openai2024gpt4omini} as our evaluator.
\end{enumerate}
For each of our tasks, we consider a subset of the following four reward models for use as the estimated reward $\rhat$:
\begin{enumerate}
    \item \oasst: the OpenAssistant reward model based on Pythia-1.4b \citep{kopf2024openassistant}.
    \item \gemmarm: a reward model based on Gemma-2-2b \citep{dong2023raft}.
    \item \llamarm: a reward model based on Llama-3-3b \citep{yang2024regularizing}.
    \item \armorm: a reward model based on Llama-3-8b  \citep{ArmoRM}.
\end{enumerate}
Finally, we consider the following policies for $\piref$:
\begin{enumerate}
    \item \gemma: the Gemma-2-2b model introduced in \citet{team2024gemma}.
    \item \llama: the Llama-3-3b model introduced in \citet{dubey2024llama}.
    \item \mistral: the Mistral-7b model introduced in \citet{jiang2023mistral}.
    \item \phimini: the Phi-3-mini model (3.8b parameters) introduced in \citet{abdin2024phi}.
    \item \phismall: the Phi-3-small model (7b parameters) introduced in \citet{abdin2024phi}.
\end{enumerate}
In all of our experiments, for each prompt in each task and each
policy, we generate about 20K responses sampled with temperature 1
from the chosen $\piref$.  For a given number $M$ of replicates ($M=50$ in all tasks except for \alpaca, where $M=5$ due to resource constraints), we then bootstrap $M$ subsets of $N$ samples each 
from this large set and run our algorithm on these subsampled responses.
We define the \emph{accuracy} of a given algorithm on a prompt as the
average number of correct answers produced over the number of
replicates; an exception to this is \alpaca, where we measure the
\emph{win rate} according to the evaluator LM.  The reported accuracy of a policy is given by the average accuracy over all prompts in the task, and the standard error is estimated by marginalizing over the prompts in the task.  As stated in the main body, in all cases for fixed $N$, we use the same $N$ samples to estimate the normalization constant (\Cref{alg:norm}) as we do to run the rejection sampling.

In what follows, we first present additional plots
for the experiments described in \cref{sec:experiments}, omitted from
the main body for the sake of space (\cref{ssec:more_policies}), then  describe results of additional experiments
(\cref{ssec:further_exp,ssec:alpaca}).

\subsection{Results for Further Policies and
  Tasks}\label{ssec:more_policies}

We complement \Cref{fig:intro,fig:beta_varying} as well as
\Cref{tab:phi3mini} with analogous figures and tables for the
remaining tasks, reward models, and policies described above.  First,
we investigate how the compute budget $N$ affects performance and
estimated reward of the response in \gsmk (\Cref{fig:gsm8k-ns}), \mmlu
(\Cref{fig:mmlu-ns}), and \mathk (\Cref{fig:math-ns}).  In all
cases, we see that \mainalg is essentially monotonic in compute
budget, as predicted by our theory.  In some cases, we see
\emph{monotonicity} in the performance of \bonalg, which is consistent
with the observation (e.g., \cref{fig:histograms}) that some (task,
policy) pairs appear to be more in-distribution for some rewards (such
as \armorm) than others; for such cases, we expect \bonalg to perform well.\loose

We also display the effect of the regularization parameter $\beta$ on
performance, average compute, and estimated $\rhat$ for each of \gsmk
(\Cref{fig:gsm8k-betas}), \mmlu (\Cref{fig:mmlu-betas}), and \mathk
(\Cref{fig:math-betas}); we again compare our performance to \bonalg with the same compute budget of $N = 2^{13}$.  To be precise, we fix $N$ and compare the na{\"ive} \bonalg approach with this large $N$ to \mainalg where we use all $N$ samples to estimate the normalization constant and then run rejection sampling until acceptance for each fixed $\beta$ and prompt; in this case all prompts across all tasks, policies, reward models, and $\beta$'s terminate within the $N$ samples.
As discussed in \Cref{sec:experiments}, increasing $\beta$ leads to a smaller average required responses before rejection sampling terminates as well as a smaller estimated reward $\rhat$.  \loose

Finally, we display analogues of \Cref{tab:phi3mini} for the remaining
policies we consider: For each of \phismall (\Cref{tab:phi3small}),
\mistral (\Cref{tab:mistral}), and \llama (\Cref{tab:llama}), we
compare the performance of compute-normalized \bonalg with $N =
2^{13}$ to \mainalg on \gsmk, \mmlu, and \mathk (with the exception of
\llama, where we only consider the first two tasks) for our four
reward models.  We continue to find that the performance of \bonalg with properly tuned $N$ and \mainalg is similar, which we believe is caused by the fact that $\Cone[\pistar] = \Cinf[\pistar]$ when $\rstar$ is binary and $\pistar$ is uniform over the set of correct answers; by \cref{thm:bon_uniform}, \bonalg will perform near-optimally in this regime.  \loose

\begin{figure*}[htp]
    \centering
    \subfigure[\oasst]{
        \includegraphics[width=0.2\textwidth]{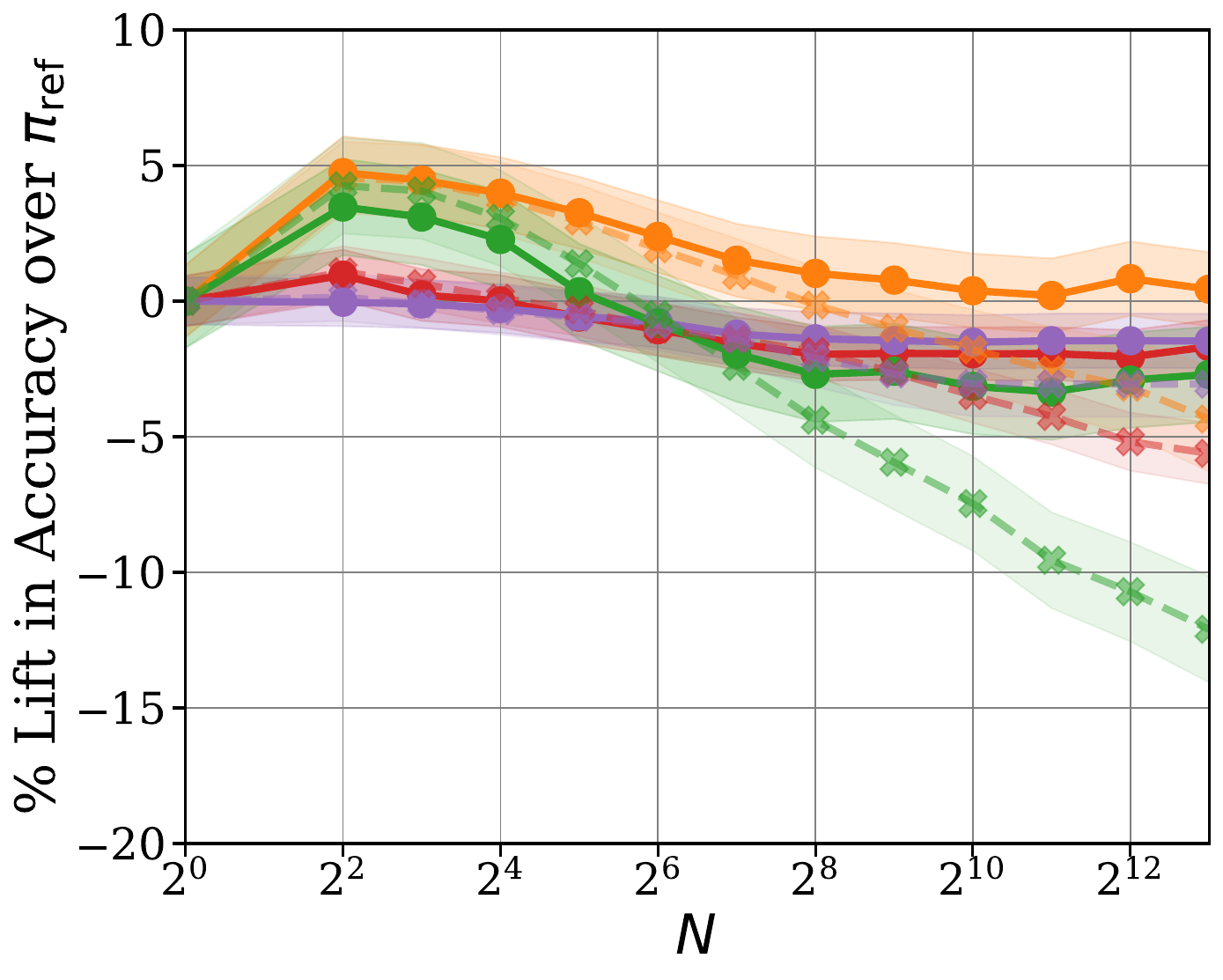}
        \label{sfig:n-gsm8k-oasst-rm} 
      }
   \hfill \subfigure[\gemmarm]{
        \includegraphics[width=0.2\textwidth]{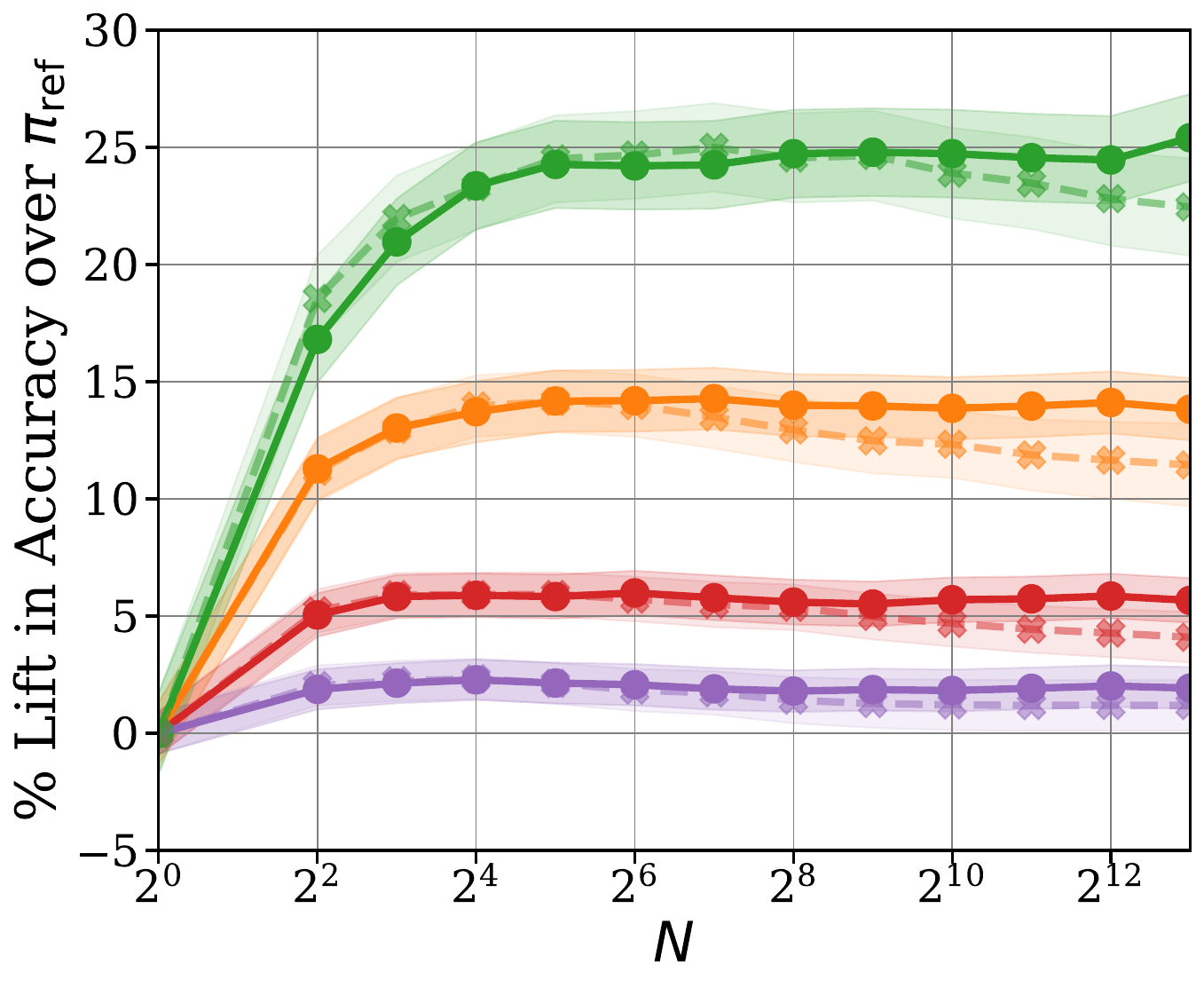}
        \label{sfig:n-gsm8k-gemma-rm} 
      }
      \hfill \subfigure[\llamarm]{
        \includegraphics[width=0.2\textwidth]{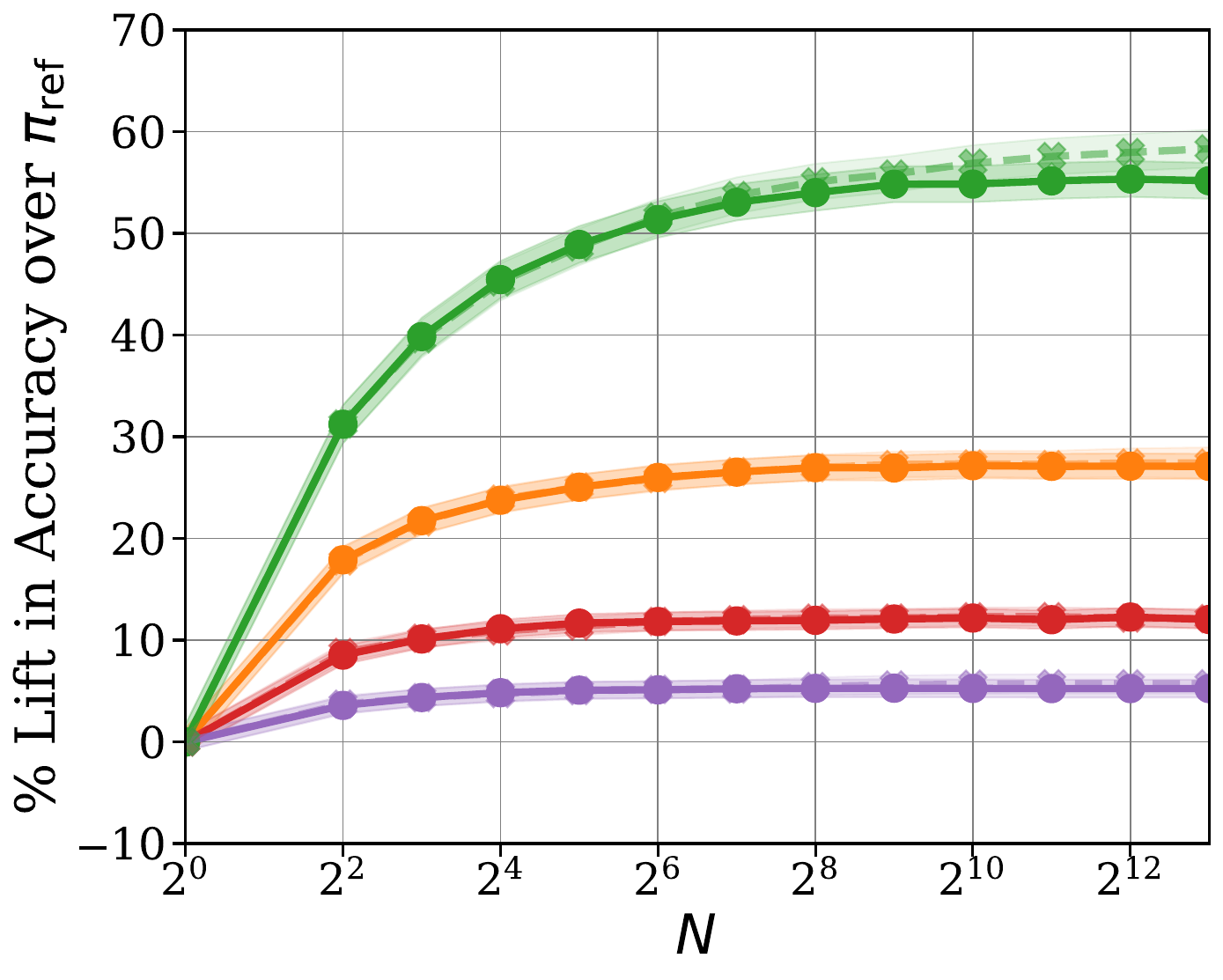}
        \label{sfig:n-gsm8k-llama-3b} 
      }
      \hfill \subfigure[\armorm]{
        \includegraphics[width=0.2\textwidth]{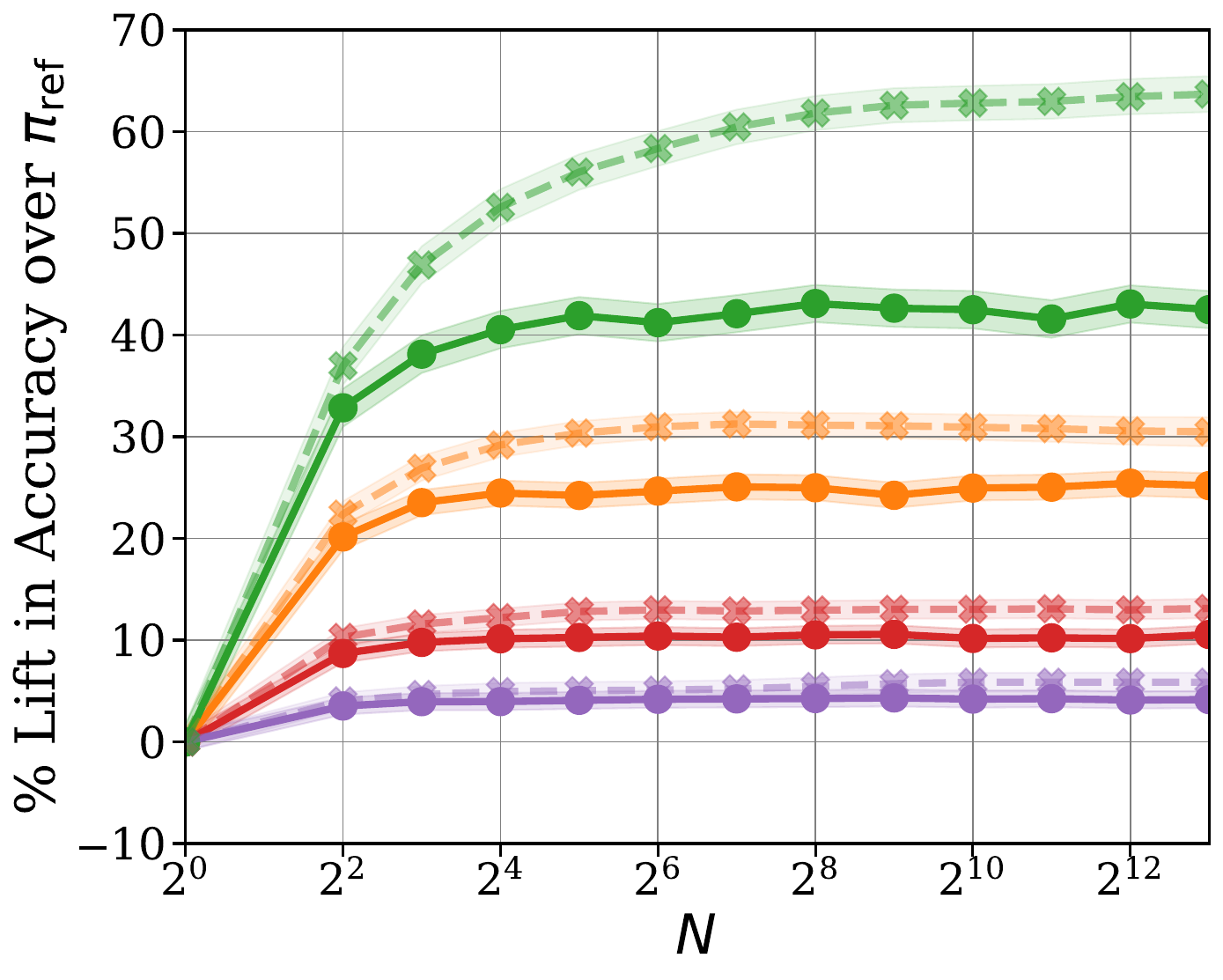}
        \label{sfig:n-gsm8k-armo-rm} 
      }
      \subfigure[\oasst]{
        \includegraphics[width=0.2\textwidth]{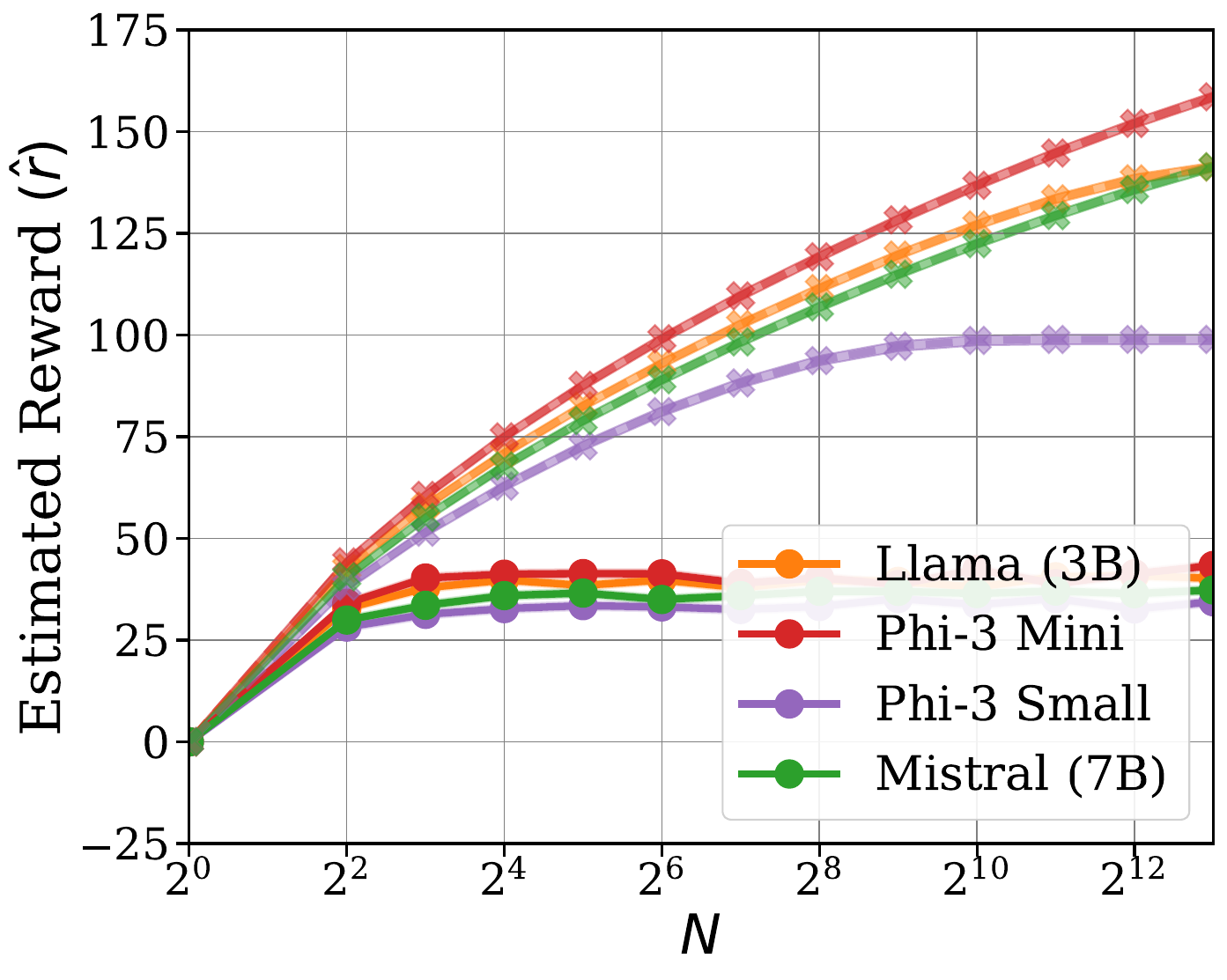}
        \label{sfig:n-gsm8k-oasst-rm-rhat} 
      }
   \hfill \subfigure[\gemmarm]{
        \includegraphics[width=0.2\textwidth]{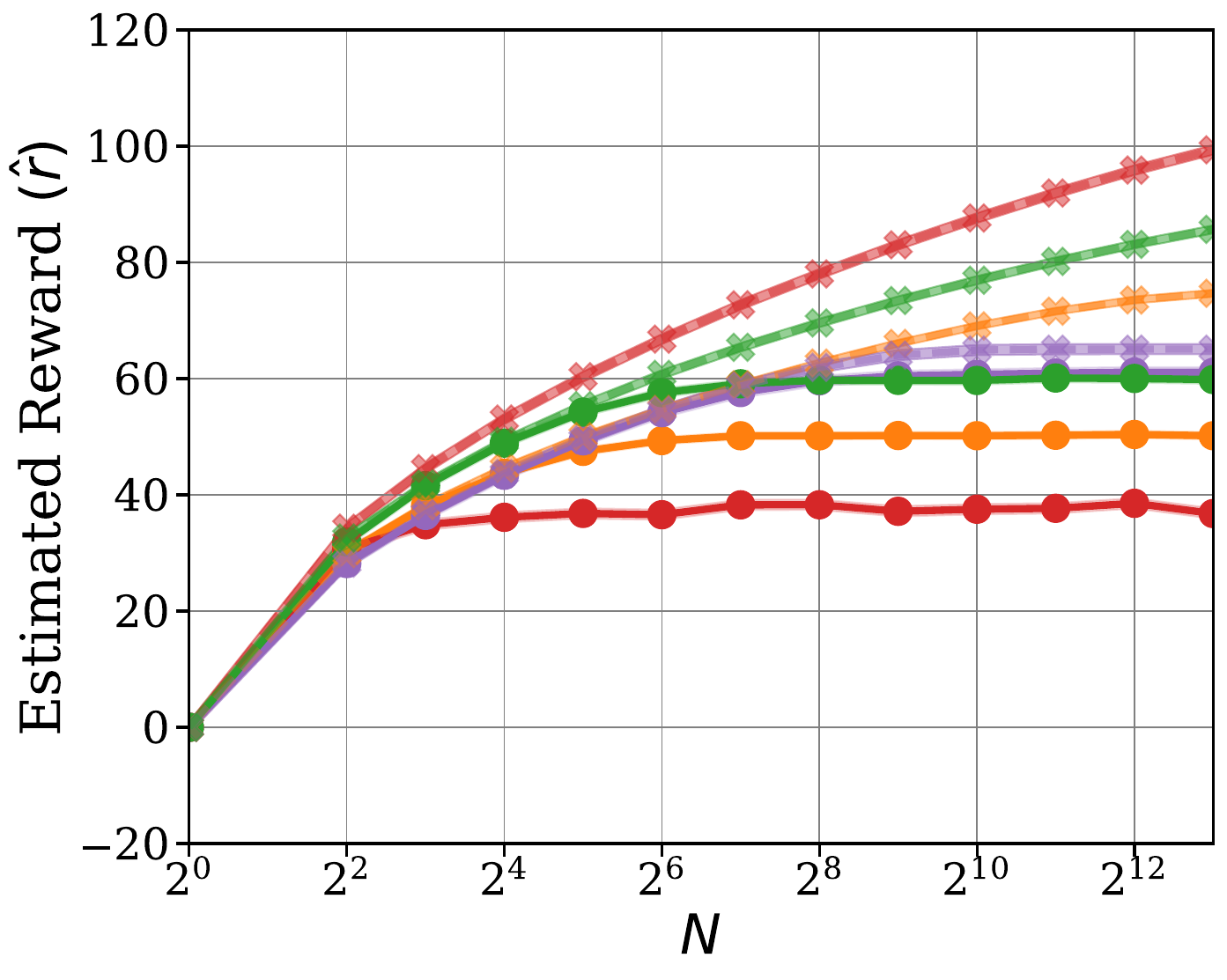}
        \label{sfig:n-gsm8k-gemma-rhat} 
      }
      \hfill \subfigure[\llamarm]{
        \includegraphics[width=0.2\textwidth]{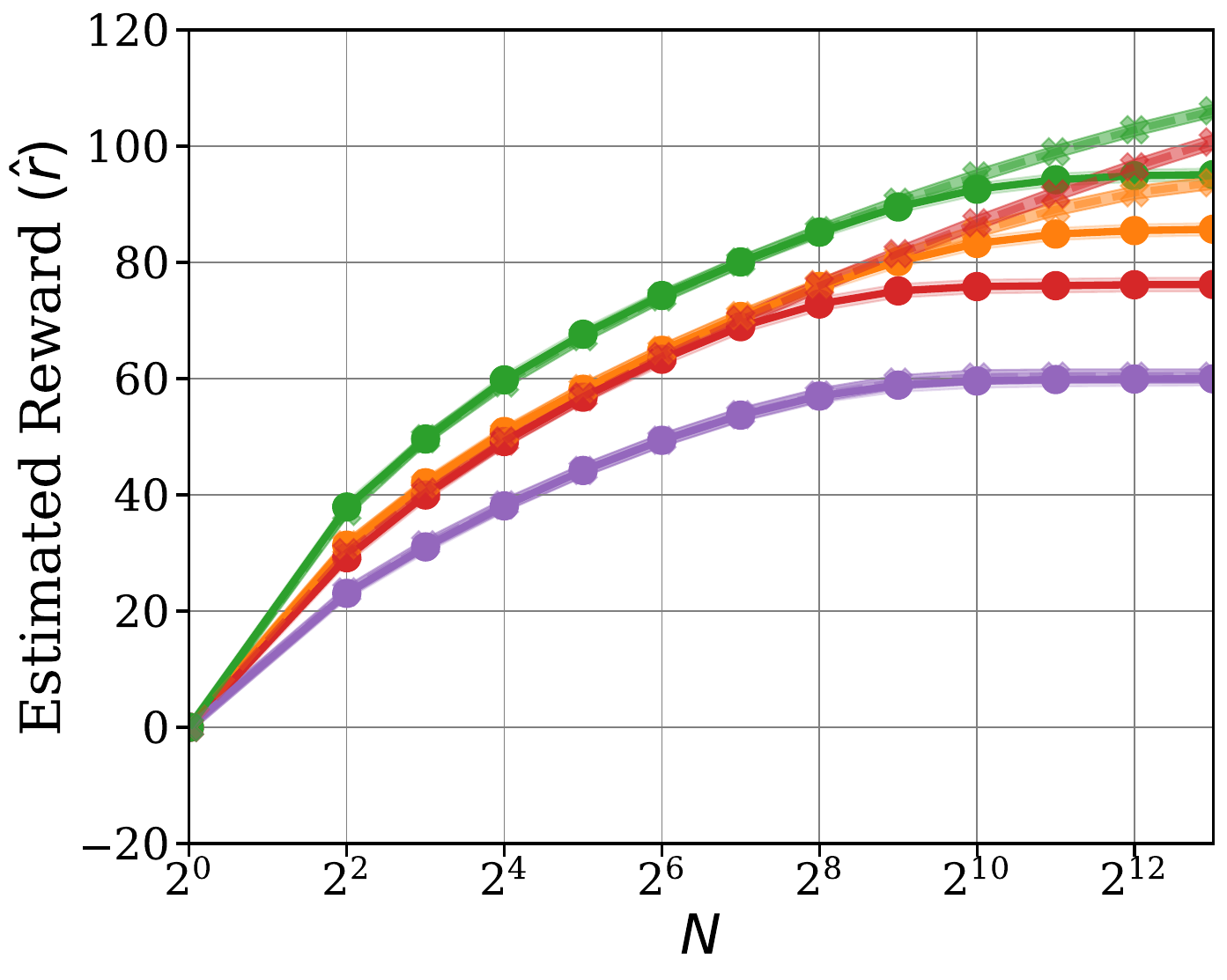}
        \label{sfig:n-gsm8k-llama-rhat} 
      }
      \hfill \subfigure[\armorm]{
        \includegraphics[width=0.2\textwidth]{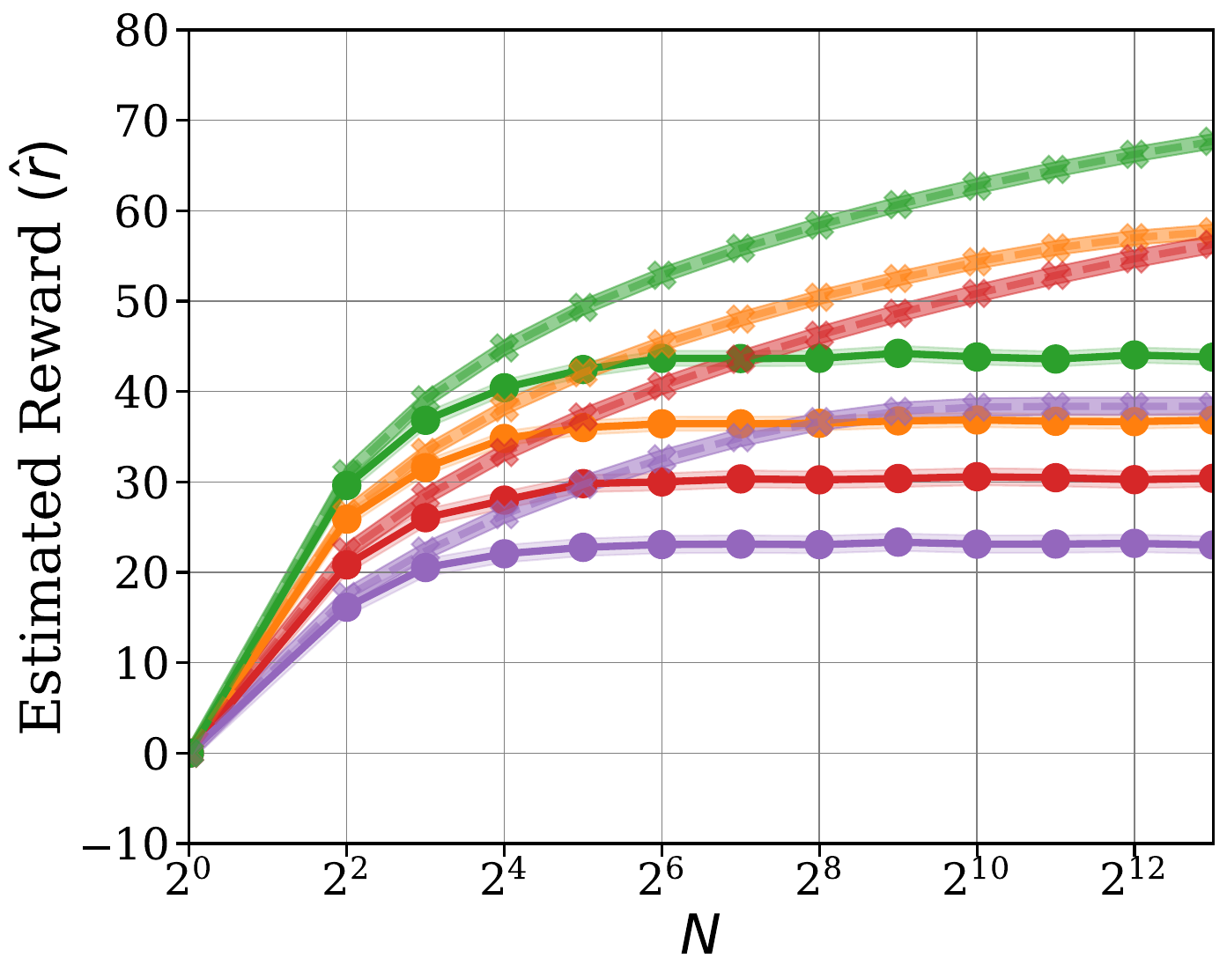}
        \label{sfig:n-gsm8k-armo-rm-rhat} 
      }
    \caption{Comparison of \mainalg (solid lines) and \bonalg (dashed lines) in accuracy and estimated reward $\rhat$ for \gsmk for four reward models and choices of $\piref$.}
    \label{fig:gsm8k-ns}
  \end{figure*}

  \begin{figure*}[htp]
    \centering
    \subfigure[\oasst]{
        \includegraphics[width=0.2\textwidth]{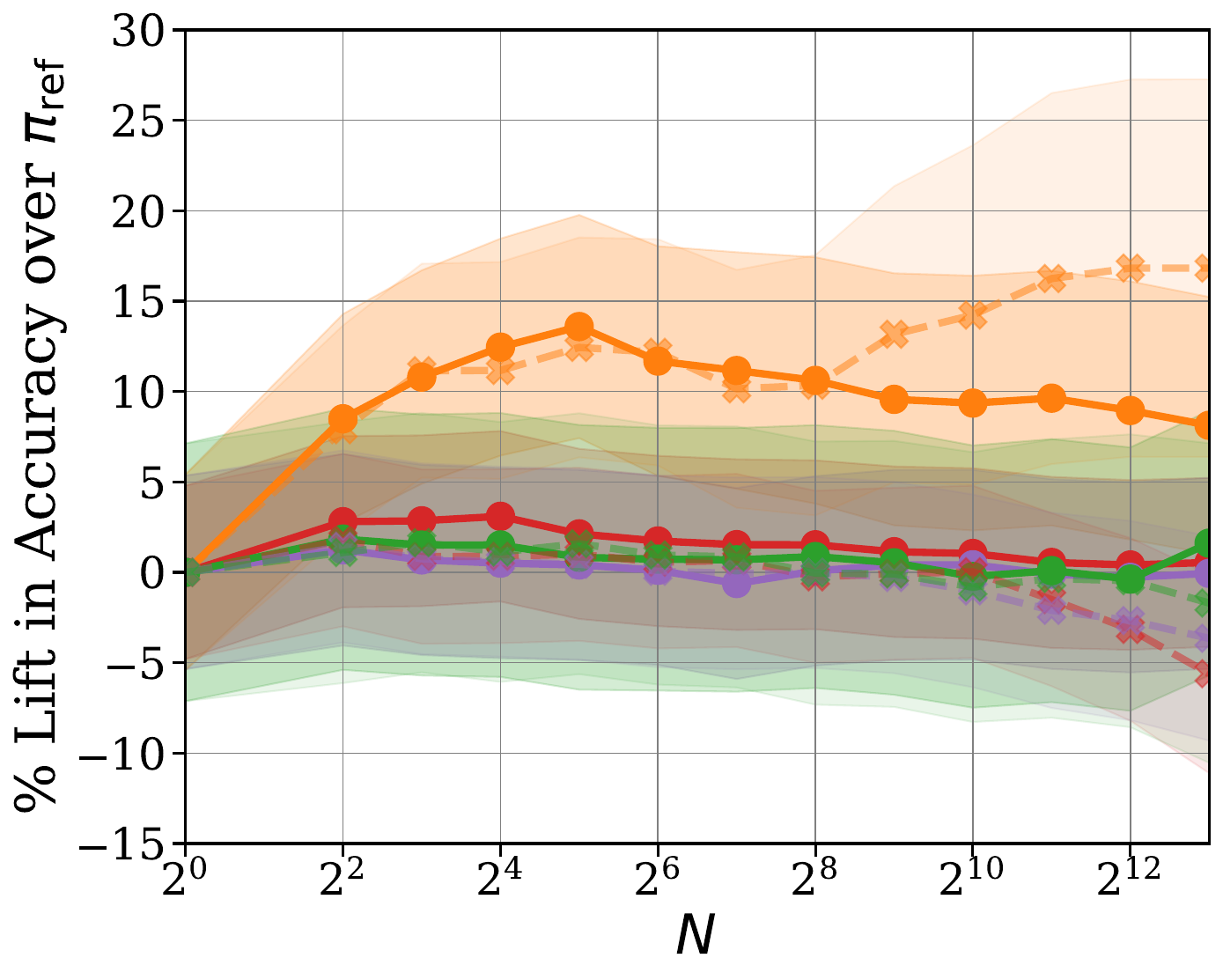}
        \label{sfig:n-mmlu-oasst-rm} 
      }
   \hfill \subfigure[\gemmarm]{
        \includegraphics[width=0.2\textwidth]{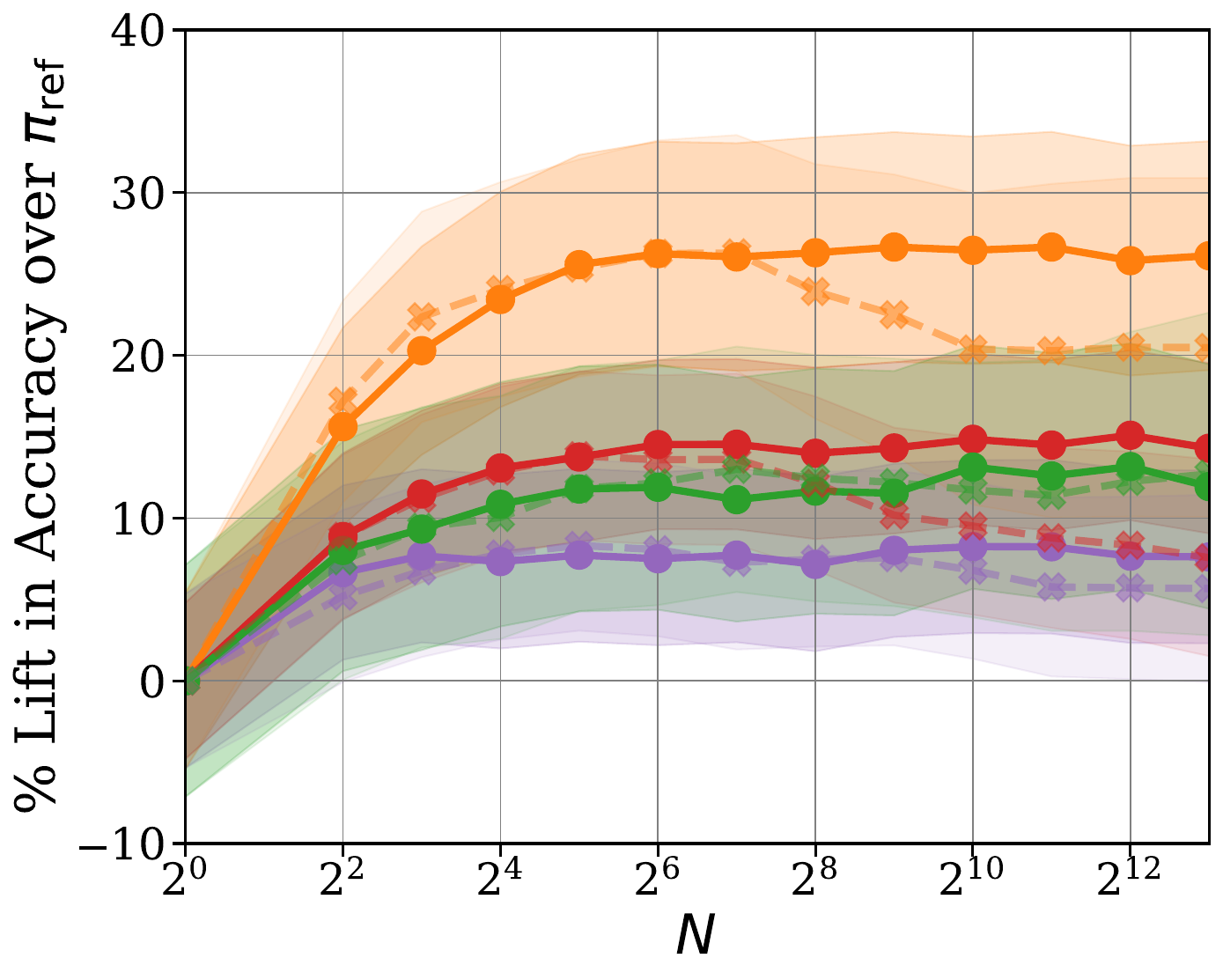}
        \label{sfig:n-mmlu-gemma-rm} 
      }
      \hfill \subfigure[\llamarm]{
        \includegraphics[width=0.2\textwidth]{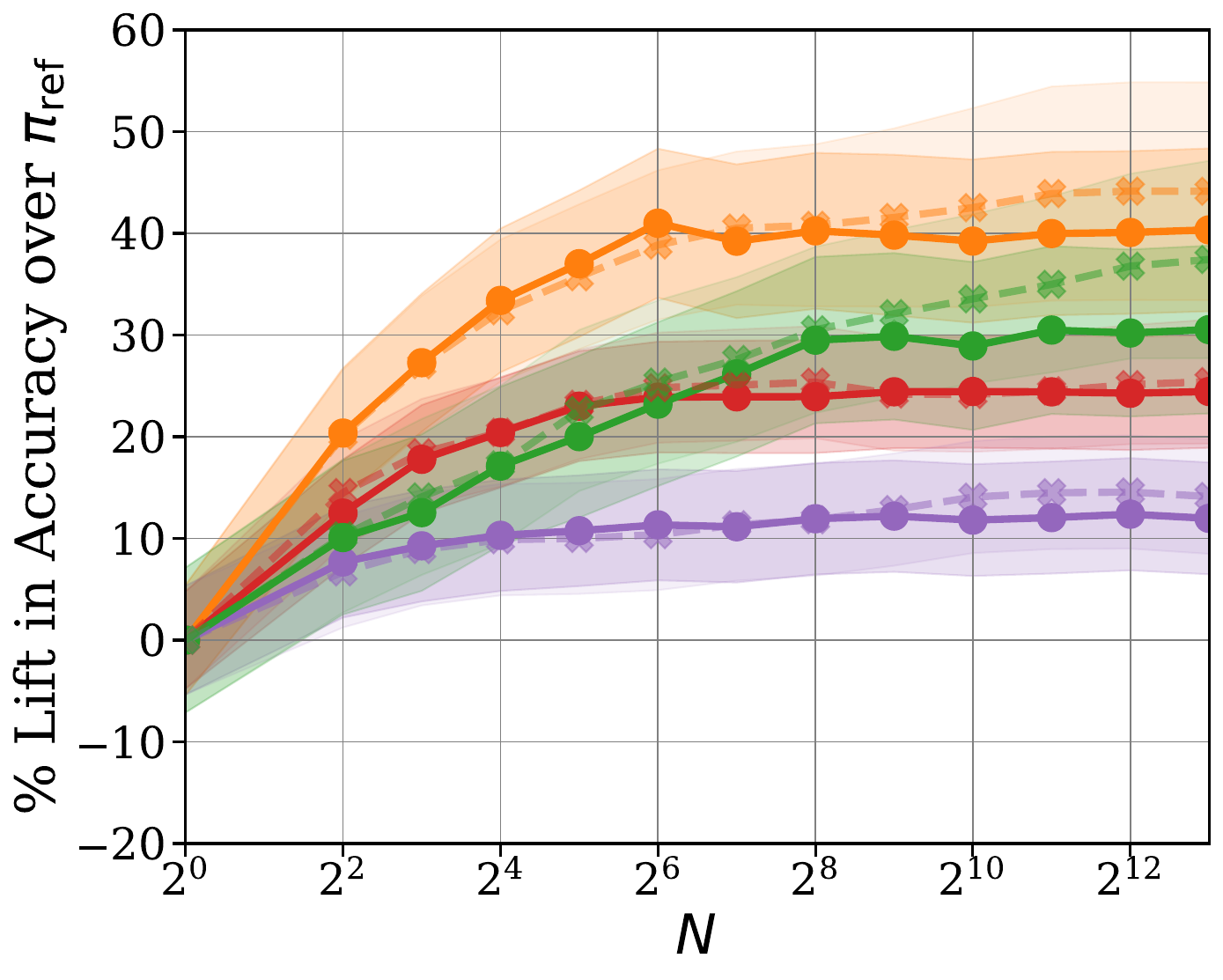}
        \label{sfig:n-mmlu-llama-3b} 
      }
      \hfill \subfigure[\armorm]{
        \includegraphics[width=0.2\textwidth]{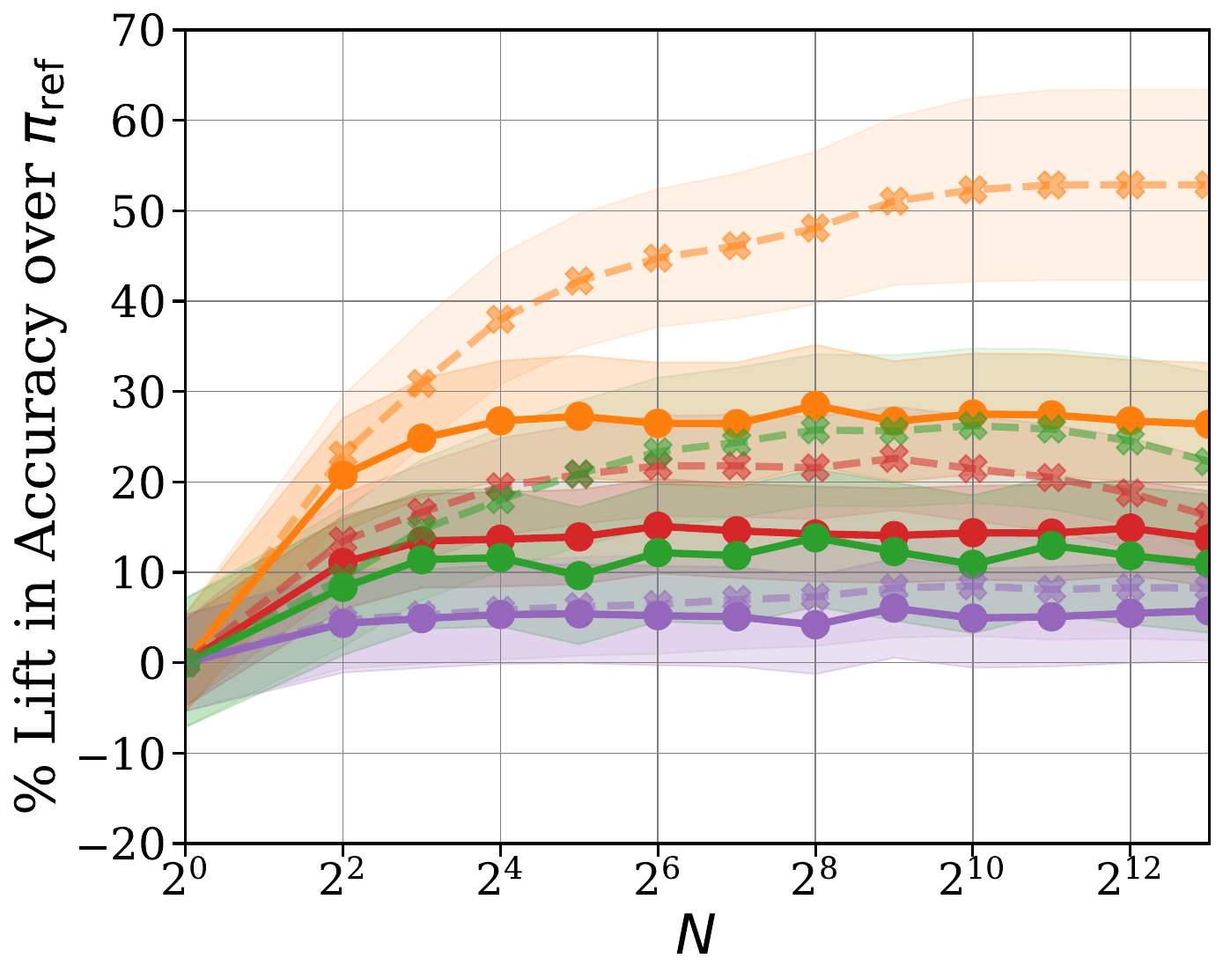}
        \label{sfig:n-mmlu-armo-rm} 
      }
      \subfigure[\oasst]{
        \includegraphics[width=0.2\textwidth]{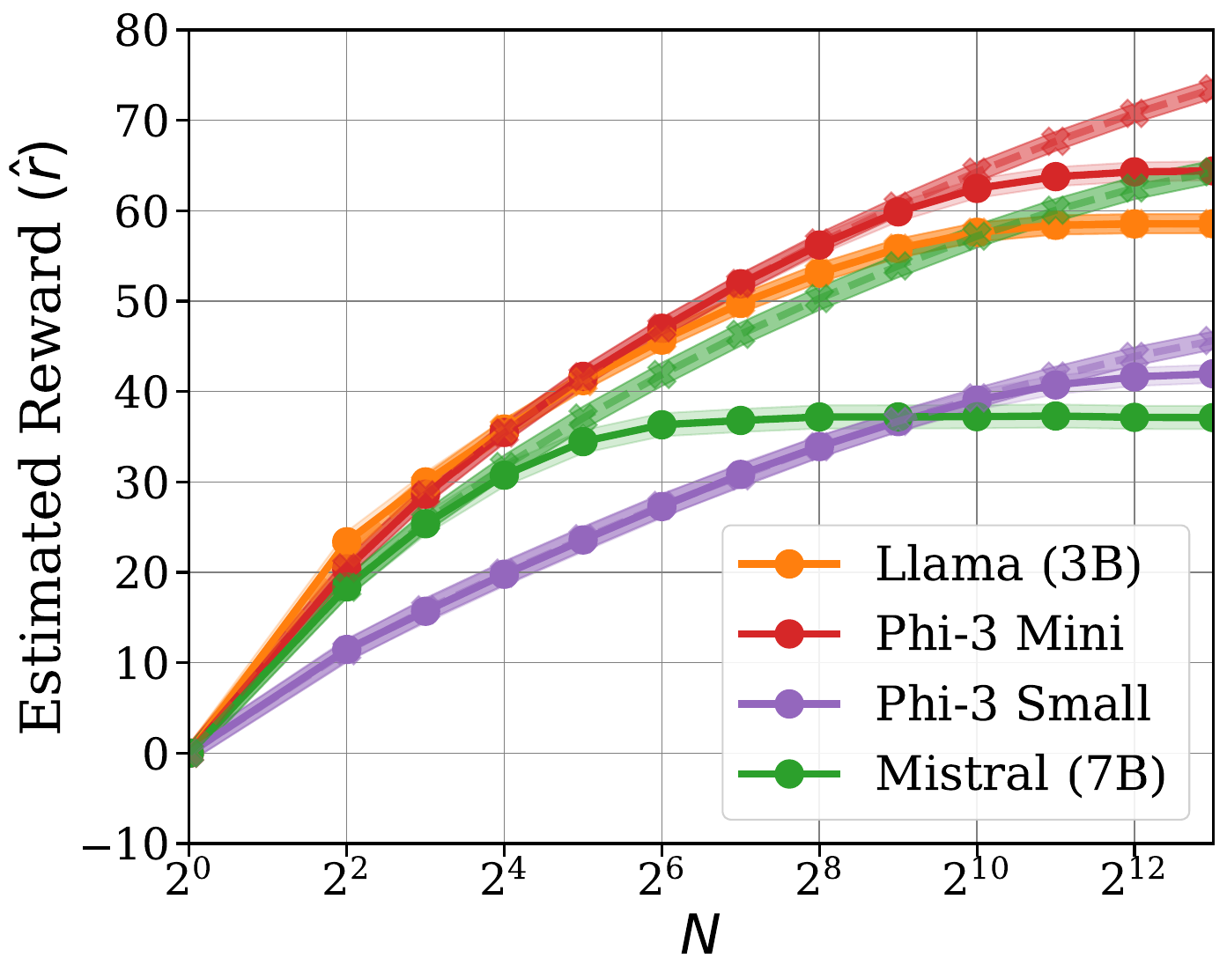}
        \label{sfig:n-mmlu-oasst-rm-rhat} 
      }
   \hfill \subfigure[\gemmarm]{
        \includegraphics[width=0.2\textwidth]{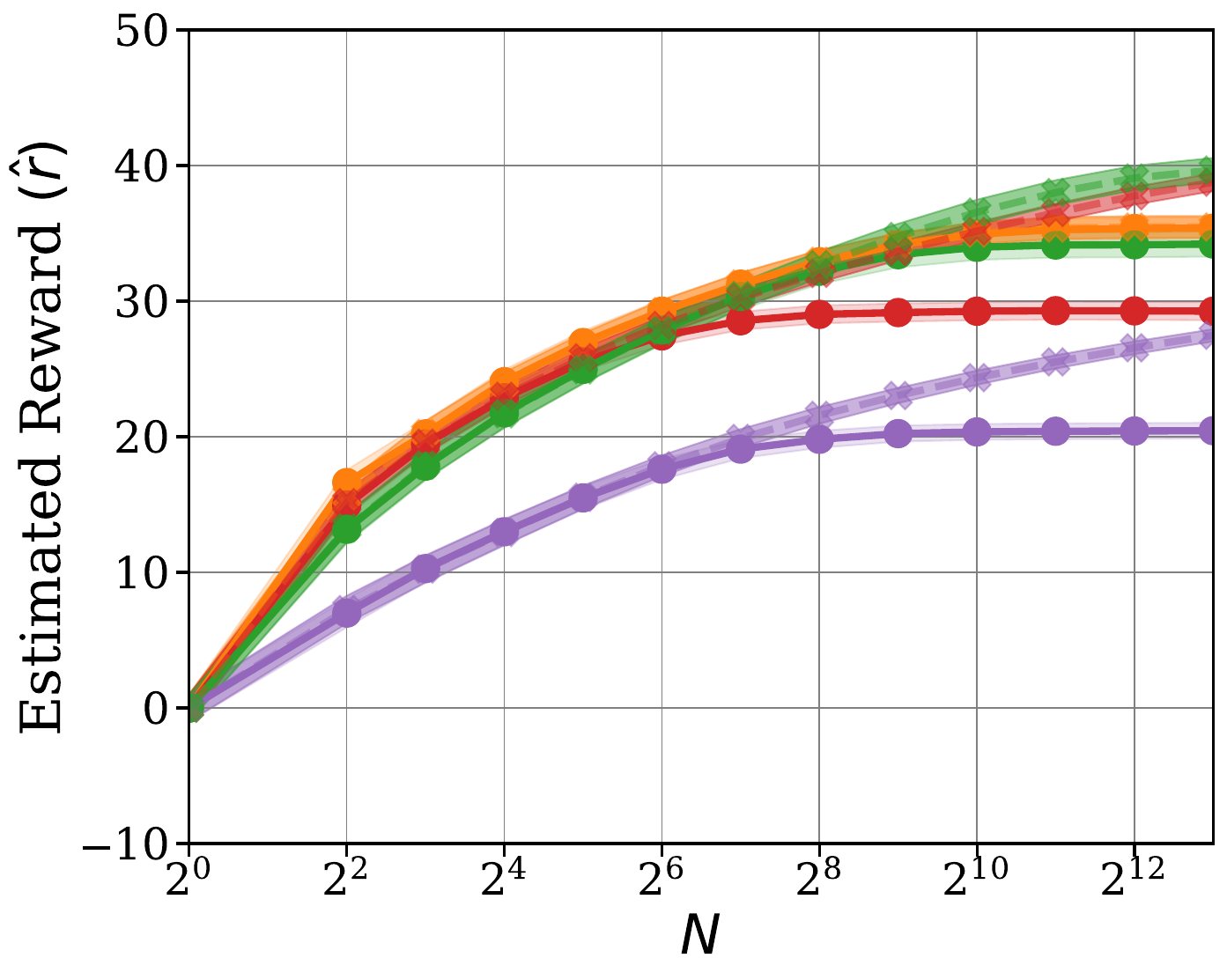}
        \label{sfig:n-mmlu-gemma-rhat} 
      }
      \hfill \subfigure[\llamarm]{
        \includegraphics[width=0.2\textwidth]{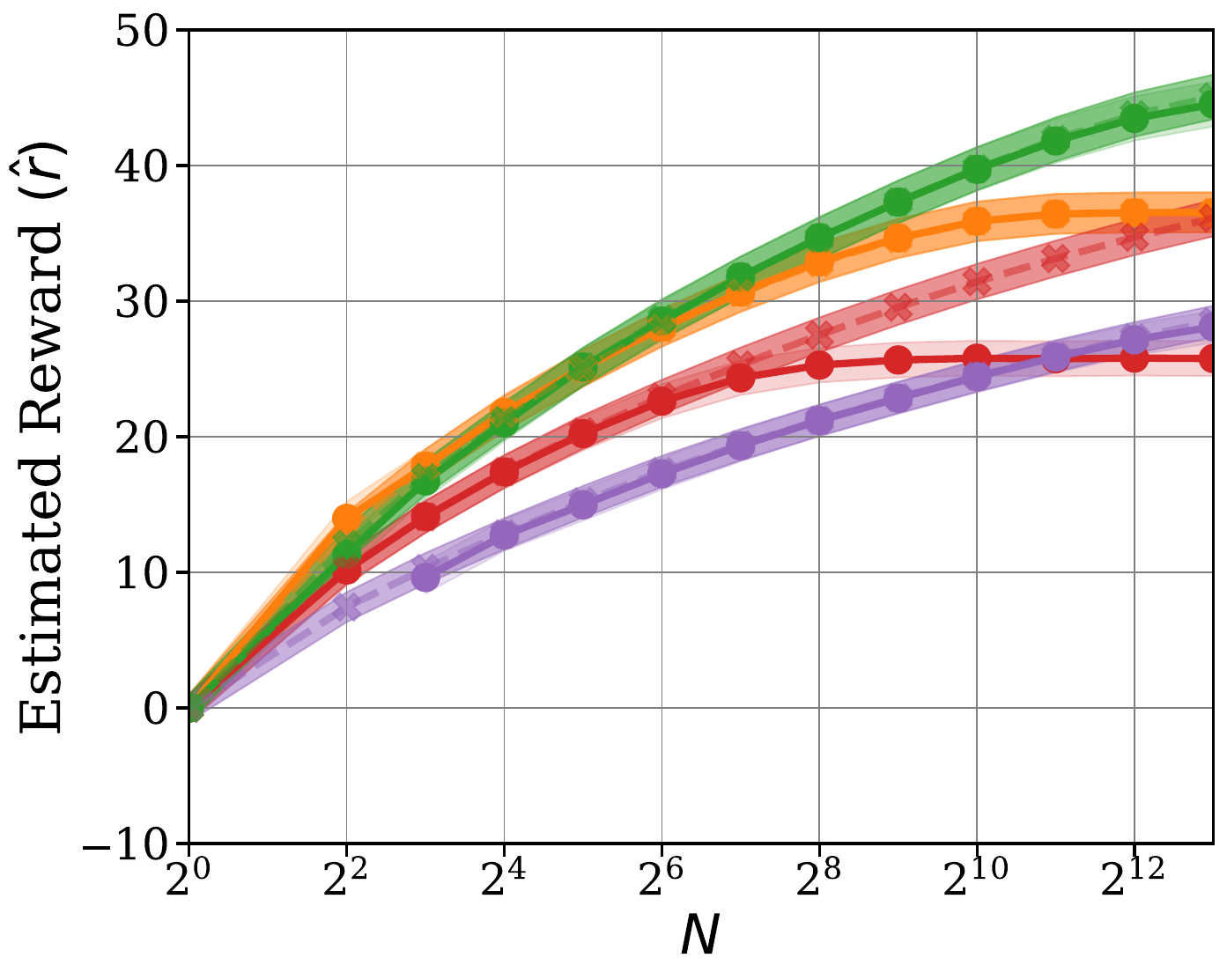}
        \label{sfig:n-mmlu-llama-rhat} 
      }
      \hfill \subfigure[\armorm]{
        \includegraphics[width=0.2\textwidth]{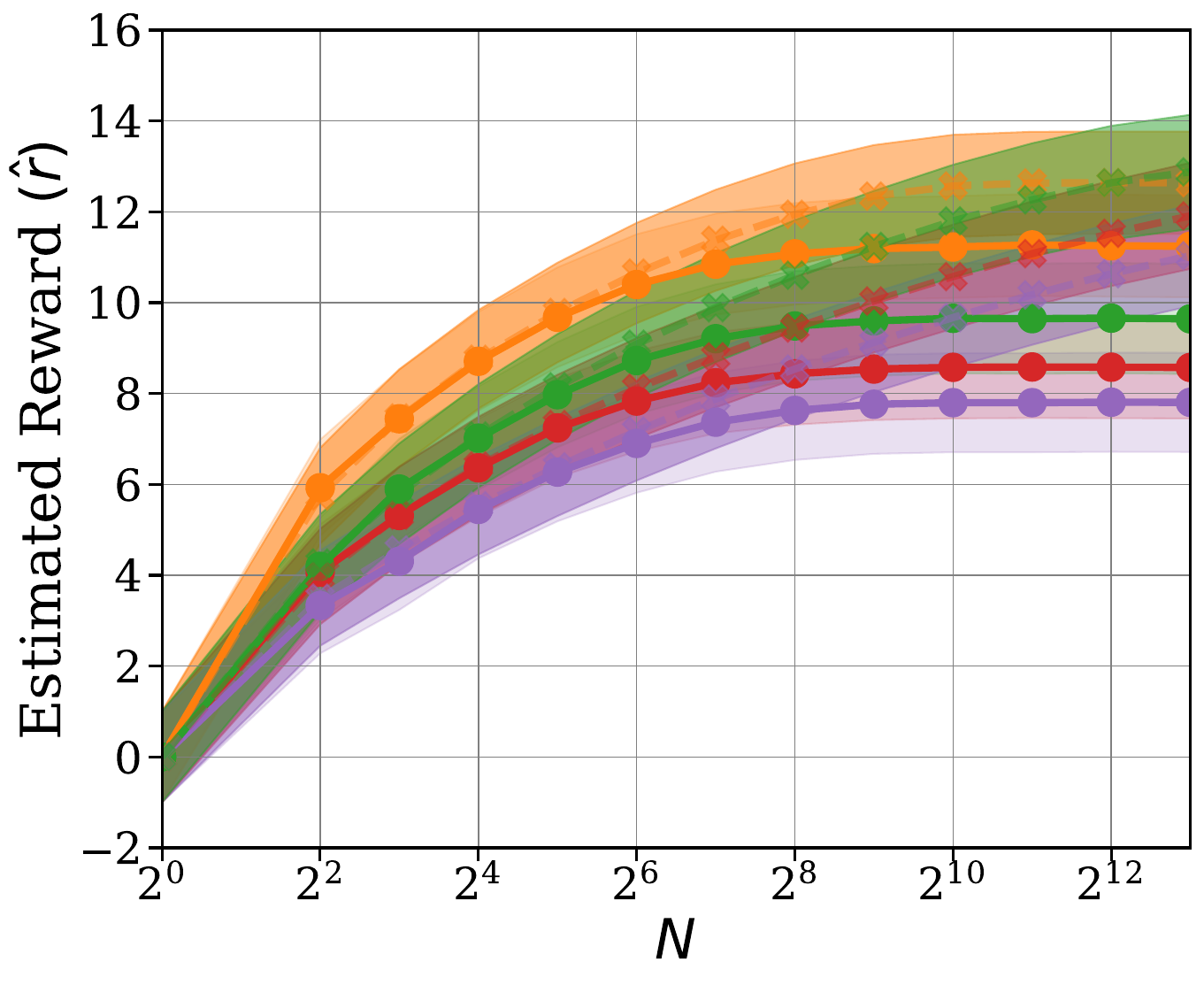}
        \label{sfig:n-mmlu-armo-rm-rhat} 
      }
    \caption{Comparison of \mainalg (solid lines) and \bonalg (dashed lines) in accuracy and estimated reward $\rhat$ for \mmlu for four reward models and choices of $\piref$.}
    \label{fig:mmlu-ns}
  \end{figure*}

  \begin{figure*}[htp]
    \centering
    \subfigure[\oasst]{
        \includegraphics[width=0.2\textwidth]{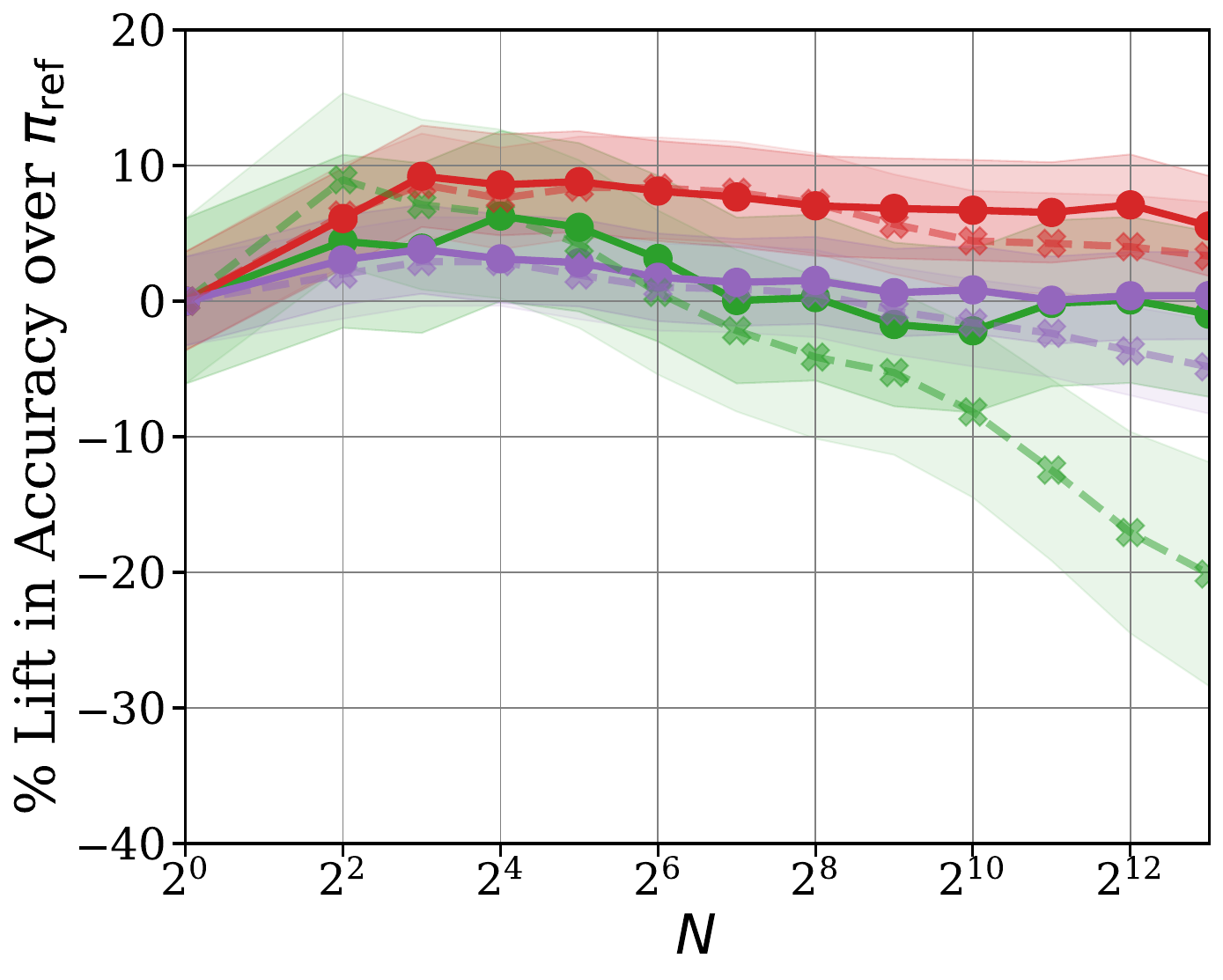}
        \label{sfig:n-math-oasst-rm} 
      }
   \hfill \subfigure[\gemmarm]{
        \includegraphics[width=0.2\textwidth]{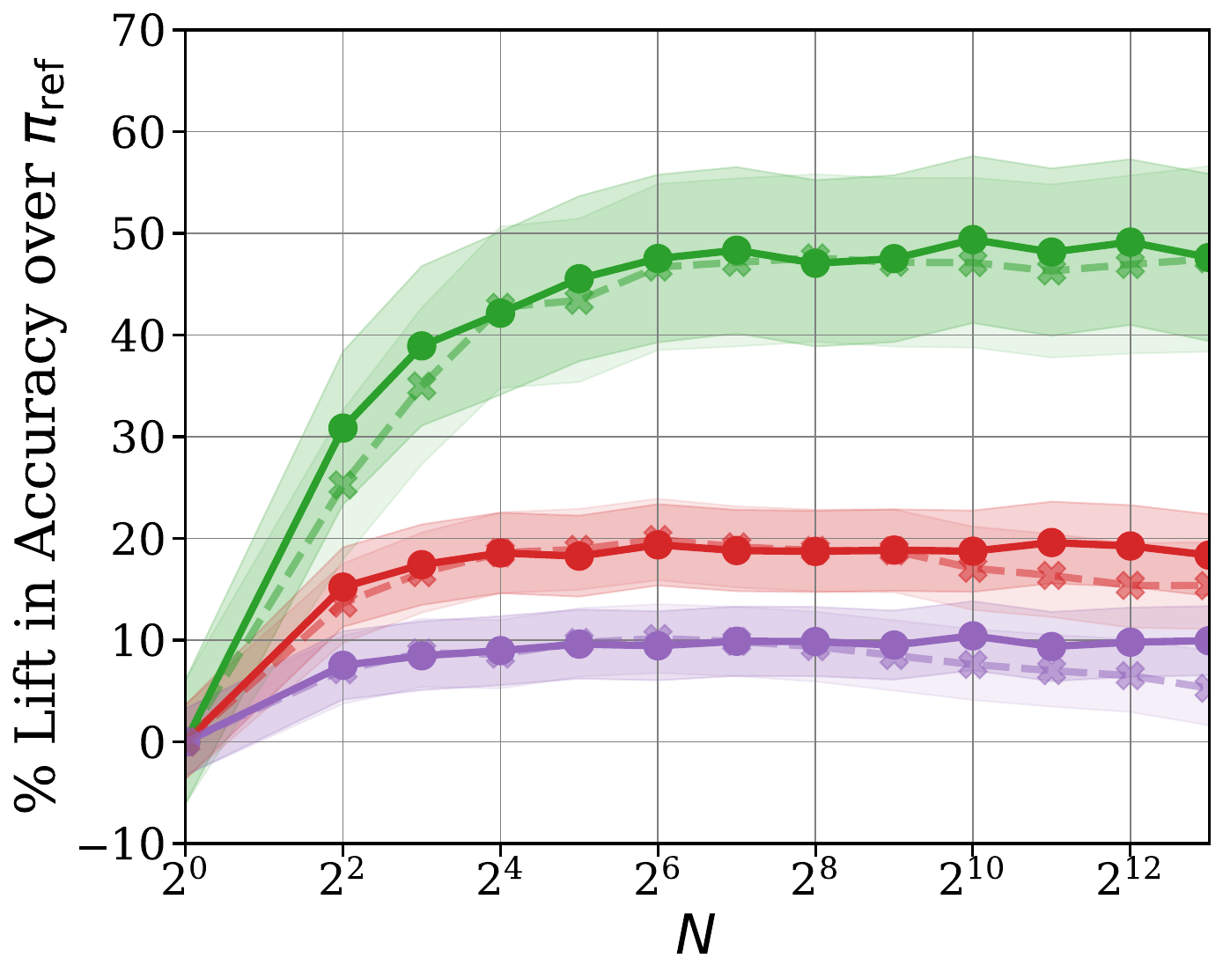}
        \label{sfig:n-math-gemma-rm} 
      }
      \hfill \subfigure[\llamarm]{
        \includegraphics[width=0.2\textwidth]{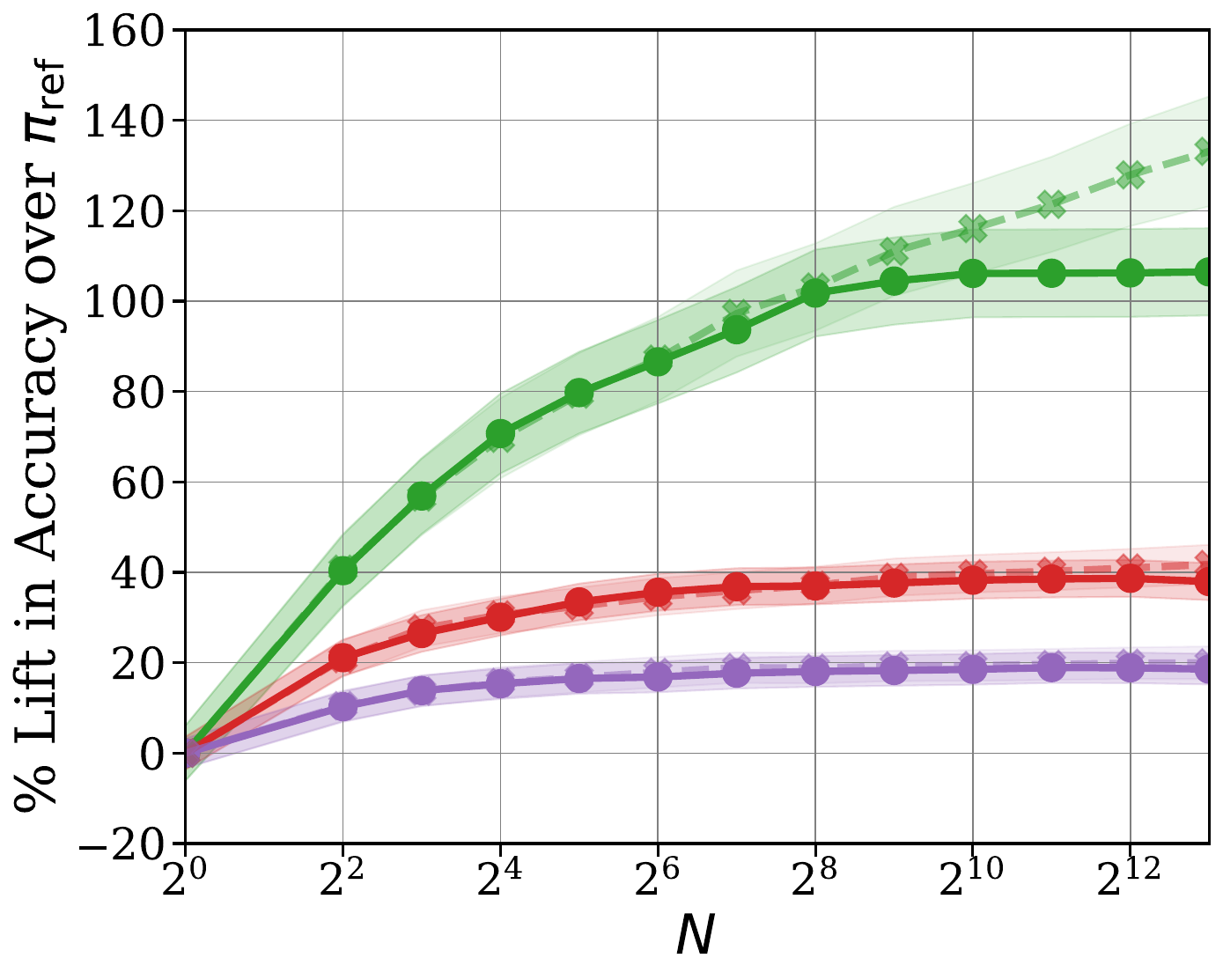}
        \label{sfig:n-math-llama-3b} 
      }
      \hfill \subfigure[\armorm]{
        \includegraphics[width=0.2\textwidth]{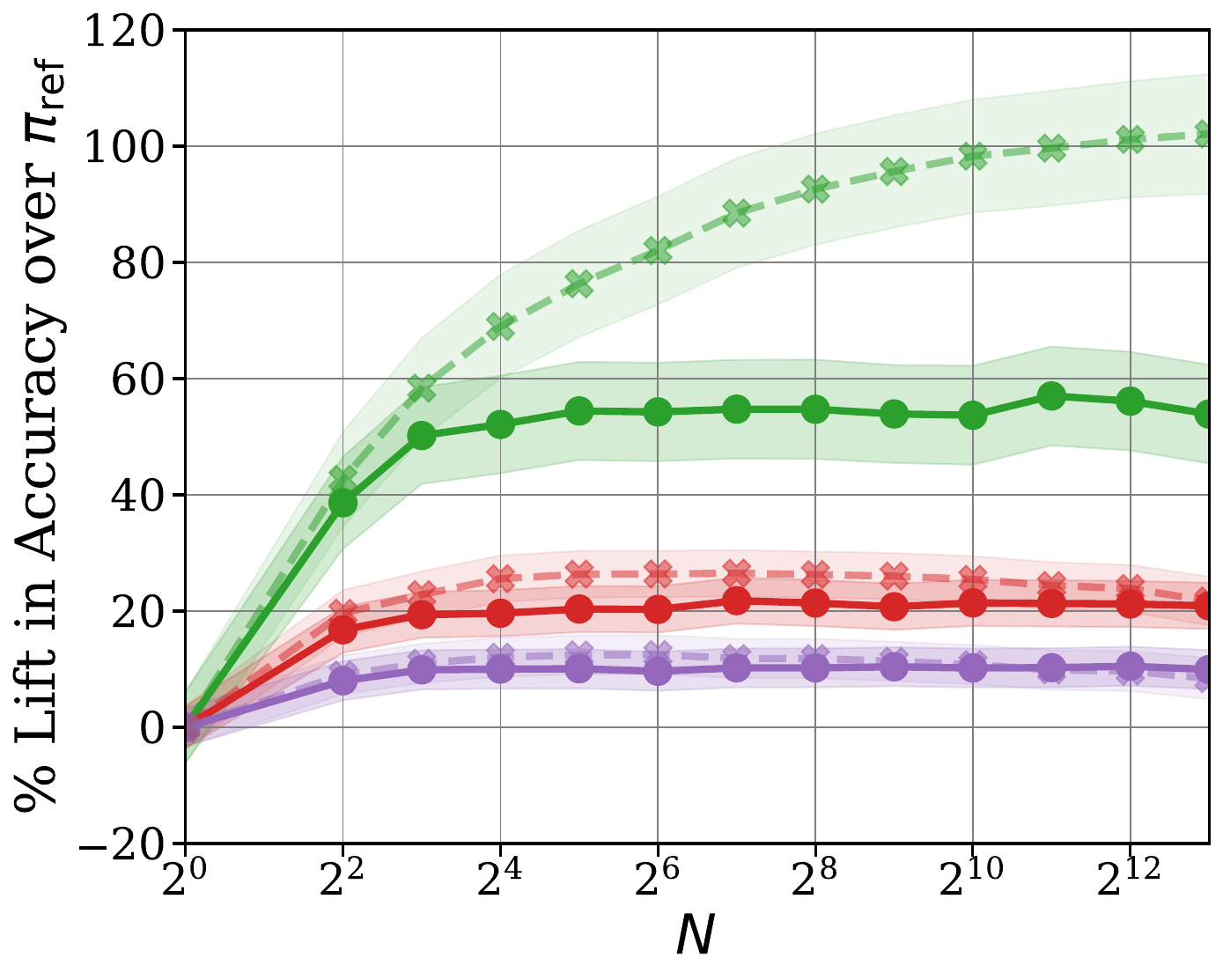}
        \label{sfig:n-math-armo-rm} 
      }
      \subfigure[\oasst]{
        \includegraphics[width=0.2\textwidth]{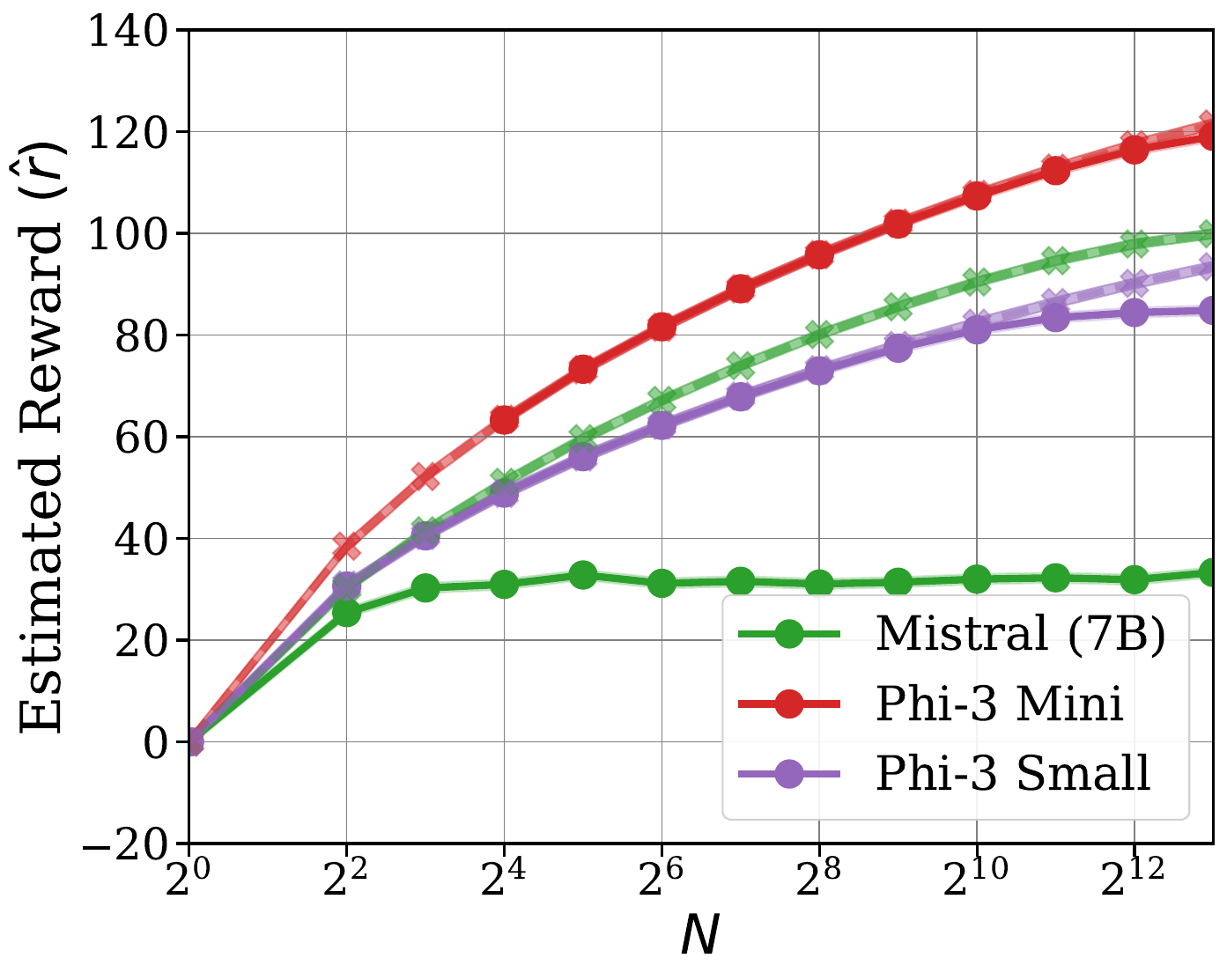}
        \label{sfig:n-math-oasst-rm-rhat} 
      }
   \hfill \subfigure[\gemmarm]{
        \includegraphics[width=0.2\textwidth]{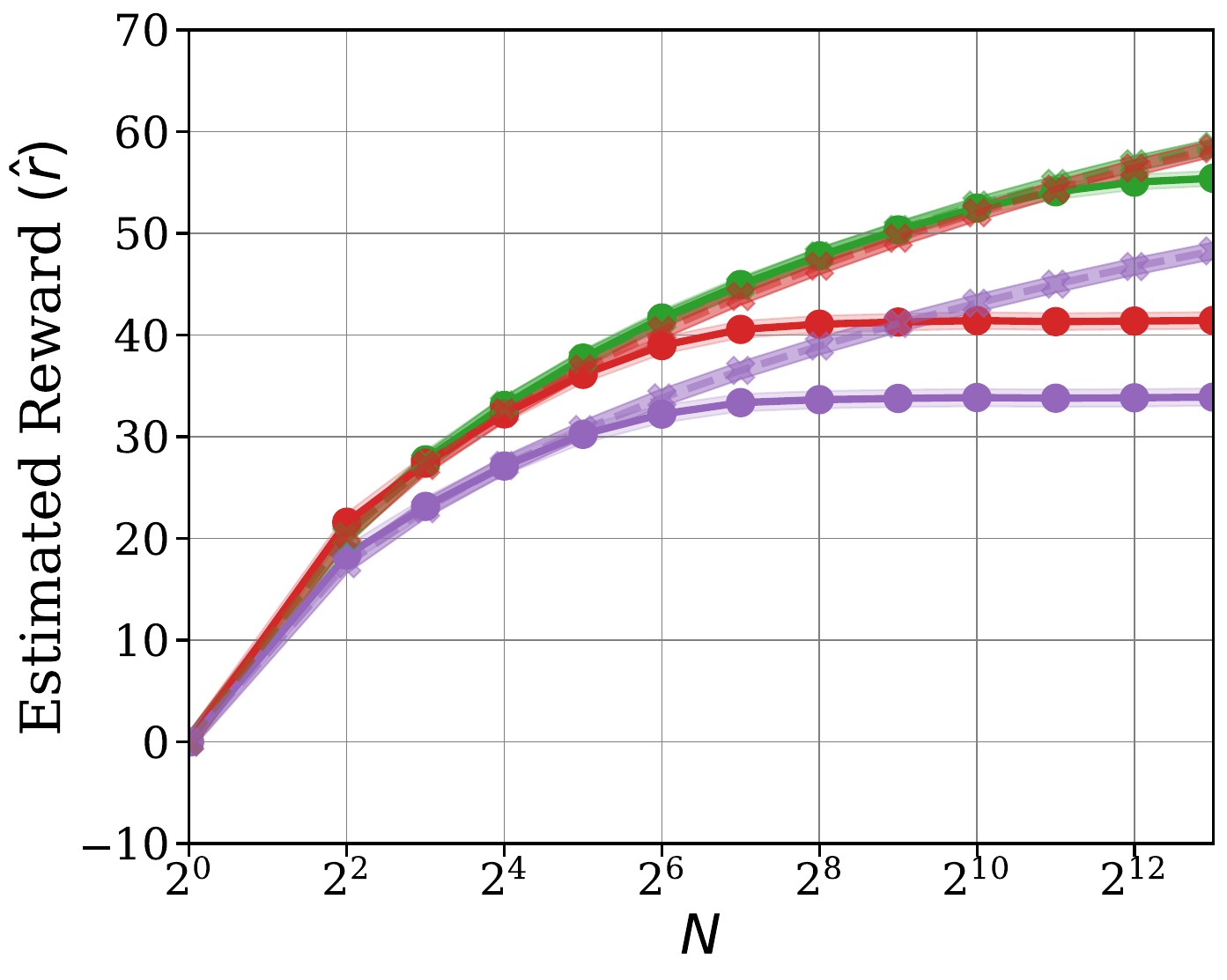}
        \label{sfig:n-math-gemma-rhat} 
      }
      \hfill \subfigure[\llamarm]{
        \includegraphics[width=0.2\textwidth]{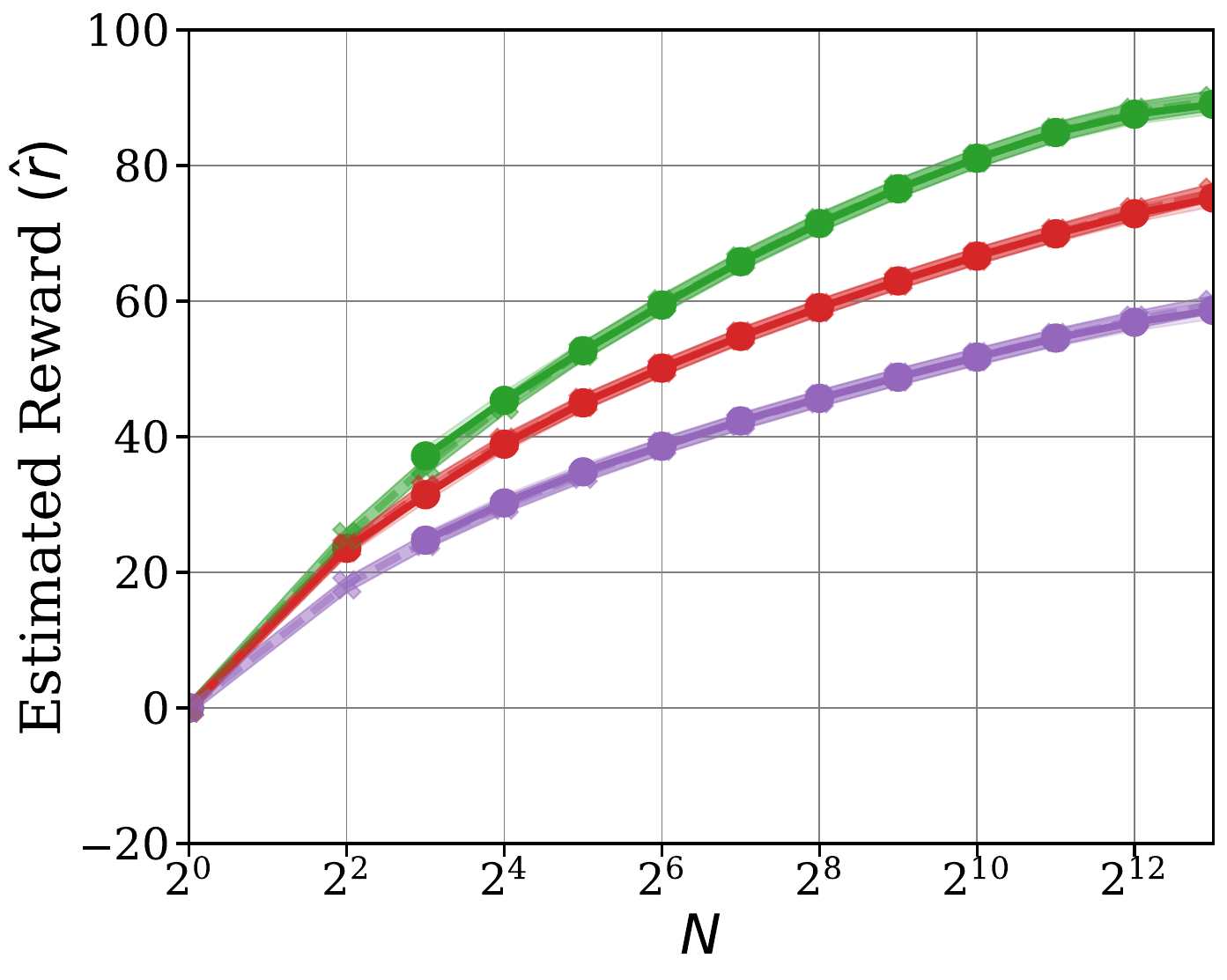}
        \label{sfig:n-math-llama-rhat} 
      }
      \hfill \subfigure[\armorm]{
        \includegraphics[width=0.2\textwidth]{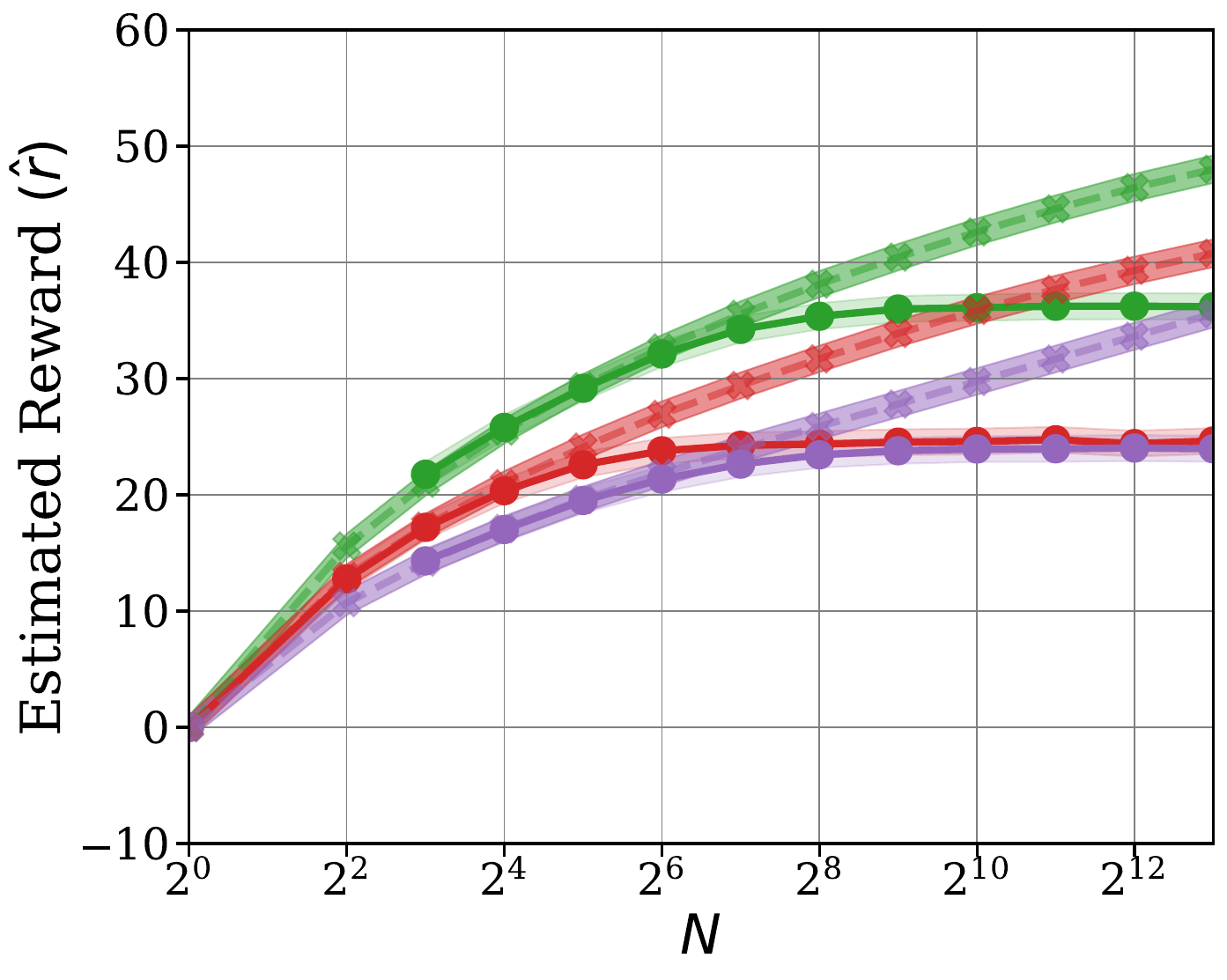}
        \label{sfig:n-math-armo-rm-rhat} 
      }
    \caption{Comparison of \mainalg (solid lines) and \bonalg (dashed lines) in accuracy and estimated reward $\rhat$ for \mathk for four reward models and choices of $\piref$.}
    \label{fig:math-ns}
  \end{figure*}

  \begin{figure*}[htp]
    \centering
    \subfigure[\oasst]{
        \includegraphics[width=0.2\textwidth]{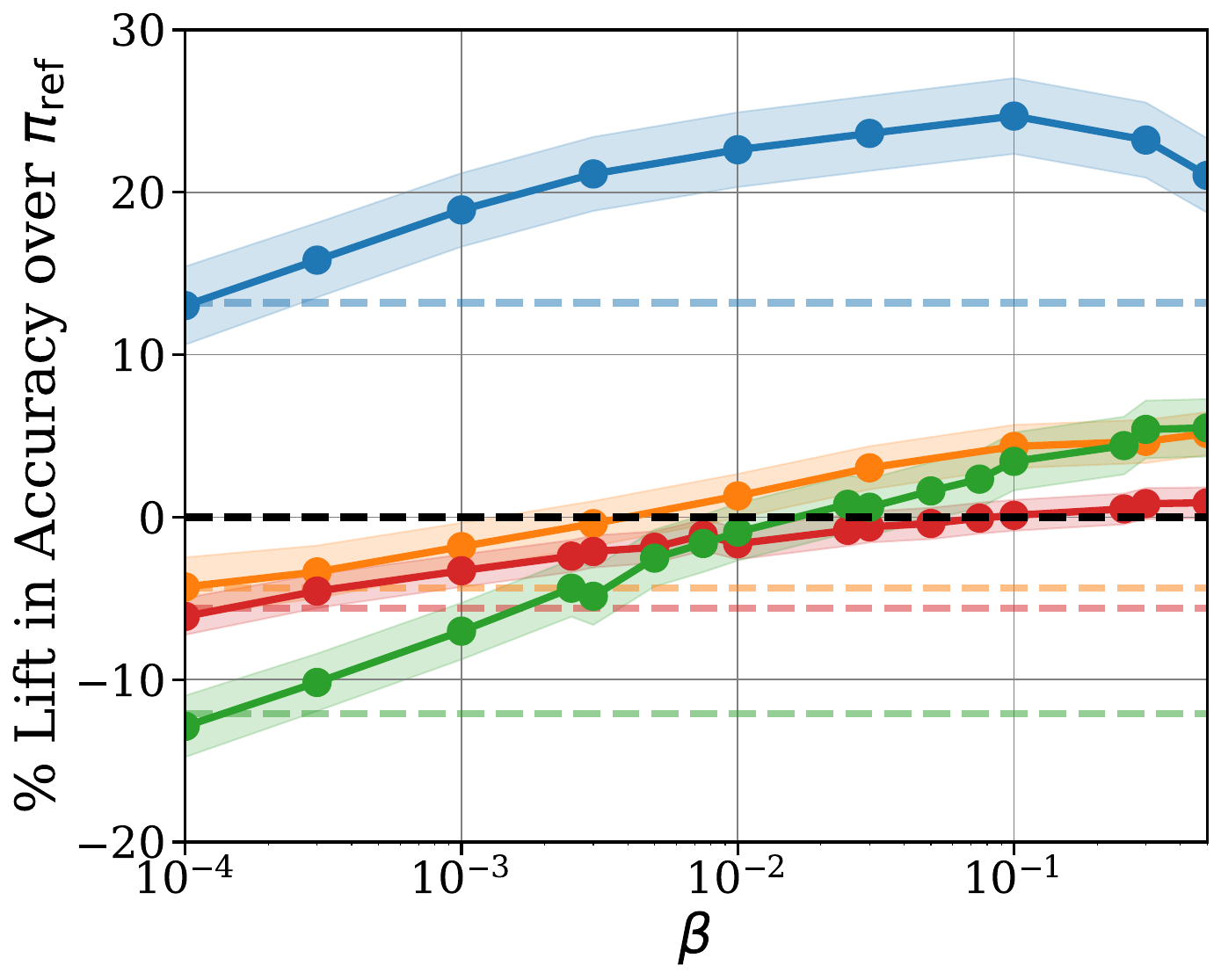}
        \label{sfig:betas-gsm8k-oasst-rm} 
      }
   \hfill \subfigure[\gemmarm]{
        \includegraphics[width=0.2\textwidth]{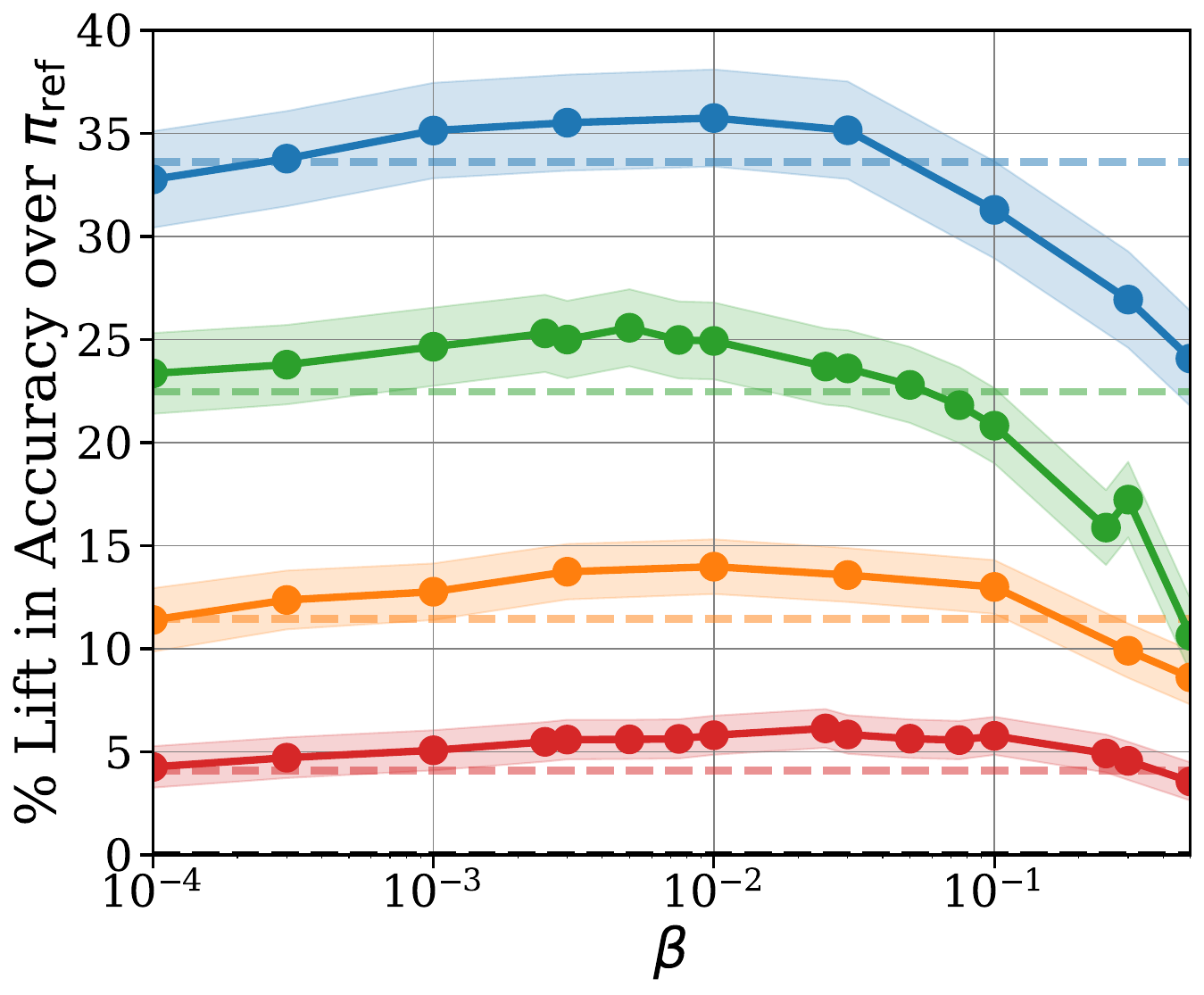}
        \label{sfig:betas-gsm8k-gemma-rm} 
      }
      \hfill \subfigure[\llamarm]{
        \includegraphics[width=0.2\textwidth]{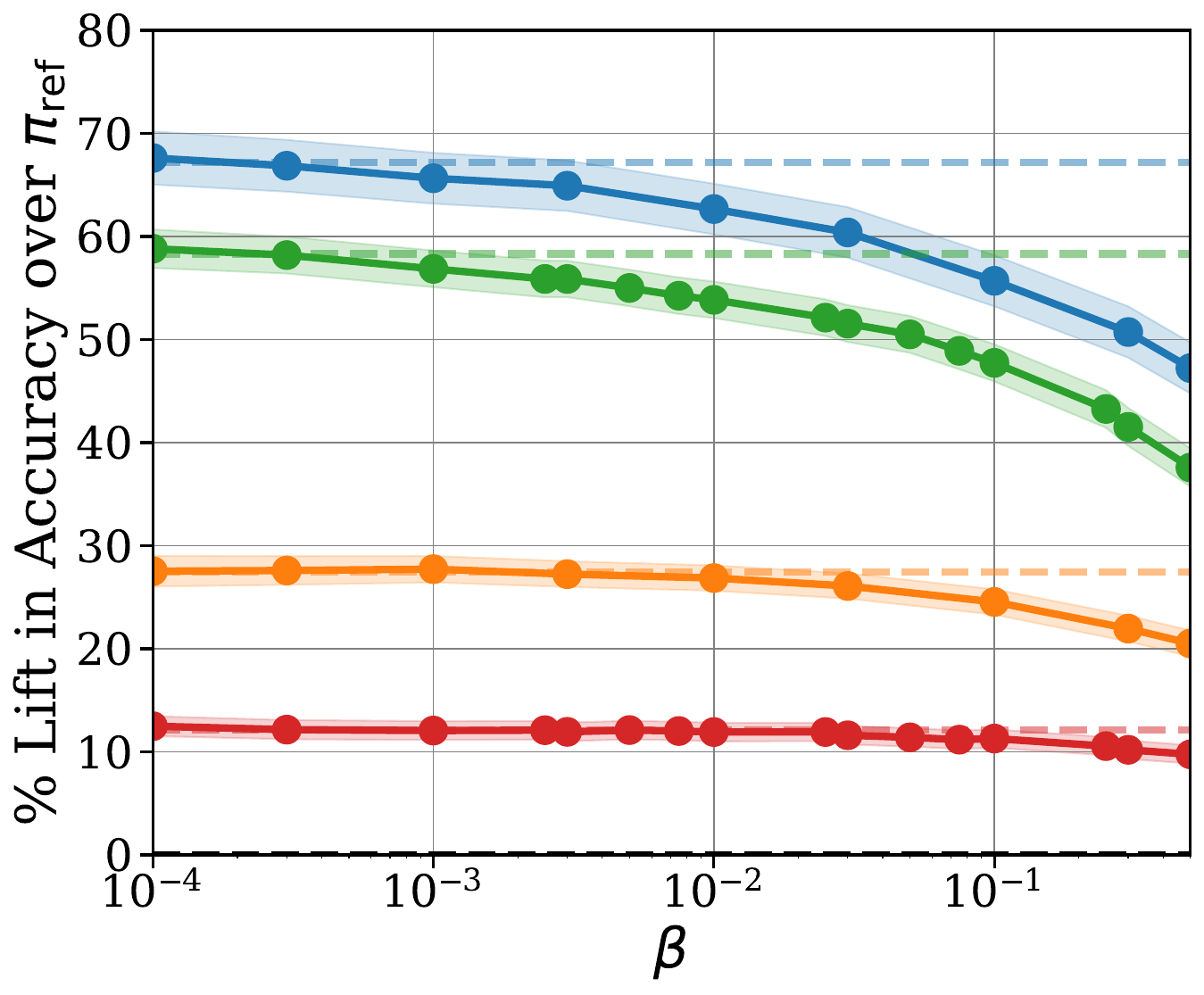}
        \label{sfig:betas-gsm8k-llama-3b} 
      }
      \hfill \subfigure[\armorm]{
        \includegraphics[width=0.2\textwidth]{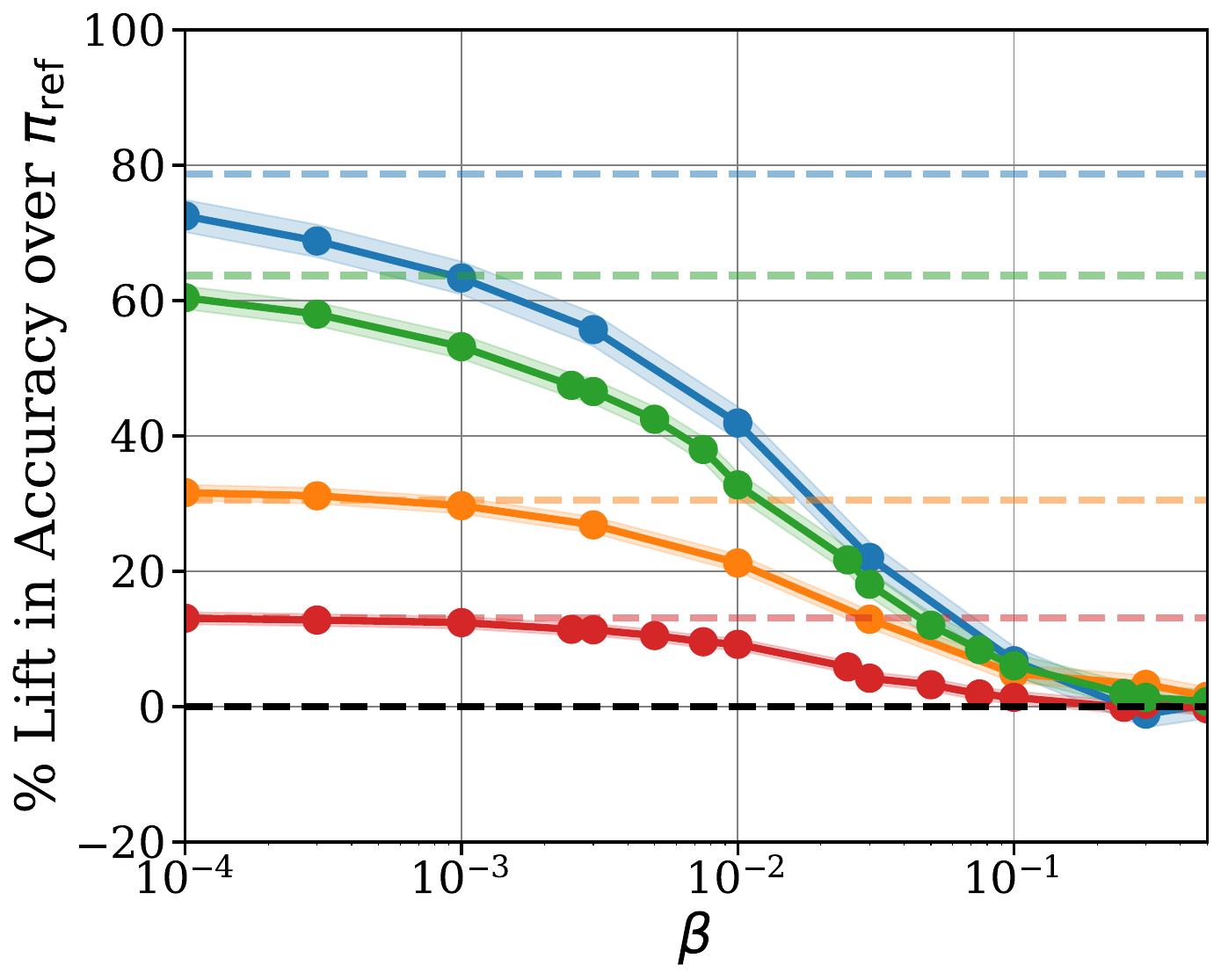}
        \label{sfig:betas-gsm8k-armo-rm} 
      }
      
      \subfigure[\oasst]{
        \includegraphics[width=0.2\textwidth]{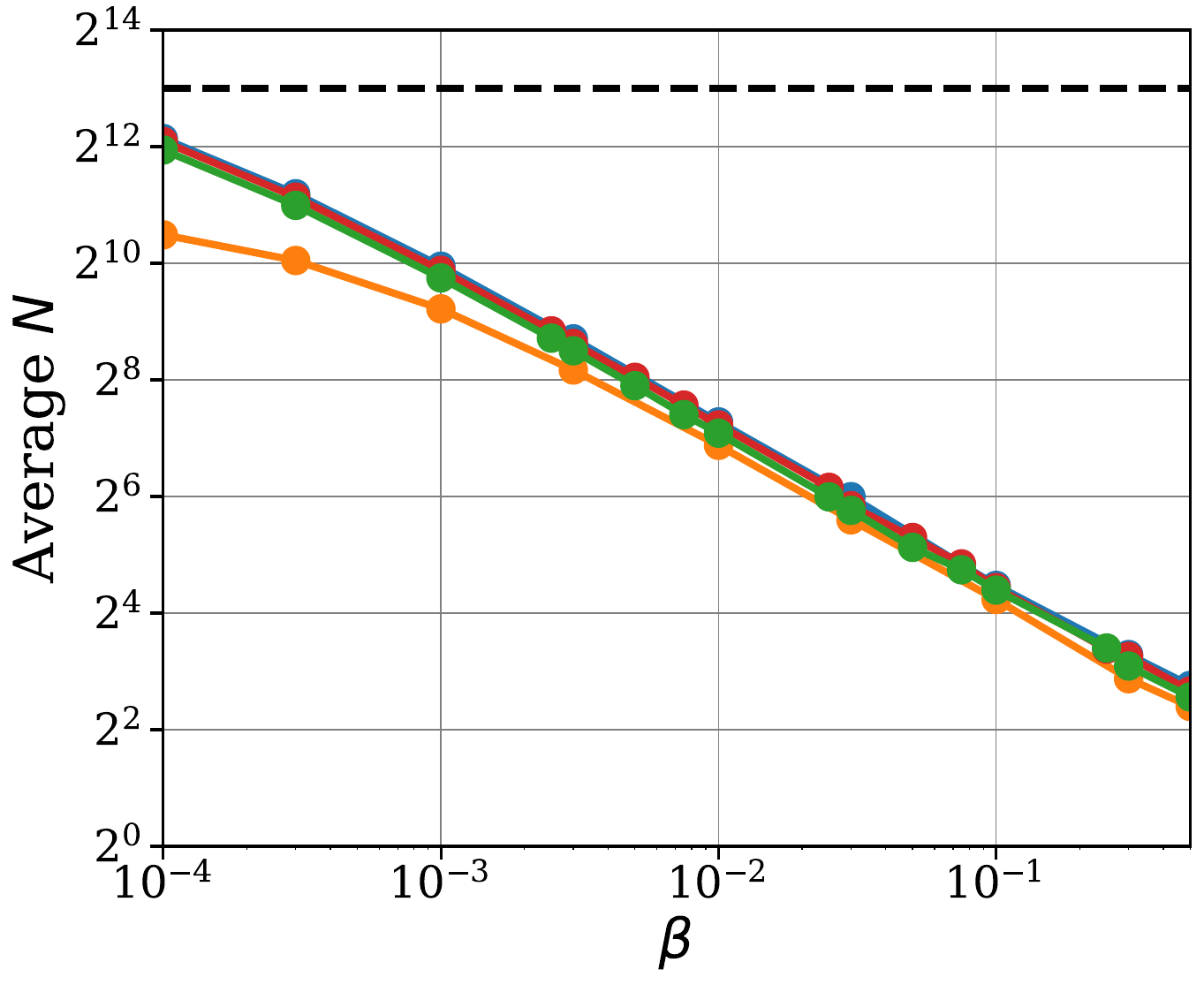}
        \label{sfig:betas-gsm8k-oasst-rm-N} 
      }
   \hfill \subfigure[\gemmarm]{
        \includegraphics[width=0.2\textwidth]{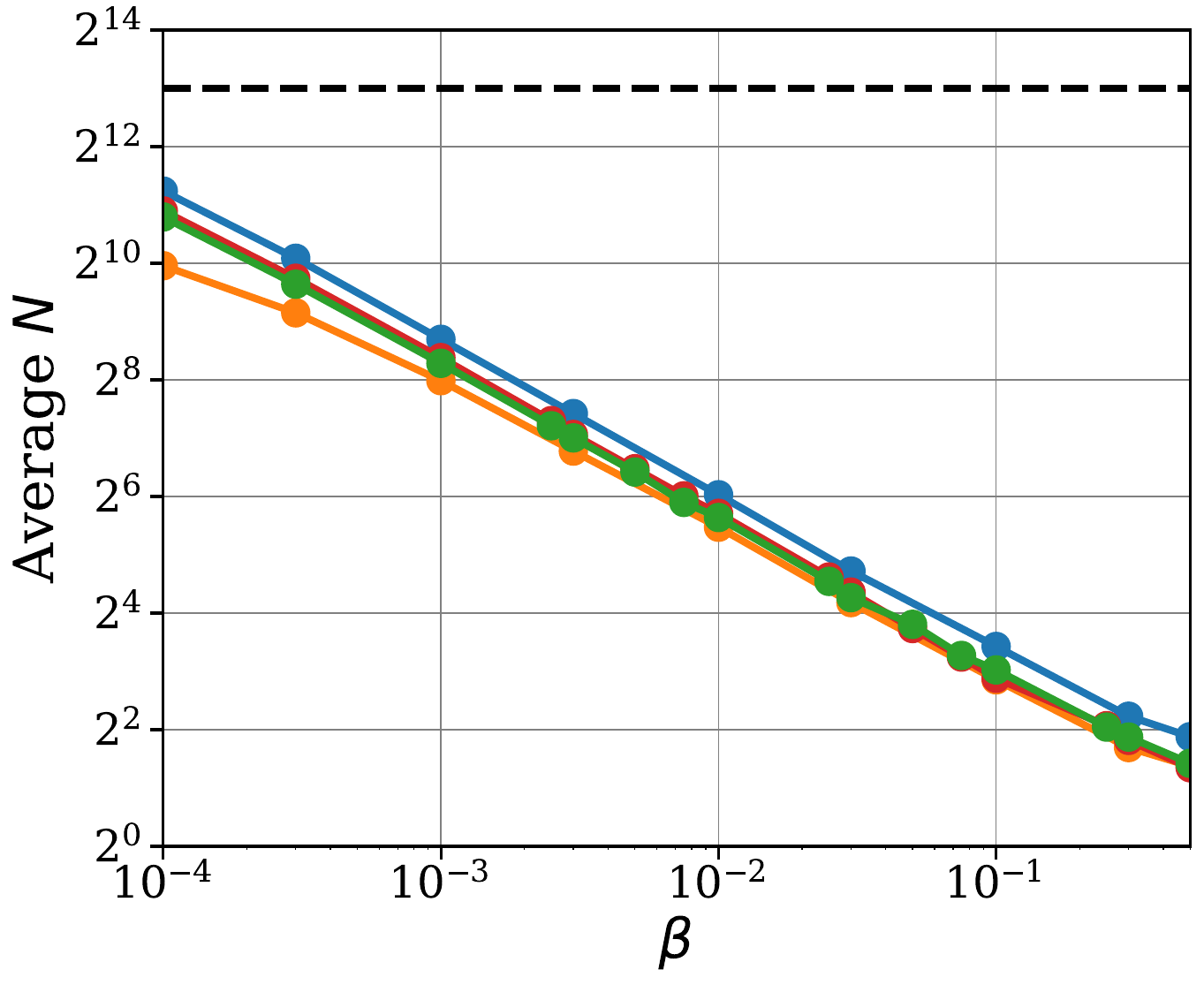}
        \label{sfig:betas-gsm8k-gemma-rm-N} 
      }
      \hfill \subfigure[\llamarm]{
        \includegraphics[width=0.2\textwidth]{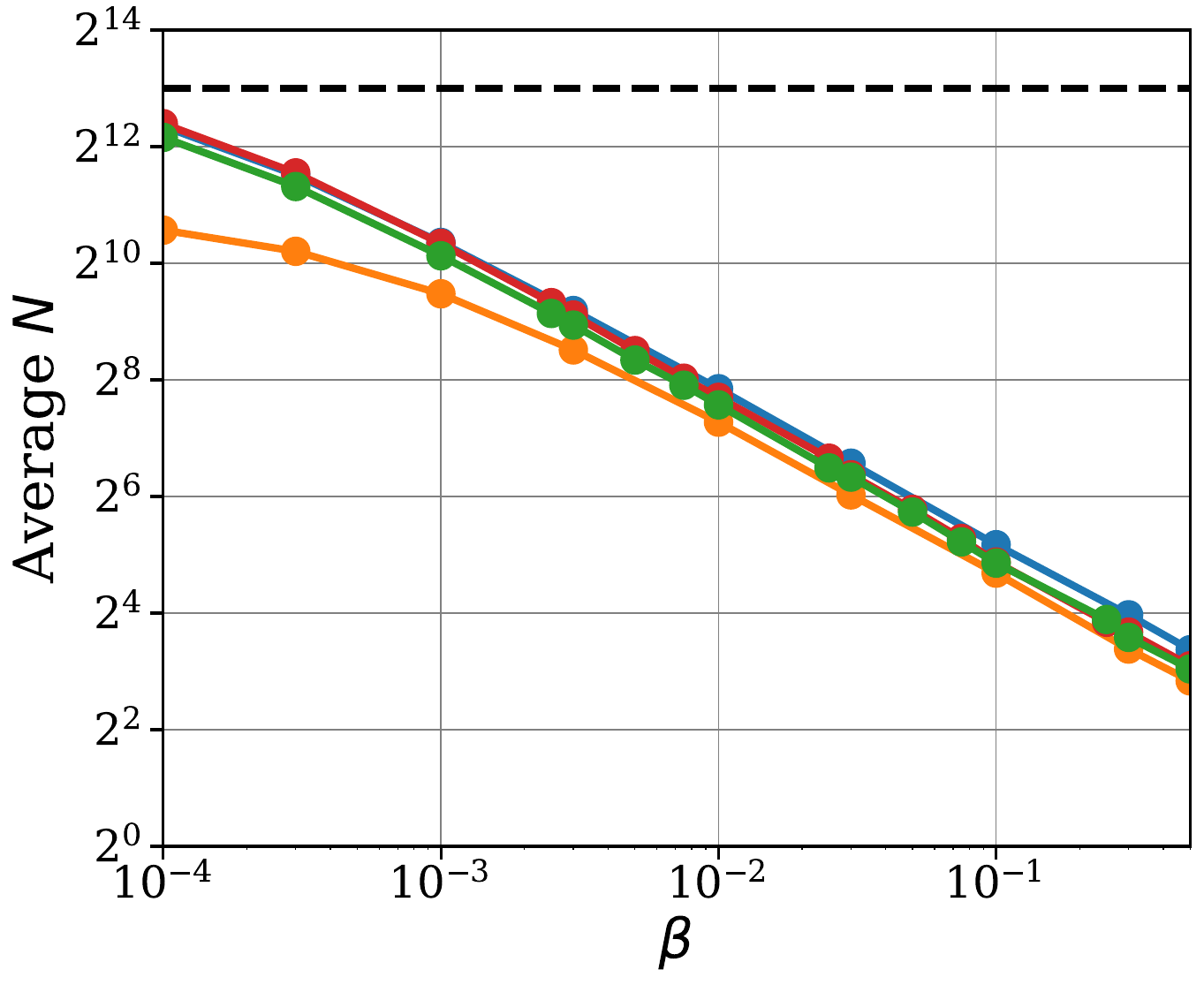}
        \label{sfig:betas-gsm8k-llama-3b-N} 
      }
      \hfill \subfigure[\armorm]{
        \includegraphics[width=0.2\textwidth]{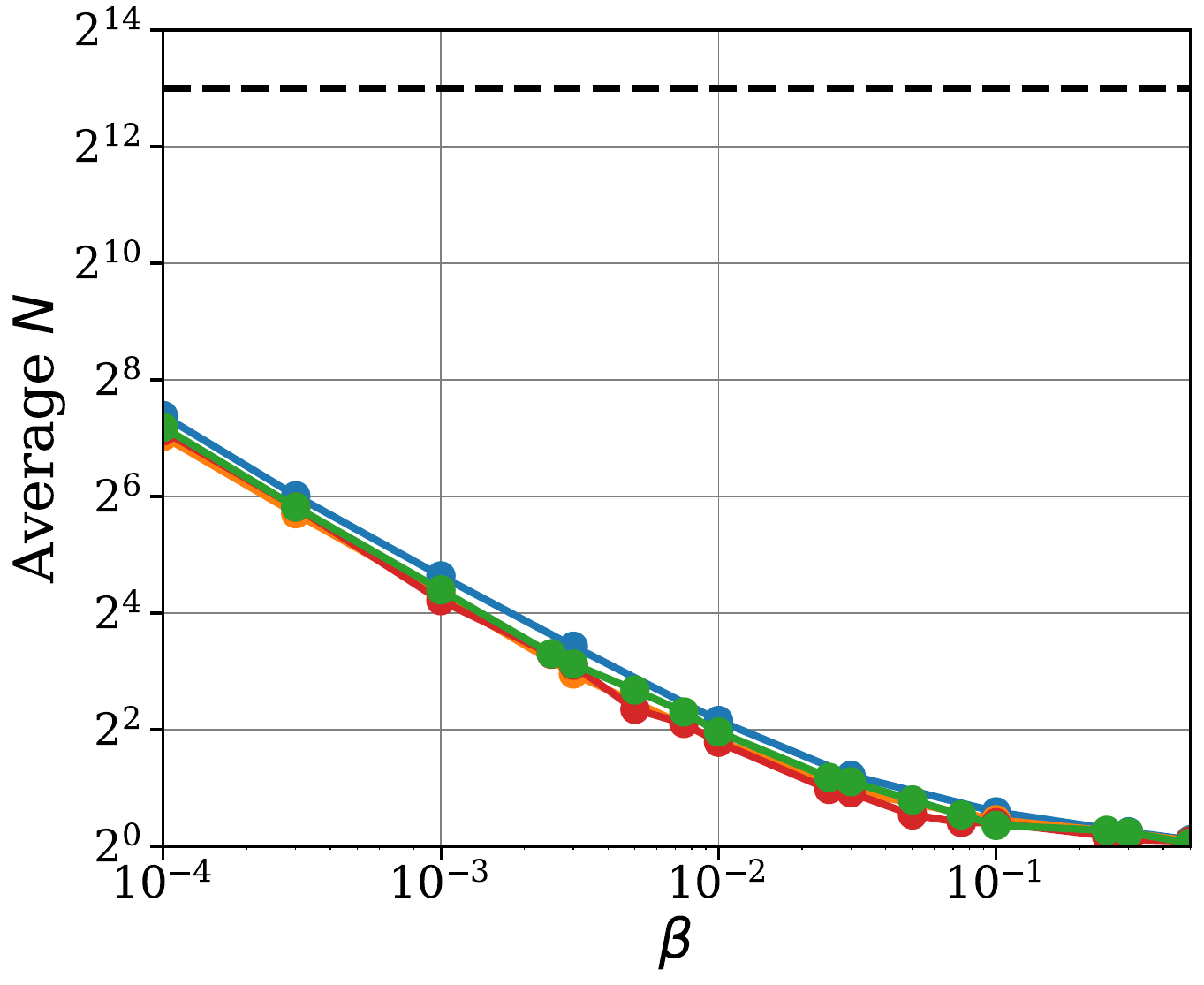}
        \label{sfig:betas-gsm8k-armo-rm-N} 
      }

      \subfigure[\oasst]{
        \includegraphics[width=0.2\textwidth]{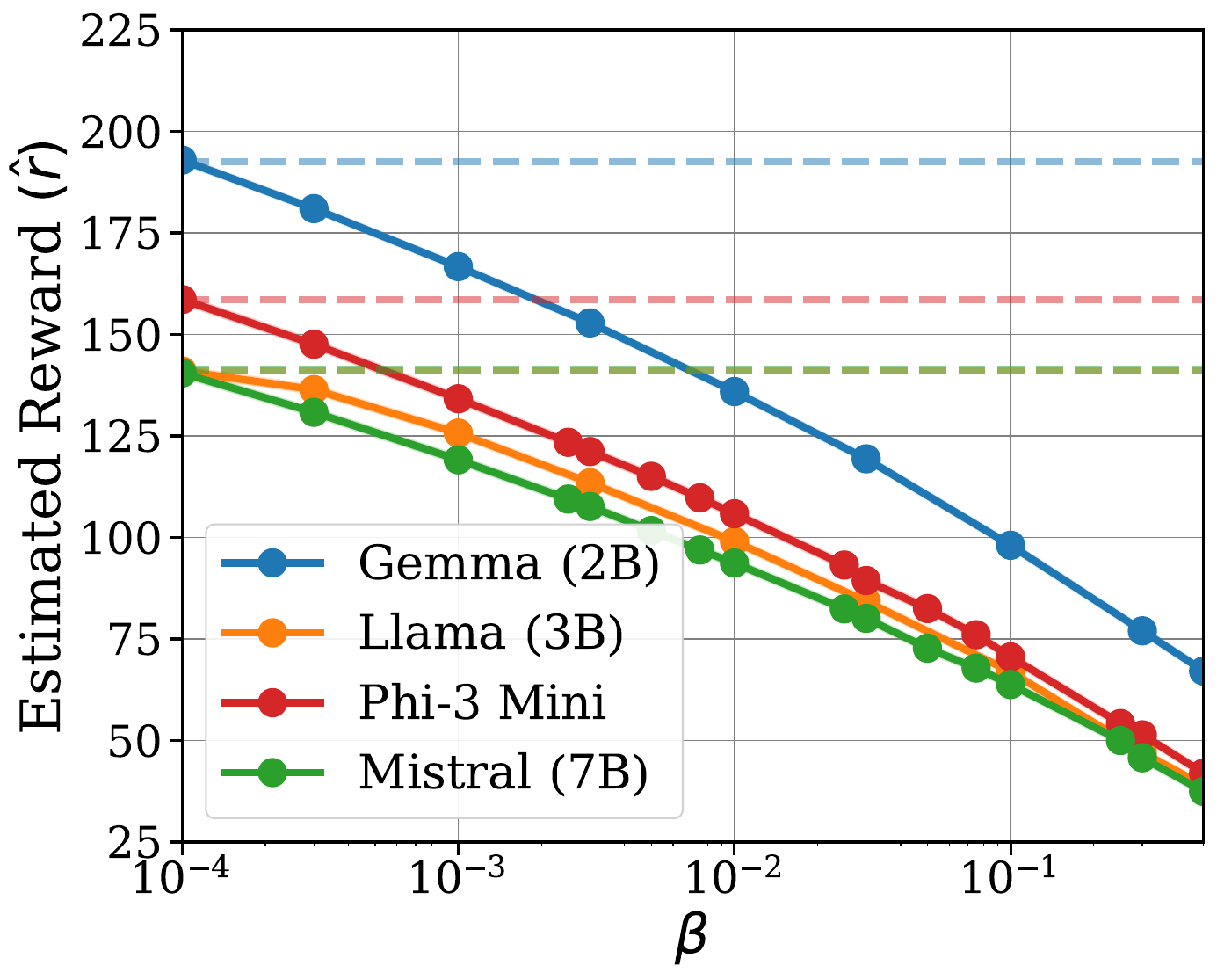}
        \label{sfig:betas-gsm8k-oasst-rm-rhat} 
      }
   \hfill \subfigure[\gemmarm]{
        \includegraphics[width=0.2\textwidth]{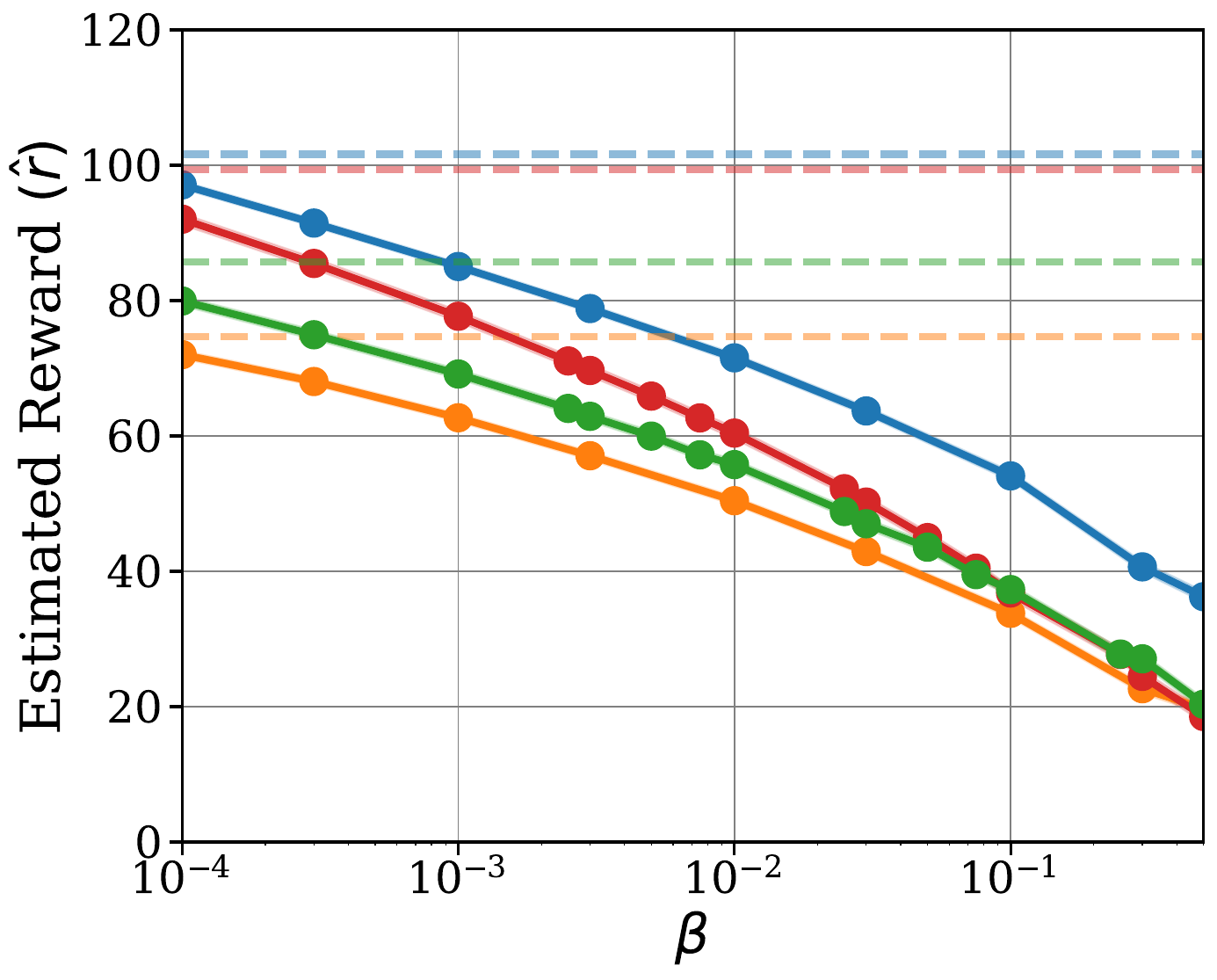}
        \label{sfig:betas-gsm8k-gemma-rhat} 
      }
      \hfill \subfigure[\llamarm]{
        \includegraphics[width=0.2\textwidth]{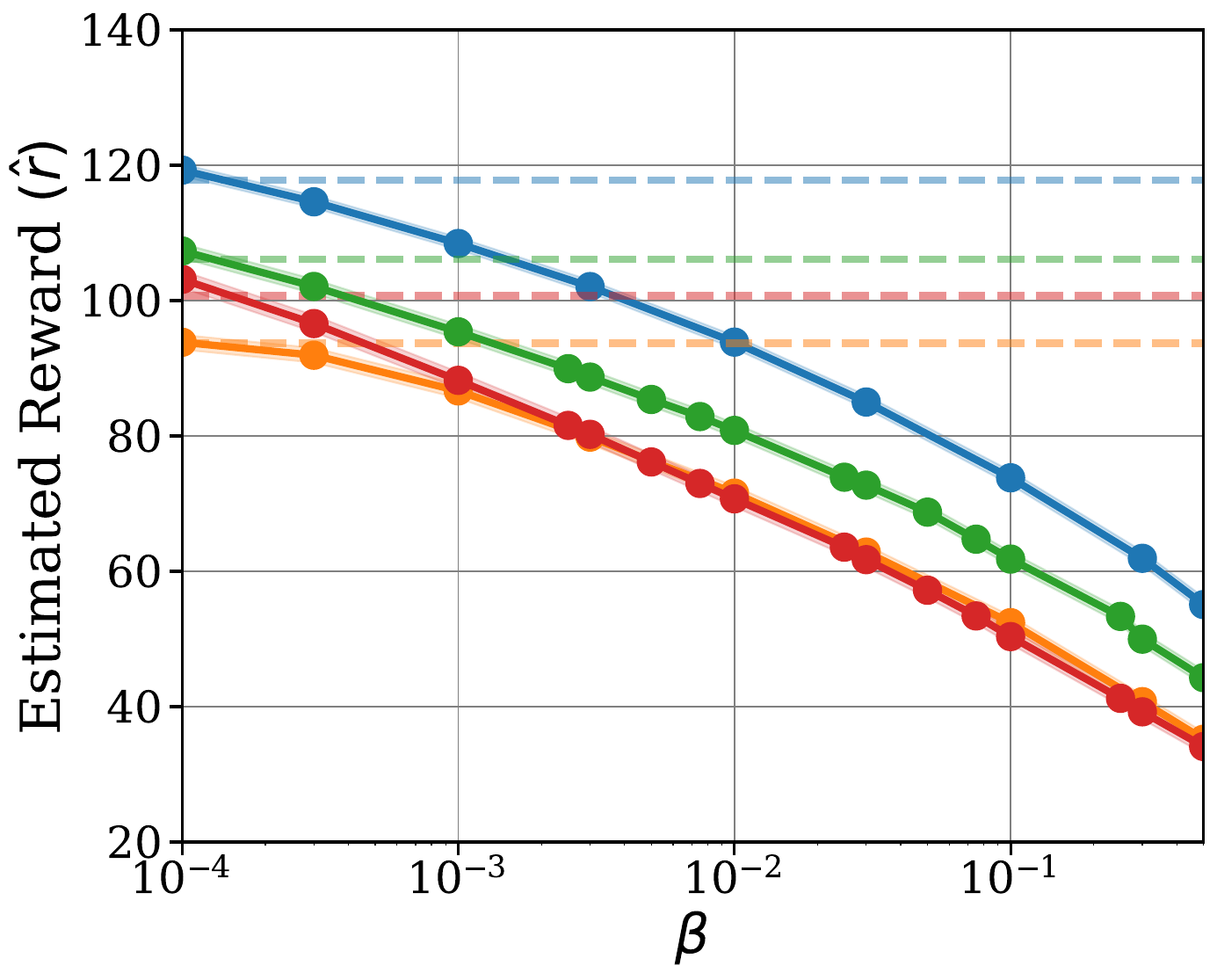}
        \label{sfig:betas-gsm8k-llama-rhat} 
      }
      \hfill \subfigure[\armorm]{
        \includegraphics[width=0.2\textwidth]{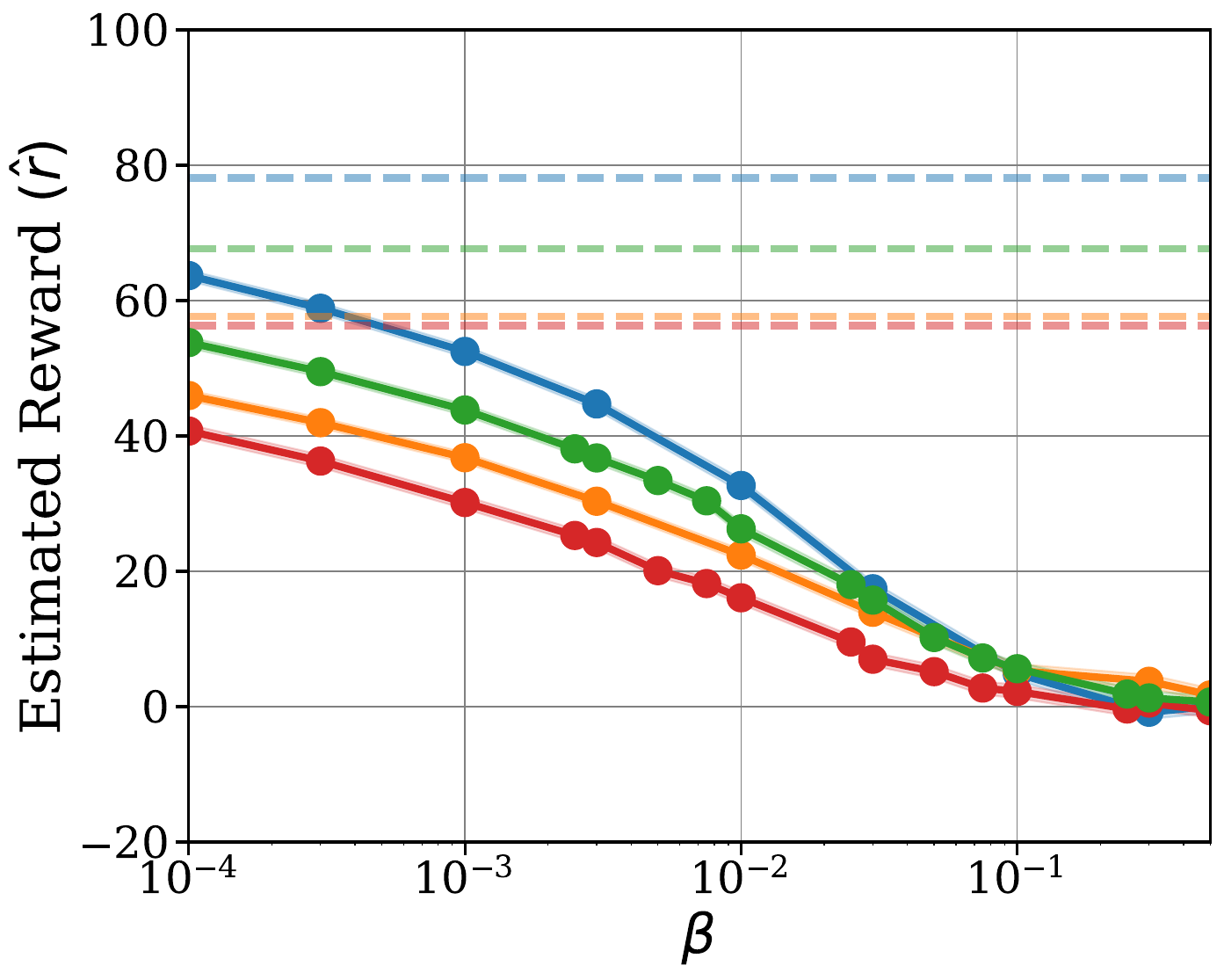}
        \label{sfig:betas-gsm8k-armo-rm-rhat} 
      }
    \caption{Compute-normalized comparison for $N = 2^{13}$ between \bonalg and \mainalg on \gsmk for four reward models and choices of $\piref$, as a function of regularization $\beta$.}
    \label{fig:gsm8k-betas}
  \end{figure*}

  \begin{figure*}[htp]
    \centering
    \subfigure[\oasst]{
        \includegraphics[width=0.2\textwidth]{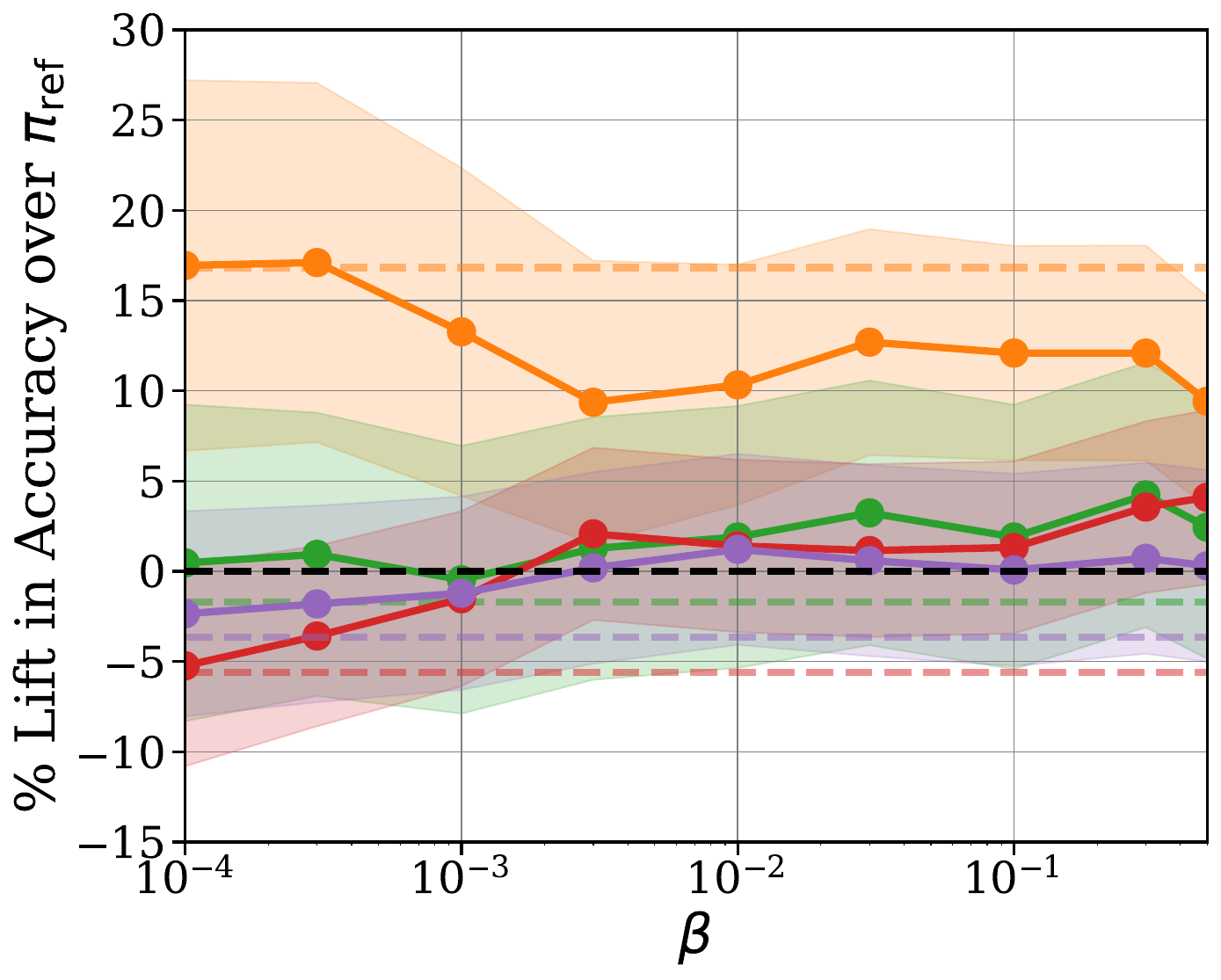}
        \label{sfig:betas-mmlu-oasst-rm} 
      }
   \hfill \subfigure[\gemmarm]{
        \includegraphics[width=0.2\textwidth]{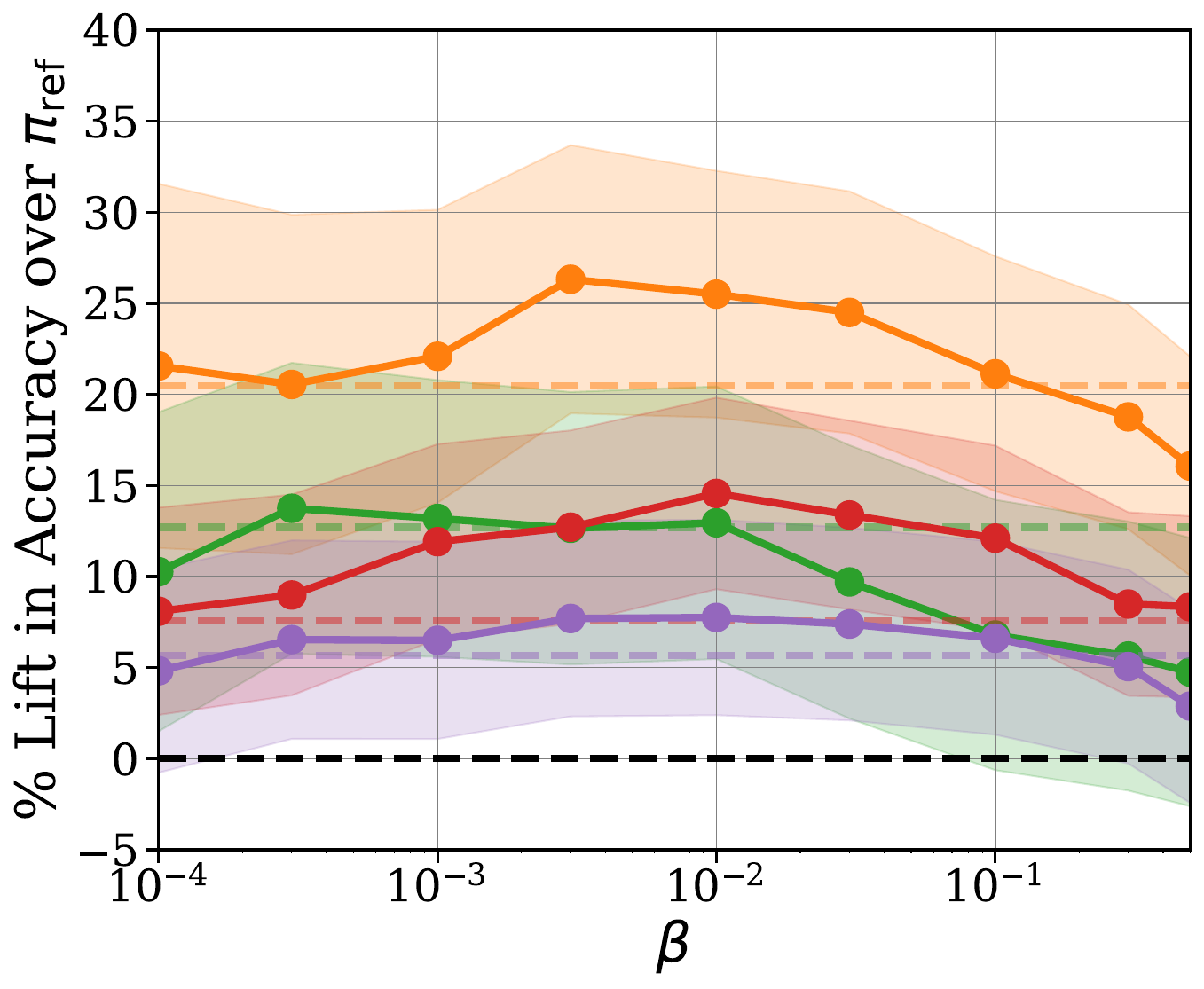}
        \label{sfig:betas-mmlu-gemma-rm} 
      }
      \hfill \subfigure[\llamarm]{
        \includegraphics[width=0.2\textwidth]{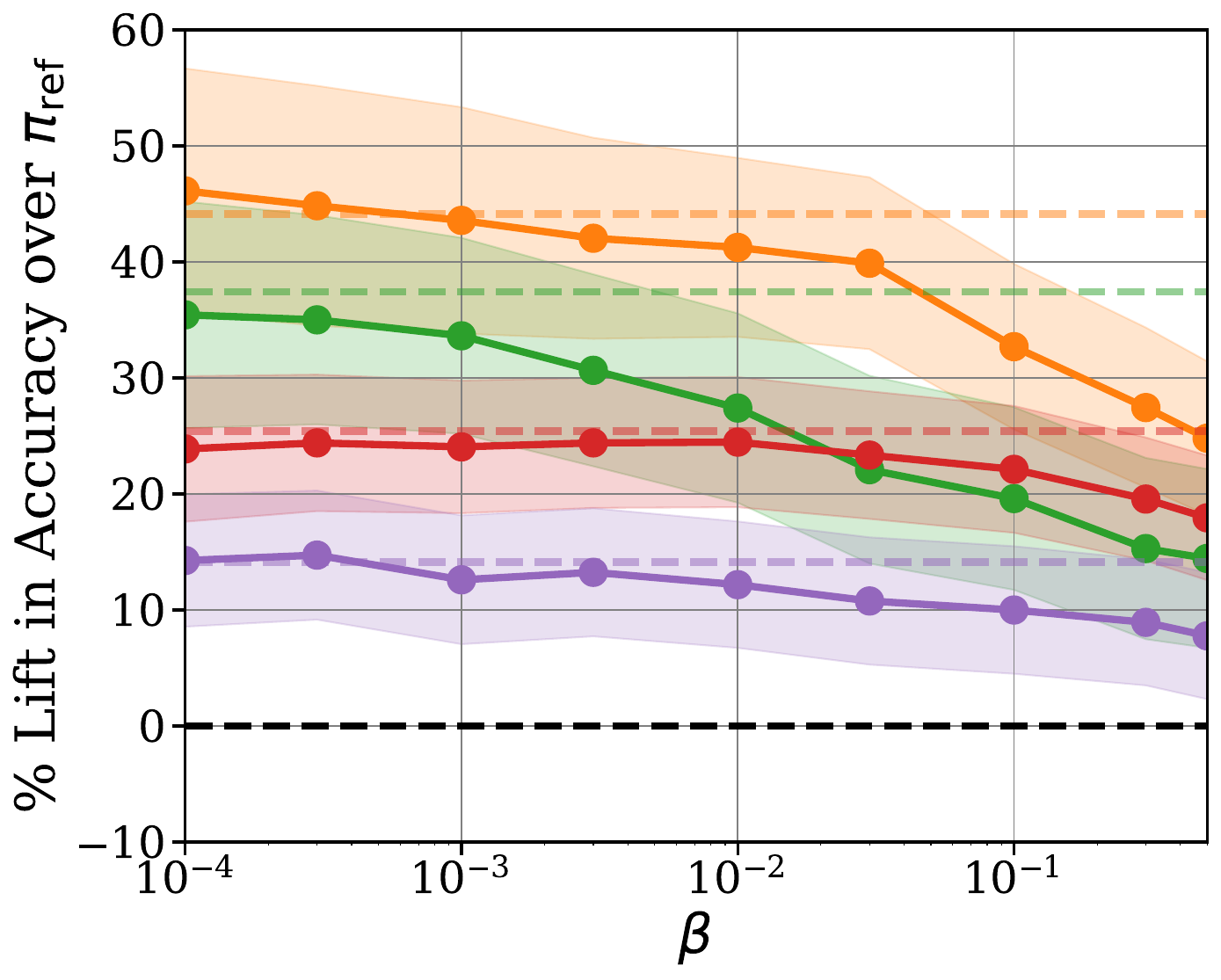}
        \label{sfig:betas-mmlu-llama-3b} 
      }
      \hfill \subfigure[\armorm]{
        \includegraphics[width=0.2\textwidth]{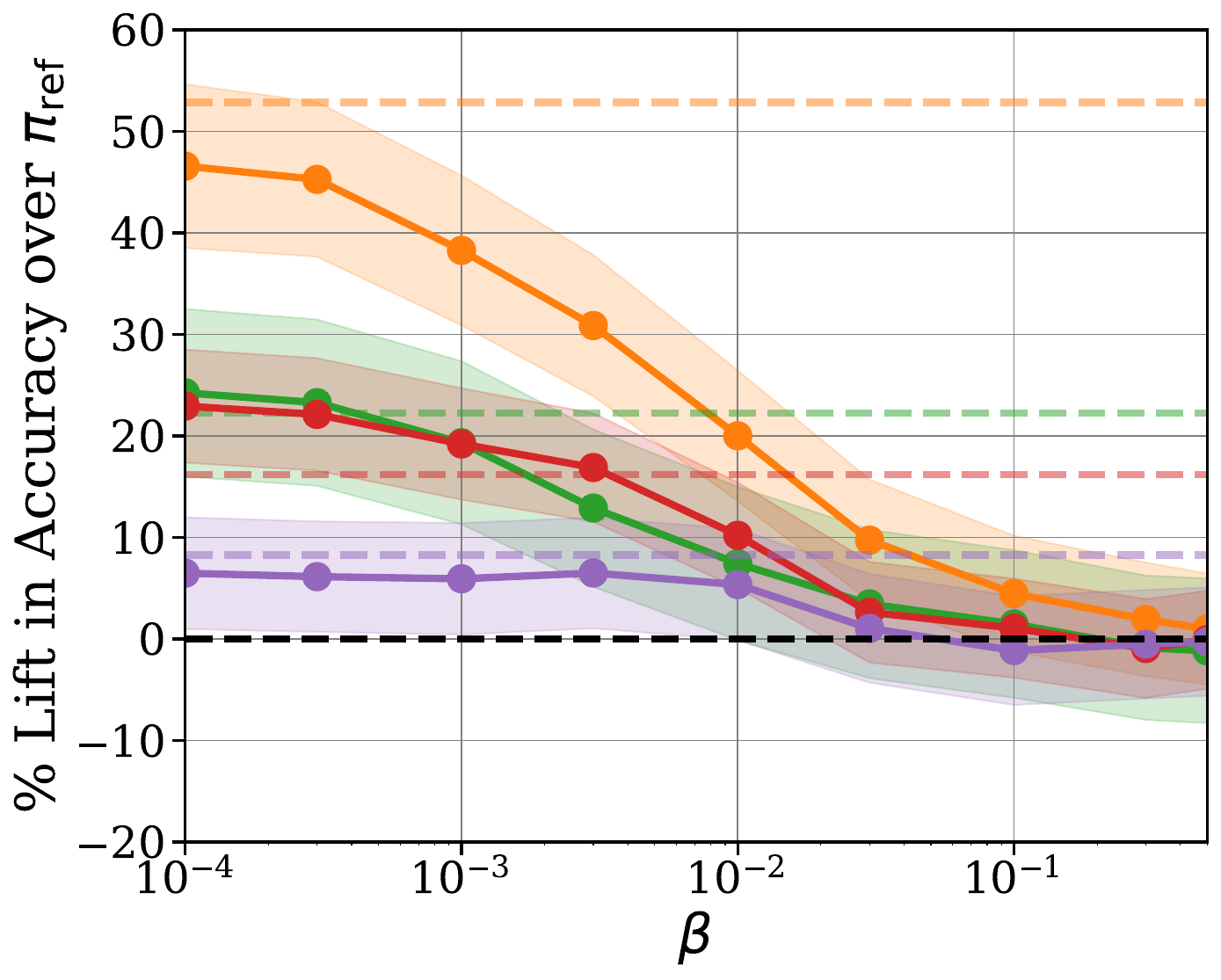}
        \label{sfig:betas-mmlu-armo-rm} 
      }
      
      \subfigure[\oasst]{
        \includegraphics[width=0.2\textwidth]{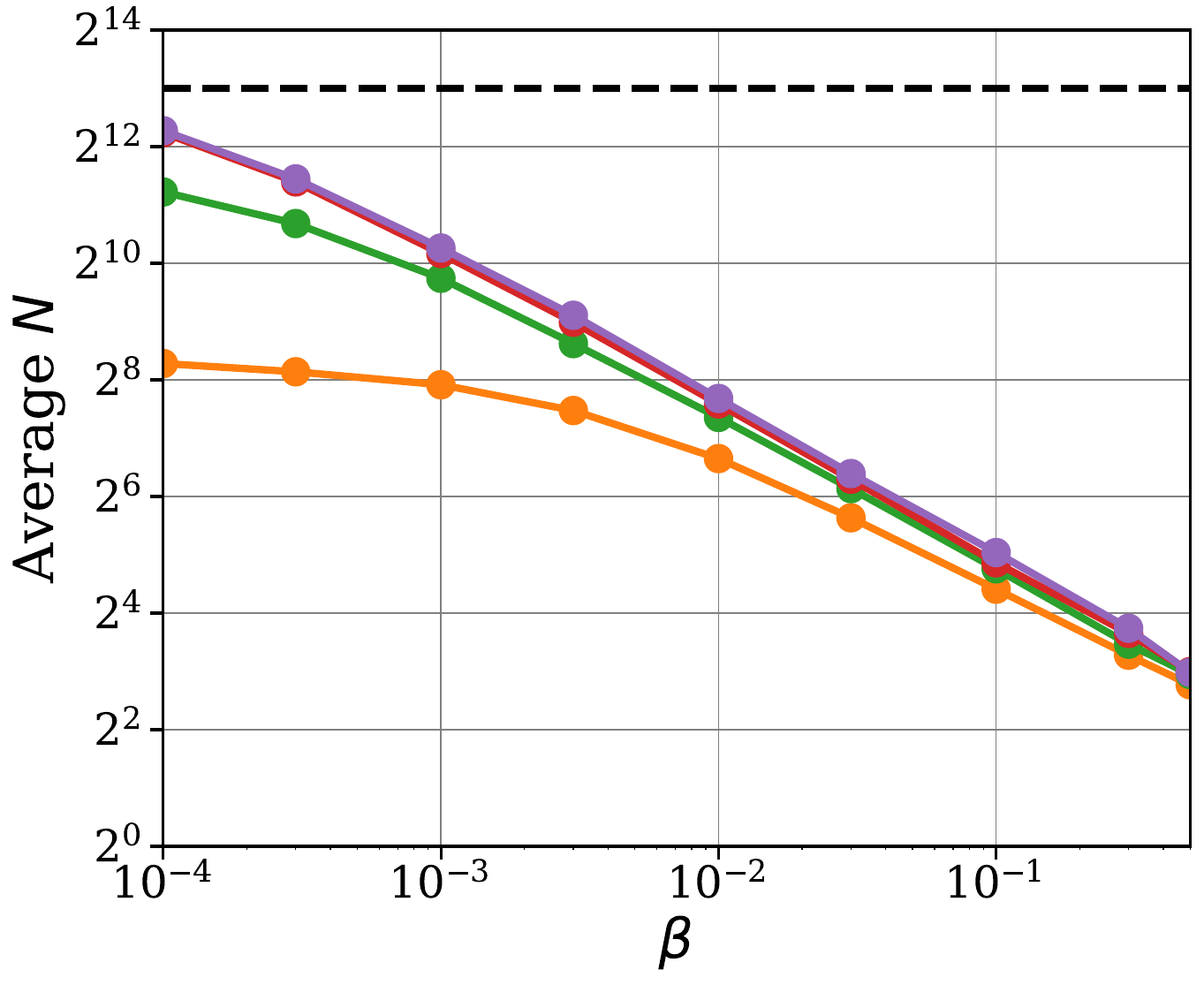}
        \label{sfig:betas-mmlu-oasst-rm-N} 
      }
   \hfill \subfigure[\gemmarm]{
        \includegraphics[width=0.2\textwidth]{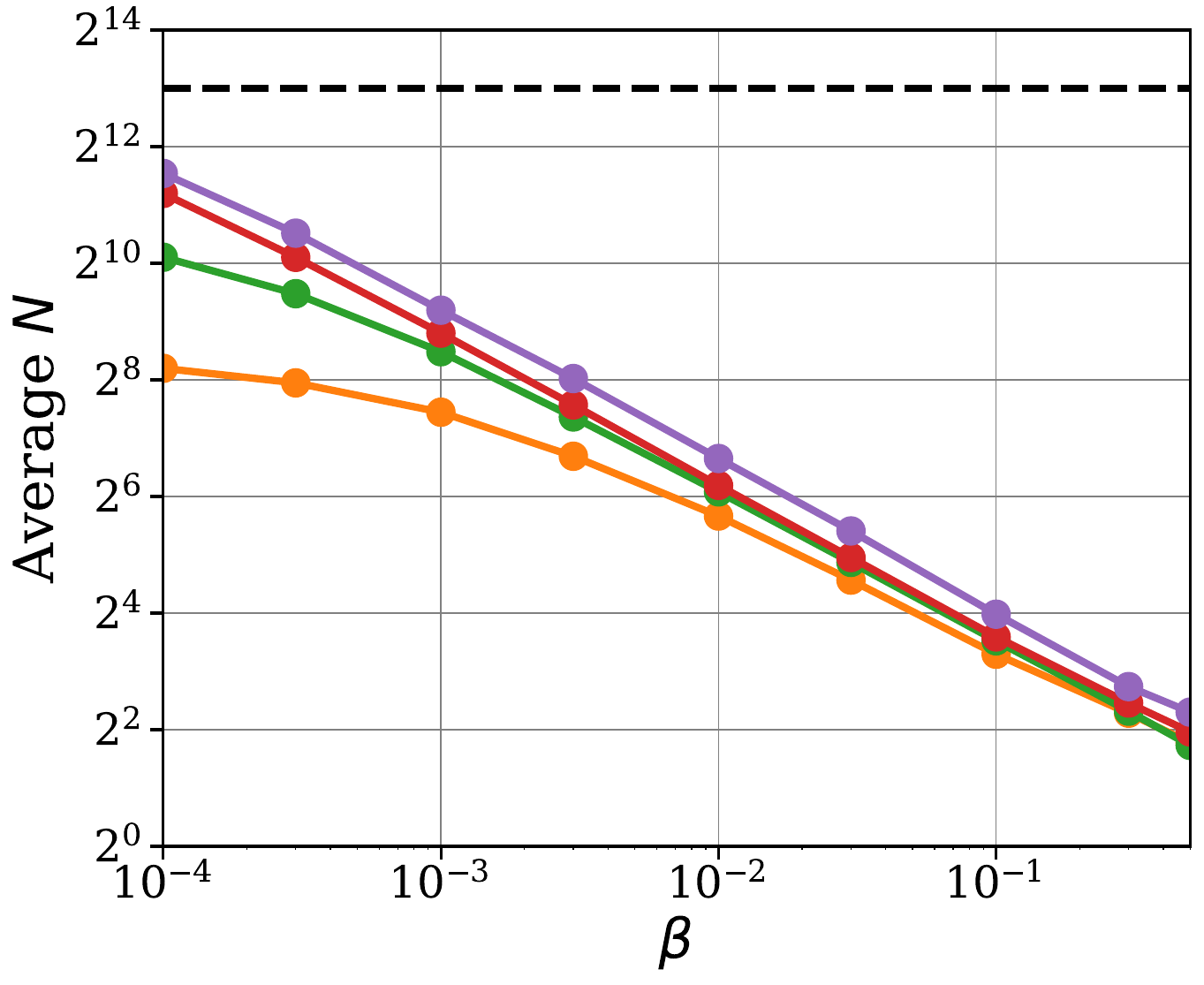}
        \label{sfig:betas-mmlu-gemma-rm-N} 
      }
      \hfill \subfigure[\llamarm]{
        \includegraphics[width=0.2\textwidth]{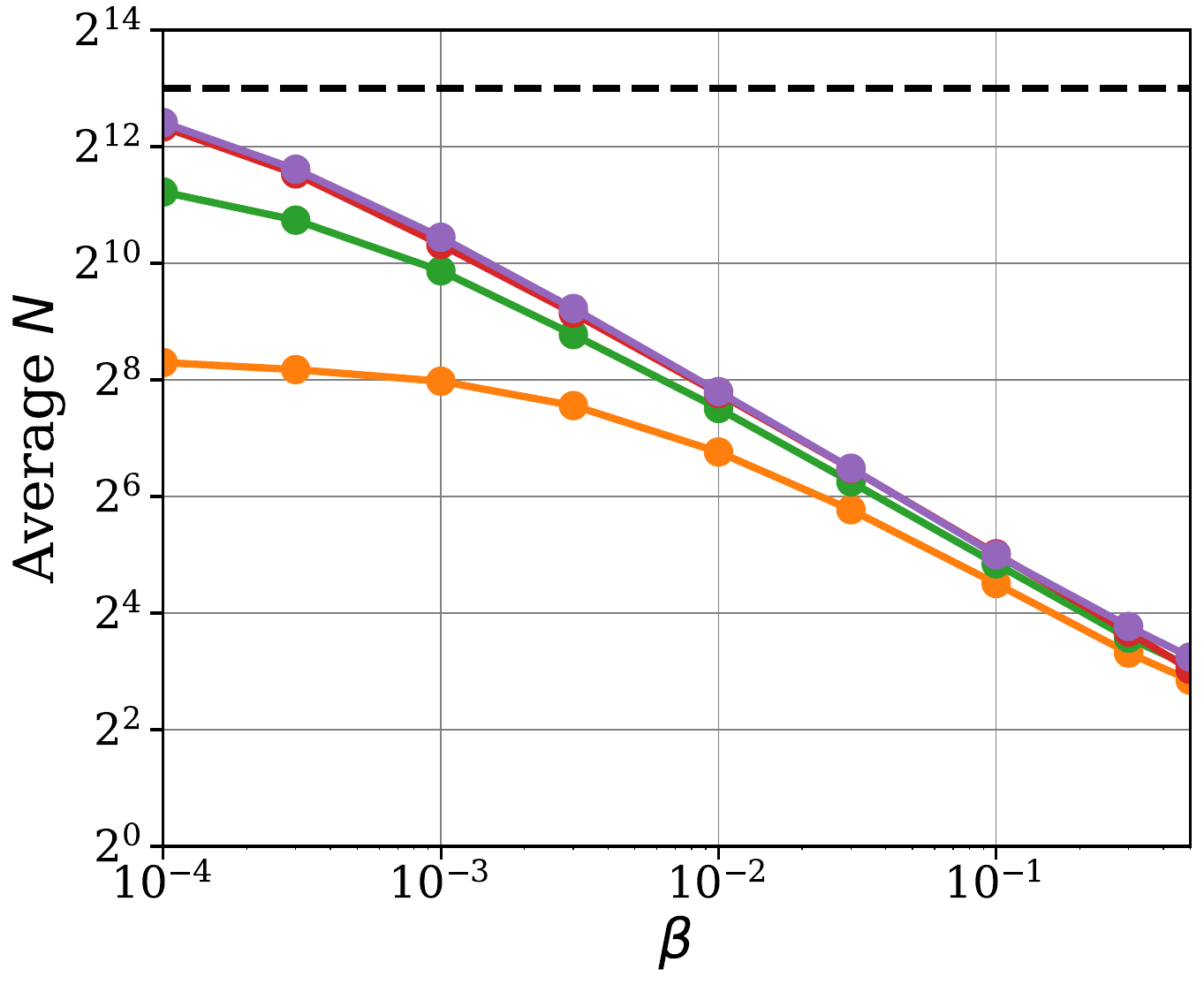}
        \label{sfig:betas-mmlu-llama-3b-N} 
      }
      \hfill \subfigure[\armorm]{
        \includegraphics[width=0.2\textwidth]{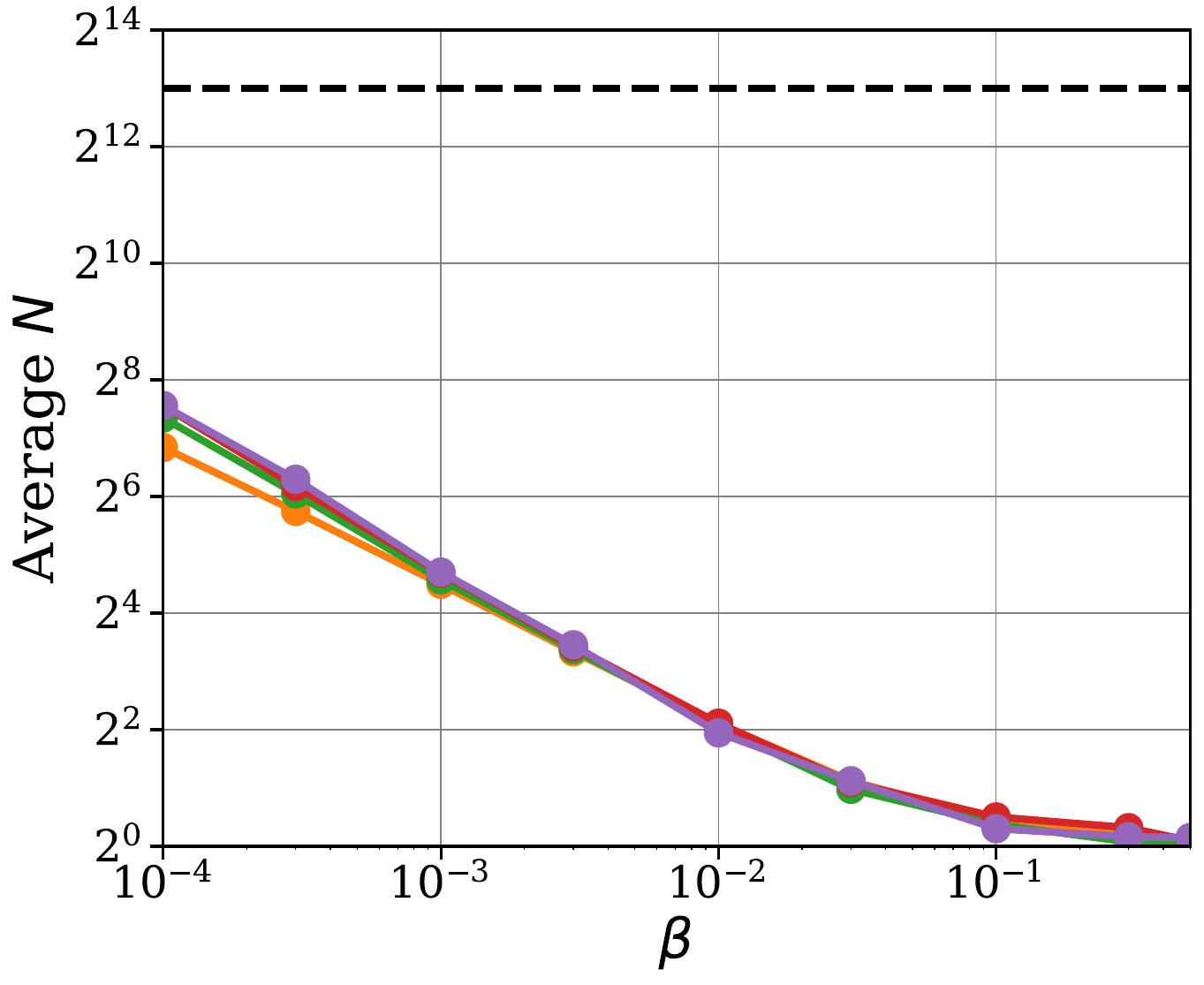}
        \label{sfig:betas-mmlu-armo-rm-N} 
      }

      \subfigure[\oasst]{
        \includegraphics[width=0.2\textwidth]{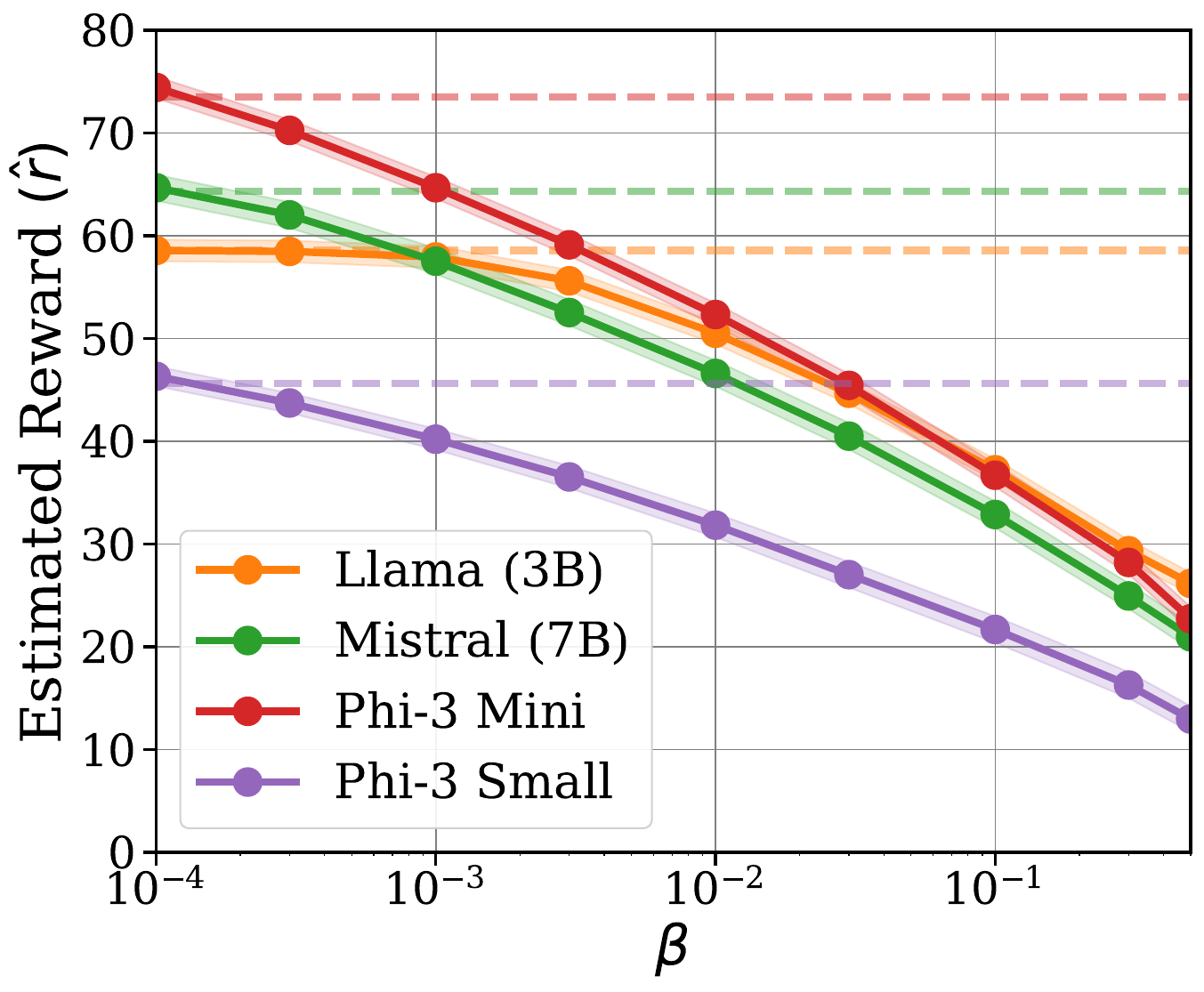}
        \label{sfig:betas-mmlu-oasst-rm-rhat} 
      }
   \hfill \subfigure[\gemmarm]{
        \includegraphics[width=0.2\textwidth]{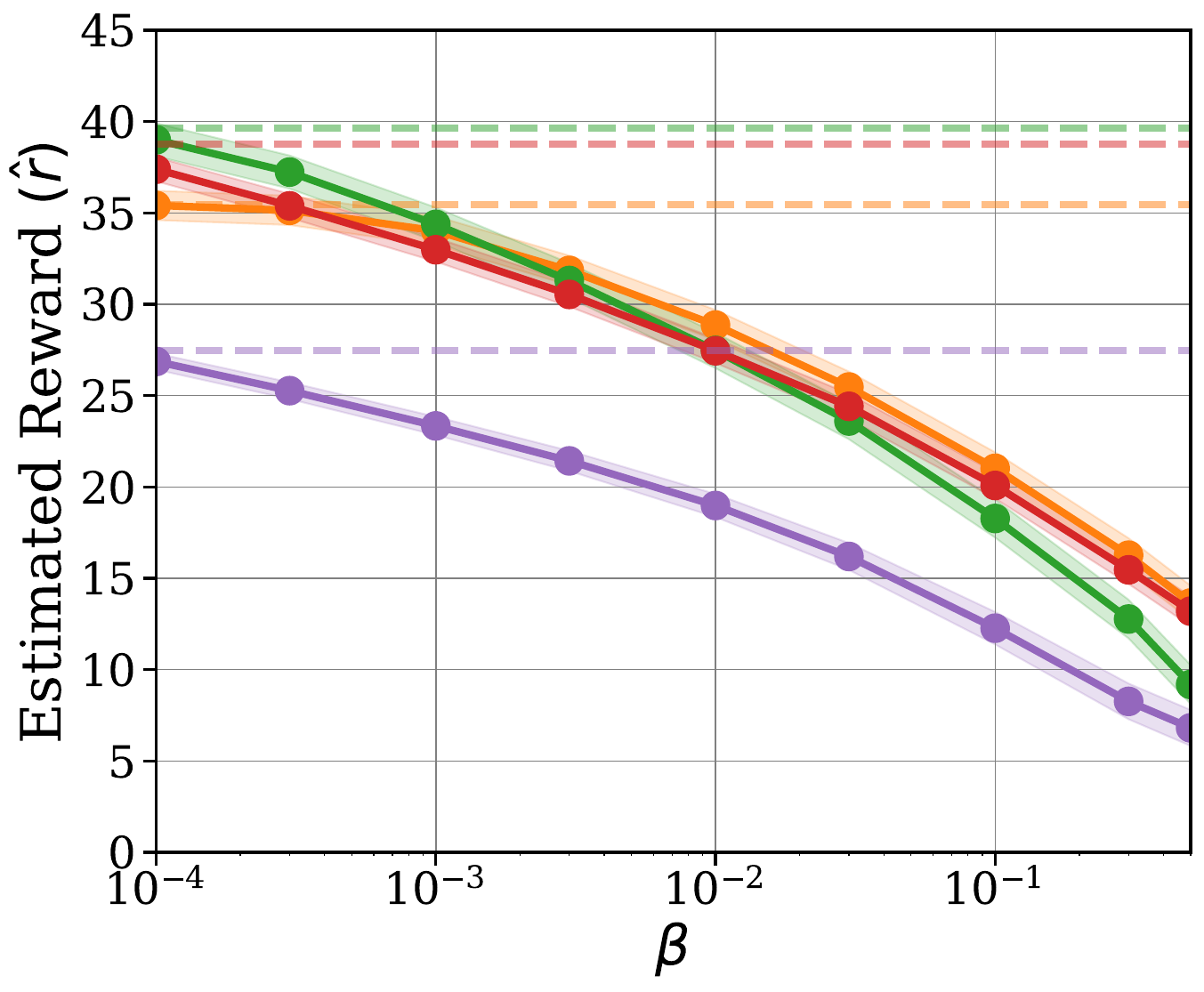}
        \label{sfig:betas-mmlu-gemma-rhat} 
      }
      \hfill \subfigure[\llamarm]{
        \includegraphics[width=0.2\textwidth]{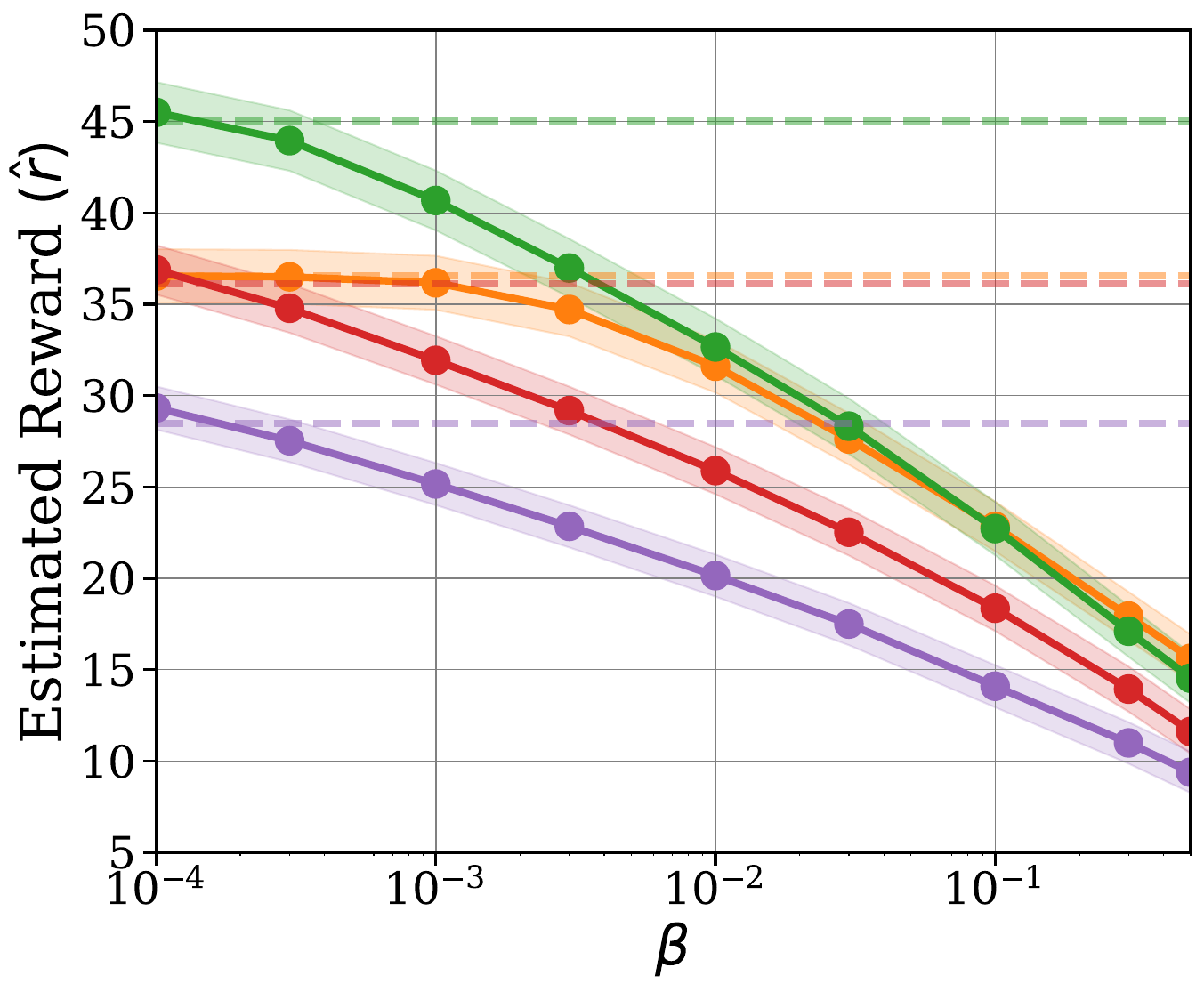}
        \label{sfig:betas-mmlu-llama-rhat} 
      }
      \hfill \subfigure[\armorm]{
        \includegraphics[width=0.2\textwidth]{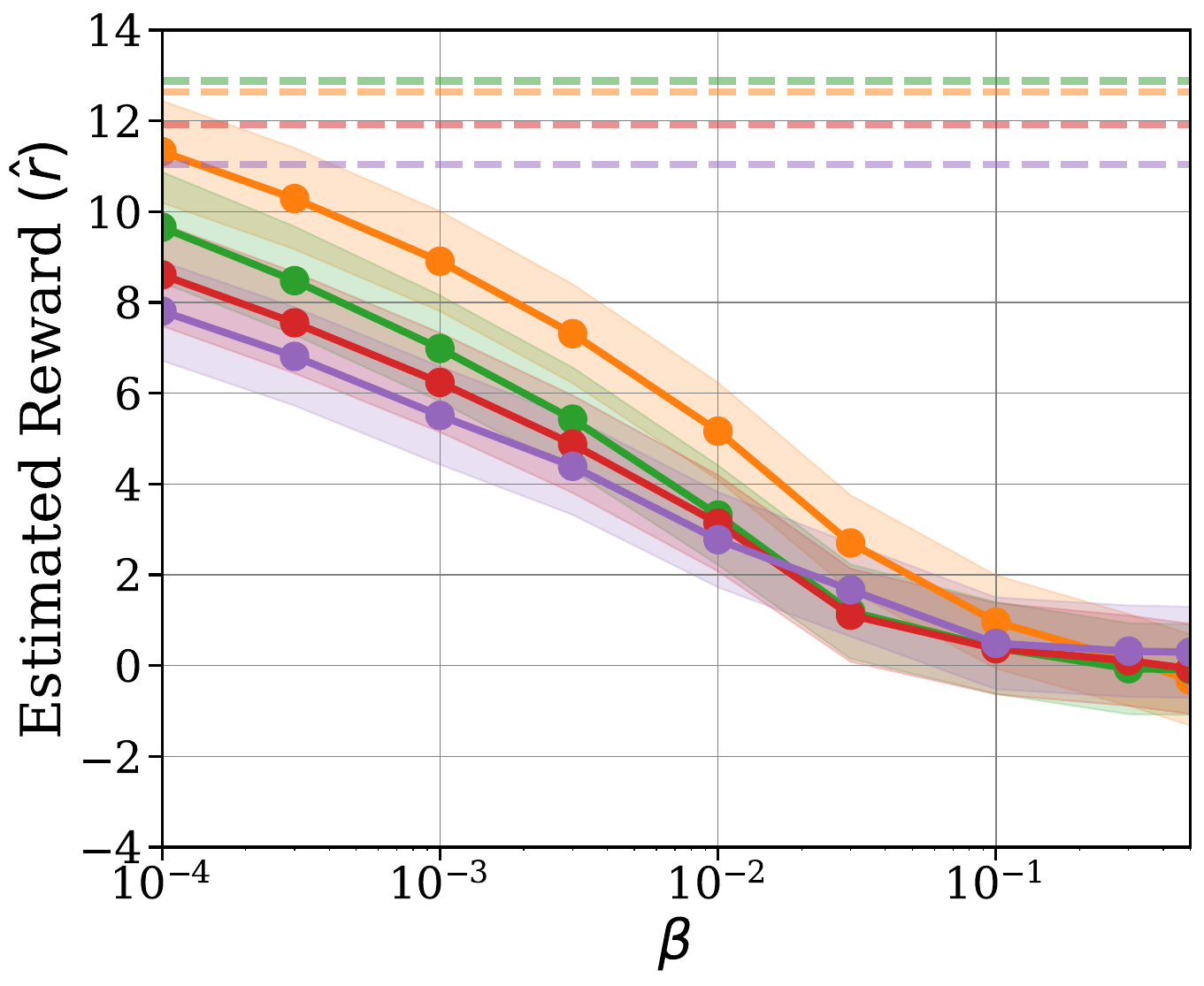}
        \label{sfig:betas-mmlu-armo-rm-rhat} 
      }
    \caption{Compute-normalized comparison for $N = 2^{13}$ between \bonalg and \mainalg on \mmlu for four reward models and choices of $\piref$, as a function of regularization $\beta$.}
    \label{fig:mmlu-betas}
  \end{figure*}

  \begin{figure*}[htp]
    \centering
    \subfigure[\oasst]{
        \includegraphics[width=0.2\textwidth]{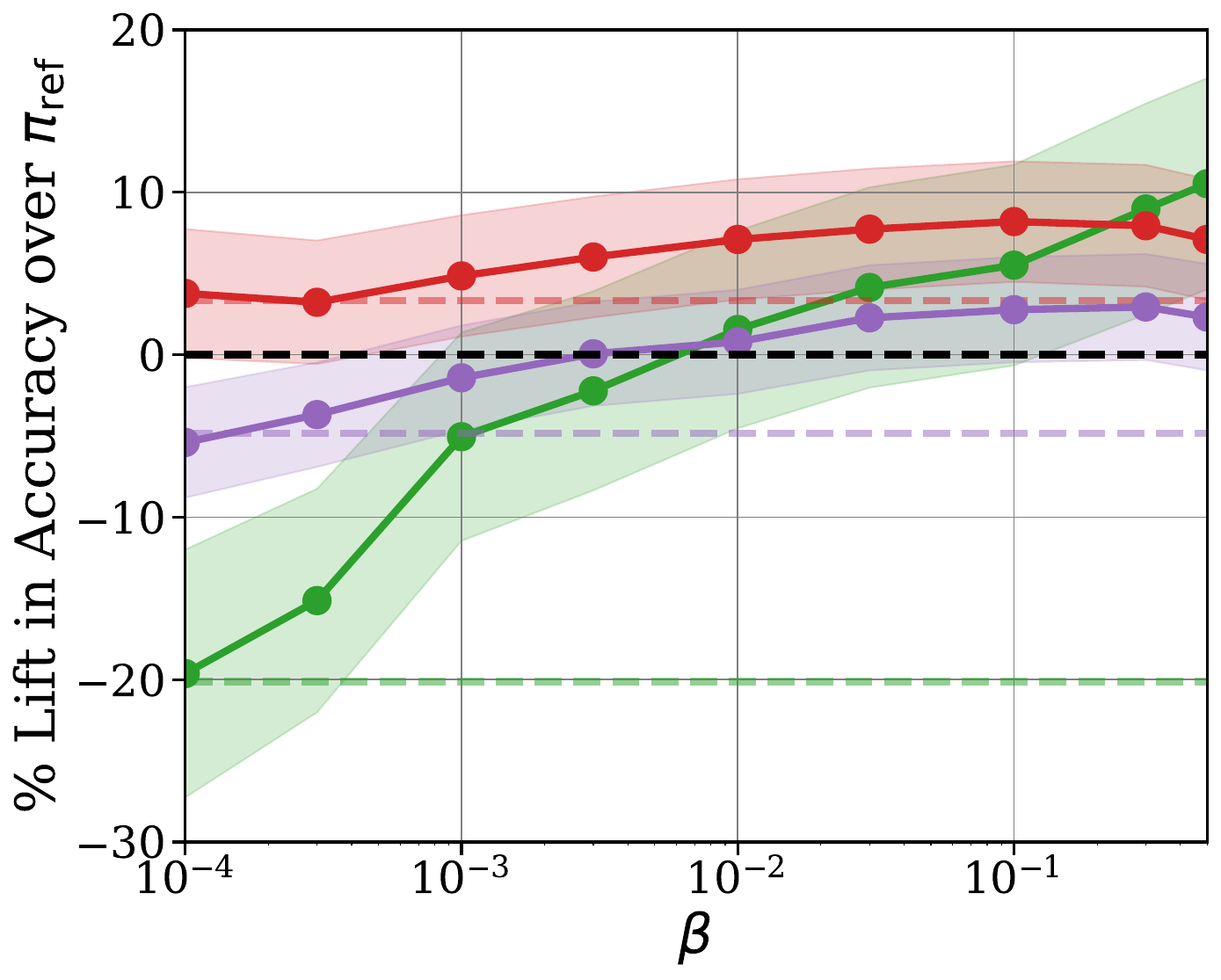}
        \label{sfig:betas-math-oasst-rm} 
      }
   \hfill \subfigure[\gemmarm]{
        \includegraphics[width=0.2\textwidth]{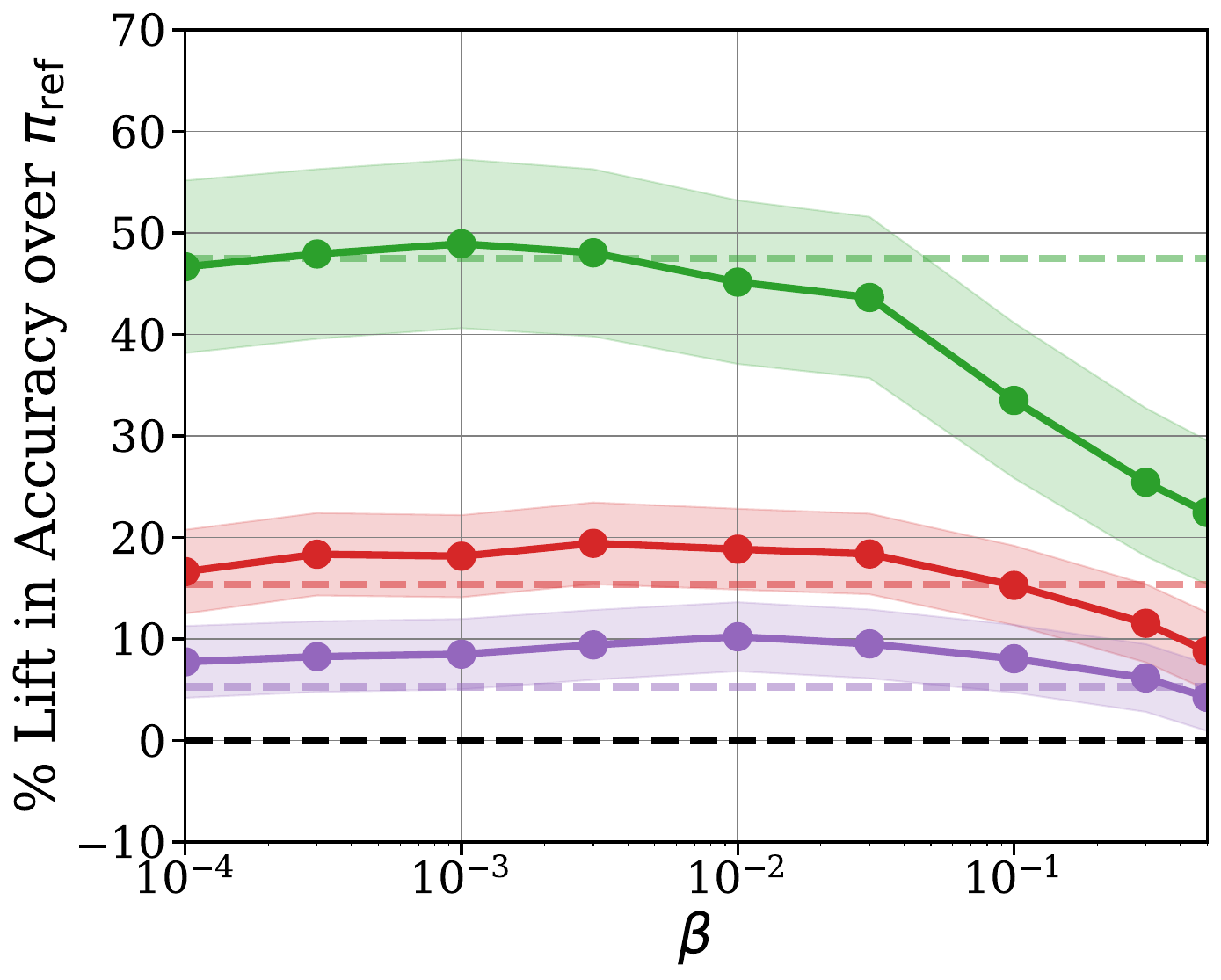}
        \label{sfig:betas-math-gemma-rm} 
      }
      \hfill \subfigure[\llamarm]{
        \includegraphics[width=0.2\textwidth]{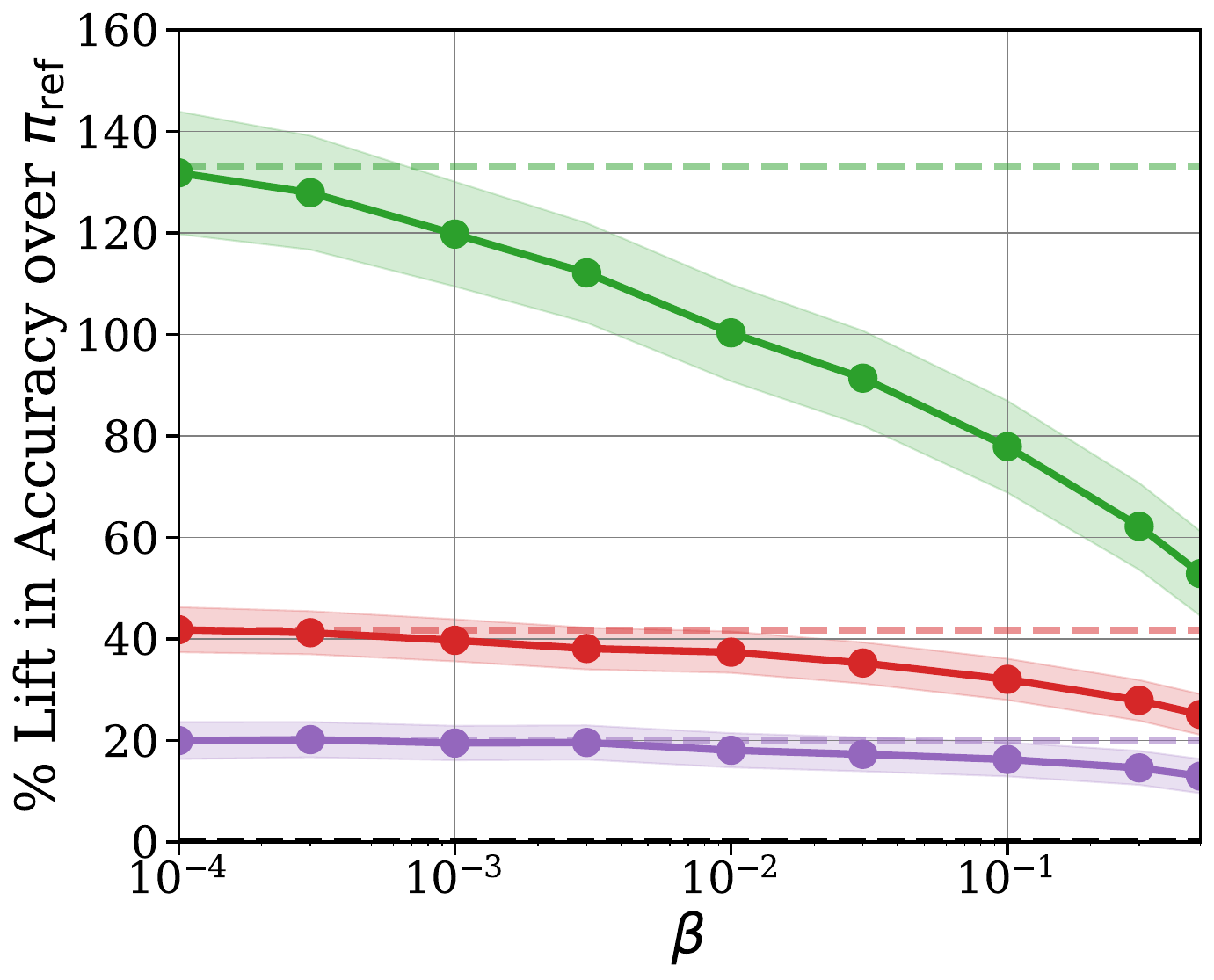}
        \label{sfig:betas-math-llama-3b} 
      }
      \hfill \subfigure[\armorm]{
        \includegraphics[width=0.2\textwidth]{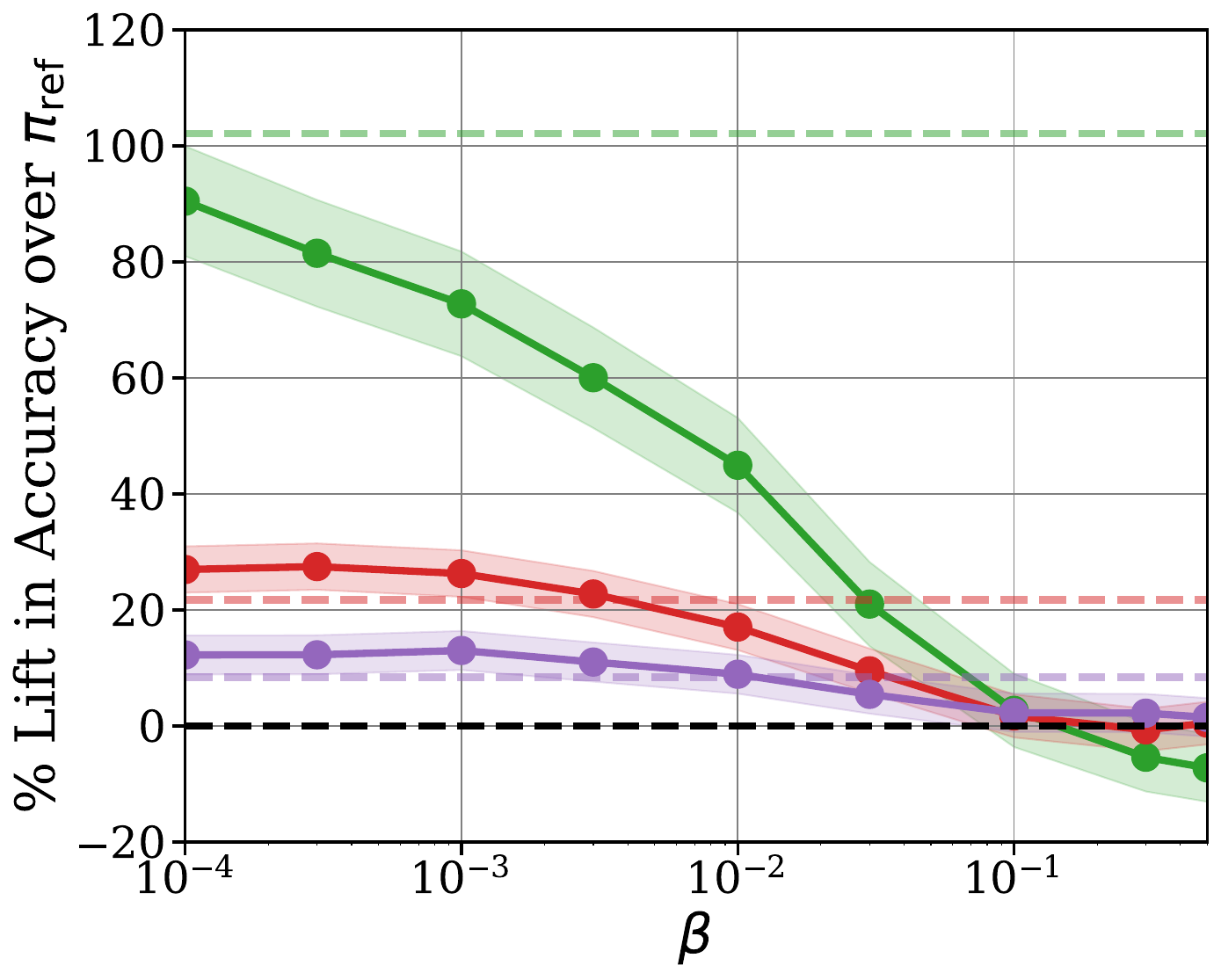}
        \label{sfig:betas-math-armo-rm} 
      }
    \hfill
      \subfigure[\oasst]{
        \includegraphics[width=0.2\textwidth]{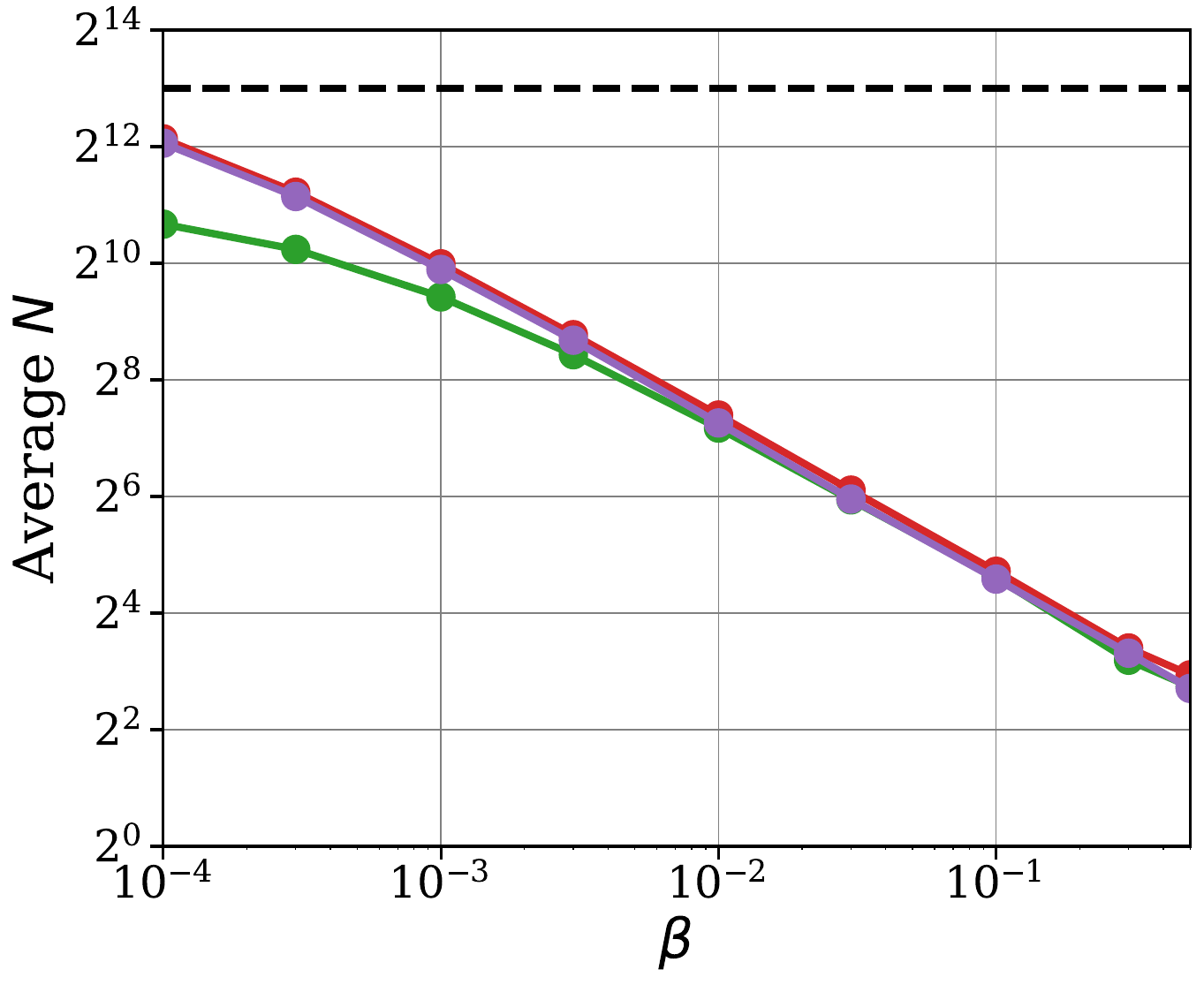}
        \label{sfig:betas-math-oasst-rm-N} 
      }
   \hfill \subfigure[\gemmarm]{
        \includegraphics[width=0.2\textwidth]{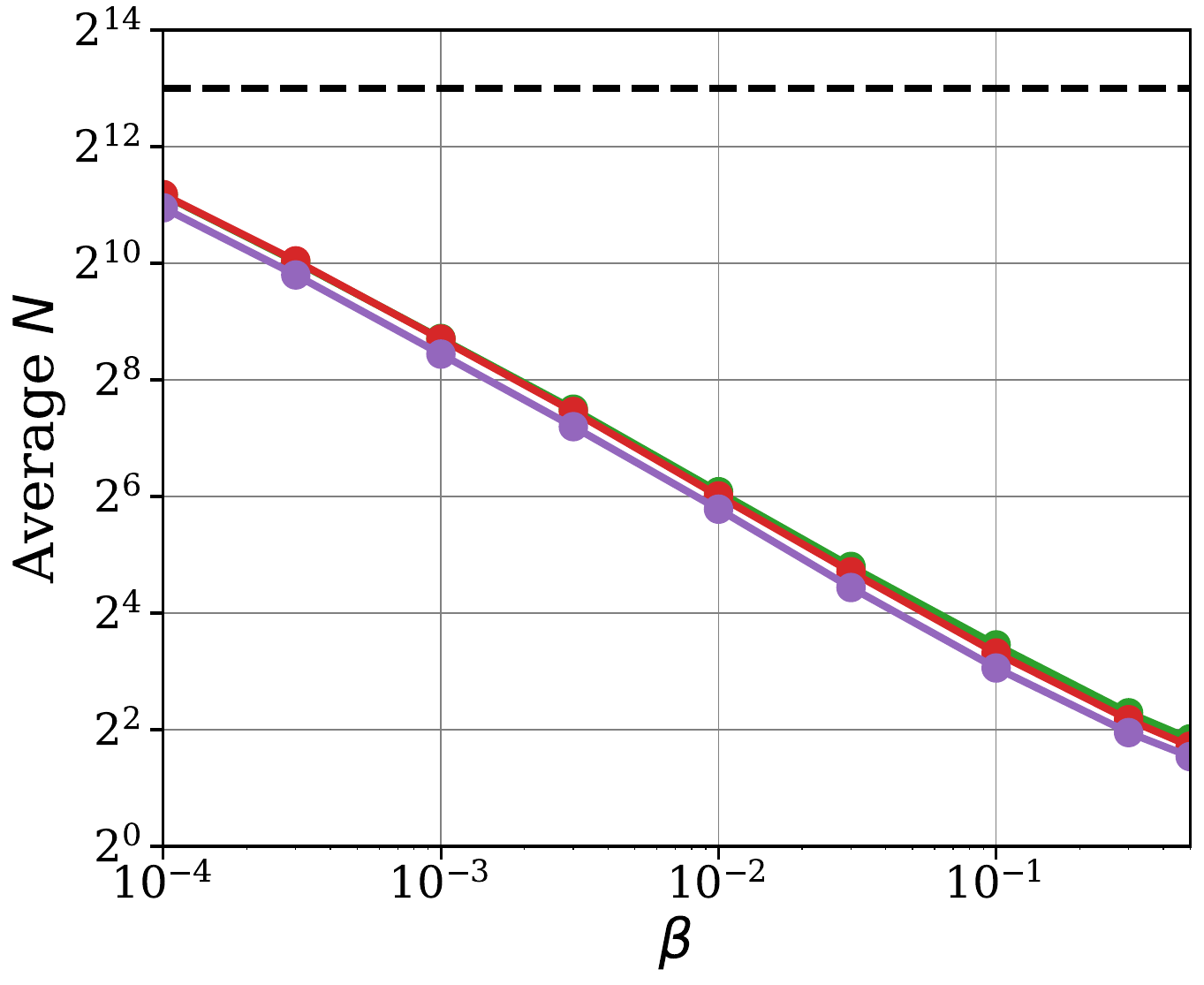}
        \label{sfig:betas-math-gemma-rm-N} 
      }
      \hfill \subfigure[\llamarm]{
        \includegraphics[width=0.2\textwidth]{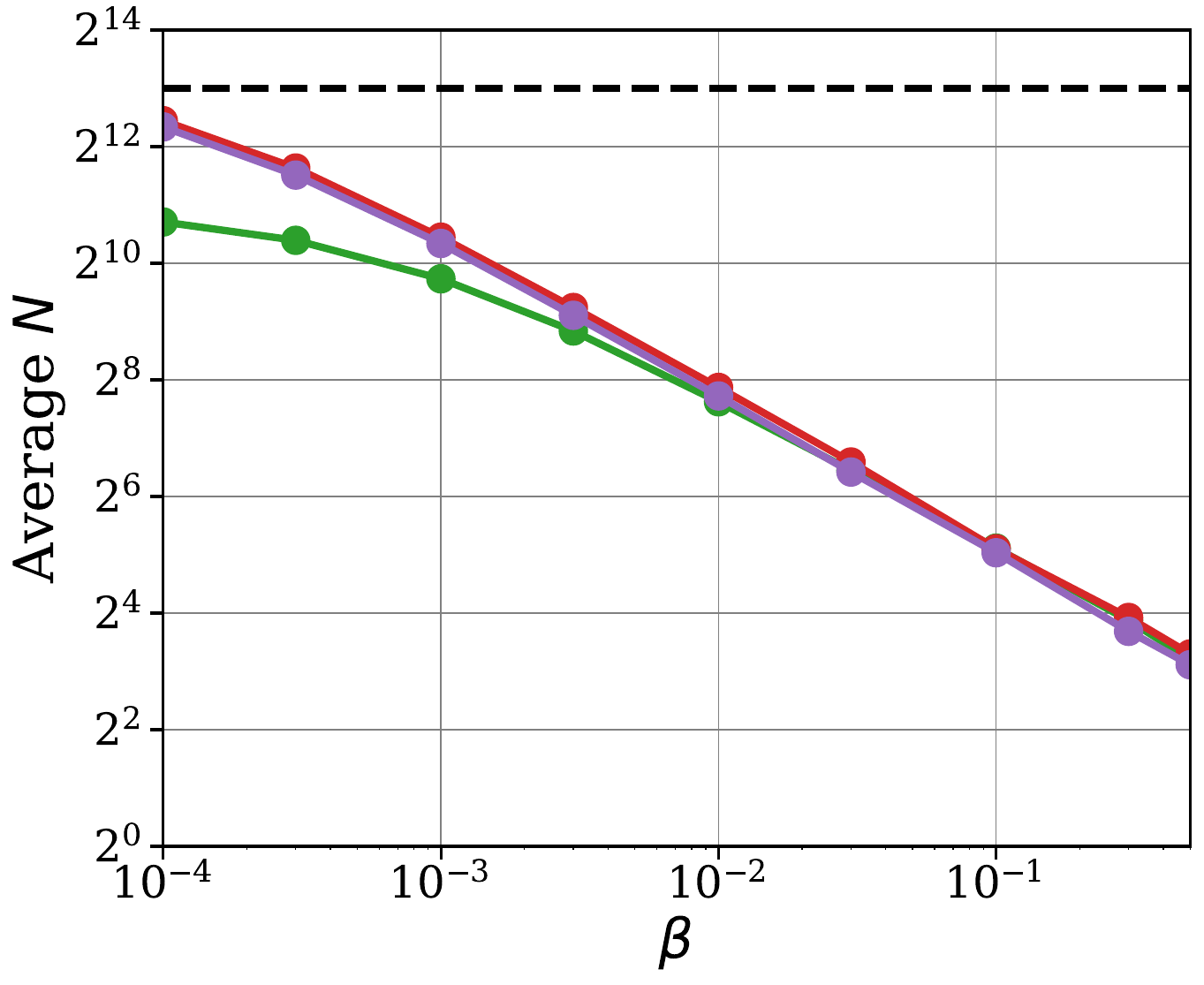}
        \label{sfig:betas-math-llama-3b-N} 
      }
      \hfill \subfigure[\armorm]{
        \includegraphics[width=0.2\textwidth]{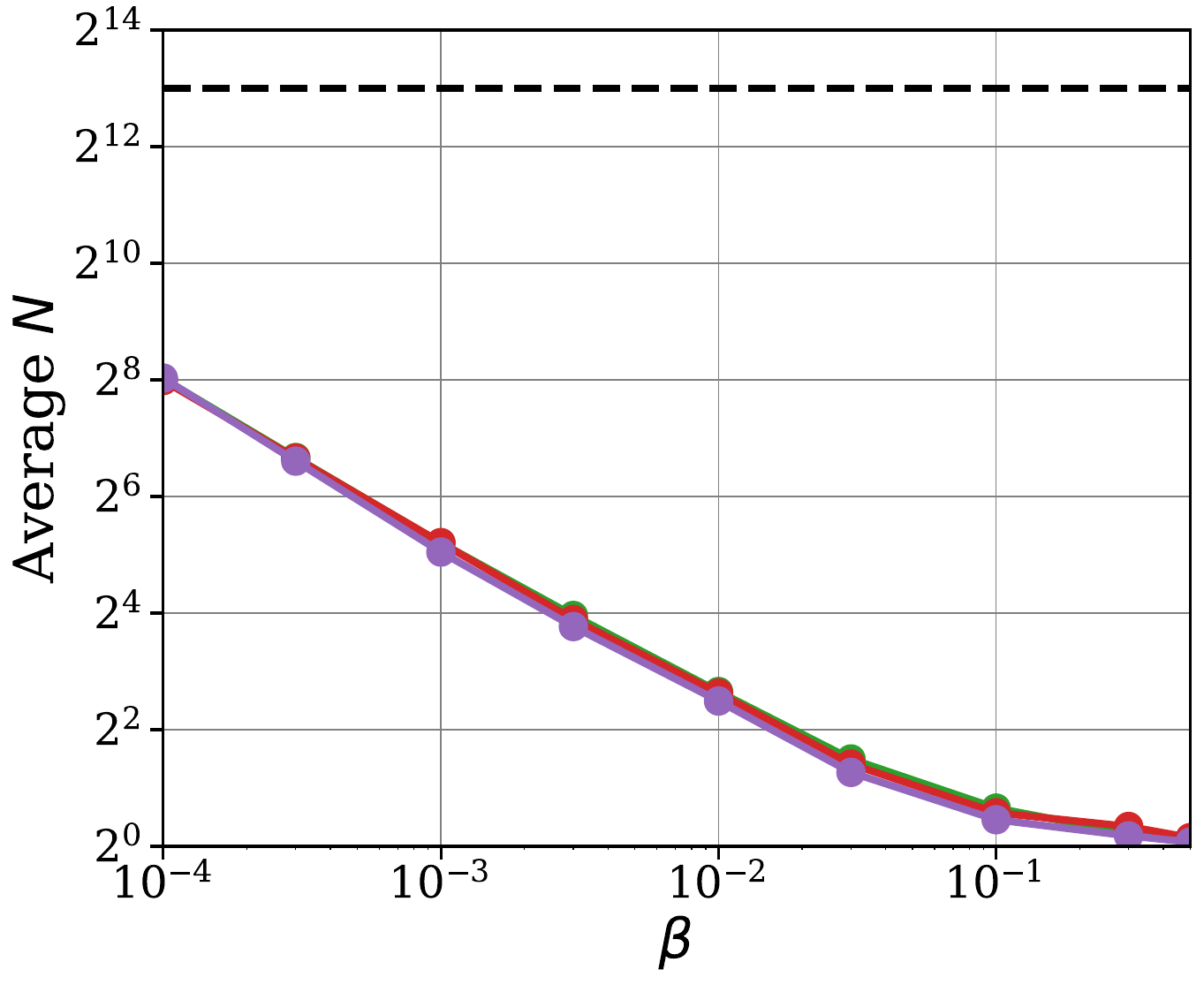}
        \label{sfig:betas-math-armo-rm-N} 
      }
      \hfill
      \subfigure[\oasst]{
        \includegraphics[width=0.2\textwidth]{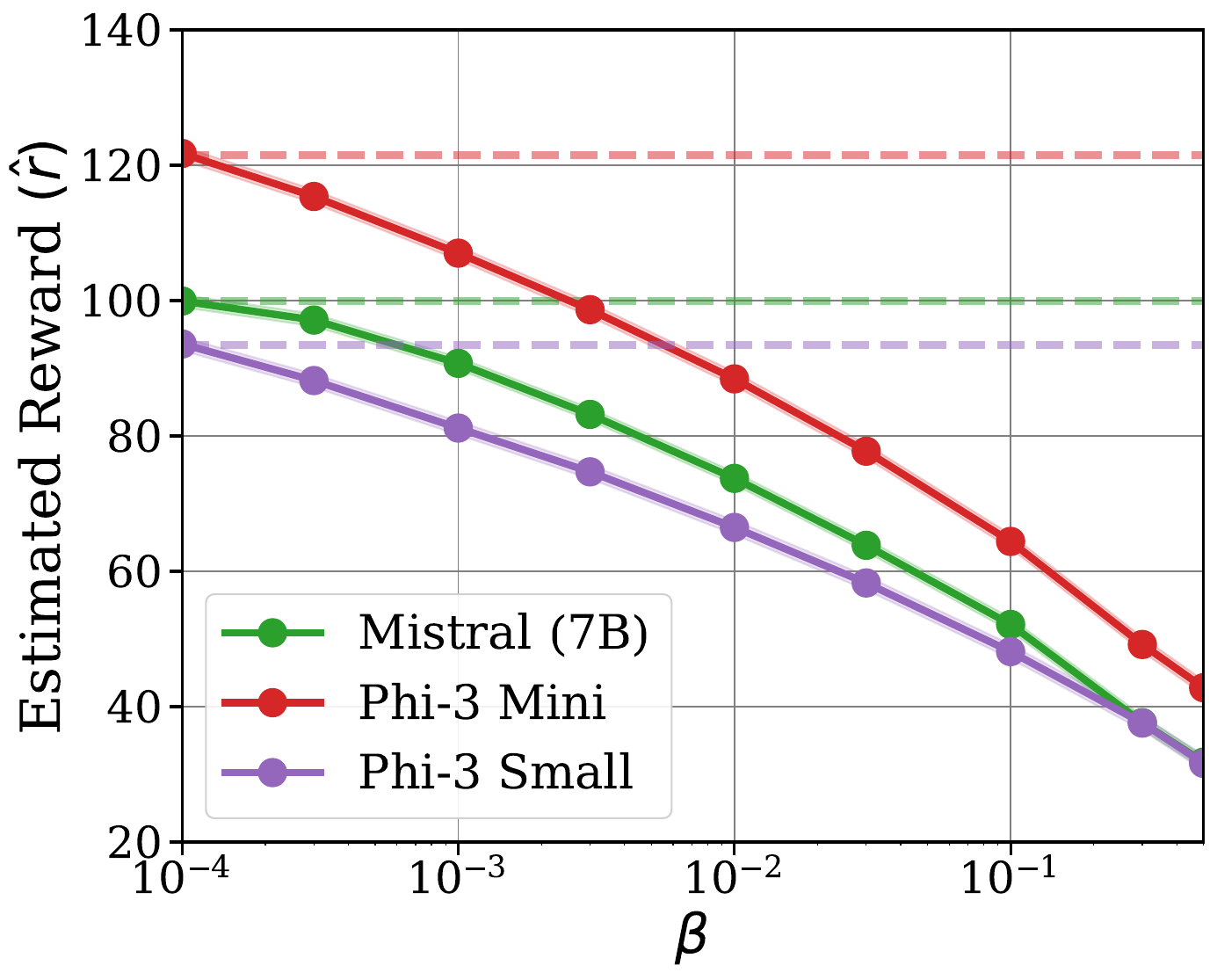}
        \label{sfig:betas-math-oasst-rm-rhat} 
      }
   \hfill \subfigure[\gemmarm]{
        \includegraphics[width=0.2\textwidth]{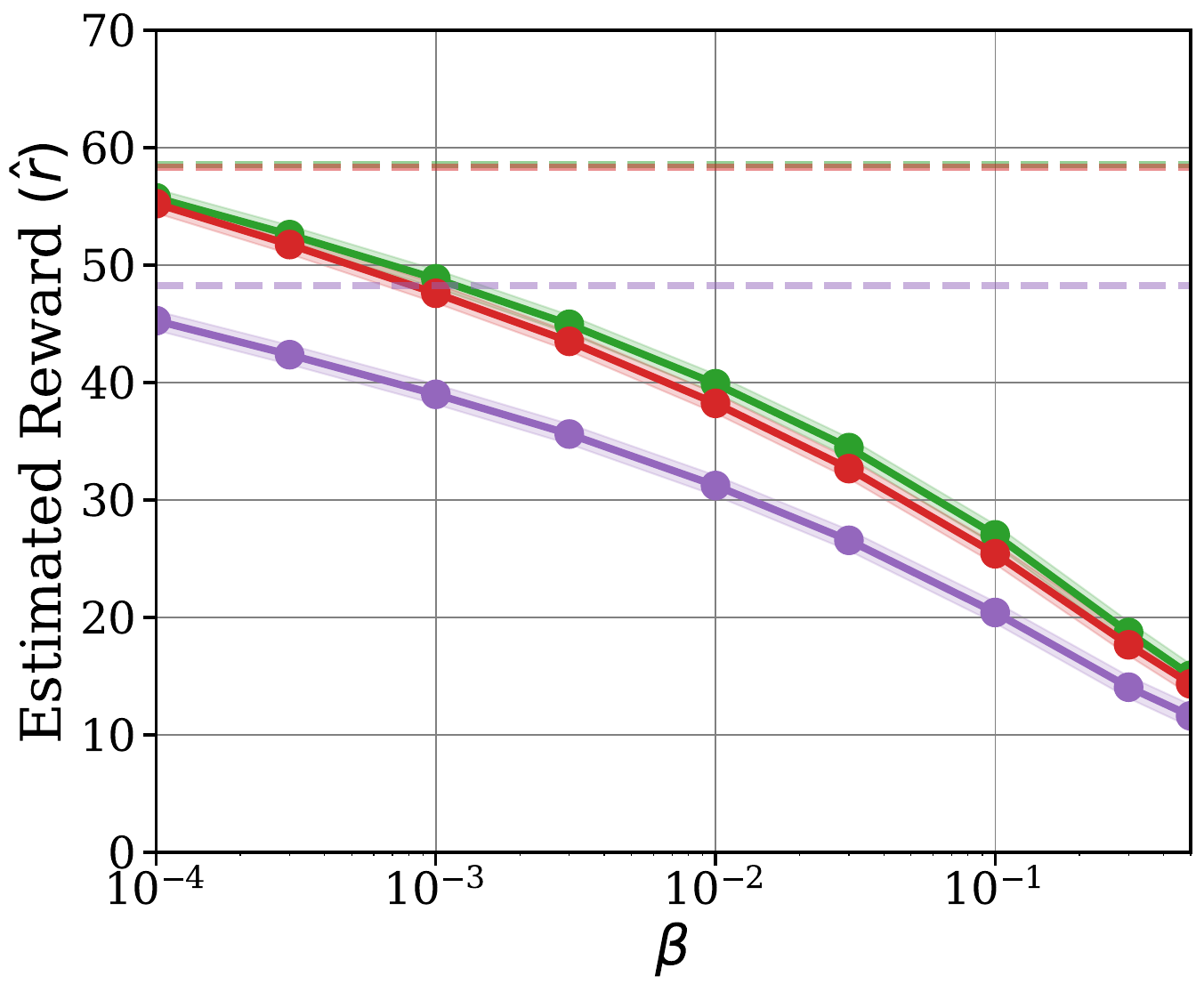}
        \label{sfig:betas-math-gemma-rhat} 
      }
      \hfill \subfigure[\llamarm]{
        \includegraphics[width=0.2\textwidth]{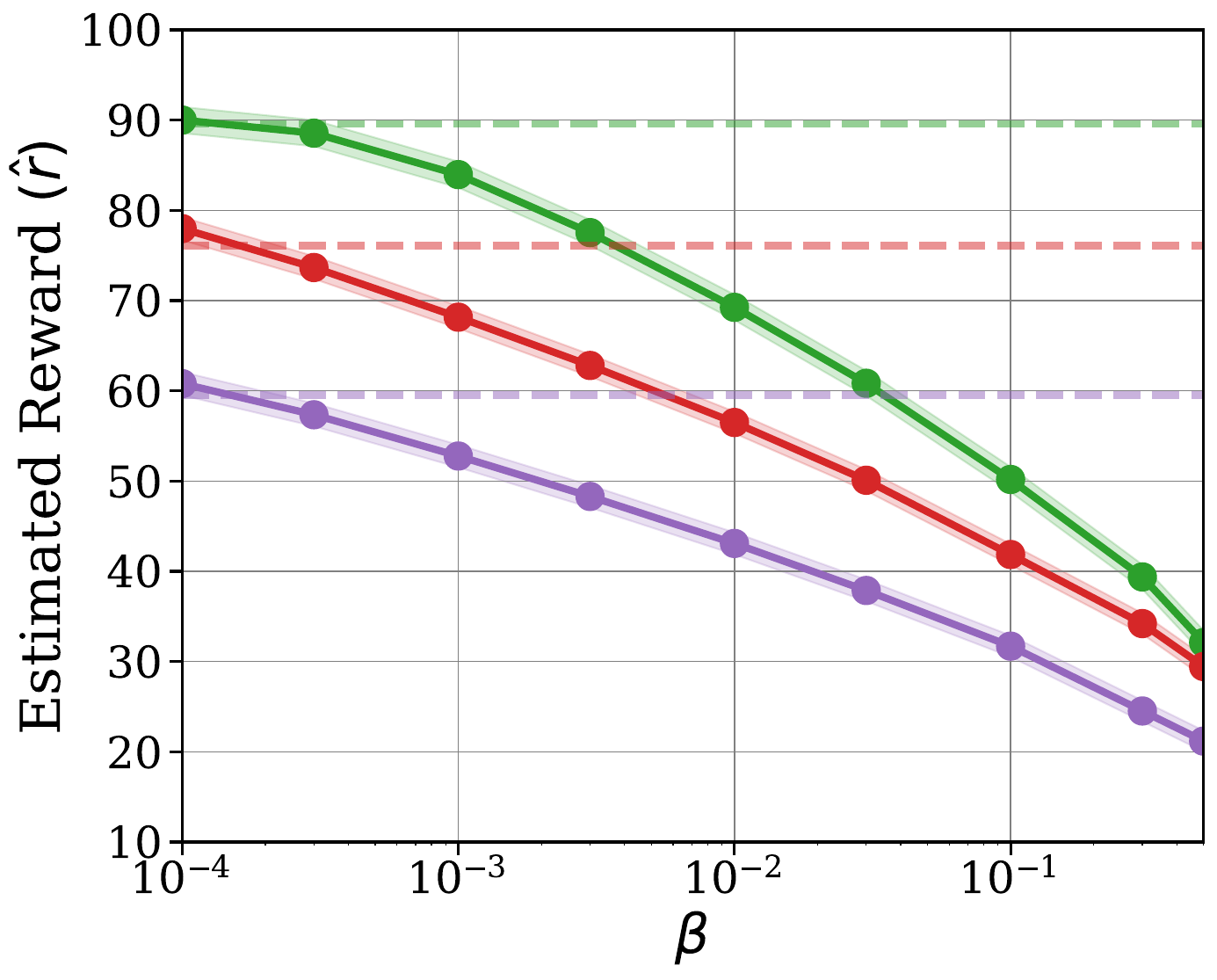}
        \label{sfig:betas-math-llama-rhat} 
      }
      \hfill \subfigure[\armorm]{
        \includegraphics[width=0.2\textwidth]{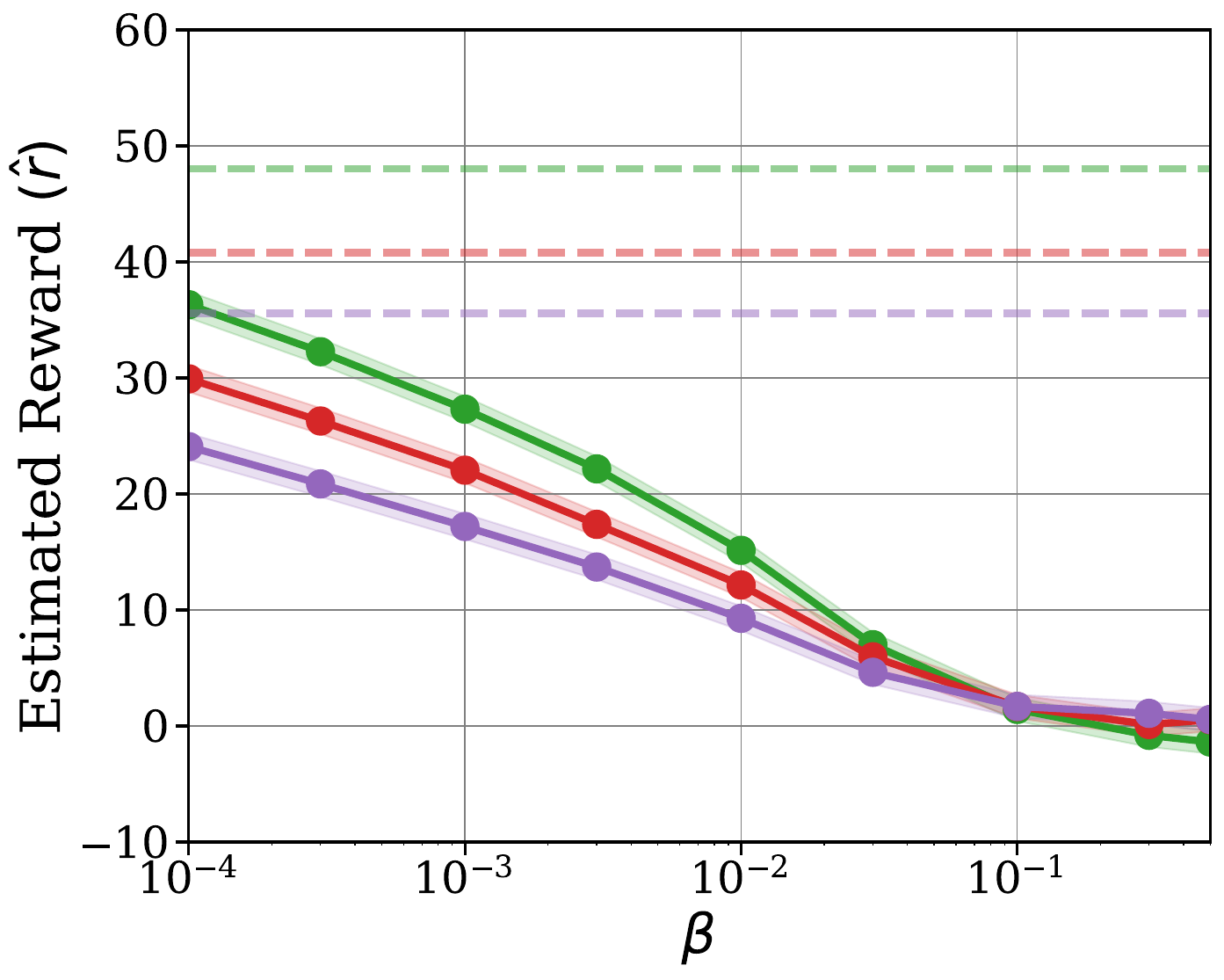}
        \label{sfig:betas-math-armo-rm-rhat} 
      }
    \caption{Compute-normalized comparison for $N = 2^{13}$ between \bonalg and \mainalg on \mathk for four reward models and choices of $\piref$, as a function of regularization $\beta$.}
    \label{fig:math-betas}
  \end{figure*}

  \begin{table*}[htp]
    \centering 
    \caption{Performance of $\piref=$ \phismall (\% Lift in Accuracy over $\piref$).}
    \begin{tabular}{lcccc}
        \toprule 
\textbf{Task} & \textbf{\oasst} & \textbf{\gemmarm} & \textbf{\llamarm} & \textbf{\armorm}  \\ \midrule 
\gsmk (Pessimism) & $0.25 \pm 0.88$ & $1.29 \pm 0.97$ & $5.43 \pm 0.93$ & $4.79 \pm 0.84$ \\ 
\gsmk (BoN) & $-3.06 \pm 1.21$ & $1.19 \pm 1.08$ & $5.71 \pm 0.95$ & $5.87 \pm 0.93$ \\ 
\rowcolor{lightgray} \mmlu (Pessimism) & $-2.18 \pm 5.37$ & $7.61 \pm 5.31$ & $14.47 \pm 5.60$ & $6.74 \pm 5.52$ \\ 
\rowcolor{lightgray} \mmlu (BoN) & $-3.65 \pm 5.66$ & $5.67 \pm 5.76$ & $14.12 \pm 5.64$ & $8.27 \pm 5.84$ \\ 
\mathk (Pessimism) & $-1.93 \pm 3.22$ & $9.94 \pm 3.40$ & $20.23 \pm 3.54$ & $12.71 \pm 3.37$ \\ 
\mathk (BoN) & $-4.85 \pm 3.45$ & $5.27 \pm 3.66$ & $19.99 \pm 3.61$ & $8.39 \pm 3.56$ \\ 
\bottomrule

    \end{tabular}
    \label{tab:phi3small}
\end{table*}

\begin{table*}[htp]
    \centering 
    \caption{Performance of $\piref=$ \mistral (\% Lift in Accuracy over $\piref$).}
    \begin{tabular}{lcccc}
        \toprule 
\textbf{Task} & \textbf{\oasst} & \textbf{\gemmarm} & \textbf{\llamarm} & \textbf{\armorm}  \\ \midrule 
\gsmk (Pessimism) & $4.15 \pm 1.77$ & $25.41 \pm 1.87$ & $57.30 \pm 1.77$ & $53.19 \pm 1.77$ \\ 
\gsmk (BoN) & $-12.10 \pm 1.96$ & $\mathbf{22.46 \pm 2.08}$ & $\mathbf{58.31 \pm 1.86}$ & $\mathbf{63.69 \pm 1.76}$ \\ 
\rowcolor{lightgray} \mmlu (Pessimism) & $1.28 \pm 7.26$ & $14.56 \pm 7.66$ & $35.01 \pm 9.16$ & $24.63 \pm 8.21$ \\ 
\rowcolor{lightgray} \mmlu (BoN) & $-1.70 \pm 8.84$ & $12.71 \pm 9.92$ & $37.43 \pm 9.70$ & $22.23 \pm 9.89$ \\ 
\mathk (Pessimism) & $10.32 \pm 6.51$ & $46.15 \pm 8.43$ & $129.58 \pm 11.55$ & $91.41 \pm 9.42$ \\ 
\mathk (BoN) & $-20.13 \pm 8.26$ & $47.49 \pm 9.12$ & $133.13 \pm 12.15$ & $102.10 \pm 10.32$ \\ 
\bottomrule

    \end{tabular}
    \label{tab:mistral}
\end{table*}

\begin{table*}[htp]
    \centering 
    \caption{Performance of $\piref=$ \llama (\% Lift in Accuracy over $\piref$).}
    \begin{tabular}{lcccc}
        \toprule 
        \textbf{Task} & \textbf{\oasst} & \textbf{\gemmarm} & \textbf{\llamarm} & \textbf{\armorm}  \\ \midrule 
        \gsmk (Pessimism) & $5.20 \pm 1.33$ & $14.38 \pm 1.32$ & $27.54 \pm 1.28$ & $29.66 \pm 1.18$ \\ 
        \gsmk (BoN) & $-4.35 \pm 1.94$ & $11.45 \pm 1.78$ & $27.43 \pm 1.52$ & $30.49 \pm 1.44$ \\ 
        \rowcolor{lightgray} \mmlu (Pessimism) & $16.82 \pm 10.21$ & $21.77 \pm 9.89$ & $46.55 \pm 10.49$ & $46.87 \pm 7.94$ \\ 
        \rowcolor{lightgray} \mmlu (BoN) & $16.82 \pm 10.44$ & $20.48 \pm 10.42$ & $44.13 \pm 10.71$ & $52.86 \pm 10.54$ \\ 
        \bottomrule

    \end{tabular}
    \label{tab:llama}
\end{table*}

\subsection{Further Experiments}\label{ssec:further_exp}
We performed several additional experiments to (i) validate the basic
modeling assumptions in our inference-time alignment
framework---particularly the assumed reward model accuracy bound in
\cref{eq:rm};, and  (ii) probe the
behavior and robustness of \mainalg, and. We begin by examining the
distribution of the reward model scores $\rhat(x,y)$ under $\piref$,
then explore the robustness of \mainalg to the choice of regularization parameter
$\beta$. we also present preliminary results on the \alpaca task in \cref{ssec:alpaca}.

\paragraph{Reward distribution under $\piref$}
In order to get a more fine-grained sense for the extent to which
reward overoptimization is a problem, in \Cref{fig:histograms} we plot
the distribution of the reward model value $\rhat(x,y)$ for a single
representative prompt from \gsmk for all \gemma-generated responses,
according to each of our four reward models, and conditioned on
whether or not the response is correct.  The more separated the
distributions are, and the further to the right the correct (blue)
distribution is, the better the reward model is at estimating the true
reward.  As we see, \armorm is by far the best reward model in this
respect.  In particular, one reason to expect that would not observe
reward overoptimization in \bonalg for \armorm with this task is the
fact that the maximal value in the support of the incorrect
distribution is, empirically, strictly smaller than that of the
correct distribution; thus for sufficiently large $N$, \bonalg will
always choose the correct answer, at least for the prompt we
visualize.  This observation is consistent with our theoretical
results, but suggests that in some cases our assumptions may be too
pessimistic; indeed, \mainalg may be overly conservative in situations
where $\rhat$ already underestimates $\rstar$ by a large margin (since
pessimism, or under-estimating the true reward value, is precisely
what the regularization in \mainalg is designed to enforce).

\begin{figure*}[htp]
    \centering
    \subfigure[]{
      \includegraphics[width=0.21\textwidth]{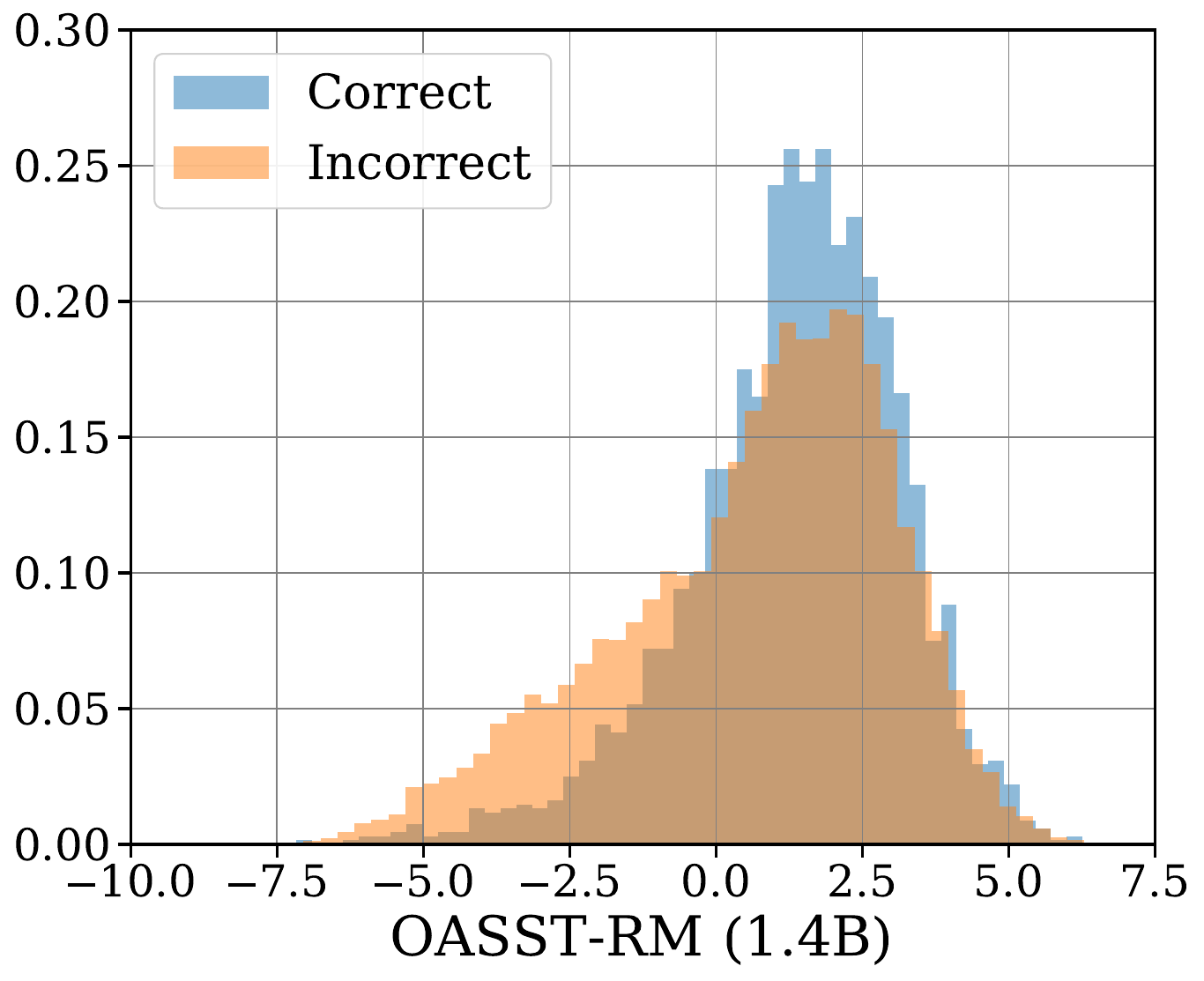}
      \label{sfig:hist-gsmk-oasst} 
    }
    \hfill \subfigure[]{
      \includegraphics[width=0.21\textwidth]{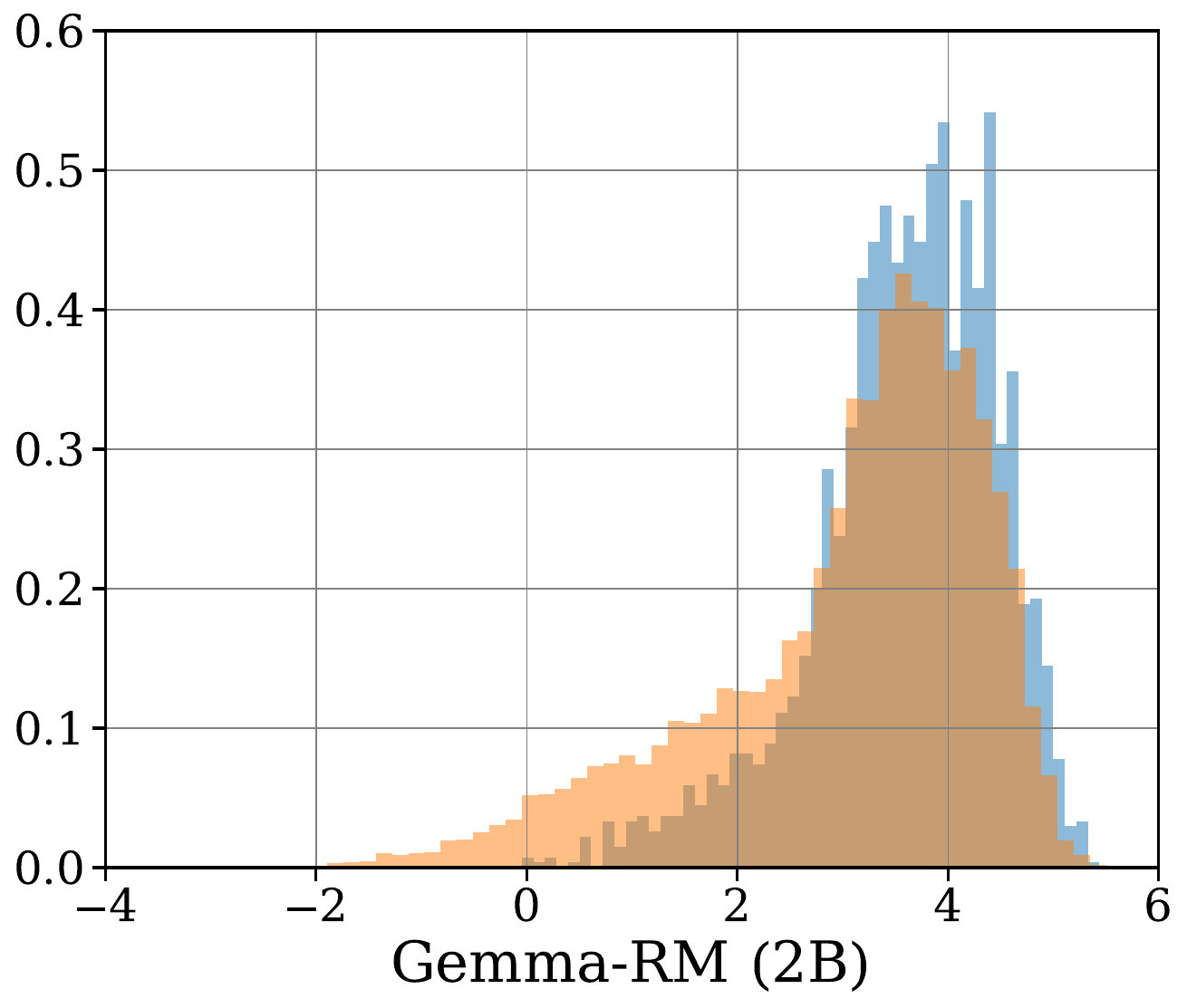}
      \label{sfig:hist-gsmk-gemma} 
    }
    \hfill \subfigure[]{
      \includegraphics[width=0.21\textwidth]{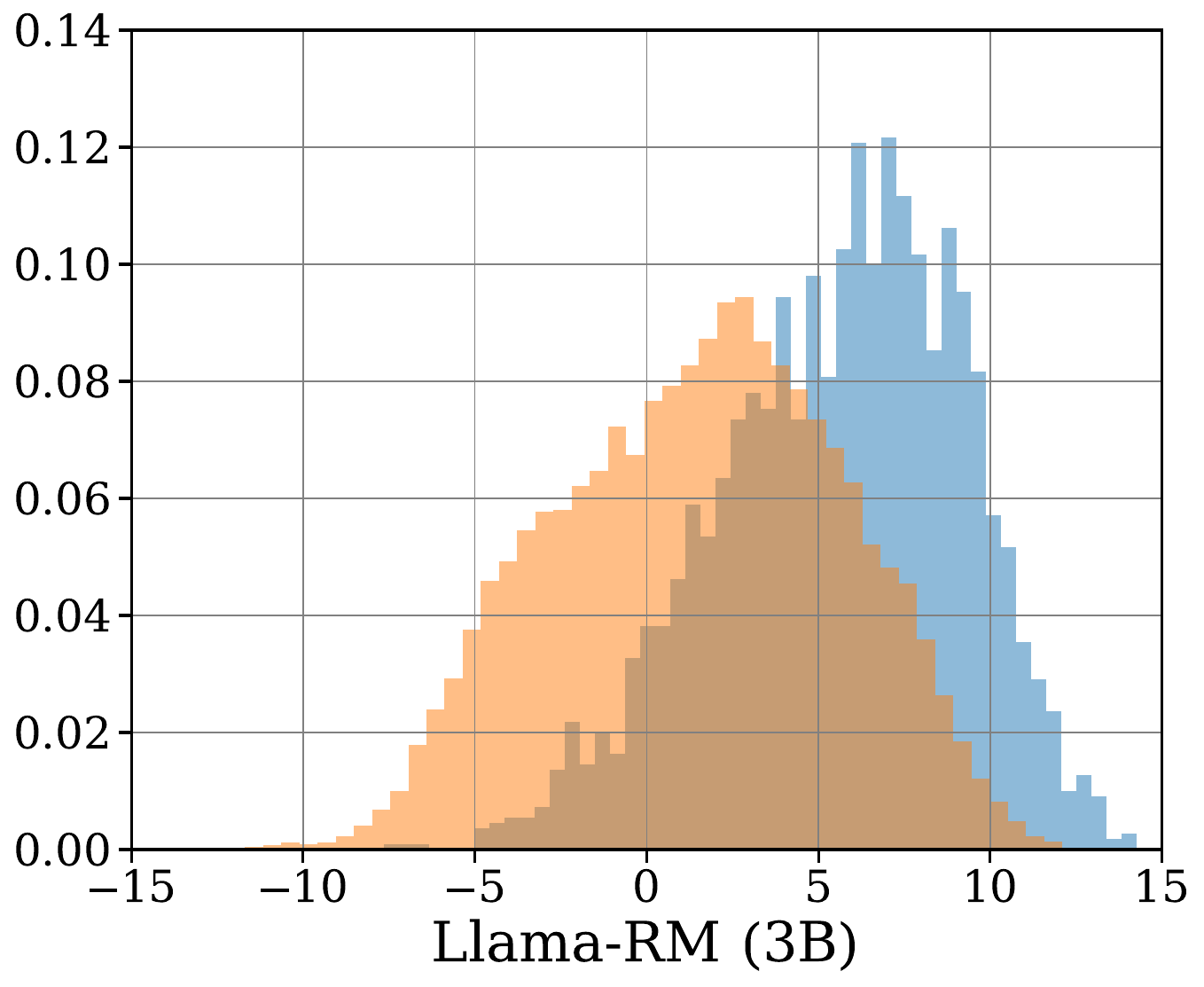}
      \label{sfig:hist-gsmk-llama} 
    }
    \hfill \subfigure[]{
        \includegraphics[width=0.21\textwidth]{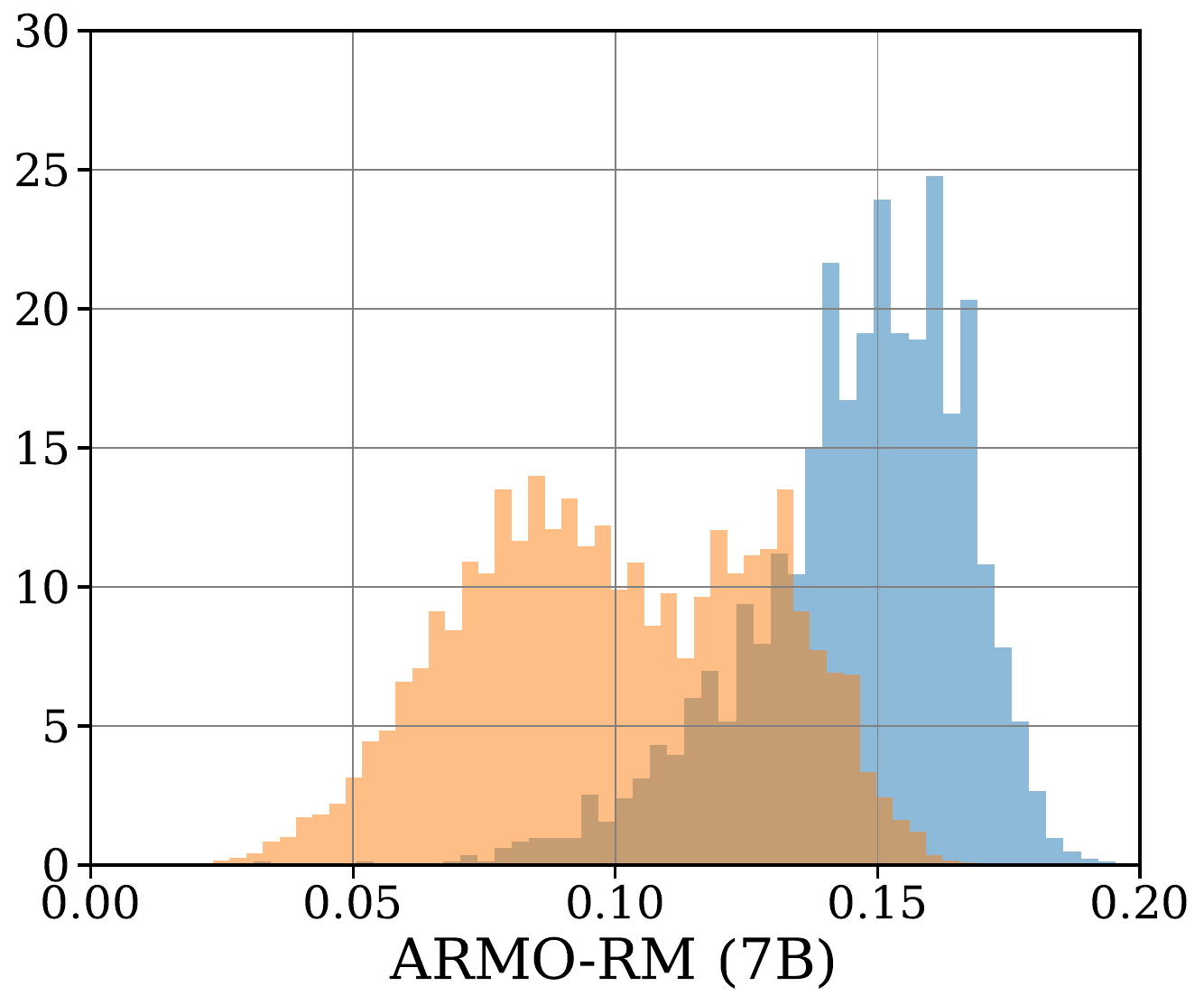}
        \label{sfig:hist-gsmk-armorm} 
      }
    \caption{Distribution of estimated rewards of responses generated by \gemma on \gsmk prompt number 100 conditioned on whether or not the response is correct for (a) \oasst, (b) \gemmarm, (c) \llamarm, and (d) \armorm.  Greater separation of the distributions with the correct (blue) further to the right than incorrect (orange) indicates the reward model is more informative.}
    \label{fig:histograms}
  \end{figure*}

\paragraph{Robustness of \mainalg to $\beta$} In addition to the
theoretical suboptimality under general notions of coverage, the
central drawback to \bonalg is the lack of monotonicity, requiring
careful tuning of the computational budget $N$ in order to avoid
over-optimization. In \Cref{fig:beta_robustness}, we plot the accuracy
of \mainalg on \mistral generations for a representative subset of the
tasks and reward models $\rhat$ we consider, evaluating the effect of
the regularization parameter $\beta$ on performance. We find (\Cref{sfig:robust-gsm8k-oasst-mistral,sfig:robust-math-oasst-mistral}) that
\mainalg experiences less over-optimization than \bonalg fairly
robustly across a range of $\beta$ values, though tuning $\beta$ is
typically required to avoid over-optimization entirely. We also
observe that in some cases, where the reward model remains in-distribution for
all task responses, \bonalg is monotonic without further interventions
(\Cref{sfig:robust-gsm8k-armo-mistral,sfig:robust-mmlu-grmllama-mistral}).

Note that for small values of the regularization $\beta$, we do observe a small dip in performance for \mainalg (cf. \Cref{sfig:robust-gsm8k-oasst-mistral,sfig:robust-math-oasst-mistral}). This is caused by our heuristic of defaulting to \bonalg when no responses are accepted by rejection sampling in \mainalg, which is more likely to occur when the computational budget $N$ is small relative to the inverse of regularization $1/\beta$ according to our theory (\Cref{lem:x-tv-upper-bound}).  We should like to remark that this is a reasonable heuristic, as it is precisely in the small $N$ regime that \bonalg is expected to still perform well according to our theoretical results.

\begin{figure*}[htp]
    \centering
   \subfigure[(\gsmk, \oasst)]{
        \includegraphics[width=0.21\textwidth]{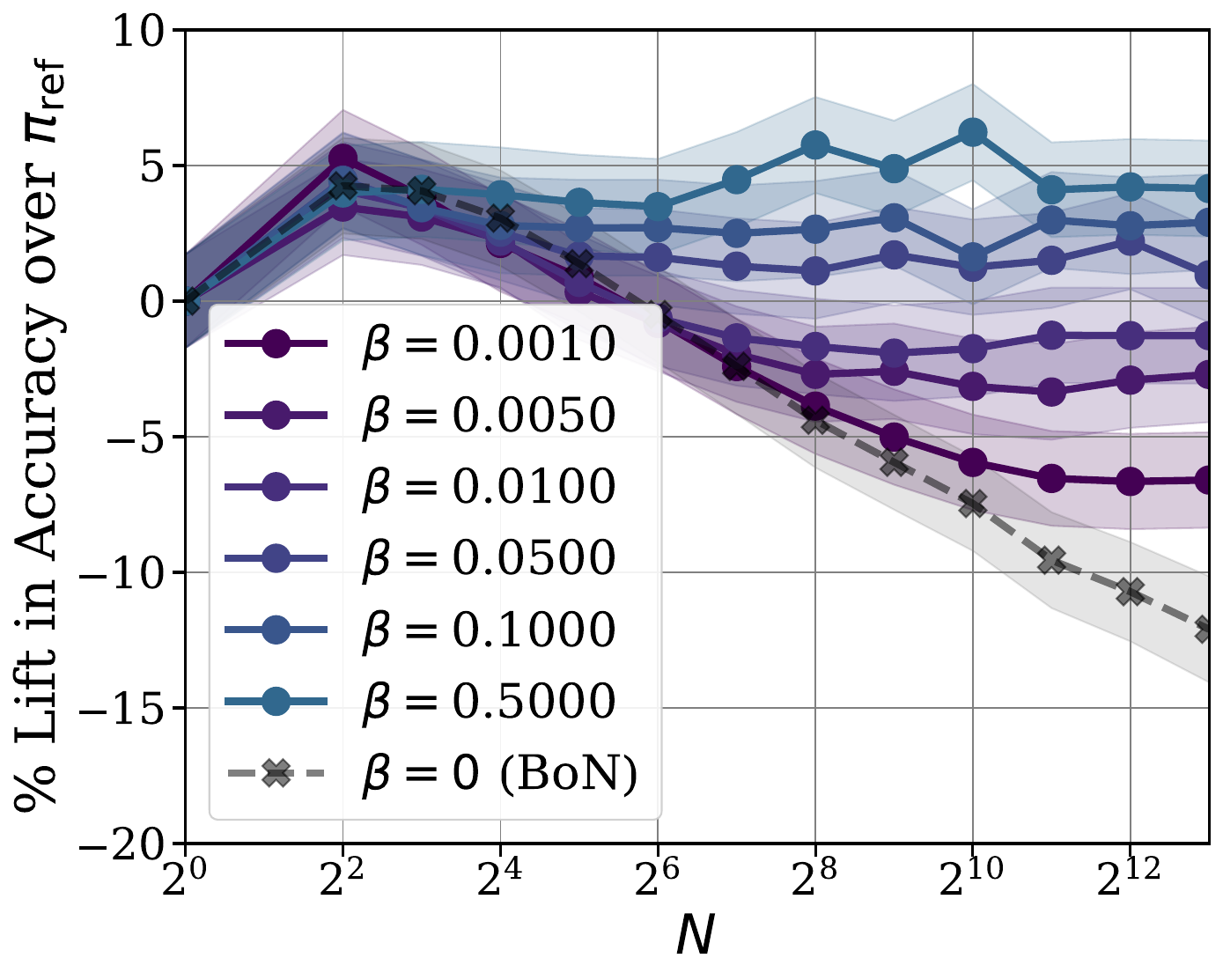}
        \label{sfig:robust-gsm8k-oasst-mistral} 
      }
    \hfill \subfigure[(\gsmk, \armorm)]{
        \includegraphics[width=0.21\textwidth]{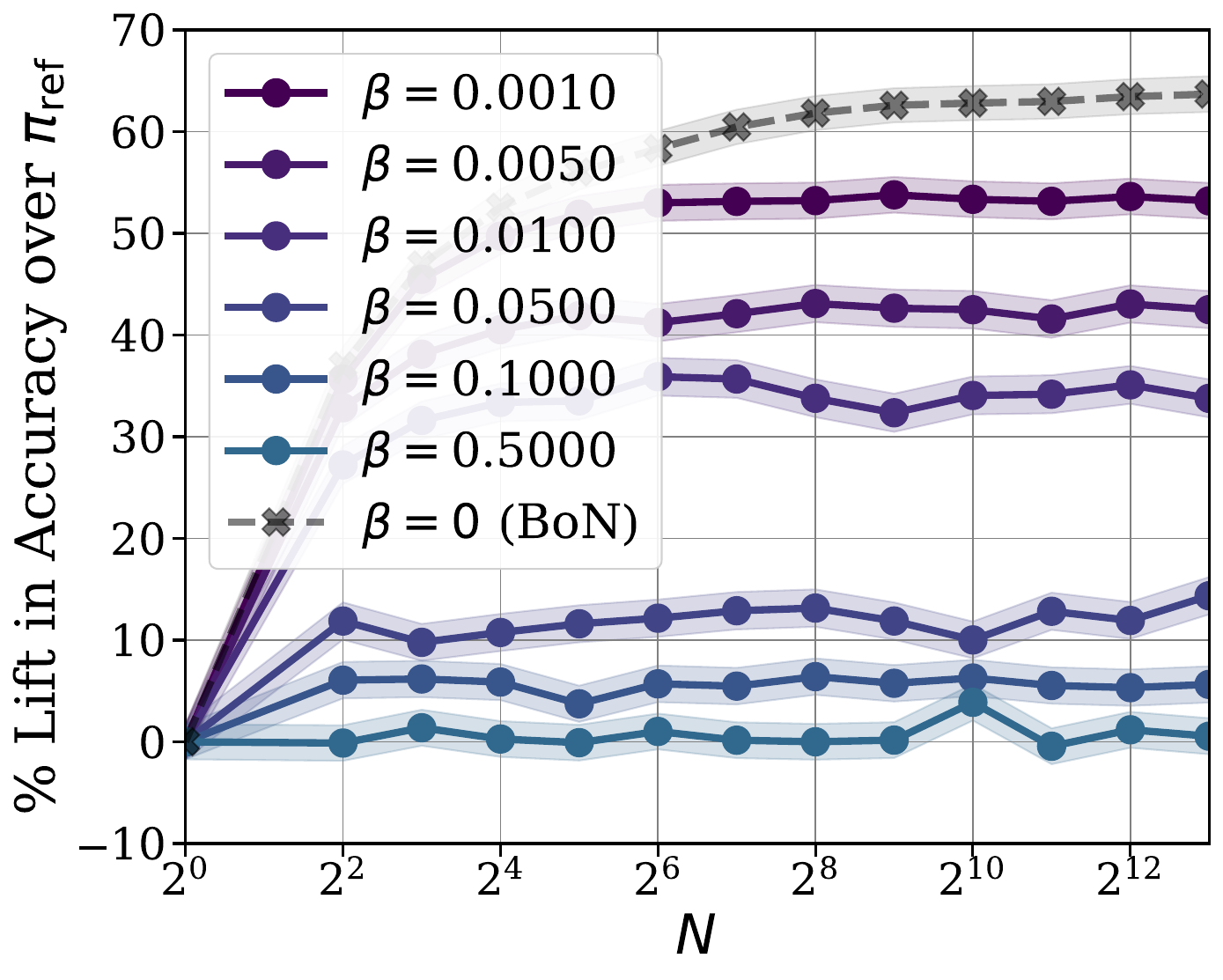}
        \label{sfig:robust-gsm8k-armo-mistral} 
      }
    \hfill \subfigure[(\mathk, \oasst)]{
      \includegraphics[width=0.21\textwidth]{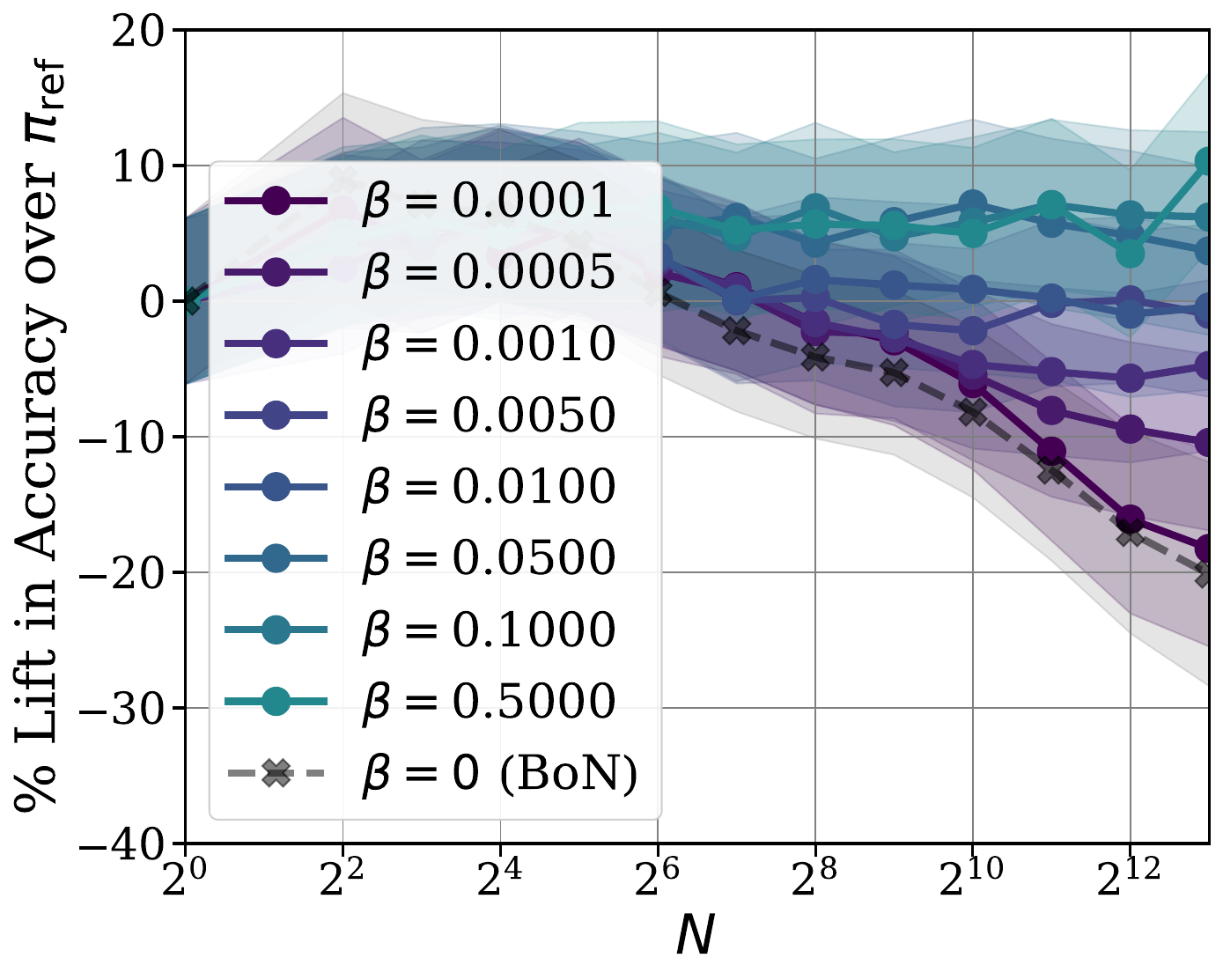}
      \label{sfig:robust-math-oasst-mistral} 
    }
    \hfill 
    \subfigure[(\mmlu, \llamarm)]{
      \includegraphics[width=0.21\textwidth]{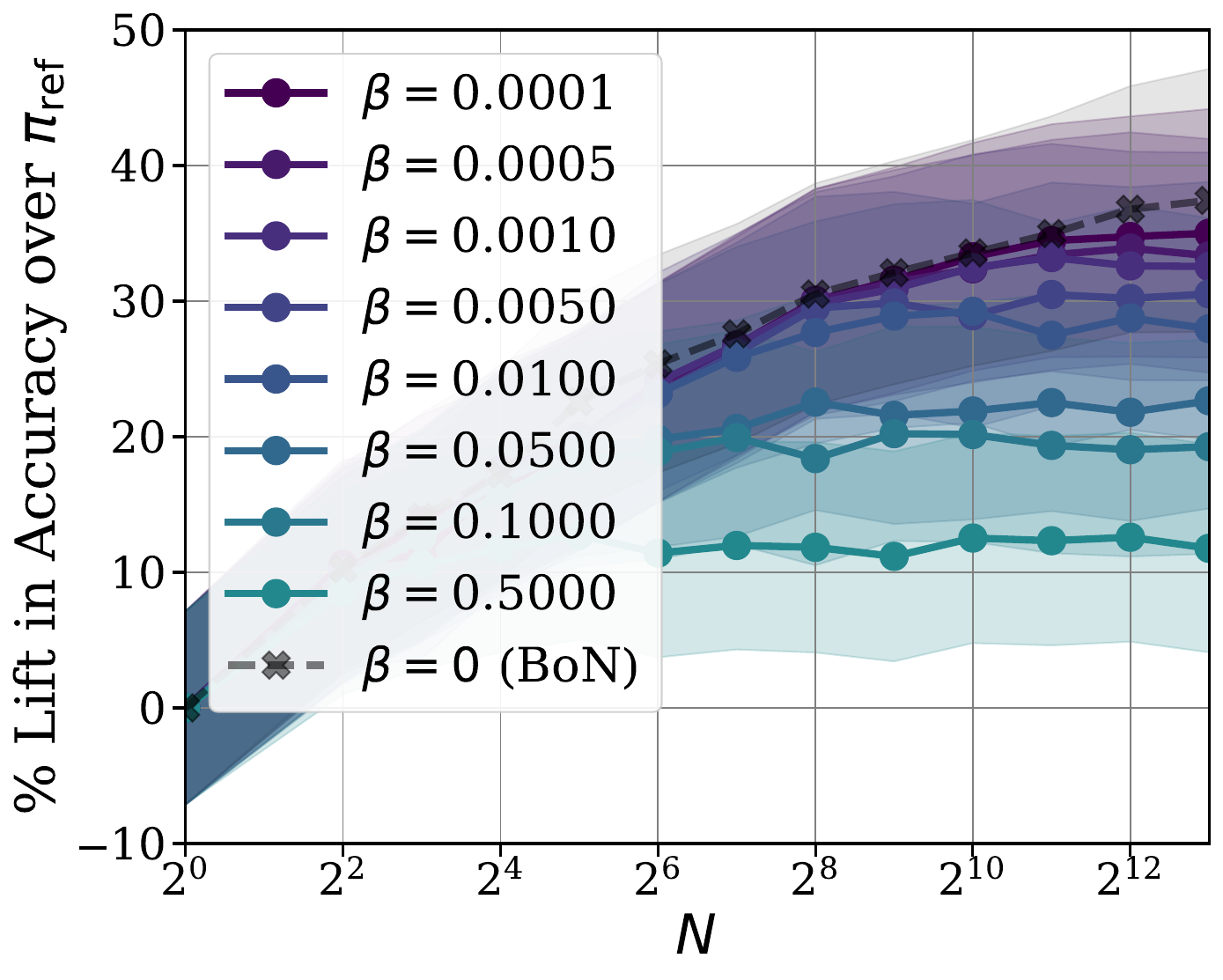}
      \label{sfig:robust-mmlu-grmllama-mistral} 
    }
    \caption{Demonstration that monotonicity is robust to the choice of regularization parameter $\beta$ for \mistral generations with a representative sample of tasks and estimated rewards $\rhat$.}
    \label{fig:beta_robustness}
  \end{figure*}

\subsection{Results for \alpaca}\label{ssec:alpaca}
In addition to the experiments with \gsmk, \mathk, and \mmlu, we
conduct a preliminary investigation on the performance of \mainalg on
\alpaca (cf. \Cref{ssec:exp_details} for an explanation of this task).
Because evaluation requires many queries to the proprietary OpenAI
models, we consider a significantly smaller scale of experiments for
this task, and use only 5 replicates for each prompt as opposed to 50.
The results are displayed in \Cref{fig:alpaca}.  Due to the noise of
the evaluation, coupled with the significantly smaller number of
replicates and prompts, it is difficult to separate the
performance of \bonalg and \mainalg in a statistically significant
way, although the broader trends agree with those found in our other
tasks.

\begin{figure*}[htp]
    \centering
   \subfigure[]{
        \includegraphics[width=0.45\textwidth]{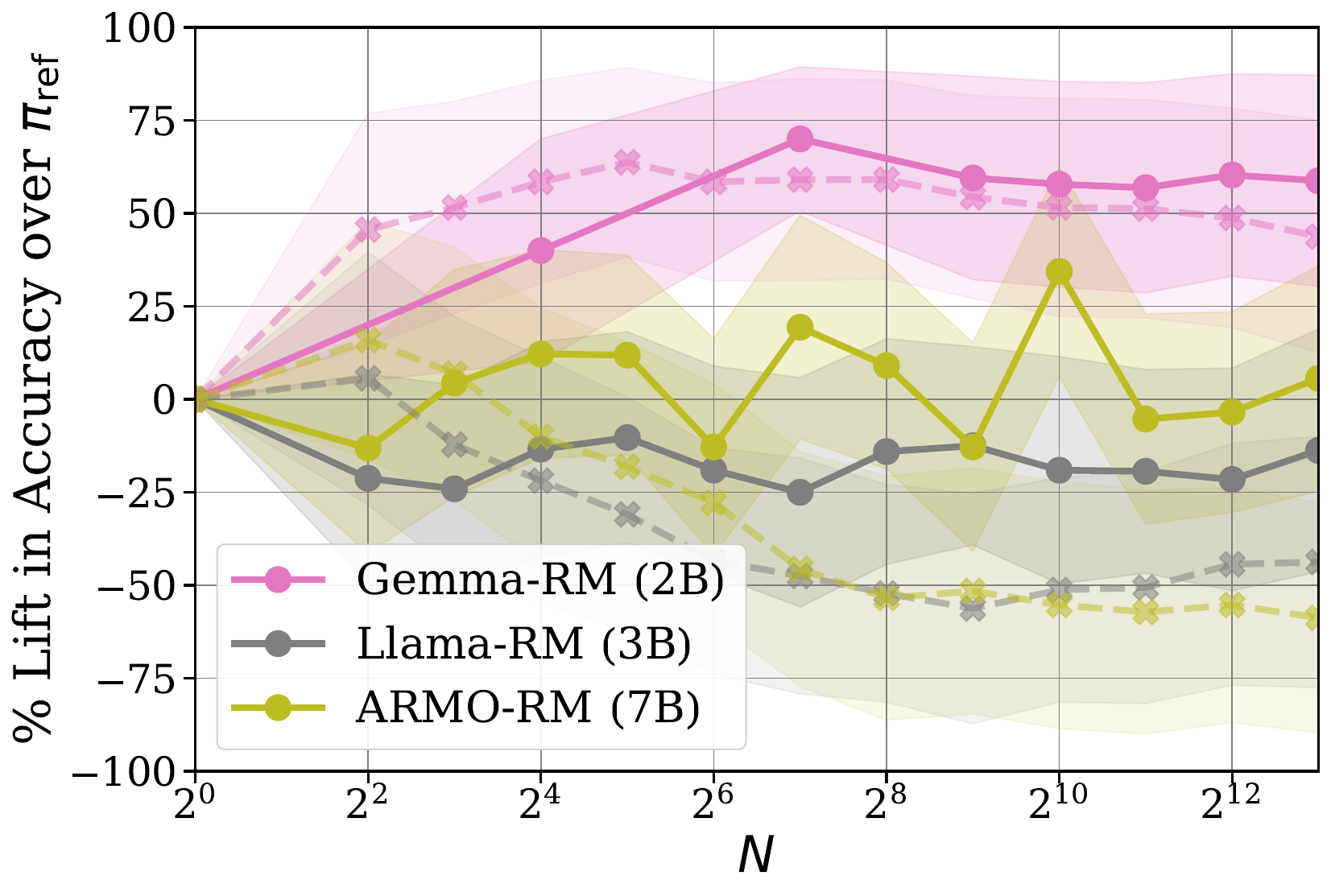}
        \label{sfig:alpaca_accuracy} 
      }
    \hfill \subfigure[]{
        \includegraphics[width=0.45\textwidth]{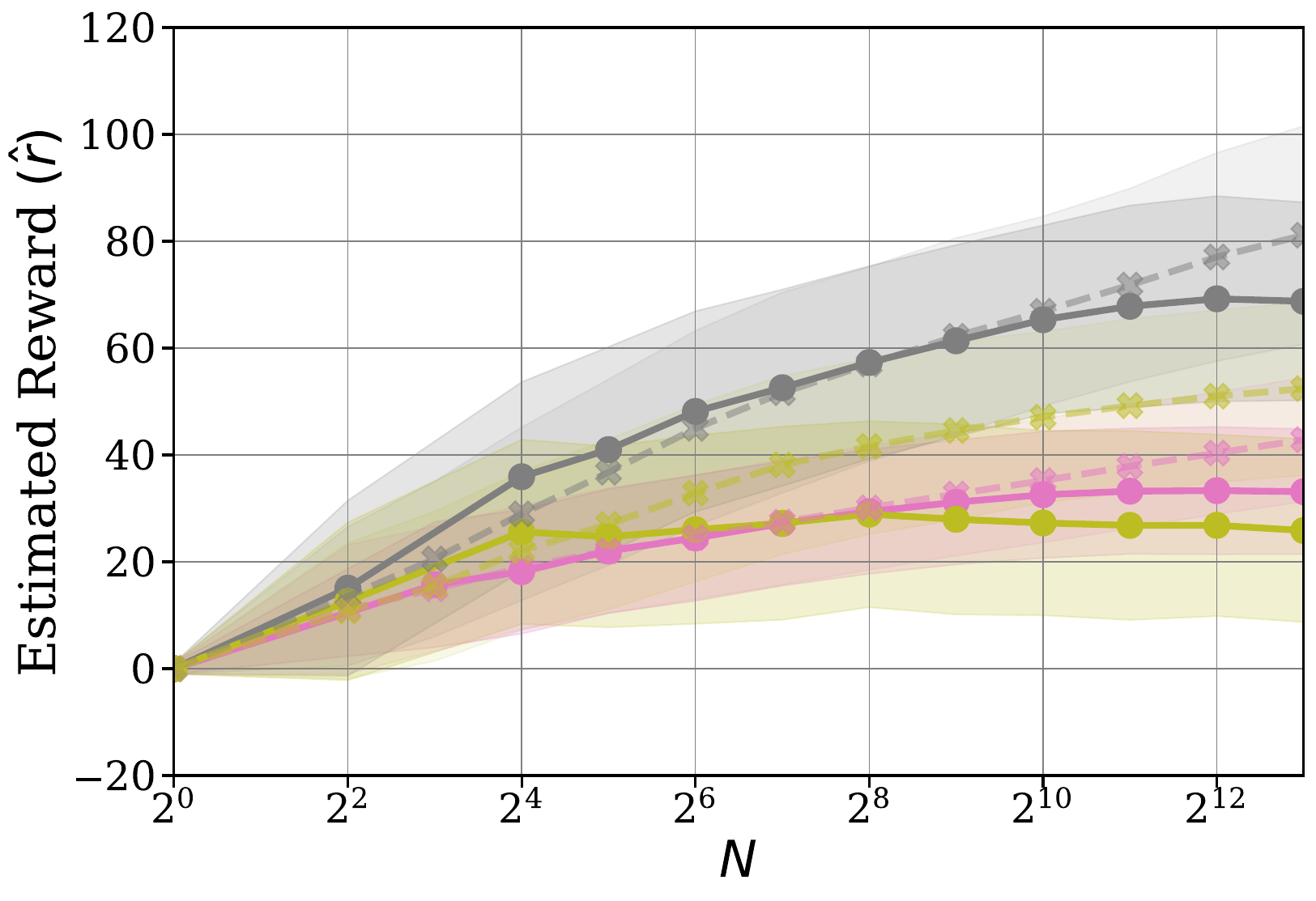}
        \label{sfig:alpaca_rhat} 
      }
   
    \caption{Performance of \bonalg and \mainalg on \alpaca with \gemma as $\piref$ for several reward models.}
    \label{fig:alpaca}
  \end{figure*}

\newpage
\part{Proofs}

\section{Normalization Constant Computation}
\label{app:normalization}

\icml{}
\arxiv{\arxiv{\begin{algorithm}[htp]}
  \icml{\begin{algorithm}[tp]}
  \caption{\arxiv{General normalization constant computation}\icml{\normalg for general base measures}}
  \label{alg:general_norm}
  \begin{minipage}{\linewidth}
  \begin{algorithmic}[1]
    \Statex[0] \multiline{{\bfseries input:}
      Prompt $x$, 
      reward model $\rhat$,
      reference policy $\piref$,
      regularization coefficient $\beta>0$,
      set of responses $\Yhat= (y_1,\ldots,y_N)$.
    }
    \State Sort and bucket $\Yhat$ into bins $\cY_{1},\ldots,\cY_{M}$ 
    according to the value of $\rhat(x,y)$ in ascending order, such that 
    \begin{align}
      &\rhat(x,y) = \rhat_i, \quad\forall y \in \cY_{i},\forall i\in[M]
      \\
      &\rhat_{i} < \rhat_{i+1}, \quad\quad\forall i\in[M],
    \end{align}
    and denote $\piref(\cY_{i}\mid{}x) = \sum_{y\in\cY_{i}} \piref(y\mid{}x)$
    \State Initialize $\rhat_0 = -\infty$, $J\leftarrow \sum_{i=1}^M \piref(\cY_{i}\mid{}x)\cdot\rhat_{i} $ and $Z\leftarrow \sum_{i=1}^M \piref(\cY_{i}\mid{} x)$.
    \For{$i=1\ldots M$}
      \vspace{0.5em}
      \State Set $\lambda \leftarrow   \frac{J - \beta}{Z}$.
      \vspace{0.5em}
      \If{$\rhat_{i-1} \leq \lambda < \rhat_{i}$ or $i = M$}
        \State \textbf{return} $\lambda$.
      \Else
        \State Update $J \leftarrow J -  \rhat_{i}\cdot\piref(\cY_{i}\mid{}x)$ and $Z \leftarrow Z - \piref(\cY_{i}\mid{}x)$.
      \EndIf
    \EndFor
\end{algorithmic}
\end{minipage}
\end{algorithm}
}

\subsection{Background}

This section gives guarantees for the \normalg subroutine
(\cref{alg:norm}) used within \mainalg. Given a set of response
$\Yhat$ and $x\in\cX$, the algorithm computes a normalization constant $\lambda$
such that
\[
\Phihat(\lambda)\ldef\frac{1}{N}\sum_{y\in\Yhat}\relu\prn*{\beta^{-1}(\rhat(x,y)-\lambda)} = 1
\]
in time $\bigoh(N\log{}N)$ using a dynamic programming-like procedure.
Note that such a $\lambda$ always exists because
$\Phihat(\lambda)$ is a continuous, piecewise linear function
that is decreasing in $\lambda$,
with $\Phihat(-\infty) = \infty$ and $\Phihat(\infty) = 0$.
In what follows, we state and prove the main guarantee for the algorithm.
En route, we will also prove a guarantee for a more general version of \normalg (\cref{alg:general_norm}),
which computes a normalization constant such that
$\Phi(\lambda)\ldef\sum_{y\in\cY}\piref(y\mid{}x)\relu\prn*{\beta^{-1}(\rhat(x,y)-\lambda)} = 1$
in time $\bigoh\rbr*{N \log N}$.

\subsection{Guarantee for \normalg}
\begin{lemma}[Main guarantee for \normalg]\label{lem:norm}
  For any $\rhat$, $\beta$, $\Yhat$, and $x\in\cX$,
  \cref{alg:norm} finds $\lambda$ such that
  $\Phihat(\lambda)\ldef \frac{1}{N} \sum_{y\in\Yhat}\relu\prn*{\beta^{-1}(\rhat(x,y)-\lambda)} = 1$
  in $O\rbr*{N\log N}$ time.
\end{lemma}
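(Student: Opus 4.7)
The plan is to exploit the piecewise-linear structure of $\Phihat$ that the sorting-and-bucketing step exposes. After bucketing, on each interval $[\rhat_{i-1}, \rhat_i)$ (with the convention $\rhat_0 = -\infty$), only the buckets $\cY_i, \ldots, \cY_M$ contribute to the sum, so $\Phihat$ agrees on this interval with the affine function $\lambda \mapsto \beta^{-1}(J_i - \lambda Z_i)$, where $J_i \ldef \sum_{j\geq i}\rhat_j \abr{\cY_j}/N$ and $Z_i \ldef \sum_{j\geq i}\abr{\cY_j}/N$ are exactly the quantities $(J, Z)$ maintained by the loop. The candidate $\lambda_i \ldef (J_i - \beta)/Z_i$ computed at iteration $i$ is then the unique root of the $i$-th affine piece at level $1$.

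First I would establish existence and uniqueness of $\lambda^\star$ with $\Phihat(\lambda^\star) = 1$: the function $\Phihat$ is continuous, non-increasing, and strictly decreasing on $(-\infty, \rhat_M)$, with $\Phihat(-\infty) = +\infty$ and $\Phihat(\rhat_M) = 0$, so $\lambda^\star$ exists, is unique, and lies in a single interval $[\rhat_{i^\star - 1}, \rhat_{i^\star})$ for some $i^\star \in [M]$. Second, a one-line induction on $i$ using the update rule shows that at the top of iteration $i$ the maintained pair equals $(J_i, Z_i)$, so whenever the test $\rhat_{i-1} \leq \lambda < \rhat_i$ succeeds, the returned $\lambda = \lambda_i$ satisfies $\Phihat(\lambda) = 1$ by the piecewise-linear formula, and hence equals $\lambda^\star$.

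The main obstacle is to argue that the single forward pass never overshoots $\lambda^\star$ or skips the correct interval. The key technical lemma I would prove is: $\lambda_i \geq \rhat_i$ implies both $\lambda_{i+1} \geq \rhat_i$ and $\lambda^\star \geq \rhat_i$. The first implication is a direct algebraic check---$\lambda_i \geq \rhat_i$ is equivalent to $J_i - \beta \geq \rhat_i Z_i$, and subtracting $\rhat_i \abr{\cY_i}/N$ from both sides yields $J_{i+1} - \beta \geq \rhat_i Z_{i+1}$, i.e., $\lambda_{i+1} \geq \rhat_i$---and the second follows because $\lambda_i \geq \rhat_i$ forces $\Phihat(\rhat_i) = \beta^{-1}(J_i - \rhat_i Z_i) \geq 1$, whence $\lambda^\star \geq \rhat_i$ by monotonicity of $\Phihat$. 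An inductive application of this lemma shows the candidates never regress and never overshoot $\lambda^\star$, so the loop must exit at $i = i^\star$, or at $i = M$ via the fallback clause when $\lambda^\star \in [\rhat_{M-1}, \rhat_M)$. The runtime bound is then immediate: sorting and bucketing cost $O(N\log N)$, and the loop performs at most $M \leq N$ iterations of $O(1)$ arithmetic, giving $O(N\log N)$ total.
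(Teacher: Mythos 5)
Your proposal is correct and takes essentially the same route as the paper: both exploit the piecewise-linear structure of $\Phihat$ after sorting/bucketing, observe that on the interval $[\rhat_{i-1},\rhat_i)$ the function coincides with the affine piece whose level-$1$ root is exactly the candidate $(J_i-\beta)/Z_i$ maintained by the loop, and charge $O(N\log N)$ to the sort plus $O(1)$ per iteration. The only difference is one of rigor: the paper (which factors through the general base-measure version, \cref{lem:general_norm}, with $\piref'=\mathrm{Unif}(\Yhat)$) simply asserts via continuity and piecewise linearity that the loop breaks at an index satisfying the interval condition, whereas your candidate-monotonicity lemma ($\lambda_i\geq\rhat_i$ implies $\lambda_{i+1}\geq\rhat_i$ and $\lambda^\star\geq\rhat_i$) explicitly rules out skipping the correct interval and justifies the $i=M$ fallback, making the forward-pass correctness fully explicit.
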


\begin{proof}[\pfref{lem:norm}]
  \cref{alg:norm} is equivalent to running \cref{alg:general_norm} 
  with $\rhat$, $\beta$, $\cY=\Yhat$ and $\piref'(y\mid{}x)=\frac{1}{N}\cdot\one{y\in\Yhat}$
  and so we can directly apply \cref{lem:general_norm}.
\end{proof}

\begin{lemma}[Guarantee for generalized \normalg]\label{lem:general_norm}
  For any $\rhat$, $\beta$, $\Yhat$, and $x\in\cX$,
  \cref{alg:general_norm} finds $\lambda$ such that $\wh\Phi(\lambda)\ldef\sum_{y\in\Yhat}\piref(y\mid{}x)\relu\prn*{\beta^{-1}(\rhat(x,y)-\lambda)} = 1$
  in $O\rbr*{N \log N}$ time.
\end{lemma}

\begin{proof}[\pfref{lem:general_norm}]
  Let $x\in\cX$ be fixed; we omit dependence on $x$ going forward to keep notation compact.
  We begin by defining a surrogate problem with response space
  $\Yhat[M]\ldef\cbr*{y_{1},\ldots,y_{M}}$,
  where $y_{i}$ is any response from $\cY_{i}$,
  surrogate reward function $\rhat'$ with $\rhat'(y_{i}) \rdef{} \rhat'_{i} = \rhat_{i}$,
  and surrogate reference policy $\piref'$, where for $i \in [M]$ we set
  \begin{align}
    \piref'(y_{i}) = \sum_{y \in \cY_{i}} \piref(y).
  \end{align}
  Note that all responses in this collection have unique values under a surrogate reward function $\rhat'$.
  Since \cref{alg:general_norm} sorts rewards in ascending order,
  we also have that the surrogate rewards are indexed in ascending order, i.e.,
  $\rhat'_i < \rhat'_{i+1}$ for all $i$.
  We also define the following function,
  which is the analog of $\wh\Phi(\lambda)$ defined over the $M$ surrogate responses,
  \begin{align}
    \Phihat'(\lambda) = \sum_{i=1}^M \piref'(y_{i})\sig{\rhat'_{i} - \lambda}.
  \end{align}
  For any $\lambda$, it can be seen that $\Phihat'(\lambda) = \Phihat(\lambda)$ since
  \[
    \Phihat'(\lambda) = \sum_{i=1}^M \rbr*{\sum_{y \in \cY_i} \piref(y)} \sig{\rhat_i  - \lambda}
    = \sum_{y \in \Yhat} \piref(y) \sig{\rhat(y) - \lambda}
    = \Phihat(\lambda),
  \]
  because each $y \in \Yhat$ is binned into one of $\cbr*{\cY_i}_{i=1}^M$,
  and within each $\cY_i$ all responses share the reward label $\rhat'_i$.

  Next, define $\lambda^\star$ to be such that $\Phihat'(\lambda^\star) = 1$, that is,
  the true constant that normalizes the distribution over the $M$ surrogate responses.
  Our goal is to show that \cref{alg:general_norm} computes $\lambda^\star$,
  which, given the previously shown equivalence, then proves the lemma statement since
  \[
    1 = \Phihat'(\lambda^\star) = \Phihat(\lambda^\star).
  \]
  Note that such a $\lambda^\star$ is guaranteed to exist
  because $\Phihat(\lambda)$ is continuous and decreasing in $\lambda$.
  Moreover, from the form of $\Phihat'$, it can be seen that there exists some index $j$ such that 
  $\lambda^\star \ge \rhat'_{j}$ and
  \begin{align}
    1 = \Phihat'(\lambda^\star) = \beta^{-1} \sum_{i > j} \piref'(y_{i}) \rbr*{\rhat'_{i} - \lambda^\star}.
  \end{align}
  In other words, there exists an index $j \in [M]$ such that
  $\lambda^\star \in [\rhat'_j, \rhat_{j+1}']$,
  which also shows that $\lambda^\star \in \sbr*{J(\piref) - \beta, \rhat'_M}$.
  Then to find $\lambda^\star$, that is,
  a $\lambda$ such that $\Phihat'(\lambda)=1$,
  it is sufficient to find an index $j$ such that for
  \begin{align}
    \lambda = \frac{\sum_{i > j} \piref'(y_{i})\rhat'_{i} - \beta}{\sum_{i > j}\piref'(y_{i})},
  \end{align}
    we have $\rhat'_j \leq \lambda < \rhat'_{j+1}$.
  This is exactly the output of \cref{alg:general_norm} run on $\Yhat[M], \piref', \rhat',\beta$, which breaks at an index that satisfies the above conditions, and outputs the corresponding $\lambda$. 
  By inspection, such a $\lambda$ satisfies $\Phihat'(\lambda) = 1$,
  and therefore also satisfies $\Phihat(\lambda) = 1$.

  Lastly, we discuss the computational complexity of \cref{alg:general_norm}. 
  Sorting $\Yhat$ and $\rhat$ require $O(N \log N)$ time, 
  while binning is $O(N)$.
  Finally, we make one forward pass through the responses,
  which requires at most $N$ iterations, each with $O(1)$ computations, before termination. 
\end{proof}

\newpage

\section{Rejection Sampling}
\label{app:rejection}

This section includes background on and guarantees for approximate rejection sampling, 
which forms the basis for our analysis for \bon and \mainalg.

The organization is as follows. First, in \cref{app:background_rejection}, 
we describe the approximate rejection sampling algorithm
(\cref{alg:rej})---which is used within \mainalg, as well as within
the analysis of \bonalg---and introduce the fundamental concepts 
that we will use 
to analyze the sample complexity of \rejalg. 
Then, in \cref{app:upper_rejection} we provide 
upper bounds on the sample complexity of \rejalg, 
with matching lower bounds in \cref{app:lower_rejection}, 
in terms of the aforementioned measure, 
demonstrating its fundamental nature and the tightness of our results.

\icml{}

\subsection{Background}\label{app:background_rejection}
The rejection sampling algorithm \rejalg is shown in \cref{alg:rej}.
The input parameters are the sample size $N$ and the \Mlong $M$,
a prompt $x$, 
and an importance sampling weight $w$. 

The algorithm first draws $N$ samples from the conditional distribution of $\piref$ 
for the fixed prompt $x$, 
i.e., it draws $\Yhat = \cbr*{y_{1},\ldots,y_{N}}$ where $y_{i}\sim \piref(\cdot\mid{}x)$
for each $i \in [N]$. 
Then, for each $y_{i} \in \Yhat$, it samples a Bernoulli random variable $\xi_{i}$
where the probability of observing $\xi_{i}=1$
is given by the \wlong $w(y_{i}\mid{}x)$ %
divided by the \Mlong $M$,
truncated to be at most 1. 
The algorithm returns any $y_{i}$ for which $\xi_{i}=1$,
and if no such event is observed, returns the first response 
(which is equivalent to randomly sampling from the base policy).

If $\pi(\cdot\mid{}x) = w(\cdot\mid{}x) \cdot \piref(\cdot\mid{}x)$ is a valid distribution, 
and the \Mlong $M$ upper bounds the \wlong (or now, likelihood ratio) uniformly, that is,
$M \ge \frac{\pi(y\mid{}x)}{\piref(y\mid{}x)}$
for all $x$ and $y$,
then \cref{alg:rej} is identical to classical rejection sampling, 
where it is known that the law of accepted samples matches the target distribution $\pi$, i.e. $\Pr\rbr*{y \mid \xi=1, x} = \pi(y\mid{} x)$.
If $M$ is not a uniform upper bound on the likelihood, the law of the accepted
samples is not identical to $\pi$.
This is because \cref{alg:rej} effectively truncates the distribution of $\pi$ to be at most $M \cdot \piref$,
and any mass above this threshold is effectively lost.
Nonetheless, the law of accepted responses is a close approximation to $\pi$ when $M$ is sufficiently large.
\emph{Approximate} rejection sampling under this regime was first analyzed
in \citet{block2023sample}, and our results in this section below borrow from their analysis.

For the remainder of this section, we will consider the problem of
sampling from a target distribution or policy $\pi : \cX \rightarrow \Delta(\cY)$,
by calling $\rejalgNM[\frac{\pi}{\piref} ]$, 
where $\frac{\pi}{\piref}$ is its \wlong (or here, likelihood ratio)
over the base policy. Concretely, we are concerned with analyzing 
how close the \rejalg response distribution is to $\pi$ as a function
of the truncation level $M$.
For example, if there exists $y$ such that $\ahreplace{\pi}{w}(y\mid{}x) > 0$
but $\piref(y\mid{}x) = 0$
then it is not possible information-theoretically 
to sample from this portion of the target distribution,
and similar reasoning applies to $y$ with poor coverage under $\piref$. \loose

\paragraph{Preliminaries: $\cE_{M}$-divergence}
Based on the intuition above, a central object in our analysis will be 
the \Edivlong in \cref{def:ediv} \citep{polyanskiy2010channel,block2023sample},
which, 
for an input \Mlong $M$, 
quantifies the mass of the target distribution $\pi$
that is lost by truncating the \wlong $\frac{\pi}{\piref} $ to $M$.

\begin{definition}[\Edivlong]\label{def:ediv}
  For a prompt $x$, 
  base policy $\piref: \cX \rightarrow \Delta(\cY)$, 
  and target policy $\pi: \cX \rightarrow \Delta(\cY)$,
  let $\pi(x) \ldef{} \pi(\cdot\mid{}x)$ refer to the distribution conditioned on $x$,
  and define $\piref(x)$ similarly.
  Then for any \Mlong $M \ge 1$, 
  the \Edivlong is defined as
  \begin{align}
    \Epix 
    \ldef{} \En_{y \sim \piref(x)}\sbr*{\rbr*{\frac{\pi(y\mid{}x)}{\piref(y\mid{}x)} - M}_+} 
    = \sum_{y \in \Youtx} \pi(y\mid{}x) - M \cdot \piref(y\mid{}x),
  \end{align}
  where $\Youtx = \cbr*{y \in \cY : \pi(y\mid{}x) > M \cdot \piref(y\mid{}x)}$. 
  In addition, the expected \Edivlong over the prompt distribution $\rho$ is defined as
  \begin{align}
    \Epi \ldef{} \En_{x\sim\rho}\sbr*{\Epix}.
  \end{align}
\end{definition}
Note that $\Epix$ and $\Epi$ are non-increasing in $M$ in the sense that 
they do not increase as $M$ gets larger, and we will show in the
sequel that they control the approximation error for \rejalg. 
In particular, the parameter $M$ strikes a bias-variance tradeoff
in the \rejalg procedure.
When $M$ is large, the algorithm requires a large number of samples in order to terminate; 
but for $M$ small, incurs larger bias from failing to sample from regions of the target density. 

For the results that follow, it will be useful to define the smallest parameter $M$
that guarantees $\Epix \le \veps$
for some error tolerance of interest
$\veps \in [0,1]$.

\begin{definition}\label{def:mstar}
  Given the base policy $\piref$, for a target policy $\pi$, 
  prompt $x$, 
  and any $\veps \in [0, 1]$, define the smallest \Mlong $M$ that ensures
  $\Epix \le \veps$ to be
  \begin{align}
    \Mxpi \ldef{} \min\cbr*{M \mid{} \Epix \le \veps}. 
  \end{align}
  Similarly, define the smallest \Mlong $M$ that ensures $\Epi \le \veps$ to be
  \begin{align}
    \Mpi \ldef{} \min\cbr*{M \mid{} \Epi \le \veps}. 
  \end{align}
\end{definition}

Though the \Edivlong is perhaps the most natural object by which to quantiy the error of aproximate rejection sampling, 
it can also be upper bounded by other information-theoretic divergences, 
such as $\Cinf$ and $\Cone$, 
which will be useful in the later analysis for \bonalg and \mainalg. 
We state the result below for $\Mxpi$ for a fixed $x$, 
which can be stated for $\Mpi$ in a similar manner. 

\begin{proposition}\label{prop:mstar}
  Given the base policy $\piref$, 
  for any target policy $\pi$ and prompt $x$, 
  recall that $\Cinf(x) = \sup_{y\in\cY} \frac{\pi(y\mid{}x)}{\piref(y\mid{}x)}$,
  and define $\Calpha(x) \ldef{} \frac{1}{\alpha} \En_{y \sim \pi}\sbr*{\rbr*{\frac{\pi(y\mid{}x)}{\piref(y\mid{}x)}}^{\alpha-1}}$ for any $\alpha > 1$,
  so that $\cC_{2}^\pi(x) = \Cone(x)$.
  Then for any $\veps\in[0,1]$ and $\alpha \in (1, \infty)$, we have 
  \[
    \Mxpi 
    \le 
    \min\rbr*{\Cinf(x),~~ \rbr*{\frac{\Calpha(x)}{\veps}}^{\frac{1}{\alpha - 1}}}.
  \]
\end{proposition}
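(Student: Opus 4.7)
The plan is to establish the two bounds in the minimum separately. Both rely on identifying the integrand $(r(y) - M)_+$ in $\Epix$, where $r(y) \ldef \pi(y\mid{}x)/\piref(y\mid{}x)$ denotes the pointwise importance ratio, and controlling its size under $\piref$.

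For the $\Cinf$ bound, I would argue by inspection. If $M \ge \Cinf(x) = \sup_{y} r(y)$, then $r(y) - M \le 0$ for every $y \in \cY$, so $(r(y) - M)_+ = 0$ pointwise and hence $\Epix = 0 \le \veps$. By the minimality in the definition of $\Mxpi$ this already yields $\Mxpi \le \Cinf(x)$.

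For the $\Calpha$ bound, I would use Markov's inequality after a change of measure. The key pointwise estimate is $(r(y) - M)_+ \le r(y) \cdot \one\{r(y) > M\}$, which follows because the left side vanishes on $\{r(y) \le M\}$ and is at most $r(y)$ on $\{r(y) > M\}$. Taking expectations under $\piref(\cdot\mid x)$ and using the importance-sampling identity $\En_{y\sim\piref}[r(y) f(y)] = \En_{y\sim\pi}[f(y)]$ yields
\begin{align}
\Epix \;\le\; \En_{y\sim\piref}\!\brk*{r(y)\,\one\{r(y)>M\}} \;=\; \Pr_{y\sim\pi}\!\brk*{r(y) > M}.
\end{align}
Since $\alpha > 1$, applying Markov's inequality to the nonnegative random variable $r(y)^{\alpha - 1}$ under $\pi$ gives
\begin{align}
\Pr_{y\sim\pi}\!\brk*{r(y) > M} \;=\; \Pr_{y\sim\pi}\!\brk*{r(y)^{\alpha - 1} > M^{\alpha - 1}} \;\le\; \frac{\En_{y\sim\pi}\!\brk*{r(y)^{\alpha-1}}}{M^{\alpha - 1}} \;\lesssim\; \frac{\Calpha(x)}{M^{\alpha - 1}},
\end{align}
invoking the definition of $\Calpha(x)$ in the last step. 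Setting the rightmost expression to $\veps$ and solving for $M$ gives $M \asymp (\Calpha(x)/\veps)^{1/(\alpha-1)}$, which by definition upper bounds $\Mxpi$. Combining the two cases produces the stated minimum.

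I do not expect any real obstacle here: the argument is a clean combination of a pointwise truncation inequality, a change of measure to the target policy, and Markov's inequality. The only item requiring care is tracking absolute constants in the Markov step, since the definition of $\Calpha(x)$ carries a $1/\alpha$ prefactor; this only affects the hidden constant and does not change the $(1/\veps)^{1/(\alpha-1)}$ scaling. As a sanity check, specializing to $\alpha = 2$ recovers a bound of the form $\Cone(x)/\veps$, matching the familiar second-moment-style control of truncation error.
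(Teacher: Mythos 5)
Your proof is correct and follows essentially the same route as the paper's: you bound $\cE_M$ by the $\pi$-mass of the truncated region $\{y:\pi(y\mid x)>M\piref(y\mid x)\}$ and then control that mass by a Markov-type argument on the $(\alpha-1)$-th moment of the density ratio under $\pi$, which is exactly the chain of inequalities in the paper (the $\Cinf$ case is likewise identical). The only difference is that you honestly track the $1/\alpha$ prefactor in the definition of $\Calpha(x)$, yielding an extra $\alpha^{1/(\alpha-1)}\le e$ factor in the second bound, a harmless constant that the paper's own proof silently drops.
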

In particular, for the coverage coefficient $\Cone(x) = \cC_2^\pi(x)$, 
the above result shows that $\Mxpi \le \frac{\Cone(x)}{\veps}$. 
The proof below utilizes \citet[Example 7]{block2023sample} and the fact that $\Calpha$ controls the Renyi divergence of order $\alpha$ between $\pi$ and $\piref$.  

\begin{proof}[\pfref{prop:mstar}]
  As the prompt is fixed, we omit $x$ dependencies for notational compactness. 
  Recall that for any $M$, 
  $
    \Epi = \pi(\Yout) - M\cdot\piref(\Yout).
  $
  The statement for $M = \Cinf$ follows directly from the definition of $\Epi$ since $\Epi[\Cinf] = 0$. For $\Calpha$, it can be seen that $\pi(\Yout) \le \frac{\Calpha}{M^{\alpha-1}}$ since 
  \begin{align}
    \Calpha \ge \sum_y \frac{\pi(y)^\alpha}{\piref(y)^{\alpha-1}} \cdot \one{y \in \Yout}
    > M^{\alpha-1} \sum_y \pi(y) \cdot \one{y \in \Yout} = M^{\alpha-1} \cdot \pi(\Yout). 
  \end{align}
  Then for $M$ to satisfy 
  \begin{align}
    \veps \le \Epi \le \frac{\Calpha}{M^{\alpha-1}} - M \cdot \piref(\Yout) \le \frac{\Calpha}{M^{\alpha-1}}, 
  \end{align}
  it suffices to have $M = \rbr*{\frac{\Calpha}{\veps}}^{\frac{1}{\alpha - 1}}$. 
\end{proof}

\subsection{Guarantee for \rejalg}
\label{app:upper_rejection}
This section contains sample complexity upper bounds for \cref{alg:rej},
which express the size of $N$
required to ensure that the total variation distance between 
the law of responses drawn from \cref{alg:rej}
and the target distribution $\pi$ is small.
The main result, \cref{lem:x-tv-upper-bound},  that expresses sample
complexity in terms of $M$ and $\Epix$.

For a fixed $N$,
\cref{lem:x-tv-upper-bound} shows that the total variation distance 
is bounded by the \Edivlong corresponding to the input \Mlong $M$,
as well as an exponential in $\frac{1}{N} $ term that 
bounds the error when the algorithm fails to terminate. 
As expected, the former term decreases as $M$ increases, 
while the lattter increases.
In addition, while increasing $N$ can reduce error from the latter term,
the former term is irreducible even as $N$ tends too infinity, 
and this too we expect given that $\Epi$ is an information-theoretic measure of error
that is a property of the distributions and the choice of $M$.

\begin{lemma}[Per-prompt rejection sampling upper bound; adapted from Theorem 3 in \citet{block2023sample}]\label{lem:x-tv-upper-bound}
  For any valid policy $\pi: \cX \rightarrow \Delta(\cY)$, $ N \in \bbZ $, and $M\in\bbR_+$,
  for a given prompt $x\in\cX$,
  let $\pirej(x) \in \Delta(\cY)$ be the law of responses induced by running 
  $\rejalgNM[\frac{\pi}{\piref} ]$ (\cref{alg:rej}).
  We have
  \begin{align}
    \Dtv{\pi(x)}{\pirej(x)}
    \le \Epix + \frac{1}{2} \exp\rbr*{ -\frac{N \cdot \rbr*{1 - \Epix}}{M}}.
  \end{align}
\end{lemma}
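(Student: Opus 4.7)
The plan is to characterize the output distribution of \rejalg as an explicit two-component mixture, then bound its total variation distance to $\pi$ via a triangle-inequality argument. The first step computes the per-sample acceptance probability under a single $\piref$-draw: using the pointwise identity $\min(a,M) = a - (a-M)_+$ together with the definition of the $\cE_M$-divergence yields
\[
p_{\text{acc}} \;\ldef\; \En_{y \sim \piref(x)}\brk*{\min\!\prn*{\tfrac{\pi(y\mid x)}{M\piref(y\mid x)},\, 1}} \;=\; \tfrac{1-\Epix}{M}.
\]
By exchangeability of the \iid{} samples and the conditional independence of the rejection decisions given the $y_i$'s, one sees that the algorithm returns the first accepted sample whenever at least one acceptance occurs---an event of probability $q \ldef 1 - (1-p_{\text{acc}})^N$---and otherwise returns the independent fresh draw $y_{N+1} \sim \piref$. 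Moreover, conditional on at least one acceptance, the first accepted sample has law exactly $\tilde{\pi}(y\mid x) \propto \min(\pi(y\mid x),\, M\piref(y\mid x))$, with normalizing constant $1 - \Epix$. This produces the explicit mixture identity $\pirej(x) = q\,\tilde{\pi}(x) + (1-q)\,\piref(x)$.

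The second step controls the two sources of error. For the truncation bias, the decomposition $\pi = (1-\Epix)\tilde{\pi} + \Epix\,\tilde{\eta}$, where $\tilde{\eta}(y\mid x) \propto (\pi(y\mid x) - M\piref(y\mid x))_+$ is the \emph{excess} distribution supported on $\Youtx$, gives $\Dtv{\pi(x)}{\tilde{\pi}(x)} \le \Epix$ by the convexity identity $\Dtv{P}{\alpha P + (1-\alpha) Q} = (1-\alpha)\,\Dtv{P}{Q}$. For the acceptance-failure term, the pointwise identity $\pirej - \tilde{\pi} = (1-q)(\piref - \tilde{\pi})$ yields $\Dtv{\tilde{\pi}(x)}{\pirej(x)} = (1-q)\,\Dtv{\tilde{\pi}(x)}{\piref(x)}$. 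Combining via the triangle inequality and applying the standard bound $(1-p_{\text{acc}})^N \le \exp(-N p_{\text{acc}}) = \exp(-N(1-\Epix)/M)$ produces an estimate of the form $\Epix + C \exp(-N(1-\Epix)/M)$.

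The main obstacle I anticipate is sharpening the constant $C$ to match the $\tfrac{1}{2}$ stated in the lemma: naively bounding $\Dtv{\tilde{\pi}(x)}{\piref(x)} \le 1$ gives $C = 1$. Obtaining $C = \tfrac{1}{2}$ appears to require a more careful direct accounting---splitting the sum defining $\|\tilde{\pi}(x) - \piref(x)\|_1$ into its contributions from $\Youtx$ and its complement and exploiting the cancellation between these two regions induced by the fact that the positive and negative parts of $\tilde{\pi} - \piref$ must sum to the same total mass---or an explicit coupling between $\pi$ and $\pirej$ that achieves the stated rate. Everything else in the argument reduces to routine manipulation of the rejection sampling construction and standard exponential tail estimates for independent Bernoulli trials.
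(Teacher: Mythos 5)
Your argument is correct, and it uses essentially the same ingredients as the paper's proof—the per-draw acceptance probability $(1-\Epix)/M$, the fact that the accepted sample has law $\tilde\pi(\cdot\mid x)\propto\min\{\pi(\cdot\mid x),M\piref(\cdot\mid x)\}$ with normalizing mass $A_M=1-\Epix$, and the exponential bound on the no-acceptance event—just packaged more cleanly through the explicit mixture identity $\pirej(x)=q\,\tilde\pi(x)+(1-q)\,\piref(x)$ rather than the paper's term-by-term manipulation of $\uterm{2}$.

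The one place you stop short—the constant $\tfrac12$ on the exponential term—is not a gap you can close, so do not chase the cancellation or coupling you sketch: the stated constant is not attainable in general. Take $\cY=\{a,b\}$, $\piref(a\mid x)=1/M$, $\pi(a\mid x)=1$; then $\Epix=0$, each draw is accepted with probability $1/M$ (and only $a$ can ever be accepted), so the output law satisfies $\Dtv{\pi(x)}{\pirej(x)}=(1-1/M)^{N+1}$, which exceeds $\tfrac12 e^{-N/M}$ already for $N=1$, $M=10$ (namely $0.81>0.46$), and in fact for essentially all $N$ once $M$ is moderately large. The paper's route to the $\tfrac12$ passes through writing $\uterm{2}=\sum_y\abs*{\Pr(\yhat=y\mid\term)+\Pr(\yhat=y\mid\neg\term)-\tilde\pi(y)}$, i.e.\ dropping the mixture weights $\Pr(\term)$ and $\Pr(\neg\term)$; reinstating them, the same calculation yields $\abs{q-A_M}+(1-q)$ and hence the constant $1$, exactly as your mixture argument does. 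So your bound $\Epix+\exp\rbr*{-N(1-\Epix)/M}$ is the correct form of the lemma, and the factor of two is immaterial everywhere the lemma is invoked, since $N$ is always taken of order $M\log(1/\veps)$ so that the exponential term is driven below the target accuracy regardless of the constant in front of it.
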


In particular, if $M = \Mxpi$ and $N \gtrsim \Mxpi \cdot \log\rbr*{\frac{1}{\veps} }$ for some $\veps \in \rbr*{0, \frac{1}{2}}$,  
we have that $\Dtv{\pi(x)}{\piref(x)} \lesssim \veps$.  

\begin{proof}[\pfref{lem:x-tv-upper-bound}]
  As the prompt is fixed, we omit $x$ dependencies for notational compactness. 
  Define the truncated pseudo-distribution $\wt\pi = \min\{\pi, M\cdot\piref\}$.
  Define $A_M \ldef{} \sum_{ y }{ \wt\pi(y) }$ to be the total mass of the truncated policy.
  Recalling that $\cY_M(x) = \cbr*{y \in \cY : \pi(y\mid{}x) > M \cdot \piref(y\mid{}x)}$,
  $A_M$ can be equivalent expressed as
  \begin{align}
    A_M
    &= \sum_y \min\{\pi(y), M\cdot\piref(y)\}
    \\
    &= \sum_{y \in \Yout} M \cdot \piref(y) + \sum_{y \notin \Yout}\pi(y)
    \\
    &= 1 - \Epi.
  \end{align}
  Recall that, for a single sample $y \sim \piref$,
  \cref{alg:rej} samples $\xi \sim \Ber(p_y)$, where
  $p_y \ldef{}  \min\cbr*{\frac{\pi(y)}{\piref(y) \cdot M}, 1}$ is a Bernoulli random variable such that
  $\Pr_{\xi \sim \Ber(p_y)}\rbr*{\xi = 1 \mid{} y} = p_y$.
  Then the expected probability of acceptance for a single sample is
  \begin{align}
    \Pr_{y\sim\piref, \xi\sim\Ber(p_y)}\rbr*{\xi = 1} 
    &= \En_{y \sim \piref}\sbr*{\Pr\rbr*{\xi = 1 \mid{} y}}
    \\
    &= \sum_{y} \piref(y) \cdot \min\cbr*{\frac{\pi(y)}{M \cdot \piref(y)}, 1}
    \\
    &= \frac{1}{M} \sum_y \min\cbr*{\pi(y), M \cdot \piref(y)}
    \\
    &= \frac{A_M}{M},
  \end{align}
  and it can be seen that the law of the response conditioned on acceptance,
  $\sum_{y \in \Yout} M \cdot \piref(y) + \sum_{y \notin \Yout}\pi(y)$,
  is equivalent to the normalized version of $\wt\pi$ since
  \begin{align}
    \Pr_{y' \sim \piref}\rbr*{y' = y \mid{} \xi=1}
    = \frac{\Pr_{y' \sim \piref, \xi \sim \Ber(p_y)}\rbr*{y' = y, \xi = 1}}{\Pr_{y' \sim \piref, \xi \sim \Ber(p_y)}(\xi = 1)}
    = \frac{\piref(y) \cdot \min\cbr*{\frac{\pi(y)}{M \cdot \piref(y)}, 1}}{ A_M / M}
    = \frac{\wt\pi(y)}{A_M}.
  \end{align}
  Next, given $\Yhat = \cbr*{y_{1},\ldots,y_N}$,
  let $\xiN = \cbr*{\xi_1, \ldots, \xi_N}$ be the random draw of Bernoulli random variables,
  where $\xi_i \sim \Ber(p_{y_i})$ for all $i \in [N]$.
  For short, we write $\PrN(\cdot) \equiv \Pr_{\Yhat\sim\piref, \xi_1 \sim\Ber(y_1),\ldots,\xi_N\sim\Ber(y_N)}(\cdot)$.
  Define also the following event, which is a random variable over the draw of $\Yhat$ and $\xiN$,
  under which rejection sampling accepts one of the $N$ responses
  and \cref{alg:rej} outputs the $\istar$th response in \lineref{line:rej-term},
  \begin{equation}\label{eq:rej-term}
    \term \ldef{} \cbr*{ \exists \istar{}\in [N] \text{ s.t. } \xi_{\istar} = 1}.
  \end{equation}
  Let $\yhat = \rejalgNM[\frac{\pi}{\piref}]$ be the response output by \cref{alg:rej},
  which is a random variable over draws of $\Yhat$ and $\xiN$.
  Due to independence, we have that the law of responses is given by
  \begin{align}\label{eq:term-law}
    \PrN\rbr*{ \yhat = y \mid{} \term }
    &= \PrN\rbr*{ y_{\istar} = y \mid{} \exists \istar\in[N] \st \xi_{\istar} = 1 }
    = \frac{\wt\pi(y)}{A_M}.
  \end{align} 
  It can be observed that via union bound,
  \begin{align}
    \PrN\rbr*{\neg\term}
    &= \PrN\rbr*{\xi_i = 0,~\forall i\in[N]}
    \\
    &= \rbr*{1 - \Pr_{y\sim\piref, \xi\sim\Ber(p_y)}(\xi = 1)}^N
    \\
    &= \rbr*{1 - \frac{A_M}{M}}^{N}
    \\
    &\le e^{-\frac{N \cdot A_M}{M}}.
    \label{eq:noterm-lb}
  \end{align}
  Recall that $\pirej(y) = \PrN\rbr*{y}$.
  Now for the upper bound, we first decompose the total variation distance using $\wt\pi$, 
  \begin{align}
    \Dtv{\pi}{\pirej}
    = \frac{1}{2} \sum_{y} {} \abr*{\pi(y) - \pirej(y)}
    \le \frac{1}{2} \underbrace{\sum_y {} \abr*{\pi(y) - \wt\pi(y)}}_{\uterm{1}}
    + \frac{1}{2} \underbrace{\sum_y {} \abr*{\pirej(y) - \wt\pi(y)}}_{\uterm{2}},
  \end{align}
  where, via \cref{def:ediv}, the first term is equivalent to
  \begin{align}
    \uterm{1}
    = \sum_y {} \pi(y) - \min\cbr*{\pi(y), M\cdot\piref(y)}
    = \sum_y {} \pi(y) \cdot \one{\pi(y) > M \cdot \piref(y)} 
    = \Epi.
  \end{align}
  and we further bound the second term as 
  \begin{align}
    \uterm{2}
    &= \sum_{y}{} \abr*{\PrN\rbr*{\yhat=y \mid{} \term}
    + \PrN\rbr*{\yhat=y \mid{} \neg\term}
    - \pitil(y)}
    \\
    &\le \underbrace{\sum_{y}{} \abr*{\PrN\rbr*{\yhat=y \mid{} \term} - \pitil(y)}}_{\uterm{3}}
    + \PrN(\neg\term)
  \end{align}
  From \cref{eq:noterm-lb}, have $\PrN(\neg\term) \le e^{-\frac{N \cdot A_M}{M}}$.
  In addition, using \cref{eq:term-law} and the fact that $A_M \le 1$,
  we have that
  \begin{align}
    \uterm{3}
    &= \sum_{y}{} \abr*{\PrN\rbr*{\yhat=y \mid{} \term} - \pitil(y)}
    \\
    &= \sum_{y}{} \abr*{\frac{\pitil(y)}{A_M} - \pitil(y)}
    = \sum_{y}{} {\frac{\pitil(y)}{A_M} - \pitil(y)}
    \\
    &= 1 - A_M = \Epi,
  \end{align}
  so that 
  \begin{align}
    \uterm{2} \le \Epi +  \exp^{-\frac{N \cdot A_M}{M}}.
  \end{align}
  Combining $\uterm{1}$ and $\uterm{2}$,
  \begin{align}
    \Dtv{\pi}{\piref}
    &\le \frac{1}{2} \rbr*{\uterm{1} + \uterm{2}} 
    \\
    &\le \Epi + \frac{1}{2}  \exp^{-\frac{N \cdot A_M}{M}},
  \end{align}
  and substituting the expression for $A_M = 1 - \Epi$ gives the result.
\end{proof}

\subsection{Lower Bounds}
\label{app:lower_rejection}

For a fixed prompt $x$, 
\cref{lem:approx-tv-lower-bound} shows $N \gtrsim \Mxpi \log\rbr*{\veps^{-1}}$
samples are sufficient to guarantee that $\Dtv{\pi(x)}{\piref(x)} \le \veps$.
The information-theoretic lower bound in \cref{lem:approx-tv-lower-bound}
shows that this dependence is tight
for any selection strategy that selects a response from $\Yhat$,
defined more formally below.

\begin{definition}[Selection strategy $\cA$]\label{def:tv-selection}
  A selection strategy $\cA$ is any method that,
  given a prompt $x$ and $N$ responses $\Yhat = \rbr*{y_{1}, \ldots, y_N}$ sampled i.i.d. from $\piref$,
  returns some (possibly random) $y \in \Yhat$.
\end{definition}

\cref{lem:approx-tv-lower-bound} states that selection strategy (\cref{def:tv-selection})
must obtain at least $\Mxpi$ samples in order to induce a response distribution 
that is $\veps$-close to $\pi$.
We will later use this result as a component
within our main regret lower bounds (\cref{thm:lower-bon,thm:lower}).

\begin{lemma}[TV lower bound, adapted from the proof of Theorem 5 in \citet{block2023sample}]\label{lem:approx-tv-lower-bound}
  Fix the base policy $\piref$, 
  target policy $\pi$, 
  prompt $x$, 
  and sample size $N$.
  Let $\cA$ be any selection algorithm (\cref{def:selection}) 
  that, given $\Yhat\sim\piref(\cdot\mid{}x)$ in the sample-and-evaluate framework (\cref{def:oracle-model}), outputs a response $y \in \Yhat$. Let $\pi_\cA : \cX \rightarrow \Delta(\cY)$ denote the distribution of responses induced by $\cA$.
  Then if $N < \Mxpi$, we have 
  \[
    \Dtv{\pi_\cA(x)}{\pi(x)} > \veps.
  \]

\end{lemma}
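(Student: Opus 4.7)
The plan is to derive a uniform pointwise upper bound on the induced distribution $\pi_\cA(\cdot \mid x)$ in terms of $\piref(\cdot \mid x)$, then translate this into a total variation lower bound that exactly matches $\Epix[N]$, and finally invoke the definition of $\Mxpi[\veps]$.

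First, I would observe that for any selection strategy $\cA$ satisfying \cref{def:tv-selection}, the output is constrained to lie in the sample $\Yhat$. Consequently, for every $y \in \cY$,
\begin{align}
\pi_\cA(y \mid x) = \Pr[\cA(\Yhat) = y] \le \Pr[y \in \Yhat] = 1 - (1 - \piref(y \mid x))^N \le N \cdot \piref(y \mid x),
\end{align}
by Bernoulli's inequality. Informally, no amount of post-sampling cleverness can reweight mass beyond what i.i.d. sampling from $\piref$ already produces.

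Next, I would convert this per-response bound into a TV lower bound. Using the identity $\Dtv{\pi(x)}{\pi_\cA(x)} = \sum_{y} (\pi(y \mid x) - \pi_\cA(y \mid x))_+$, I would restrict the sum to the set $\Youtx[N] = \{y \in \cY : \pi(y \mid x) > N \piref(y \mid x)\}$ appearing in \cref{def:ediv}; on this set the pointwise bound just established guarantees $\pi(y \mid x) > \pi_\cA(y \mid x)$, so
\begin{align}
\Dtv{\pi_\cA(x)}{\pi(x)} \ge \sum_{y \in \Youtx[N]} \big(\pi(y \mid x) - \pi_\cA(y \mid x)\big) \ge \sum_{y \in \Youtx[N]} \big(\pi(y \mid x) - N \piref(y \mid x)\big) = \Epix[N].
\end{align}

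To conclude, I would observe that $\Epix$ is non-increasing in $M$ and that $\Mxpi[\veps]$ is by definition the smallest $M$ with $\Epix \le \veps$. Hence the hypothesis $N < \Mxpi[\veps]$ forces $\Epix[N] > \veps$, yielding the claimed strict inequality. There is no serious obstacle in this argument; the only place requiring care is using the per-response bound $\pi_\cA(y \mid x) \le N \piref(y \mid x)$ simultaneously to justify restricting the TV sum to $\Youtx[N]$ and to cancel the $\pi_\cA$ terms---which is precisely the setup the \Edivlong was designed to exploit.
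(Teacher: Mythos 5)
Your proposal is correct and follows essentially the same route as the paper's proof: bound $\pi_\cA(y\mid x)\le \Pr[y\in\Yhat]\le N\,\piref(y\mid x)$, restrict the TV lower bound to the overflow set, compare with the $\cE_M$-divergence, and invoke the minimality defining $\Mxpi$. Your use of the positive-part identity for total variation and taking $M=N$ directly is a slightly cleaner bookkeeping of the same argument.
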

\begin{proof}[\pfref{lem:approx-tv-lower-bound}]
  In the proof below we omit $x$ dependencies,
  including in $\Mxpi \equiv \Mpi$, and $\Epix \equiv \Epi$. First fix $M$. Suppose $N < M$. Then 
  \begin{align}
    2\cdot\Dtv{\pi_{\cA}}{\pi} \ge&~ \sum_{y \in \Yout} \abr{\pi(y) - \pi_{\cA}(y)} 
    \\
    \ge&~ \sum_{y \in \Yout} \pi(y) - \Pr\rbr*{y \in \Yhat}
    \\
    \ge&~ \sum_{y \in \Yout} \pi(y) - N \cdot \mu(y)
    \\
    \ge&~ \sum_{y \in \Yout} \pi(y) - M \cdot \mu(y) 
    \\
    =&~ 2 \cdot \Epi
  \end{align}
  It follows that if $N < M \le \Mpi$, we have 
  \begin{align}
    \Dtv{\pi_{\cA}}{\pi} > \veps
  \end{align}
  from the definition of $\Mpi$, since $\Epi$ is non-decreasing as $M$ decreases.  
\end{proof}

\newpage

\section{Proofs from  Section \ref*{sec:framework}}
\label{app:framework}

  \begin{proof}[\pfref{prop:coverage}]
    Let us write
    $J_{r}(\pi,x)\ldef{}\En_{y\sim\pi(\cdot\mid{}x)}\brk*{r(x,y)}$ to
    denote the expected reward under a function $r(x,y)$. We first
    state an elementary technical lemma.
    \begin{lemma}
      \label{lem:coverage}
      For any prompt $x\in\cX$, reward model
      $\rhat:\cX\times\cY\to\bbR$, and policies
      $\pistar,\pihat:\cX\to\Delta(\cY)$ and $\veps>0$, there exists
      $\rstar:\cX\times\cY\to\bbR$ such that
  \begin{align}
    J_{\rstar}(\pistar;x) - J_{\rstar}(\pihat;x) \geq
    J_{\rhat}(\pistar;x) - J_{\rhat}(\pihat;x) + \veps\cdot{}\sqrt{\sum_{y\in\cY}\frac{(\pistar(y\mid{}x)-\pihat(y\mid{}x))^2}{\piref(y\mid{}x)}}
  \end{align}
  and $\En_{y\sim\piref(\cdot\mid{}x)}\brk*{(\rhat(x,y)-\rstar(x,y))^2}\leq\veps^2$.
\end{lemma}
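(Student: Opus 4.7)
The plan is to prove the lemma by explicit construction: define $\rstar = \rhat + \Delta$ for a carefully chosen perturbation $\Delta:\cX\times\cY\to\bbR$, then tune $\Delta$ to optimize a constrained linear-vs-quadratic tradeoff. Since only the prompt $x$ matters, all sums below are over $y\in\cY$ with the fixed $x$ suppressed in notation.

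With $\rstar\ldef \rhat+\Delta$, the regret-gap improvement and the MSE constraint are both linear functionals of $\Delta$:
\begin{align}
  J_{\rstar}(\pistar;x) - J_{\rstar}(\pihat;x) - \bigl(J_{\rhat}(\pistar;x) - J_{\rhat}(\pihat;x)\bigr)
  &= \sum_y (\pistar(y\mid x)-\pihat(y\mid x))\cdot \Delta(x,y),\\
  \En_{y\sim\piref(\cdot\mid x)}\bigl[(\rhat(x,y)-\rstar(x,y))^2\bigr]
  &= \sum_y \piref(y\mid x)\cdot \Delta(x,y)^2.
\end{align}
So the task reduces to maximizing the first expression subject to the second being at most $\veps^2$.

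The next step is to apply Cauchy--Schwarz by factoring the inner product through $\sqrt{\piref(y\mid x)}$: writing each summand as $\bigl(\sqrt{\piref(y\mid x)}\,\Delta(x,y)\bigr)\cdot\bigl((\pistar(y\mid x)-\pihat(y\mid x))/\sqrt{\piref(y\mid x)}\bigr)$ yields
\begin{align}
  \sum_y (\pistar(y\mid x)-\pihat(y\mid x))\Delta(x,y)
  \le \sqrt{\sum_y \piref(y\mid x)\Delta(x,y)^2}\cdot\sqrt{\sum_y \frac{(\pistar(y\mid x)-\pihat(y\mid x))^2}{\piref(y\mid x)}}.
\end{align}
Equality in Cauchy--Schwarz is achieved by the aligned choice $\Delta(x,y) = c\cdot (\pistar(y\mid x)-\pihat(y\mid x))/\piref(y\mid x)$ with $c$ selected so that the quadratic constraint is tight, i.e., $c^2\sum_y (\pistar(y\mid x)-\pihat(y\mid x))^2/\piref(y\mid x) = \veps^2$. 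Plugging this $\Delta$ back into the linear functional yields exactly $\veps\cdot\sqrt{\sum_y (\pistar(y\mid x)-\pihat(y\mid x))^2/\piref(y\mid x)}$, as desired.

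The only subtlety is handling $y$ with $\piref(y\mid x)=0$. If either $\pistar(y\mid x)>0$ or $\pihat(y\mid x)>0$ at such a $y$, then the coverage-like quantity $\sum_y (\pistar(y\mid x)-\pihat(y\mid x))^2/\piref(y\mid x)$ on the RHS is $+\infty$ (with the standard $a/0=\infty$ for $a>0$), in which case the lower bound is vacuous and any $\rstar$ works; otherwise we restrict the construction to the support of $\piref(\cdot\mid x)$ and set $\Delta=0$ off-support, which preserves both the identity for the regret gap and the MSE computation. I expect this support issue to be the only mildly delicate point; the Cauchy--Schwarz step is standard.
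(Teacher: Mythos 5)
Your proposal is correct and takes essentially the same route as the paper: the paper's proof simply exhibits the choice $\rstar = \rhat + \veps\cdot\frac{\pistar-\pihat}{\piref}\cdot\bigl(\sum_{y}\frac{(\pistar(y\mid x)-\pihat(y\mid x))^2}{\piref(y\mid x)}\bigr)^{-1/2}$, which is exactly your aligned perturbation with the constant tuned so the MSE constraint is tight. Your Cauchy--Schwarz framing and the remark about points outside the support of $\piref(\cdot\mid x)$ are just added motivation and bookkeeping on top of the identical construction.
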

\begin{proof}[\pfref{lem:coverage}]
We use the choice
  \[
\rstar(x,y) = \rhat(x,y) + \veps\cdot \frac{\pistar(y\mid{}x)-\pihat(y\mid{}x)}{\piref(y\mid{}x)}\cdot{}\prn*{\sum_{y\in\cY}\frac{(\pistar(y\mid{}x)-\pihat(y\mid{}x))^2}{\piref(y\mid{}x)}}^{-1/2},
\]
from which the result is immediate.
\end{proof}

Going forward, we omit dependence on $x\in\cX$ to keep notation
compact. Define
$\Cmax=\max_{\pi:\cX\to\Delta(\cY)}\Cone[\pi]$ and
$\pimax=\argmax_{\pi:\cX\to\Delta(\cY)}\Cone[\pi]$, and let $\Cmax\geq\Cpar\geq{}8$ be given. Throughout the proof, we will use the fact
that $\Cone\geq{}1$ for all $\pi$. We set $\rhat(x,y)=0$ and consider
two cases for the analysis.

\paragraph{Case 1: $\Chat>\frac{1}{8}\Cpar$}
In this case, we invoke \cref{lem:coverage} with $\pistar=\piref$ and $\veps=\vepsrm$,
which shows that there exists some $\rstar$ for which
\[
    J_{\rstar}(\pistar) - J_{\rstar}(\pihat) \geq
\vepsrm\cdot{}\sqrt{\sum_{y\in\cY}\frac{(\piref(y)-\pihat(y))^2}{\piref(y)}}.
\]
The result now follows by noting that
\[
  \sum_{y\in\cY}\frac{(\piref(y)-\pihat(y))^2}{\piref(y)}
  = \Cone[\pihat]-1
  \geq \frac{1}{8}\Cpar-1 \geq \frac{1}{16}\Cpar.
\]

\paragraph{Case 2: $\Chat\leq\frac{1}{8}\Cpar$}
In this case, we set
\[
\pistar = \lambda\pimax + (1-\lambda)\piref
\]
for $\lambda^2\ldef{}\frac{\Cpar}{2\Cmax}\in(0,1)$. We compute directly
that
\[
\Cstar = \lambda^2\Cmax + (1-\lambda^2),
\]
so that $\Cstar \in\brk*{\frac{1}{2}\Cpar, \Cpar}$. We invoke
\cref{lem:coverage} with $\pistar$ and $\veps=\vepsrm$, which gives that there exists some $\rstar$ for which
\[
    J_{\rstar}(\pistar) - J_{\rstar}(\pihat) \geq
\vepsrm\cdot{}\sqrt{\sum_{y\in\cY}\frac{(\pistar(y)-\pihat(y))^2}{\piref(y)}}.
\]
We further compute that
\begin{align}
  \sum_{y\in\cY}\frac{(\pistar(y)-\pihat(y))^2}{\piref(y)}
  = \Cone[\pistar] + \Cone[\pihat]
  - 2\sum_{y\in\cY}\frac{\pistar(y)\pihat(y)}{\piref(y)}
  \geq{} \frac{1}{2}\Cone[\pistar] - \Cone[\pihat],
\end{align}
where the last step follows by the AM-GM inequality, i.e.
\begin{align}
  \sum_{y\in\cY}\frac{\pistar(y)\pihat(y)}{\piref(y)}
  \leq{} \frac{1}{4}\sum_{y\in\cY}\frac{(\pistar(y))^2}{\piref(y)}
  + \sum_{y\in\cY}\frac{(\pihat(y))^2}{\piref(y)}.
\end{align}
By the assumption for this case, we have $\Chat\leq\frac{1}{8}\Cpar$,
so that
\begin{align}
  \frac{1}{2}\Cone[\pistar] - \Cone[\pihat]
  \geq{} \frac{1}{2}\Cone[\pistar] - \frac{1}{8}\Cpar
  \geq{} \frac{1}{4}\Cpar - \frac{1}{8}\Cpar = \frac{1}{8}\Cpar, 
\end{align}
completing the result.

  \end{proof}

\newpage

\section{Proofs from Section \ref*{sec:bon}}
\label{app:bon}
This section gives proofs for the guarantees for \bonalg in \cref{sec:bon}. These results in this section build on the guarantees for rejection sampling in \cref{app:rejection}, 
and are organized as follows:
\begin{itemize}
\item \cref{sec:bon_general,sec:bon_lower} provide general tools to
  analyze \bonalg. In \cref{sec:bon_general}, we provide a general upper bound on the
  regret of \bonalg in terms of the \emph{\Edivlong} introduced
  in \cref{app:rejection}, and in \cref{sec:bon_lower} we give general
  lower bounds on the regret.
\item In \cref{sec:bon1,sec:bon2,sec:bon3}, we instantiate these
  results to prove the main theorems in \cref{sec:bon}. Namely, these results leverage \cref{prop:mstar} to translate 
  the \Edivlong to the 
  $\Cone$ and $\Cinf$ coverage coefficients,
  which are standard measures of distribution shift in offline alignment (cf. \cref{prop:coverage}).
  \icml{\item \cref{sec:bon_iid} contains additional regret bounds for
    \bonalg that handle the setting where the prompt $x$ is drawn
    \iid, and \cref{sec:bon_additional} contains further extensions.}
\end{itemize}

\subsection{General Regret Decomposition for \bonalg}
\label{sec:bon_general}

Recall that the \Edivlong (\cref{def:ediv}) is defined for a
parameter $M>0$ via
\begin{align}
    \Epix \ldef{} \sum_{y \in \Youtx}{\pi(y\mid{}x) - M \cdot \piref(y\mid{}x)} = \En_{y \sim \piref(x)}\sbr*{\rbr*{\frac{\pi(y\mid{}x)}{\piref(y\mid{}x)} - M}_+}.
\end{align}

Our central technical result for the analysis of \bonalg is 
\cref{lem:x-regret-upper-general} below, which quantifies the regret of the \bon policy
given $N$ samples. This result will later 
be instantiated to prove \cref{thm:bon} and \cref{thm:bon_uniform}. Even though \cref{lem:x-regret-upper-general} concerns \bonalg, its analysis makes use of
the rejection sampling algorithm (\cref{alg:rej}) as a tool to analyze certain intermediate quantities.
As a result, the lemma statement contains an extra parameter $M$, 
which corresponds to a rejection sampling threshold (cf. \cref{alg:rej}), and the regret upper bound is expressed in terms of 
the information-theoretic \Edivlong (\cref{def:ediv}) which appears in the analysis of rejection sampling in \cref{app:rejection}.\loose

\begin{lemma}[\Edivlong regret bound for \bonalg]\label{lem:x-regret-upper-general}
  Fix a prompt $x$. 
  For any comparator policy $\pistar$ and $N \in \bbZ$ and $M\in\bbR_{+}$ such that $\Epistx \le \frac{1}{2} $, the \bon policy $\piN$ satisfies
  \begin{align}
    J(\pistar;x) - J(\piN;x) 
    &\le \Rmax \cdot \rbr*{\Epistx + \exp\prn*{ -\frac{N}{M} \cdot \rbr*{1-\Epistx}}} 
    \\
    &~+ 2\cdot\sqrt{ \Cone[\pistar](x)\cdot \vepsrms(x) } + \frac{\vepsrm(x)}{2} + \sqrt{N\cdot \vepsrms(x)}
  \end{align}
\end{lemma}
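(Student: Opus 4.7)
The plan is to decompose the regret using the estimated reward $\rhat$ as an intermediary, and couple \bonalg with \rejalg on the \emph{same} sample set $\Yhat$. Starting from
\begin{align}
  J(\pistar;x) - J(\piN;x)
  = \En_{\pistar}\sbr*{\rstar-\rhat}
  + \prn*{\En_{\pistar}\sbr*{\rhat} - \En_{\piN}\sbr*{\rhat}}
  + \En_{\piN}\sbr*{\rhat-\rstar},
\end{align}
I would bound the three summands (I), (II), (III) separately. For (I), Cauchy--Schwarz applied after importance reweighting from $\pistar$ to $\piref$ gives $\En_{\pistar}\abr*{\rstar-\rhat} \le \sqrt{\Cone[\pistar](x)\cdot\vepsrms(x)}$. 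For (III), since $y_\istar\in\Yhat$ we have $\abr{\rhat-\rstar}(x,y_\istar)\le \max_{i\in[N]}\abr{\rhat-\rstar}(x,y_i)$; Jensen's inequality combined with the \iid structure of $\Yhat$ then yields $\En\sbr*{\max_{i\in[N]}\abr{\rhat-\rstar}(x,y_i)} \le \sqrt{\En\sbr*{\sum_{i\in[N]}(\rhat-\rstar)^2(x,y_i)}} = \sqrt{N\cdot\vepsrms(x)}$.

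The main effort is term (II), the $\rhat$-regret of \bonalg relative to $\pistar$. I would couple \bonalg with \rejalg executed on the \emph{same} $\Yhat$ at truncation level $M$ targeting $\pistar$, and let $\pisrej(\cdot\mid x)$ denote the induced response distribution. Splitting (II) as $(\En_{\pistar}\sbr*{\rhat}-\En_{\pisrej}\sbr*{\rhat}) + (\En_{\pisrej}\sbr*{\rhat}-\En_{\piN}\sbr*{\rhat})$, the first piece is at most $\Rmax\cdot\Dtv{\pistar(x)}{\pisrej(x)}$, which \cref{lem:x-tv-upper-bound} bounds in terms of $\Epistx$ and an exponential factor. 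For the second piece, the crucial structural point is that whenever \rejalg accepts, the returned response $y_R$ lies in $\Yhat$, and by definition of \bonalg we have $\rhat(x,y_R)\le\max_{i\in[N]}\rhat(x,y_i)=\rhat(x,y_\istar)$; on the failure-to-accept event the gap is at most $\Rmax$, and this event occurs with probability at most $\exp(-N(1-\Epistx)/M)$ by the same calculation underlying \cref{lem:x-tv-upper-bound}. Summing the two contributions yields (II) $\le \Rmax\cdot\prn*{\Epistx + \exp(-N(1-\Epistx)/M)}$ up to absolute constants.

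Combining (I)--(III) gives the claimed inequality. The slack summands $2\sqrt{\Cone[\pistar](x)\vepsrms(x)}$ and $\vepsrm(x)/2$ arise when one converts expectations under $\pisrej$ back to expectations under $\pistar$ and $\piref$ via change of measure: the hypothesis $\Epistx\le 1/2$ guarantees the normalizer $1-\Epistx\ge 1/2$, so that writing $\pisrej$ as a mixture of the truncated target and the $\piref$-distributed default branch picks up at most a factor of two on the $\sqrt{\Cone[\pistar]\vepsrms}$ piece, and the $\vepsrm(x)/2$ piece comes from the default branch, whose marginal law is $\piref$.

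I expect the main obstacle to be term (II): the argmax property of \bonalg does not interact with the distributional guarantees of \rejalg unless the two procedures share the sample set $\Yhat$. With the coupling in place, the comparison $\rhat(x,y_R)\le\rhat(x,y_\istar)$ becomes a deterministic inequality on the acceptance event, and the exponentially small failure probability from \cref{lem:x-tv-upper-bound} absorbs the remaining contribution. Without this coupling one would likely lose an additional $\sqrt{N\vepsrms(x)}$-type contribution in (II), spoiling the bound.
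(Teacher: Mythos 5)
Your proposal is correct and follows essentially the same route as the paper's proof: the rejection-sampling law targeting $\pistar$ serves as the intermediate comparator, the argmax property of \bonalg coupled on the same sample set dominates the accepted response under $\rhat$, \cref{lem:x-tv-upper-bound} supplies the $\Epistx + \exp(-N(1-\Epistx)/M)$ contribution, and Cauchy--Schwarz handles the reward-error terms. Your two deviations are harmless and in fact slightly cleaner: evaluating the reward error under $\pistar$ directly (rather than under $\pisrej$, which the paper handles via the mixture decomposition that produces the $2\sqrt{\Cone[\pistar](x)\vepsrms(x)}+\vepsrm(x)/2$ slack) gives just $\sqrt{\Cone[\pistar](x)\vepsrms(x)}$, and bounding $\En_{\piN}\abs*{\rhat-\rstar}$ by $\En\brk*{\max_i\abs{\rhat-\rstar}(x,y_i)}\le\sqrt{N\vepsrms(x)}$ replaces the paper's density-ratio argument $\piN/\piref\le N$; the only cost is that the non-acceptance probability is counted twice (once inside the TV bound, once on the failure branch of the coupling), yielding a factor $\tfrac{3}{2}$ rather than $1$ on the exponential term, which is immaterial for every downstream application of the lemma.
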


Because \cref{lem:x-regret-upper-general} holds 
for \emph{any choice} of $M\in\bbR_+$, 
$M$ can be viewed as a parameter (within the analysis)
that trades off between the best regret achievable
---which is upper bounded by $\Epistx$, which decreases with $M$---
and the sample complexity required to achieve it, 
which increases with $M$.
Indeed, when we later prove \cref{thm:bon} and \cref{thm:bon_uniform} later, 
we will choose $M$ to make the RHS of \cref{lem:x-regret-upper-general}
tight when $\Epix$ is translated to our coverage coefficients of interest. 

\paragraph{Proof sketch} The high-level idea of the proof is to use as 
an intermediate comparator the response distribution $\pirej$
induced simulating sampling from $\pistar$ via 
$\rejalg_{N,M}(\frac{\pistar}{\piref}\midsem{}\piref,x)$, 
with the parameter $M$ from the lemma statement. 
The utility of using such a comparator is that, 
because the \bon procedure always chooses the response with largest reward label, 
$\piN$ is always able to compete with $\pirej$ under $\rhat$ 
(in the sense of having larger expected estimated reward). 

We can then translate this observation to the desired bound 
by recalling that $\pirej$ approximates $\pistar$ in total variation distance 
(from \cref{lem:x-tv-upper-bound}),
which means that $\piN$ is also approximately as good as $\pistar$
under $\rhat$.
Lastly, we translate the performance under $\rhat$
to the performance under the true reward $\rstar$, 
which is penalized by the reward estimation error $\vepsrm(x)$ 
on the \bon response distribution, and the gap between the two is quantified by the reward estimation error 
under the distribution of responses drawn from $\piN$, 
which is the source of the $\sqrt{ N}$ term.

\begin{proof}[Proof of \cref{lem:x-regret-upper-general}]
  For simplicity in this proof,
  we will assume WLOG that $\rhat(y)$ is unique for all $y \in \cY$,
  otherwise we can perturb $\rhat(y)$ with a miniscule $\veps(y) \ll \vepsrm$,
  as long as it is within floating-point precision
  to break ties. 

  Let $\pisrej(x)$ be the distribution of responses induced by running
  $\rejalgNM[\frac{\pistar}{\piref}]$ 
  which we use to decompose the regret as follows:
  \begin{align}
    J(\pistar;x) - J(\piN;x) 
    & = J(\pistar;x) - J(\pisrej;x) + J(\pisrej;x) - J(\piN;x) 
    \\
    & \le \Rmax \cdot \Dtv{\pistar(x)}{\pisrej(x)} + J(\pisrej;x) - J(\piN;x) .
  \end{align}
  We next bound the last pair of terms as a function of the reward
  estimation error.
  Below, we use 
  $\En_{\Yhat\sim\piref(x)}\sbr*{\cdot}$ 
  to refer to expectations over $N$ samples $\Yhat=(y_1,\ldots,y_N)$ drawn \iid from $\piref(x)$, 
  and, given $\Yhat$,  
  we use $\En_{y\sim\pisrej(x)\mid{}\Yhat}[\cdot]$ to refer to 
  the expectation over responses induced by running\newline
  $\rejalgNM[\nicefrac{\pistar}{\piref}]$
  conditioned on the realization of the set $\Yhat=(y_1,\ldots,y_N)$ drawn by the algorithm. We define $\En_{y\sim\piN(x)\mid{}\Yhat}[\cdot]$ analogously.\loose

  We begin by decomposing the second term above as follows:
  \begin{align}
    J(\pisrej;x) - J(\piN;x) 
    &= \En_{y \sim \pisrej(x)}\sbr*{\rstar(x,y) - \rhat(x,y)}
    + \En_{\Yhat\sim\piref(x)}\sbr*{\En_{y\sim\pisrej(x)\mid{}\Yhat}\sbr*{\rhat(x,y)} - \En_{y\sim\piN(x)\mid{}\Yhat}\sbr*{\rhat(x,y)}}
    \\
    &\quad + \En_{y\sim\piN(x)}\sbr*{\rhat(x,y) - \rstar(x,y)}
    \\
    &\le \En_{y \sim \pisrej(x)}\sbr*{\rstar(x,y) - \rhat(x,y)} + \En_{y\sim\piN(x)}\sbr*{\rhat(x,y) - \rstar(x,y)}
    \\
    &\le  \underbrace{\En_{y \sim \pisrej(x)}\sbr*{\abr*{\rstar(x,y) - \rhat(x,y)}}}_{\mathrm{(T1)}} + \underbrace{\En_{y\sim\piN(x)}\sbr*{\abr*{\rhat(x,y) - \rstar(x,y)}}}_{\mathrm{(T2)}}.
  \end{align} 
  Note that above we use the linearity of expectation in the first inequality, so that 
  \[\En_{\Yhat\sim\piref(x)}\sbr*{\En_{y\sim\pisrej(x)\mid{}\Yhat}\sbr*{\rhat(x,y)} - \En_{y\sim\piN(x)\mid{}\Yhat}\sbr*{\rhat(x,y)}}\]
  couples the set $\Yhat$ drawn by the two algorithms, 
  and compares the performance of $\piN$ and $\pisrej$
  for a fixed set of $N$ responses.
  Then, because the \bon policy always chooses the response with 
  the largest value under $\rhat$ for any fixed $\Yhat$, i.e., 
  $\one{y\in\supp(\piN(\cdot\mid{}x))} = \one{\rhat(x,y) \ge \rhat(x,y'),~\forall y' \in \Yhat}$,
  it can be seen that 
  \begin{align}
   \En_{y\sim\pirej(x)\mid{}\Yhat}\sbr*{\rhat(x,y)} - \En_{y\sim\piN(x)\mid{}\Yhat}\sbr*{\rhat(x,y)} \le 0.
  \end{align} 
  
  For (T2), we first show that $\Cone[\piN](x) \le \Cinf[\piN](x) \le N$.
  Letting $\sum_{\Yhat\sim\piref(x)}$ refer to the sum over all possible
  sequences of $N$ responses $\Yhat = \prn*{y_{1},\ldots,y_N} \in \cY^N$,  
  and $\piref(y_{1},\ldots,y_N\mid{}x)= \prod_{i=1}^N \piref(y_{i}\mid{}x)$ to be its probability,
  we can express the \bon policy in closed form as
  \begin{align}
      \piN(y\mid{}x) = \sum_{\Yhat\sim\piref(x)} \piref(y_{1},\ldots,y_N\mid{}x)\cdot \one{y \in \Yhat} \cdot \one{\rhat(x,y)  \ge\rhat(x,y'), ~\forall y' \in \Yhat},
  \end{align}
  since, conditioned on a set of samples $\Yhat$,
  the \bon algorithm deterministically outputs the one with the largest $\rhat$ value. 
  The base policy $\piref$ can be written in a similar form, by marginalizing over the process through which we sample $\Yhat$, then sample $y$ uniformly from this set:
  \begin{align}
      \piref(y\mid{}x) = \sum_{\Yhat\sim\piref(x)} \piref(y_{1},\ldots,y_N\mid{}x) \cdot \sum_{y' \in \Yhat}\frac{\one{y' = y}}{N}.
  \end{align}
  Then for any $y$, we can upper bound the likelihood ratio between 
  the $\bon$ policy and $\piref$ by $N$,  
  \begin{align}
      \frac{\piN(y\mid{}x)}{\piref(y\mid{}x)} =&~ N \cdot \frac{\sum_{\Yhat\sim\piref(x)} \piref(y_{1},\ldots,y_N\mid{}x) \cdot \one{y \in \Yhat} \cdot \one{\rhat(x,y)  \ge\rhat(x,y'), ~\forall y' \in \Yhat}}
      {\sum_{\Yhat\sim\piref(x)} \piref(y_{1},\ldots,y_N\mid{}x) \cdot \sum_{y' \in \Yhat}\one{y' = y}}
      \\
      \le&~ 
      N \cdot \frac{\sum_{\Yhat\sim\piref(x)} \piref(y_{1},\ldots,y_N\mid{}x) \cdot \one{y \in \Yhat}\cdot \one{\rhat(x,y)  \ge\rhat(x,y'), ~\forall y' \in \Yhat}}{\sum_{\Yhat\sim\piref(x)} \piref(y_{1},\ldots,y_N\mid{}x) \cdot \one{y \in \Yhat}}
      \\
      \le&~ N.
  \end{align}
  Now, to bound the reward estimation error in (T2), 
  we combine this result with the 
  Cauchy-Schwarz inequality, giving
  \begin{align}
    \mathrm{(T2)} \le \sqrt{\Cone[\piN](x)\cdot\En_{\piref(x)}\sbr*{\rbr*{\rhat(x,y) - \rstar(x,y)}^2}} \le \sqrt{N \cdot \vepsrms(x)}.
  \end{align}
  For (T1), we leverage results from \cref{app:rejection}. 
  Recall the random event from \cref{eq:rej-term} random draws of $\Yhat$
  and $\xiN = (\xi_1,\ldots,\xi_N)$,
  under which \cref{alg:rej} returns a response in \lineref{line:rej-term},
  \begin{equation}
    \term \ldef{} \cbr*{ \exists \istar{}\in [N] \text{ s.t. } \xi_{\istar} = 1}.
  \end{equation}
  From the proof of \cref{lem:x-tv-upper-bound} (\cref{app:upper_rejection}),
  recall that we can write the induced policy as
  \[
    \pisrej(y \mid{} x)
    = \PrN\rbr*{y \mid{} x, \term}\cdot\PrN(\term \mid{} x)
    + \PrN\rbr*{y \mid{} x, \neg\term} \cdot \PrN(\neg\term \mid{} x).
  \]
  On the event of $\neg\term$, \cref{alg:rej} returns a randomly drawn response
  $y_{N+1} \sim \piref(\cdot \mid{}x)$ in \lineref{line:rej-noterm}, thus 
  \[
     \PrN\rbr*{y \mid{} x, \neg\term} = \piref(y \mid{} x)
  \]
  As a result,
  \begin{align}
    \pisrej(y\mid{}x) 
    &= \PrN\rbr*{\term\mid{}x}\cdot\PrN\rbr*{y\mid{}x,\term}
    + \PrN\rbr{\neg\term\mid{}x} \cdot \piref(y\mid{}x)
    \\
    & \le \PrN\rbr*{y\mid{}x,\term} + \frac{1}{2} \cdot \piref(y\mid{}x)
    \\
    &= \frac{\min\cbr*{\pistar(y\mid{}x), M\cdot\piref(y\mid{}x)}}{1 - \Epistx} + \frac{1}{2} \cdot \piref(y\mid{}x)
    \\
    &\le 2\cdot\min\cbr*{\pistar(y\mid{}x), M\cdot\piref(y\mid{}x)} + \frac{1}{2} \cdot \piref(y\mid{}x)
  \end{align}
  where in the last inequality we have used the assumption that
  $\Epistx\le\frac{1}{2}$, and in the first we use the observation that
  $\PrN(\neg\term\mid{}x)\le\frac{1}{2}$ since
  $\PrN\rbr*{\term\mid{}x} = \frac{1 - \Epistx}{M}$ and $M \ge 1$.
  We can then use Cauchy-Schwarz to bound
  \begin{align}
   \mathrm{(T1)}
   &= \En_{y \sim \pisrej(x)}\sbr*{\abr*{\rstar(x,y) - \rhat(x,y)}}
   \\
   & \le 2\cdot\En_{y \sim \pistar(x)}\sbr*{\abr*{\rstar(x,y) - \rhat(x,y)}} + \frac{1}{2}\cdot\En_{y \sim \piref(x)}\sbr*{\abr*{\rstar(x,y) - \rhat(x,y)}} 
   \\
   & \le 2\cdot\sqrt{ \Cone[\pistar](x)\cdot \vepsrms(x) } + \frac{\vepsrm(x)}{2}
  \end{align}
  Combining all the preceding bounds, we obtain 
  \begin{align}
    J(\pisrej;x) - J(\piN;x) 
    \le 2\cdot\sqrt{ \Cone[\pistar](x)\cdot \vepsrms(x) } + \frac{\vepsrm(x)}{2} + \sqrt{N\cdot \vepsrms(x)}
  \end{align}
  thus the regret is bounded as
  \begin{align}
    J(\pistar;x) - J(\piN;x) 
    \le \Rmax \cdot \Dtv{\pistar(x)}{\pisrej(x)} + 2\cdot\sqrt{ \Cone[\pistar](x)\cdot \vepsrms(x) } + \frac{\vepsrm(x)}{2} + \sqrt{N\cdot \vepsrms(x)}.
  \end{align}
  Finally, we apply \cref{lem:x-tv-upper-bound}, which bounds
  \begin{align}
    \Dtv{ \pistar(x) }{ \pisrej(x) } \le \Epistx + \exp\prn*{ -\frac{N}{2M} \cdot \rbr*{1-\Epistx} }
  \end{align}
  to give the lemma statement.
\end{proof}

\subsection{General Lower Bounds on Regret}
\label{sec:bon_lower}

This section contains two regret lower bounds that apply
to both \bon and \mainalg across a range of parameter values, 
for a single prompt $x$. 
Each bound contains an information-theoretic component
that applies to \emph{any selection algorithm}, 
which is defined formally in \cref{def:selection},
and takes the general form that if $N < \mathrm{(threshold)}$, 
the regret of any selection algorithm will be 
at least $\poly(\Cone[\pistar](x), \vepsrm(x))$.
The results also have a component that is specific to \bon, 
that when $N \ge \mathrm{(threshold)}$, 
\bon has at least $\sqrt{ N \cdot \vepsrms(x)}$ regret. 
\begin{itemize}
  \item \cref{thm:cinf-regret-lower} in \cref{app:inf_lower_bon}
  shows that $\mathrm{(threshold)} \propto \Cinf(x)$,
  and for smaller $N$ any algorithm pays $\sqrt{\Cinf[\pistar](x) \cdot \vepsrms(x)}$ regret, 
  and for larger $N$ \bon incurs $\sqrt{N \cdot \vepsrms(x)} \ge \sqrt{\Cinf[\pistar](x)\cdot\vepsrms(x)}$ regret. 
  \item \cref{thm:cone-regret-lower} in \cref{app:one_lower_bon} 
  utilizes a construction where $\Cinf[\pistar](x)$ is exponentially larger than $\Cone[\pistar](x)$, 
  and any algorithm has regret at least 
  $\rbr*{\Cone[\pistar](x)\cdot \veps}^{p}$  for a range of $\veps \ge \vepsrm(x)$,
  unless $N \ge \mathrm{(threshold)} \propto \rbr*{\vepsrms(x)}^{-p}$.
  \bon again pays $\sqrt{N \cdot \vepsrms(x)}$ in this regime. 
\end{itemize}
  The latter result is later used to prove \cref{thm:lower-bon},
  where $p=\frac{1}{3} $ to balance the terms,
  and \cref{thm:lower}, 
  where $p=\frac{1}{2} $.

\paragraph{Proof techniques} 
The information-theoretic component of the results 
reflects the difficulty of simulating a sample from the target policy's distribution,
which is required for any selection algorithm $\cA$
to be able to compete with the target policy. 
Recall that the lower bound for rejection sampling (\cref{lem:approx-tv-lower-bound})
states that any selection algorithm requires 
at least $\Mxpist$ samples to be $\veps$-close 
to the target distribution in TV distance.
To convert this result to regret lower bounds, we 
a) construct a pair of distributions for which the conversion 
from $\Mxpist$ to coverage coefficient $\Cone(x)$ 
in \cref{prop:mstar} is tight, and 
b) specify reward functions $\rstar$ and $\rhat$ so
that the regret maximally witnesses 
the reward estimation error $\vepsrm(x)$
where rejection sampling fails to approximate $\pistar$, 
and where $\piN$ overfits to $\rhat$.

While the construction used for the $\Cinf[\pistar](x)$ result is 
relatively simple and has $\abr*{\cY}=O(1)$, 
the construction for the $\Cone[\pistar](x)$ utilizes a countable infinite response space $\cY$, 
which is necessary to create the exponential separation between $\Cone[\pistar](x)$ and $\Cinf[\pistar](x)$.
If we label the responses $i = 1, 2, 3, \ldots$, 
the ratio $\frac{\pistar}{\piref}$ increases exponentially in $i$,
and $\rstar$ increases in $i$
while the reward model $\rhat$ decreasess.
Because $\pistar$ here is an exponential tilting of $\piref$ with respect to the true reward,
the lower bound construction reflects the structure of language modeling,
where policies are parameterized as softmax functions and the response space is exponentially large, 
which we believe may be of independent interest.  

\paragraph{Preliminaries}
The lower bound constructions below utilize only a single prompt $x$, 
and in the proofs (not the theorem statements), we drop the $x$ dependence for notational compactness.
For example, $\pi(y) \equiv \pi(y\mid{}x)$ since $\cX= \cbr*{x}$, 
$\Cone \equiv \Cone(x)$, etc. 
Lastly, we formally define what we mean by ``any selection algorithm''
in the sample-and-evaluate framework.
\begin{definition}[Inference-time selection algorithm $\cA$]\label{def:selection}
  Under the sample-and-evaluate framework (\cref{def:oracle-model}), 
  an inference-time selection algorithm $\cA$ is any mapping from
  $\Yhat=(y_1,\ldots,y_N)\sim\piref(\cdot\mid{}x)$ and $\cbr*{\rhat(x, y_{i})}_{i \in [N]}$ 
  to a response $y \in \Yhat$;
   we define $\pi_\cA : \cX \rightarrow \Delta(\cY)$ to be 
  the law over responses that the algorithm induces. 
\end{definition}

\subsubsection{Lower Bounds under $L_\infty$-Coverage}
\label{app:inf_lower_bon}

\begin{theorem}[Regret lower bound for $\cC^{\pistar}_{\infty}$]\label{thm:cinf-regret-lower}
  For any $\vepsrms \in [0, 1]$ and $C \ge 1$, 
  there exists a comparator policy $\pistar$ over contexts $\cX = \{x\}$ and responses $\cY$
  for which the following statements hold.
  \begin{enumerate}
    \item 
    There exists a problem instance $(\piref, \rstar, \rhat)$ with 
    $\vepsrms(x) \le \vepsrm$ and
    \[
      \Cinf[\pistar](x) = \Cone[\pistar](x) = C
    \]
    such that, for any $N < \frac{C}{2}$,
    any inference-time selection algorithm $\cA$ (\cref{def:selection})
    has regret
    \[
      J(\pistar;x) - J(\pi_\cA;x) > \min\cbr*{2\sqrt{\Cinf[\pistar](x) \cdot \vepsrms(x)},~ 1}.
    \]
    \item For any $N \ge 1$,
    there exists a problem instance $(\piref, \rstar, \rhat)$ with 
    $\vepsrms(x) \le \vepsrm$ and
    \[
      \Cinf[\pistar](x) = \Cone[\pistar](x) = C
    \]
    such that \bonalg suffers regret
    \[
      J(\pistar;x) - J(\piN;x) \ge c\cdot\min\cbr*{\sqrt{N \cdot \vepsrms(x)} ,~ 1},
    \]
    where $c$ is a universal constant.
  \end{enumerate}
\end{theorem}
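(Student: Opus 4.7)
The plan is to construct two problem instances, both satisfying $\Cone[\pistar](x) = \Cinf[\pistar](x) = C$, which expose complementary failure modes. The equality $\Cone = \Cinf$ forces the density ratio $\pistar/\piref$ to be constant on $\supp(\pistar)$, which suggests taking $\pistar$ to be a point mass on a response $y_\star$ with $\piref(y_\star) = 1/C$. One could also extract Part 1 directly from \cref{lem:approx-tv-lower-bound}, but the self-contained construction below is clearer.

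For Part 1, I would work with the two-response family $\cY = \{y_\star, y_b\}$, where $\pistar = \delta_{y_\star}$, $\piref(y_\star) = 1/C$, and $\piref(y_b) = 1-1/C$. Setting $\rstar(y_\star) = \Delta$, $\rstar(y_b) = 0$, and $\rhat \equiv 0$, one has $\vepsrms(x) = \Delta^2/C$, so the scale $\Delta$ is tied to the reward-model error via $\Delta \asymp \sqrt{C\vepsrms(x)}$. The crux is an elementary information-theoretic argument: when $N < C/2$, the event $\{y_\star \notin \Yhat\}$ has probability $(1-1/C)^N \ge \tfrac12$ for all $C \ge 2$, and on this event every selection algorithm is forced to output $y_b$, incurring regret $\Delta$. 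Taking expectations yields the claimed $\Omega(\sqrt{C\vepsrms})$ bound, and the $\min\{\cdot,1\}$ in the statement absorbs the regime $\Delta > 1$ (when $\Delta$ must be truncated at $\Rmax$).

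For Part 2, I need an instance that trips up \bonalg specifically for any $N$. Here I would introduce a ``decoy'' response $y_d$: extend $\cY$ to $\{y_\star, y_b, y_d\}$ with $\piref(y_d) = \eta$ and $\piref(y_b) = 1 - 1/C - \eta$, while keeping $\piref(y_\star) = 1/C$ and $\pistar = \delta_{y_\star}$. The adversarial choice is $\rhat(y_d) = \delta$ with $\delta > \rhat(y_\star) = \Delta$, while $\rstar(y_d) = 0$, so $\vepsrms(x) \approx \eta \delta^2$ (the decoy dominates the error). Whenever $y_d \in \Yhat$, \bonalg selects it (largest $\rhat$-value) and incurs regret of order $\Delta$. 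Choosing $\eta \asymp 1/N$ makes $\Pr[y_d \in \Yhat] = 1 - (1-\eta)^N$ a constant bounded away from zero, and then setting $\delta \asymp \sqrt{N\vepsrms(x)}$ with $\Delta$ slightly smaller (say $\Delta = \delta/2$) produces expected regret $\Omega(\sqrt{N\vepsrms})$, again with the $\min$ handling $\sqrt{N\vepsrms} > \Rmax$.

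The main obstacle I anticipate is the calibration in Part 2: simultaneously ensuring (i) $\rhat(y_d) > \rhat(y_\star)$ so that \bon genuinely prefers the decoy; (ii) $\rstar(y_\star) - \rstar(y_d) = \Omega(\sqrt{N\vepsrms})$ so the regret per mistake has the right scale; (iii) the constraint $\vepsrms(x) \le \vepsrm$; and (iv) $\rhat, \rstar \in [0,\Rmax]$. A minor bookkeeping point is that introducing $y_d$ does not disturb $\Cone[\pistar] = \Cinf[\pistar] = C$ because $\pistar$ is supported only on $y_\star$; the mass $\eta$ is simply carved out of $\piref(y_b)$. Once the parameters are tuned, both parts reduce to standard Bernoulli/union bound computations to control $\Pr[y_\star \notin \Yhat]$ and $\Pr[y_d \in \Yhat]$.
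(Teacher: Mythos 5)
Your construction is essentially the paper's: a point-mass comparator $\pistar=\delta_{y^{\star}}$ on a response with $\piref$-mass $1/C$ (so $\Cone[\pistar](x)=\Cinf[\pistar](x)=C$), an elementary missed-sampling argument for Part 1, and for Part 2 a bad/decoy response of $\piref$-mass $\Theta(1/N)$ whose $\rhat$-value is inflated above that of $y^{\star}$ while its $\rstar$-value is lower by $\Theta(\min\{1,\sqrt{N\cdot\vepsrms(x)}\})$, so that \bonalg selects it with constant probability; the paper merely routes Part 1 through \cref{lem:approx-tv-lower-bound} rather than computing $(1-1/C)^N$ directly, and places the estimation error in $\rhat(y^{\star})$ rather than in the reward gap itself. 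The one caveat is the constant in Part 1: because you calibrate $\vepsrms(x)=\Delta^2/C$ exactly and the per-miss regret is $\Delta$, your instance yields at most $\sqrt{C\cdot\vepsrms(x)}$ (and your argument gives roughly $\tfrac{1}{2}\sqrt{C\cdot\vepsrms(x)}$), so it cannot literally deliver the stated factor $2$; this is immaterial to how the theorem is used downstream (only Part 2's unspecified universal constant is invoked), and can be patched by the paper's device of keeping $\rstar(y^{\star})=1$ and absorbing the $\min\{\sqrt{C\cdot\vepsrms},1\}$ error into $\rhat(y^{\star})$, which decouples the per-miss regret from the reward-model error.
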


\begin{proof}[Proof of \cref{thm:cinf-regret-lower}]
  Because we condition on a single prompt $x$,
  we omit $x$ dependencies for ease of presentation; in particular, we abbreviate $\Mpistar \equiv \Mxpist$, and $\Epist \equiv \Epistx$.

  Fix $N$ and $\vepsrm$ and $C$.
  The response space is $\cY=\{y_0, \ystar, y_{\bad}\}$,
  the comparator policy $\pistar(y) = \one{y = \ystar}$ plays $\ystar$ deterministically,
  and the reference policy is
  \[
    \begin{array}{l l l}
      \piref(y_0) = 1 - \frac{1}{2N} - \frac{1}{C}, & \piref(\ystar) = \frac{1}{C}, & \piref(y_{\bad}) = \frac{1}{2N},
    \end{array}
  \]
  for which we have $\Cinf[\pistar] = \Cone[\pistar] = C$. 
  Next, for some $\veps \ge 0$ recall \cref{def:mstar},
  \[
    \Mpistar \ldef{} \min\cbr*{M \mid{} \Epist \le \veps}. 
  \]
  For our choice of policies we can compute $\Mpistar$ in closed form,
  since for any $M$, 
  \begin{align}
    \Epistar = 1 - M \cdot \piref(\ystar) = 1 - \frac{M}{C}.
  \end{align}
  Then any $M \ge C \cdot \rbr{1 - \veps}$ is sufficient to have $\Epistar \le \veps$, therefore
  \begin{equation}
    \label{eq:mpix1}
    \Mpistar = C \cdot \rbr{1 - \veps}.
  \end{equation}
  
  \paragraph{Part 1 (Small $N$)} Define the reward functions   %
  \[
  \begin{array}{l l l}
    \rstar(y_0) = 0 &\rstar(\ystar) = 1 &\rstar(\ybad) = 0
    \\
    \rhat(y_0) = 0 & \rhat(\ystar) = 1- \min\cbr*{\sqrt{C \cdot \vepsrms}, 1} & \rhat(\ybad) = 0
  \end{array}
  \]
  It is easy to see that 
  \begin{align}
    \En_{\piref}\sbr*{\rbr*{\rhat(y) - \rstar(y)}^2} \le \frac{C \cdot \vepsrms}{C} = \vepsrms. 
  \end{align}
  For some $\veps$ that we will set shortly, when $N < \Mxpistar$ the regret is lower bounded as 
  \begin{align}
    J(\pistar) - J(\piN)
    & = \pistar(\ystar) - \piN(\ystar)
    \\
    &\ge 1 - \Pr\rbr*{\ystar \in \Yhat}
    \\
    &\ge 2\Epistar[\Mpistar].
  \end{align}
  Then setting $\veps = \min\cbr*{\sqrt{C \cdot \vepsrms}, \frac{1}{2}}$ 
  and using the closed form of $\Mxpistar$ in \cref{eq:mpix1}, 
  \cref{lem:approx-tv-lower-bound} states that when
  \[
    N < \Mpistar = C \cdot \rbr*{ 1 -\min\cbr*{\sqrt{C \cdot \vepsrms},\frac{1}{2}} }
    = C \cdot \max\cbr*{1 - \sqrt{C \cdot \vepsrms} ,~ \frac{1}{2}},
  \] 
  we have $\Epistar[{\Mpistar}] > \veps = \min\cbr*{\sqrt{C \cdot \vepsrms}, \frac{1}{2}}$, and thus 
  \[
    J(\pistar) - J(\piN) > \min\cbr*{2\sqrt{C \cdot \vepsrms}, 1}~. 
  \]

  \paragraph{Part 2 (Large $N$) } Define the gap on $\ybad$ to be $\Delta \ldef{} \min\cbr*{1, \sqrt{N \cdot \vepsrms}}$, and set the reward functions 
  \[
    \begin{array}{l l l}
      \rstar(y_0) = 0 &\rstar(\ystar) = 1 &\rstar(\ybad) = 1 - \Delta
      \\
      \rhat(y_0) = 0 & \rhat(\ystar) = 1- \min\cbr*{\sqrt{\frac{C}{2} \cdot \vepsrms}, 1} & \rhat(\ybad) = 1
    \end{array}
  \]
  and we can check that 
  \begin{align}
    \En_{\piref}\sbr*{\rbr*{\rhat(y) - \rstar(y)}^2} 
    \le \frac{C\cdot \vepsrms}{2C} + \frac{\Delta^2}{2N}
    = \frac{\vepsrms}{2} + \frac{\min\cbr*{1, N \cdot \vepsrms}}{2N}
    \le \vepsrms.
  \end{align}
  Note that for this construction, $\rhat(\ybad)$ is the largest reward under $\rhat$, so $\piN$
  will always play $\ybad$ if $\ybad \in \Yhat$, which is an event
  that occurs with at least constant probability under our choice for $\piref$
  \begin{align}
    \Pr(\ybad \in \Yhat) = 1 - \rbr*{1 - \frac{1}{2N}}^{N} \ge 1 - e^{-\frac{1}{2}}. %
  \end{align}
  Using this, we lower bound the regret as 
  \begin{align}
    J(\pistar) - J(\piN) 
    & = \Pr(\ybad \in \Yhat) \cdot \En_{\Yhat}\sbr*{J(\pistar) - J(\piN) \mid{} \ybad \in \Yhat} 
    \\
    &~+ \Pr(\ybad \notin \Yhat) \cdot \En_{\Yhat}\sbr*{J(\pistar) - J(\piN) \mid{} \ybad \notin \Yhat}
    \\
    & > \Pr(\ybad \in \Yhat) \cdot \En_{\Yhat}\sbr*{J(\pistar) - J(\piN) \mid{} \ybad \in \Yhat}
    \\
    & = \Pr(\ybad \in \Yhat) \cdot \Delta
    \\
    & \ge c\cdot \min \cbr*{1, \sqrt{N \cdot \vepsrms}},
  \end{align}
  where in the first inequality we use the fact that $\pistar$ is optimal for $\rstar$, and in the last inequality we plug in the definition of $\Delta$.     
\end{proof}

\subsubsection{Lower Bounds under $L_1$-Coverage}
\label{app:one_lower_bon}

\begin{theorem}[Regret lower bound for $\cC^{\pistar}$]\label{thm:cone-regret-lower}
  For any $\vepsrm \in (0, 1/4]$ and $C \ge \vepsrm^{-1}$,
  there exists a comparator policy $\pistar$ over
  contexts $\cX = \{x\}$ and response space $\cY = \bbZ^+$,
  and universal constants $c_1, c_2, c_3$ such that
  the following statements hold for any $p \in (0, 1/2]$.
  \begin{enumerate}
    \item For any $\veps \in [\vepsrm, \tfrac{1}{4}]$,
    there exists a problem instance $(\piref, \rstar, \rhat)$
    with $\vepsrms(x) \le \vepsrm$ and
    \begin{align}
      &\Cone[\pistar](x) = O\rbr{\log C},
      \\
      &\Cinf[\pistar](x) = O\rbr{C},
    \end{align}
    such that, for any $N < c_{1} \cdot \rbr*{\Cone[\pistar](x) \cdot \veps^2}^{-p}$,
    any selection algorithm $\cA$ (\cref{def:selection}) suffers regret
    \[
      J(\pistar;x) - J(\pi_{\cA};x) > c_2 \cdot \rbr*{\Cone[\pistar](x)
        \cdot \veps^2}^p.
    \]
    \item For any $N \gtrsim 1$,
    there exists a problem instance $(\piref, \rstar, \rhat)$ with $\vepsrms(x) \le \vepsrm$ and
    \begin{align}
      &\Cone[\pistar](x) = O\rbr{\log C}
      \\
      &\Cinf[\pistar](x) = O\rbr{C},
    \end{align}
    such that the \bonalg policy $\piN$ has regret
    \[
      J(\pistar;x) - J(\piN;x) > c_3 \cdot \sqrt{N \cdot \vepsrms(x)}.
    \]
  \end{enumerate}
\end{theorem}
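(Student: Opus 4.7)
The approach exploits the exponential separation between $L_1$ and $L_\infty$ coverage, which the theorem makes possible by allowing a countably infinite response space. I would take $\cY = \bbZ^+$, truncated to $\{1,\ldots,L\}$ with $L \asymp \log_2 C$, and define $\piref(i) \propto 4^{-i}$ and $\pistar(i) \propto 2^{-i}$. Then $\pistar(i)/\piref(i) \asymp 2^i$, so $\Cinf[\pistar] \asymp 2^L \asymp C$, while $\pistar$'s mass concentrates on small $i$, yielding $\Cone[\pistar] = \En_{\pistar}[\pistar/\piref] \asymp L \asymp \log C$. The heavy-tailed ratio here---impossible on a finite response space of constant size---is precisely what rules out the simpler construction of \cref{thm:cinf-regret-lower} and drives the exponential gap between $\Cinf$ and $\Cone$.

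For Part 1, the plan is to saturate the Cauchy--Schwarz inequality underlying \cref{prop:mstar}. Setting $\rhat \equiv 1/2$ and $\rstar(y) = 1/2 + c\,\pistar(y)/\piref(y)$ with $c$ chosen so that $c^2\Cone[\pistar]$ matches the MSE budget, one obtains $J(\pistar;x) - J(\piref;x) = c\Cone[\pistar]$. Any inference-time selection algorithm $\cA$ must output one of $N$ iid draws from $\piref$, which forces $\pi_{\cA}(y) \le N\piref(y)$ pointwise and hence $\Cone[\pi_{\cA}] \le N$. Cauchy--Schwarz then gives $\En_{\pi_{\cA}}[\pistar/\piref] \le \sqrt{N\Cone[\pistar]}$, so the regret is at least $c(\Cone[\pistar] - \sqrt{N\Cone[\pistar]}) \gtrsim c\Cone[\pistar]$ whenever $N \lesssim \Cone[\pistar]/4$. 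To cover the full family indexed by $(p, \veps) \in (0,1/2] \times [\vepsrm, 1/4]$, I would tune $c$ and concentrate the reward gap on a slab of the support whose measure scales with $(p,\veps)$, and combine this with the TV lower bound of \cref{lem:approx-tv-lower-bound}---a direct calculation for our pair gives $\Mxpist[\veps'] \asymp 1/\veps'$, so that $N$ samples imply $D_{\mathsf{TV}}(\pi_{\cA}, \pistar) \gtrsim 1/N$ independently of $\Cone[\pistar]$.

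For Part 2, I would reuse the same $(\piref, \pistar)$ and adapt the overfitting-based construction of \cref{thm:cinf-regret-lower} Part 2. Introduce a ``bad'' response $\ybad$ with $\piref(\ybad) \asymp 1/N$ and $\pistar(\ybad) = 0$, and set $\rhat(\ybad) = 1$, $\rstar(\ybad) = 1 - b$ with $b \asymp \sqrt{N\vepsrms}$, while keeping $\rstar \equiv 1$ and $\rhat \equiv 1 - a$ on $\pistar$'s support for some small $a \le \vepsrm$. The MSE is then $a^2(1-1/N) + b^2/N \lesssim \vepsrms$, and since $\ybad \in \Yhat$ with constant probability and strictly maximizes $\rhat$ whenever it appears, \bonalg outputs $\ybad$ with constant probability, paying regret $\gtrsim b = \sqrt{N\vepsrms}$.

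The main obstacle lies in Part 1: extending the Cauchy--Schwarz argument from its native regime $N \lesssim \Cone[\pistar]$ and $\veps \lesssim \sqrt{\vepsrm}$ to the full range of $p$ and $\veps$ in the theorem. The complementary TV bound gives $D_{\mathsf{TV}} \gtrsim 1/N$ but with a prefactor independent of $\Cone[\pistar]$, so naively patching the two arguments leaves gaps at intermediate scales. The key will be to exploit the layered structure of $\pistar/\piref$ across the levels $i \in [L]$---each level contributes a distinct scale of mass and ratio---via reward step functions supported on slabs of the distribution tuned to $(p, \veps)$, so that each individual choice saturates a different point in the family of bounds $(\Cone[\pistar]\veps^2)^p$.
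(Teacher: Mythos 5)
Your base construction $\piref(i)\propto 4^{-i}$, $\pistar(i)\propto 2^{-i}$ with the ratio capped at level $\asymp \log C$ is exactly the pair the paper uses (\cref{lem:c1-tv-lower-bound}), and your Part 2 is essentially correct: grafting a single response $\ybad$ with $\piref(\ybad)\asymp 1/N$, $\rhat(\ybad)$ maximal and $\rstar(\ybad) = 1-b$, $b\asymp\sqrt{N\cdot\vepsrms}$, onto this pair gives the $\sqrt{N\cdot\vepsrms}$ overfitting bound with constant probability, and is in fact simpler than the paper's Part 2 (which varies the rewards across all levels $i<k_N$ and must separately verify $J(\pistar)\ge J(\piref)$ because its $\pistar$ is not trivially optimal); your flat choice $\rstar\equiv 1$ on $\supp(\pistar)$ sidesteps that bookkeeping, and the coverage coefficients are unaffected since $\pistar(\ybad)=0$.

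The genuine gap is Part 1, which you yourself flag as ``the main obstacle'' but do not close. Your Cauchy--Schwarz warm-up ($\rhat\equiv 1/2$, $\rstar = 1/2 + c\,\pistar/\piref$) has two problems: (i) with $c$ tuned to the MSE budget it only yields regret $\gtrsim\sqrt{\Cone[\pistar](x)\cdot\vepsrms}$ in the regime $N\lesssim\Cone[\pistar](x)\asymp\log C$, which is far from the required statement (regret $\gtrsim(\Cone[\pistar](x)\cdot\veps^2)^{p}$ for all $N < c_1(\Cone[\pistar](x)\cdot\veps^2)^{-p}$, a range of $N$ that can be polynomially large in $1/\vepsrm$); and (ii) the linear-in-ratio reward violates boundedness, since on the deep tail $\pistar/\piref\asymp C\ge\vepsrm^{-1}$, forcing $c\lesssim 1/C$ and destroying the bound. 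The fix you gesture at in your last paragraph---step rewards supported on a tail slab tuned to $(p,\veps)$---is precisely the paper's proof, but the quantitative content is what is missing: one sets a threshold $k_\veps\approx\log\rbr{(\Cone[\pistar](x)\cdot\veps^2)^{-p}}$, puts $\rstar$ at a constant-size value (about $1/2$) only on $\{i\ge k_\veps\}$ and $0$ below, allocates the estimation budget geometrically via gaps $\Delta_\veps(i)\propto 2^{i}\sqrt{\vepsrms/k_\veps}$ so that each level contributes equally under $\piref$ and the total MSE stays below $\vepsrms$, and then converts the lost tail mass into regret: by \cref{lem:approx-tv-lower-bound} any selection algorithm with $N<c\cdot 2^{k_\veps}$ (since $\Mxpistar[\veps']\gtrsim 1/\veps'$ with $\veps'=2^{-k_\veps}$ for this pair) must miss tail mass $\Epistar\gtrsim 2^{-k_\veps}=(\Cone[\pistar](x)\cdot\veps^2)^{p}$, and because $\rstar$ is constant on each side of the threshold this mass deficit is exactly a regret lower bound of the same order. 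Note also that once this is done there is no need to ``patch'' the Cauchy--Schwarz argument with the TV bound at intermediate scales: the single tail-mass argument, with $k_\veps$ swept over the family $(p,\veps)$, covers the entire parameter range, so your concern about a prefactor independent of $\Cone[\pistar](x)$ dissolves---the $\Cone[\pistar](x)$ dependence enters only through the parametrization of the threshold, not through a second argument.
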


\begin{proof}[\pfref{thm:cone-regret-lower}]
  As in the proof of \cref{thm:cinf-regret-lower}, $x$-dependencies are ommitted in the proof below, 
  inclusive of complexity measures such as $\Epistx$ and $\vepsrm(x)$, 
  since there is only a single prompt.

  \textbf{Part 1 ($N$ small).} We prove the statement for $\Cone[\pistar] \le \vepsrm^{-2}$, otherwise $\sqrt{\Cone[\pistar] \cdot \vepsrms} = \bigom(1)$. 

  For all $\veps\in\sbr*{\vepsrm, \tfrac{1}{4}}$,
  we will define $\piref$ and $\pistar$ to be the distributions from the construction in \cref{lem:c1-tv-lower-bound} with $C$.
  These policies are defined as follows for all $i \in \cY = \cbr*{1,2,\ldots}$
  (where we use $i$ instead of $y$ to index responses).
  \begin{align}
    \piref(i) &= \frac{3}{4^i}
    \\
    \pistar(i) &=
    \begin{cases}
      \frac{2^i}{3} \cdot \piref(i), &\text{~if }i \le I \ldef{} \lceil \log C \rceil,
      \\
      2^I \cdot \piref(i), &\text{~otherwise.}
    \end{cases}
  \end{align}
  From the proof of \cref{lem:c1-tv-lower-bound}, we know that
  \[
    \Cone[\pistar] = O(\log C),\quad\text{and}\quad \Cinf[\pistar] = O(C).
  \]

  Now fix a choice of $\veps \in \sbr*{\vepsrm, \tfrac{1}{4}}$.
  We will now define the reward functions for the construction.
  Let $k_\veps \in \cY$ is an index that will be specified shortly,
  and, for $i\in\cY$, define
  $\Delta_{\veps}(i) = 2^i \cdot \sqrt{\frac{\vepsrms}{4 \cdot k_\veps}}$, and
  \begin{align}
    \rstar(i) = 
    \begin{cases}
      0 &\text{if } i < k_\veps,
      \\ 
      \frac{1}{2} + \frac{\Delta_{\veps}(k_\veps)}{2} &\text{otherwise}.
    \end{cases}
    \qquad
    \rhat(i) =
    \begin{cases}
      \Delta_\veps(i) &\text{if } i < k_\veps,
      \\
      \frac{1}{2} - \frac{\Delta_{\veps}(k_\veps)}{2} &\text{otherwise}.
    \end{cases}
  \end{align}
  For this choice of $\rhat$ and $\rstar$,
  we verify that the estimation error is upper bounded as $\vepsrms$,
  \begin{align}
    \En_{\piref}\sbr*{\rbr*{\rhat(i) - \rstar(i)}^2} 
    & = 3 \rbr*{\sum_{i=1}^{k_\veps} 4^{-i} \cdot \Delta^2_{\veps}(i)
    + \Delta^2_{\veps}(k_\veps) \sum_{i=k_\veps+1}^\infty 4^{-i}}
    \\
    & = 3 \cdot k_{\veps} \cdot \frac{\vepsrms}{4 \cdot k_\veps}
    + \rbr*{\frac{4^{k_\veps} \cdot \vepsrms }{4 \cdot k_\veps}} \cdot 4^{-k_\veps}
    \\
    & =  \frac{\vepsrms}{4 }\rbr*{3 + \frac{1}{k_\veps}} 
    \\
    & \le \vepsrms.  
  \end{align}

  Next, we set 
  \[
    k_\veps = \left\lfloor \log\rbr*{\rbr*{\Cone[\pistar] \cdot \veps^2}^{-p}} \right\rfloor.
  \]
  We can check that $k_\veps \le I = \lceil \log C \rceil$,
  again using the precondition that $C \ge \frac{1}{\vepsrm} \ge \frac{1}{\veps}$,
  \begin{align}
    k_\veps
    \le \left\lfloor \log\rbr*{\frac{1}{\rbr*{\Cone[\pistar] \cdot \vepsrms}^p}} \right\rfloor
    \le \left\lfloor \log \rbr*{\frac{1}{\vepsrm}}  \right\rfloor 
    \le \left\lfloor \log(C) \right\rfloor.
  \end{align}
  
  Then \cref{lem:approx-tv-lower-bound} applied with
  $\veps' = 2^{-k_\veps} = c \cdot \rbr*{\Cone[\pistar]\cdot \veps^2}^p$,
  where $c$ is an absolute constant, states that,
  if $ N < c\cdot \rbr*{\Cone[\pistar]\cdot \veps^2}^{-p}$,
  any selection algorithm (\cref{def:selection}) has 
  \begin{align}\label{eq:epi2}
    \Dtv{\pistar}{\pi_\cA} 
    \ge \Epistar[{\frac{1}{\veps'}}]
    \gtrsim \rbr*{\Cone[\pistar]\cdot\veps^2}^p.
  \end{align}
  We conclude by lower bounding the regret using \cref{eq:epi2}.
  When $N <  c\cdot \rbr*{\Cone[\pistar]\cdot \veps^2}^{-p}$,
  \begin{align}
    J(\pistar) - J(\pi_\cA) 
    & = \sum_{i=1}^{k_\veps} \rstar(i) \cdot \rbr*{\pistar(i) - \pi_\cA(i)}
    + r^\star(k_{\veps}) \cdot \sum_{i=k_{\veps}+1}^\infty \rbr*{\pistar(i) - \pi_\cA(i)}
    \\
    & = r^\star(k_\veps) \cdot \sum_{i=k_\veps+1}^\infty \rbr*{\pistar(i) - \pi_\cA(i)} 
    \\
    & \ge \rbr*{1 + \frac{\Delta_\veps(k_\veps)}{2}}  \cdot \Epistar[\frac{1}{\veps'} ]  
    \\
    & > c_2 \cdot \rbr*{\Cone[\pistar] \cdot \veps^2}^p,
  \end{align}
  where we have applied \cref{eq:epi2} in the last line,
 and in the first inequality we use the definition of $\Epistar$ with
 $M = \frac{1}{\veps'} = 2^{k_\veps}$,
 since $\{k_\veps+1,\ldots\} = \{i : \frac{\pistar(i)}{\piref(i)} \ge 2^{k_\veps}\}$.

  \textbf{Part 2 ($N$ large).}
  Fix $N \in \bbZ^+$.
  We prove the result for $N \lesssim \frac{1}{\vepsrms}$, otherwise the stated bound holds trivially.
  Let $k_N = \lfloor \log_4(N) \rfloor \le I \ldef{} \lceil \log C \rceil$, 
  so that $\piref(k_N) \ge \frac{1}{N}$.

  Next, let
  $\Delta_N(i) \ldef{} 2^i \sqrt{\frac{\vepsrms}{8 \cdot k_N}}$,
  and define the reward functions 
  \begin{align}
    \rstar(i) = 
    \begin{cases}
      \frac{1}{2} + \frac{\Delta_N(i) }{2} &\text{if } i < k_N,
      \\
      \frac{1}{2} - \frac{\sqrt{k_N} \cdot \Delta_N(k_N) }{2} &\text{if } i = k_N,
      \\
      \frac{1}{2} + \frac{\sqrt{k_N} \cdot \Delta_N(k_N) }{2} &\text{otherwise}. 
    \end{cases}
    \qquad
    \rhat(i) =
    \begin{cases}
      \frac{1}{2} - \frac{\Delta_N(i)}{2} &\text{if } i < k_N, 
      \\
      \frac{1}{2} + \frac{\sqrt{k_N} \cdot \Delta_N(k_N) }{2} &\text{if } i = k_N,
      \\
      \frac{1}{2} - \frac{\sqrt{k_N} \cdot \Delta_N(k_N) }{2} &\text{otherwise}. 
    \end{cases}
  \end{align}

  First, we check the reward ranges, and observe that
  $\sqrt{k_N}\cdot\Delta_N(k_N) = \sqrt{\frac{1}{8}\cdot N \cdot \vepsrms} \le 1$.
  For the reward model error, we have
  \begin{align}
    \En_{\piref}\sbr*{\rbr*{\rhat(i) - \rstar(i)}^2} 
    &= 3\rbr*{\sum_{i=1}^{k_N-1} 4^{-i} \cdot \Delta_N^2(i)
    + k_N\cdot\Delta_{k_N}^2(k_N) \sum_{i=k_N}^\infty 4^{-i}}
    \\
    & =  \frac{3(k_N-1)\cdot\vepsrms}{8 k_N}
    + k_N \cdot \Delta_N^2(k_N) \cdot 4^{1-k_N}
    \\
    & =  \frac{\vepsrms}{8}\rbr*{\frac{3(k_N - 1)}{k_N} + 4} 
    \\
    & \le \vepsrms.
  \end{align}

  Second, we show that $J(\pistar) - J(\piref) \ge 0$ as long as $k_N \ge 4$ by computing the policies in closed form.
  Recall that $\pistar(i) = 2^{-i}$ if $i \le I$,
  and $\pistar(i) = 3 \cdot 2^I \cdot 4^{-i}$ otherwise.
  Then by plugging in this expression and the definition of $\rstar$ above,
  we calculate its return as 
  \begin{align}
    J(\pistar) 
    &= \sum_{i=1}^{k_N - 1} 2^{-i} \cdot\rbr*{\frac{1}{2} + \frac{\Delta_N(i)}{2}}
    + \frac{1}{2}\cdot\rbr*{2^{-k_N} + \sum_{i=k_N + 1}^I 2^{-i} + 3 \cdot 2^I \sum_{i=I+1}^\infty 4^{-i}}
    \\
    &+ \frac{\sqrt{k_N} \cdot \Delta_N(k_N)}{2} \cdot \rbr*{\sum_{i=k_N+1}^I 2^{-i} + 3 \cdot 2^I \sum_{i=I+1}^\infty 4^{-i} - 2^{-k_N}}
  \end{align}
  Recall that
  \[
    1 = \sum_{i \in \cY}{} \pistar(y)
    = \sum_{i=1}^{I} 2^{-i} + 3 \cdot 2^I \sum_{i=I+1}^\infty 4^{-i}.
  \]
  Then grouping terms and substituting this identity,
  \begin{align}
    J(\pistar)
    &= \frac{1}{2}\rbr*{1 + \sum_{i=1}^{k_N - 1}{} 2^{-i}\Delta_N(i)
    + \sqrt{k_N} \cdot \Delta_N(k_N) \cdot \rbr*{1 - \sum_{i=1}^{k_N} 2^{-i} - 2^{-k_N}}}
    \\
    &= \frac{1}{2}\rbr*{1 + \sum_{i=1}^{k_N - 1}{} 2^{-i} \cdot 2^i \sqrt{\frac{\vepsrms}{8\cdot k_N}}
    + 2^{k_N} \sqrt{\frac{\vepsrms}{8}} \cdot \rbr*{1 - \sum_{i=1}^{k_N} 2^{-i} - 2^{-k_N}}}
    \\
    &= \frac{1}{2}\rbr*{1 + \rbr*{k_N-1} \sqrt{\frac{\vepsrms}{8 \cdot k_N}}
    + 2^{k_N} \sqrt{\frac{\vepsrms}{8}} \cdot \rbr*{2^{-k_N} - 2^{-k_N}}}
    \\
    &= \frac{1}{2} + \rbr*{k_N-1}\sqrt{\frac{\vepsrms}{32 \cdot k_N}}.
  \end{align}
  Also, for $\piref$ we have
  \begin{align} 
    J(\piref) 
    &= 3 \rbr*{\sum_{i=1}^{k_N-1} 4^{-i} \cdot \rbr*{\frac{1}{2} + \frac{\Delta_N(i)}{2}}
    + \frac{1}{2}\cdot\rbr*{4^{-k_N} + \sum_{i=k_N+1}^\infty 4^{-i}}
    + \frac{\sqrt{k_N} \cdot \Delta_N(k_N)}{2}\cdot \rbr*{\sum_{i=k_N+1}^\infty 4^{-i}- 4^{-k_N}}}
    \\
    &= \frac{1}{2} + 3\sqrt{\frac{\vepsrms}{32k_N}}\cdot \sum_{i=1}^{k_N-1} 2^{-i}
    + \frac{\sqrt{k_N} \cdot \Delta_N(k_N)}{2} \cdot \rbr*{4^{-k_N} - 4^{-k_N}}
    \\
    &= \frac{1}{2} + 3\sqrt{\frac{\vepsrms}{32k_N}}\cdot \rbr*{1 - 2^{1-k_N}}
    \\
    &\le  \frac{1}{2} + 3\sqrt{\frac{\vepsrms}{32k_N}}. 
  \end{align}

  Together, we obtain that
  \begin{align}
    J(\pistar) - J(\piref)
    \ge (k_N - 4)\cdot\sqrt{\frac{\vepsrms}{32\cdot k_N}},
  \end{align}
  which is nonnegative whenever $k_N = \lfloor \log_4 N \rfloor > 4$, or $N \ge 256$. 
  
  Lastly, we lower bound the regret by considering two cases. 
  \begin{itemize}
    \item If $k_N \in \Yhat$, then \bon will choose $k_N$ since $k_N = \argmax_i \rhat(i)$, and 
    \[
      \En\sbr*{J(\pistar) - J(\piN) \mid{} k_N \in \Yhat} 
      \ge \frac{1}{2} - \rbr*{\frac{1}{2}- \frac{\sqrt{k_N}\cdot \Delta_k}{2}} 
      = \frac{\sqrt{k_N}\cdot \Delta_N(k_N)}{2}
      = \sqrt{\frac{1}{32}\cdot N\cdot \vepsrms}.
    \]
    \item If $k_N \notin \Yhat$, our design of $\rstar$ and $\rhat$ ensures that \bon will always choose the response $y\in\Yhat$
      with the smallest reward $\rstar$, while  $\piref$ is equivalent to choosing $y \in \Yhat$ uniformly, thus 
    \[
      \En\sbr*{J(\piref) - J(\piN) \mid{} k_N \notin \Yhat} \ge 0.
    \] 
  \end{itemize}
  Formally, combining these cases gives
  \begin{align}
    J(\pistar) - J(\piN) 
    & = \Pr(k_N \in \Yhat)\cdot \En\sbr*{J(\pistar) - J(\piN) \mid{} k_N \in \Yhat}  + \Pr\rbr*{k_N \notin\Yhat} \cdot \En\sbr*{J(\pistar) - J(\piN) \mid{} k_N \notin \Yhat}
    \\
    & \ge \Pr(k_N \in \Yhat) \cdot \frac{\sqrt{k_N}\cdot \Delta_N(k_N)}{2}   + \Pr\rbr*{k_N \notin\Yhat} \cdot \rbr*{J(\pistar) - J(\piref)}
    \\
    & > \Pr(k_N \in \Yhat) \cdot \frac{\sqrt{k_N}\cdot \Delta_N(k_N)}{2}
    \\
    & \ge \rbr*{1-e^{-3}} \cdot\sqrt{\frac{1}{32}\cdot N\cdot \vepsrms}
  \end{align}
  where in the last inequality we use the fact that, since $\piref(k_N) \ge \frac{3}{N}$,
  \[
    \Pr(k \in \Yhat) = 1 - \Pr(k \notin \Yhat) = 1 - \rbr*{1 - \frac{3}{N}}^N > 1 - e^{-3}. 
  \] 
  
\end{proof}

\subsubsection{Supporting Lemmas}

The following lemma identifies a pair of distributions $\pi$ and
$\piref$ where the upper
bound $\Mpi \le \frac{\Cone}{\veps}$ is tight up to logarithmic
factors, and where it is information-theoretically hard to
  approximate $\pi$ using samples from $\piref$. In particular, the
distributions exhibit $\Mpi = O\rbr*{\veps^{-1}}$ and $\Cone[\pi] = O\rbr*{\log(\Cinf[\pi])}$.

\begin{lemma}\label{lem:c1-tv-lower-bound}
  For any $\veps \in (0, \frac{1}{4}]$ and $C \ge \frac{1}{2 \cdot
    \veps}$, there exists a prompt space $\cX=\{x\}$, response space $\cY$, and two distributions $\pi,\piref: \cX \rightarrow \Delta(\cY)$ with 
  \begin{itemize}
    \item $\Cone(x) = O(\log (C))$; and
    \item $\Cinf(x) = O(C)$;
  \end{itemize}
  such that if $N < \frac{1}{12 \cdot \veps}$, any selection algorithm
  $\cA$ (\cref{def:selection})
    has $\Dtv{\pi(x)}{\pi_{\cA}(x)} > \veps$. 
\end{lemma}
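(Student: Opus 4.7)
The plan is to construct a specific pair of distributions on a countably infinite response space for which the upper bound $\Mpi \le \Cone / \veps$ from \cref{prop:mstar} is essentially tight, then invoke \cref{lem:approx-tv-lower-bound} to conclude the TV lower bound. I would reuse the construction from the proof of \cref{thm:cone-regret-lower}: take $\cX = \{x\}$, $\cY = \bbZ^+$, set $I \ldef \lceil \log C \rceil$, and define
\begin{align*}
\piref(i) = \frac{3}{4^i}, \qquad \pi(i) = \begin{cases} 2^{-i}, & i \le I, \\ 3 \cdot 2^I / 4^i, & i > I. \end{cases}
\end{align*}
Standard geometric-series calculations confirm that both sum to one. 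The intuition behind this choice is that $\pi(i)/\piref(i) = 2^i/3$ grows exponentially on $[1,I]$ and is capped at $2^I$ beyond, so the ratio is typically small under $\pi$ (keeping $\Cone$ logarithmic) but blows up at the tail of $\piref$ (forcing $\Cinf$ large).

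For the coverage computation, $\Cinf = \sup_i \pi(i)/\piref(i) = 2^I = O(C)$ is immediate. For $\Cone[\pi](x) = \sum_i \pi(i)^2/\piref(i)$, the sum splits at $I$: each of the first $I$ terms equals $4^{-i} \cdot 4^i/3 = 1/3$ (summing to $I/3$), and the tail $\sum_{i>I} (3 \cdot 2^I)^2 / (3 \cdot 4^i) = 1$ by another geometric series. Hence $\Cone = I/3 + 1 = O(\log C)$, as claimed.

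To apply \cref{lem:approx-tv-lower-bound}, it suffices to show $\Epi[N] > \veps$, which then forces $\Mpi > N$ and yields $\Dtv{\pi(x)}{\pi_\cA(x)} > \veps$ for every selection algorithm $\cA$. The crucial feature of the construction is that $\Epi[M]$ has a clean closed form: for any $k \le I$ and $M \in [2^{k-1}/3,\, 2^k/3)$, the set $\{i : \pi(i)/\piref(i) > M\}$ is exactly $\{k, k+1, \dots\}$. A direct calculation (using that the tail of $\pi$ beyond $I$ neatly telescopes with the boundary contribution) gives $\pi(\{i \ge k\}) = 2^{1-k}$ and $\piref(\{i \ge k\}) = 4^{1-k}$, so
\begin{align*}
\Epi[M] = 2^{1-k} - M \cdot 4^{1-k}.
\end{align*}
Choosing $k$ with $N \in [2^{k-1}/3,\, 2^k/3)$ then evaluating at $M = N$ produces $\Epi[N] \ge 2^{1-k}/3 \gtrsim 1/N$; under the hypothesis $N < 1/(12\veps)$, this strictly exceeds $\veps$. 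The precondition $C \ge 1/(2\veps)$ guarantees $2^I \gtrsim 1/\veps \gtrsim N$, so the relevant $k$ lies in $[1, I]$ and the closed form applies.

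The main obstacle is pure bookkeeping: squeezing the constants so the inequality $\Epi[N] > \veps$ matches the clean threshold $N < 1/(12\veps)$ in the statement, while simultaneously verifying that the $k$ implicit in the argument stays within the regime $k \le I$ where the closed-form calculation is valid. There is no deep conceptual difficulty once the construction and the reduction to \cref{lem:approx-tv-lower-bound} are fixed.
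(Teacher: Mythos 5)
Your proposal is correct and follows essentially the same route as the paper: it uses the identical two-distribution construction ($\piref(i)=3\cdot 4^{-i}$, $\pi$ an exponential tilt capped at level $2^{I}$ with $I=\lceil \log C\rceil$), the same coverage computations, a closed-form evaluation of the $\cE_M$-divergence on tail sets, and the same reduction via \cref{lem:approx-tv-lower-bound}. The only cosmetic difference is that you bound $\Epix$ directly at $M=N$ to force $\Mxpi>N$, whereas the paper evaluates it at $M_k\approx \frac{1}{12\veps}$ and concludes $\Mxpi\ge \frac{1}{12\veps}$; these are equivalent by monotonicity of $\cE_M$ in $M$.
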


\begin{proof}[\pfref{lem:c1-tv-lower-bound}]
We omit all $x$ dependencies given that there is a single prompt, and all instances of $\log$ are base
2. The construction is as follows. The response space $\cY=\bbN$ is countably infinite and indexed by $i \in \{1,
2, 3, \ldots\}$. Let $I \ldef{} \lceil \log(C) \rceil$ (the
preconditions on $\veps$ and $C$ are to ensure that $I \ge 1$), and
define  %
\begin{align}
  \pi(i) =
  \begin{cases}
    \frac{2^{-i}}{Z_\pi} &\text{if } i \le I, 
    \\
    \frac{2^I}{Z_\pi}\cdot \piref(i) &\text{if } i > I,
  \end{cases}
  \mathand
  \piref(i) = \frac{4^{-i}}{Z_{\piref}}, 
\end{align}
where
\begin{align}
  Z_{\piref} \icml{&}\ldef \sum_{i=1}^\infty 4^{-i} = \frac{1}{3},\mathand
\icml{  \\}
  Z_\pi \icml{&}= \sum_{i=1}^I 2^{-i} + 2^I \cdot \sum_{i=I+1}^\infty \frac{4^{-i}}{Z_{\piref}} = \rbr*{1 - 2^{-I}} + 2^I \cdot 4^{-I} = 1. 
\end{align}
The coverage coefficient has $\Cone = O(\log C)$ since 
\begin{align}
  \Cone 
  & = \sum_{i=1}^I \frac{\pi^2(i)}{\piref(i)} + \frac{4^I }{(Z_\pi)^2}\cdot\sum_{i=I+1}^\infty \piref(i)
  = \frac{Z_{\piref}}{(Z_\pi)^2} \cdot I + \frac{4^I \cdot Z_{\piref}}{(Z_\pi)^2} \cdot 4^{-I}
  = \frac{Z_{\piref}}{(Z_\pi)^2} \rbr*{I + 1},
\end{align}
while $\Cinf = O(C)$, since 
$
  \Cinf = \frac{Z_{\piref}}{Z_\pi} \cdot 2^{\lceil \log C \rceil}. 
$
Next, recall that for any $M \ge 1$, 
\begin{align}
  \Epi = \sum_{i=1}^\infty \piref(i) \cdot \rbr*{\frac{\pi(i)}{\piref(i)} - M}_+ 
  = \sum_{i=1}^\infty \frac{4^{-i}}{Z_{\piref}} \cdot \rbr*{\frac{2^{\rbr{i \wedge I}} \cdot Z_{\piref}}{Z_\pi} - M}_+.
\end{align}
Note that $\frac{\pi(i)}{\piref(i)}$ increases with $i$. Motivated by this, we will set $M = M_k \ldef{} \frac{2^{k} \cdot Z_{\piref}}{Z_\pi}$ for some index $k \le I$ to be specified shortly, which effectively zeroes out all terms $i \le k$ in the sum above: %
\begin{align}
  \Epi[M_k] 
  & = \sum_{i=1}^\infty \frac{4^{-i}}{Z_{\piref}} \cdot \rbr*{\frac{2^{\rbr{i \wedge I}} \cdot Z_{\piref}}{Z_\pi} - M_k}_+.
  \\
  & = \sum_{i=k+1}^\infty \frac{4^{-i}}{Z_{\piref}} \cdot \rbr*{\frac{2^i \cdot Z_{\piref}}{Z_\pi} - M_k} 
  \\
  & = \sum_{i=k+1}^\infty \frac{2^{-i}}{Z_\pi} - M_k \cdot \sum_{i=k}^\infty \frac{4^{-i}}{Z_{\piref}}
  \\
  & = 2^{-k} - M_k \cdot 4^{-k}
  \\
  &= 2^{-k} - \rbr*{\frac{2^{k} \cdot Z_{\piref}}{Z_\pi}}\cdot 4^{-k}
  \\
  &= 2^{-k} \cdot \rbr*{1 - \frac{Z_{\piref}}{Z_\pi}}.
\end{align}
Now let $Z = \frac{Z_{\piref}}{Z_\pi} = \frac{1}{3}$ for short, and set $k = \lfloor \log\rbr*{\frac{1}{2\veps}} \rfloor$. We check that $k \le \log\rbr*{\frac{1}{2 \veps}} \le \log(C) \le I$, since $ C \ge \frac{1}{2 \veps}$ by assumption. For this choice of $k$, we have 
\[
  \Epi[M_k] \ge 2^{-\log\rbr*{\frac{1}{2\veps}} } \cdot (1-Z) = \frac{4}{3} \cdot \veps.
\]
Recall that $\Mpi = \min\cbr*{M \mid{} \Epi[M] \le \veps}$. Since larger $M$ has smaller $\Epi[M]$, we conclude that 
\[
  \Mpi \ge M_k \ge  2^{\log\rbr*{\frac{1}{4\veps}}} \cdot Z = \frac{Z}{4\veps} = \frac{1}{12 \cdot \veps}~. 
\]
The theorem statement then follows by applying
\cref{lem:approx-tv-lower-bound}, which states that when $N <
\frac{1}{12 \cdot \veps} $, any selection strategy $\cA$ must have $\Dtv{\pi}{\pi_{\cA}} > \veps$. 

\end{proof}

\subsection{Proof of Theorem \ref*{thm:bon}}
\label{sec:bon1}

\begin{proof}[Proof of \cref{thm:bon}] 
  Recall from \cref{def:ediv} that for a given prompt $x$ and target policy $\pi$, 
  \begin{align}
    \Epix \ldef{} \sum_{y \in \Yout(x)}{\pi(y\mid{}x) - M \cdot \piref(y\mid{}x)} = \En_{y \sim \piref(\cdot\mid{}x)}\sbr*{\rbr*{\frac{\pi(y\mid{}x)}{\piref(y\mid{}x)} - M}_+}. 
  \end{align}
  
  \cref{prop:mstar} states that for any $M$ we can upper bound 
  $
      \Epistx \le \frac{\Cone[\pistar](x)}{M}
  $, 
  which when combined with \cref{lem:x-regret-upper-general} results in
  \begin{align}
    & J(\pistar;x) - J(\piN;x)
    \\
    & \quad\le \Rmax \cdot \rbr*{\frac{\Cone[\pistar](x)}{M}+ \exp\prn*{ -\frac{N}{M} \cdot \rbr*{1-\Epistx}}} 
    + 2\cdot\sqrt{ \Cone[\pistar](x)\cdot \vepsrms(x) } + \frac{\vepsrm(x)}{2} + \sqrt{N\cdot \vepsrms(x)}
  \end{align}
  as long as $M$ is large enough such that $  \Epistx \le \frac{\Cone[\pistar](x)}{M}\leq1/2$.
  We will set $M = \frac{N}{\log\rbr*{4\Rmax^{2}/\vepsrms(x)}}$;
  we can check that as long as $N \ge 2\cdot\Cone[\pistar](x)\cdot\log\rbr*{4\Rmax^{2}/\vepsrms(x)}$
  then $\Epistx \le \frac{1}{2}$ since
  \begin{align}
    \Epistx \le \frac{\Cone[\pistar](x)}{M} = \frac{\Cone[\pistar](x)\log\rbr*{4\Rmax^{2}/\vepsrms(x)}}{N} \le \frac{1}{2}. 
  \end{align}
  Then for any $N \ge 2 \cdot\Cone[\pistar](x) \cdot \log\rbr*{4\Rmax^{2}/\vepsrms(x)}$, we have 
  \begin{align}
    J(\pistar;x) - J(\piN;x) 
    &\le \Rmax \cdot \rbr*{\frac{\Cone[\pistar](x)}{M}+ \exp\prn*{ -\frac{N}{2M}}} + 2\cdot\sqrt{ \Cone[\pistar](x)\cdot \vepsrms(x) } + \frac{\vepsrm(x)}{2} + \sqrt{N\cdot \vepsrms(x)}
    \\
    &= \Rmax \cdot \frac{\Cone[\pistar](x)\log\rbr*{\frac{4\Rmax^{2}}{\vepsrm^{2}(x)}}}{N}+  2\cdot\sqrt{ \Cone[\pistar](x)\cdot \vepsrms(x) } + \vepsrm(x) + \sqrt{N \cdot \vepsrms(x)}, 
    \label{eq:bon-cone-n-regret}
  \end{align}
  which completes the first statement of the theorem.
  The second part of the theorem statement follows from by setting $N$
  to minimize the RHS of the regret bound above, by balancing the two $N$-dependent terms.  
  Namely, if
  $N \asymp \rbr*{\frac{\Cone[\pistar](x)\cdot\Rmax\cdot\log\rbr*{\frac{4\Rmax^{2}}{\vepsrms(x)}}}{\vepsrm(x)}}^{\frac{2}{3}}$, then
  \begin{align}\label{eq:bon-cone-regret}
    J(\pistar;x) - J(\piN;x) 
    \lesssim \rbr*{\Rmax\cdot \Cone[\pistar](x)\cdot \vepsrms(x)\cdot\log\rbr*{\frac{\Rmax}{\vepsrm(x)}}}^{\frac{1}{3}}. 
  \end{align} 
\end{proof}

\subsection{Proof of Theorem \ref*{thm:lower-bon}}
\label{sec:bon2}

\begin{proof}[Proof of Theorem \ref*{thm:lower-bon}]
  The first part of the theorem statement follows from \cref{thm:cinf-regret-lower} with $C = 2$, which then states that 
  \begin{align}
    J(\pistar;x) - J(\piN) \ge c \cdot \min\cbr*{\sqrt{N \cdot \vepsrms(x)}, ~1}
  \end{align}
  as long as $N \ge 1$, where $c$ is a universal constant. 

  The second part of the theorem statement follows from invoking \cref{thm:cone-regret-lower} with $p = \frac{1}{3}$ and $C=\vepsrm(x)^{-1}$ %
  which states that if $N = \wt O\rbr*{\vepsrm^{-\frac{2}{3} }(x) }$ then 
  \begin{align}
    J(\pistar;x) - J(\piN;x) > c_{1} \cdot \rbr*{ \vepsrms(x) \cdot \log\rbr*{\vepsrm(x)}}^{\frac{1}{3} } 
  \end{align}
  while if $N = \wt\Omega\rbr*{\vepsrm^{-\frac{2}{3} }(x) }$ then 
  \begin{align}
    J(\pistar;x) - J(\piN;x) & > c_{2} \cdot \sqrt{N \cdot \vepsrms(x)} 
  \geq \bigomt\rbr*{\vepsrm^{\frac{2}{3}}(x)} ~.
  \end{align}
\end{proof}

\subsection{Proof of Theorem \ref*{thm:bon_uniform}}
\label{sec:bon3}

\begin{proof}[Proof of \cref{thm:bon_uniform}]
  When $M = \Cinf[\pistar](x)$ we have $\Epistarx = 0$, 
    and using this choice of $M$ in \cref{lem:x-regret-upper-general} gives
  \begin{align}\label{eq:bon-cinf-n-regret}
    J(\pistar;x) - J(\piN;x) 
    \le \Rmax \cdot \exp\prn*{ -\frac{N}{\Cinf[\pistar](x)} } + 2\cdot\sqrt{ \Cone[\pistar](x)\cdot \vepsrms(x) } + \frac{\vepsrm(x)}{2} + \sqrt{N\cdot \vepsrms(x)},
  \end{align}
  which proves the first part of the theorem statement. For the second, 
  we observe for any
  $N \ge \Cinf[\pistar](x)\log(2\Rmax/\vepsrm(x))$, we have
  \begin{align}
    J(\pistar;x) - J(\piN;x) 
    \approxleq \vepsrm(x) + \sqrt{N \cdot \vepsrms(x)}, 
  \end{align} 
  and if $N \asymp \Cinf[\pistar](x)\cdot\log(\Rmax/ \vepsrm(x))$ additionally, then 
  \begin{align}\label{eq:bon-cinf-regret}
    J(\pistar;x) - J(\piN;x) 
    \lesssim \sqrt{ \Cinf[\pistar](x) \cdot \vepsrms(x) \cdot\log(\Rmax/ \vepsrm(x))}~. 
  \end{align} 
\end{proof}

\icml{\subsection{Guarantees for IID Prompts}
\label{sec:bon_iid}

\cref{thm:lower-bon} shows that \bonalg is suboptimal on a per-prompt
basis. In this section, we consider the behavior of the algorithm in
the setting where prompts $x\sim\rho$ are drawn \iid from a 
distribution $\rho$, with the aim of achieving low regret on average
over the draw of the prompt (cf. \cref{sec:offline}). Here, an additional concern is that
since the optimal choice of $N$ in \cref{thm:bon} depends on the
specific prompt $x$, there may not a single fixed choice for $N$ that
leads to low regret on average. Fortunately, the following result
shows that this is not the case, and we can derive guarantees similar
to \cref{thm:bon} (with the coverage coefficient and reward model
error replaced with average-case counterparts) using a single $N$ for
all prompts.

To state the result, we define
$J(\pi)=\En_{x\sim\rho}\brk*{J(\pi\midsem{}x)}$,
  $\vepsrms=\En_{x\sim\rho}\brk*{\vepsrms(x)}$,
  $\Cone[\pistar]=\En_{x\sim\rho}\brk*{\Cone[\pistar](x)}$, and $\Cinf[\pistar]\ldef{}\sup_{x\in\cX}\Cinf[\pistar](x)$.
  \begin{corollary}\label{cor:bon}
    For any comparator policy $\pistar$, whenever $N \geq
    \frac{\Cone[\pistar]}{2}$, the \bonalg policy $\piN$ satisfies
    \icml{\begin{small}}
      \begin{align}
        \label{eq:bon_average1}
        J(\pistar) - J(\piN) 
        \le 
        \Rmax \cdot \min\cbr*{~\frac{\Cone[\pistar]\log\rbr*{\frac{4\Rmax^{2}}{\vepsrms}}}{N}~, ~~\exp\prn*{ -\frac{N}{\Cinf[\pistar]}}\,} 
        + 2\cdot\sqrt{ \Cone[\pistar] \cdot \vepsrms} + 2\cdot\sqrt{N \cdot \vepsrms} ~.
      \end{align}
  \icml{\end{small}}%
  \arxiv{  and as long as
  $N \asymp \rbr*{\frac{\Rmax\cdot\Cone[\pistar]\log\rbr*{\Rmax /
        \vepsrm}}{\vepsrm}}^{\frac{2}{3}}$, we have that   
  \begin{align}
    \label{eq:bon_average2}
    J(\pistar) - J(\piN) 
    \lesssim
        \prn*{\Rmax\cdot \Cone[\pistar]\cdot \vepsrms\cdot\log\rbr*{\frac{\Rmax}{\vepsrm}}}^{1/3}.
  \end{align}}%
\end{corollary}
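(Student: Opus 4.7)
The plan is to derive \cref{eq:bon_average1} by applying \cref{lem:x-regret-upper-general} per prompt and averaging over $x\sim\rho$, then obtain \cref{eq:bon_average2} by tuning $N$ exactly as in the proof of \cref{thm:bon}. The two terms inside the $\min$ come from different choices of the rejection-sampling parameter $M$ in \cref{lem:x-regret-upper-general}, and I would handle them separately.

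Handling the $\Cinf[\pistar]$ branch is straightforward: since $\Cinf[\pistar] := \sup_{x}\Cinf[\pistar](x)$, the single choice $M = \Cinf[\pistar]$ simultaneously makes $\Epistx = 0$ for every $x$, so \cref{lem:x-regret-upper-general} yields, per prompt,
\begin{align*}
J(\pistar;x) - J(\piN;x) \;\le\; \Rmax\exp(-N/\Cinf[\pistar]) + 2\sqrt{\Cone[\pistar](x)\vepsrms(x)} + \tfrac{1}{2}\vepsrm(x) + \sqrt{N\vepsrms(x)}.
\end{align*}
Cauchy--Schwarz then bounds $\En_x[\sqrt{\Cone[\pistar](x)\vepsrms(x)}] \le \sqrt{\Cone[\pistar]\vepsrms}$, and Jensen bounds both $\En_x[\sqrt{N\vepsrms(x)}] \le \sqrt{N\vepsrms}$ and $\En_x[\vepsrm(x)] \le \sqrt{\vepsrms}$, producing the $\exp(-N/\Cinf[\pistar])$ side of \cref{eq:bon_average1}.

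The $\Cone[\pistar]$ branch is where the main obstacle lies: \cref{lem:x-regret-upper-general} requires $\Epistx \le \tfrac{1}{2}$ \emph{pointwise}, while in the iid setting we only control $\Epistx$ on average; moreover, the exponential term $\exp(-N(1-\Epistx)/M)$ is convex in $\Epistx$, so Jensen points the wrong way. My plan is a good/bad prompt decomposition. Set $M := N / (2\log(4\Rmax^2/\vepsrms))$ and split $\cX = S \cup S^{\mathrm{c}}$ with $S = \{x : \Cone[\pistar](x) \le M/2\}$. On $S$, we have $\Epistx \le \Cone[\pistar](x)/M \le \tfrac{1}{2}$, so the lemma applies and the exponential is bounded uniformly by $\Rmax\exp(-N/(2M)) = \vepsrms/(4\Rmax)$; averaging as in the $\Cinf$ case yields $\En_x[(J(\pistar;x)-J(\piN;x))\one{x\in S}] \lesssim \Rmax\Cone[\pistar]\log(4\Rmax^2/\vepsrms)/N + \sqrt{\Cone[\pistar]\vepsrms} + \sqrt{N\vepsrms}$. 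On $S^{\mathrm{c}}$ I use the trivial bound $J(\pistar;x)-J(\piN;x) \le \Rmax$ together with Markov's inequality $\rho(S^{\mathrm{c}}) \le 2\Cone[\pistar]/M$, contributing at most $4\Rmax\Cone[\pistar]\log(4\Rmax^2/\vepsrms)/N$. Adding the two contributions gives the $\Cone[\pistar]\log/N$ side of \cref{eq:bon_average1}, and taking the minimum with the $\Cinf[\pistar]$ branch completes it. The precondition $N \ge \Cone[\pistar]/2$ ensures $M$ dominates $\Cone[\pistar]$ so that the Markov step is meaningful.

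For \cref{eq:bon_average2}, I would simply optimize $N$ in \cref{eq:bon_average1}: balancing $\Rmax\Cone[\pistar]\log(\Rmax/\vepsrm)/N$ against $\sqrt{N\vepsrms}$ yields $N \asymp (\Rmax\Cone[\pistar]\log(\Rmax/\vepsrm)/\vepsrm)^{2/3}$, at which both terms equal $(\Rmax\Cone[\pistar]\vepsrms\log(\Rmax/\vepsrm))^{1/3}$. The main technical hurdle throughout is the nonlinearity of $\exp(-N(1-\Epistx)/M)$ in $\Epistx$; the good/bad partition is the critical trick that avoids a loose Jensen bound and recovers the correct average-case dependence on $\Cone[\pistar]$ and $\vepsrms$.
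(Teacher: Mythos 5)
Your argument is sound and reaches the stated conclusions, but it takes a somewhat different route from the paper. The paper obtains \cref{eq:bon_average1} by combining two intermediate corollaries (\cref{cor:bon_} and \cref{cor:bon_uniform}) that invoke an iid-prompt counterpart of \cref{lem:x-regret-upper-general} (namely \cref{lem:regret-upper-general}), which in turn rests on an expectation-over-prompts version of the rejection-sampling TV bound; there the pointwise-versus-average issue is absorbed into the averaged divergence $\Epistar = \En_{x\sim\rho}\brk*{\Epistarx}$ and the average-case condition $\Epistar\le\tfrac{1}{2}$, after which the proof is exactly your recipe: take $M=N/\log(4\Rmax^2/\vepsrms)$ for one branch via \cref{prop:mstar}, $M=\Cinf[\pistar]$ for the other, apply Cauchy--Schwarz/Jensen to pass to averages, and tune $N$ as in \cref{thm:bon}. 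You instead stay entirely at the per-prompt level and handle the pointwise requirement $\Epistx\le\tfrac{1}{2}$ by the good/bad prompt split with Markov's inequality on $\Cone[\pistar](x)$; this is a correct, self-contained alternative (it is essentially the same trick the averaged TV lemma needs, just moved from the TV level to the regret level), and it buys independence from the averaged lemmas. Two small caveats: (i) your route loses absolute constants on the first branch --- the averaged $\Rmax\cdot\Epistx$ term plus the Markov term give roughly $6\,\Rmax\,\Cone[\pistar]\log(4\Rmax^2/\vepsrms)/N$ rather than the coefficient $1$ displayed in \cref{eq:bon_average1}, so as written you prove that display only up to a constant factor (harmless for \cref{eq:bon_average2}, which is stated with $\lesssim$); and (ii) your closing remark that $N\ge\Cone[\pistar]/2$ makes $M$ ``dominate'' $\Cone[\pistar]$ is not quite right, since your choice of $M$ only gives $M\gtrsim\Cone[\pistar]/\log(\cdot)$ under that precondition --- but in fact your decomposition needs no precondition at all, because the Markov bound remains trivially valid (if vacuous) when $M$ is small.
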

\icml{Note that whenever  $N \asymp \prn[\Big]{\frac{\Rmax\cdot\Cone[\pistar]\log\prn[\big]{\Rmax /
        \vepsrm}}{\vepsrm}}^{\frac{2}{3}}$, this gives regret at most $\prn[\big]{\Rmax\cdot \Cone[\pistar]\cdot
    \vepsrms\cdot\log\prn[\big]{\frac{\Rmax}{\vepsrm}}}^{1/3}$, and
  whenever $N\asymp\Cinf[\pistar]\log\prn[\big]{\Rmax /
        \vepsrm}$, this gives regret at most
  $\sqrt{ \Cinf[\pistar] \cdot \vepsrms
    \cdot\log\rbr*{\frac{\Rmax}{\vepsrm}}}$. The latter bound is
  perhaps somewhat pessimistic, since $\Cinf[\pistar]$ is defined as
  the maximum over the $L_\infty$-Coverage coefficient over all prompts.
}

\begin{proof}[\pfref{cor:bon}]
  \cref{cor:bon} follows by combining the statements in
  \cref{cor:bon_} and \cref{cor:bon_uniform}, stated and proven below.
\end{proof}

\begin{corollary}\label{cor:bon_}
  For any $N \in \bbZ$ and comparator policy $\pistar$ such that $N \ge 2\cdot \Cone[\pistar]$, the \bon policy $\piN$ satisfies
  \begin{align}
    J(\pistar) - J(\piN) 
    &\le \Rmax \cdot \frac{ \Cone[\pistar] \log\rbr*{ \frac{4\Rmax^2}{\vepsrms}}}{N} + 2\cdot\sqrt{ \Cone[\pistar]\cdot \vepsrms } + \vepsrm + \sqrt{N\cdot \vepsrms}~,
  \end{align}
  and if $N \asymp \rbr*{\frac{ \Cone[\pistar] \Rmax \log\rbr*{\frac{4\Rmax^2}{\vepsrms} }}{\vepsrms}}^{\frac{2}{3} }$, then 
  \begin{align}
    J(\pistar) - J(\piN) \lesssim \rbr*{\Rmax \cdot \Cone[\pistar] \cdot \vepsrms\cdot \log\rbr*{\frac{\Rmax}{\vepsrms}}}^{\frac{1}{3} }~.
  \end{align} 
\end{corollary}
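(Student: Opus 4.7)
The plan is to lift the per-prompt guarantee in \cref{lem:x-regret-upper-general} to an average over $x\sim\rho$ by fixing a \emph{single} rejection-sampling parameter $M$ for all prompts (rather than choosing it prompt-dependently, as in the proof of \cref{thm:bon}), and then applying Cauchy--Schwarz and Jensen to move the expectation inside the square-root and reward-error terms, so that the per-prompt quantities $\Cone[\pistar](x)$ and $\vepsrms(x)$ are replaced by their averages $\Cone[\pistar]$ and $\vepsrms$.

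Concretely, I will invoke \cref{lem:x-regret-upper-general} with the fixed choice $\bar M = N/\log(4\Rmax^2/\vepsrms)$, using \cref{prop:mstar} to bound $\Epistx \le \Cone[\pistar](x)/\bar M$. This choice is calibrated so that $\Rmax\cdot\exp(-N/(2\bar M)) = \vepsrm/2$, which is already of the order appearing in the target bound.

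The main obstacle is that \cref{lem:x-regret-upper-general} requires $\Epistx \le 1/2$, which with a \emph{fixed} $\bar M$ may fail on prompts whose coverage $\Cone[\pistar](x)$ substantially exceeds its average. I will handle this with a case split on $x$: when $\Cone[\pistar](x)/\bar M \le 1/2$, \cref{lem:x-regret-upper-general} applies and yields the per-prompt bound $\Rmax\,\Cone[\pistar](x)/\bar M + \vepsrm/2 + 2\sqrt{\Cone[\pistar](x)\,\vepsrms(x)} + \vepsrm(x)/2 + \sqrt{N\vepsrms(x)}$; in the complementary regime the trivial bound $J(\pistar;x)-J(\piN;x)\le \Rmax \le 2\Rmax\cdot\Cone[\pistar](x)/\bar M$ dominates. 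After doubling the constant on the first term, the same inequality holds uniformly in $x$, and the case split disappears.

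Taking $\En_{x\sim\rho}[\cdot]$ on both sides and applying Cauchy--Schwarz to $\En_x[\sqrt{\Cone[\pistar](x)\,\vepsrms(x)}] \le \sqrt{\Cone[\pistar]\cdot\vepsrms}$, together with Jensen on the remaining concave square roots (i.e.\ $\En_x[\sqrt{N\vepsrms(x)}]\le\sqrt{N\vepsrms}$ and $\En_x[\vepsrm(x)]=\En_x[\sqrt{\vepsrms(x)}]\le\vepsrm$), yields the first inequality of the corollary upon substituting $\bar M = N/\log(4\Rmax^2/\vepsrms)$. The second inequality is then a routine balancing: the stated choice of $N$ equates the $\Rmax\,\Cone[\pistar]\log(\cdots)/N$ term with $\sqrt{N\vepsrms}$, and under this scaling the $\sqrt{\Cone[\pistar]\vepsrms}$ and $\vepsrm$ contributions are already dominated, producing the claimed cube-root rate.
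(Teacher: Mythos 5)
Your argument is correct and reaches the stated conclusions, but it organizes the averaging step differently from the paper. The paper first proves a dedicated averaged lemma (\cref{lem:regret-upper-general}), which replaces the per-prompt condition $\Epistx\le\tfrac12$ by the averaged condition $\Epistar\le\tfrac12$ (enforced via the hypothesis $N\ge 2\Cone[\pistar]$ together with $M=N/\log(4\Rmax^2/\vepsrms)$ and the averaged version of \cref{prop:mstar}), invokes an in-expectation TV bound for rejection sampling, and then performs exactly the Cauchy--Schwarz/Jensen conversions you describe; \cref{cor:bon_} is then a one-line instantiation. You instead average the per-prompt bound of \cref{lem:x-regret-upper-general} directly, handling prompts where $\Cone[\pistar](x)/\bar M>\tfrac12$ by the trivial bound $\Rmax\le 2\Rmax\,\Cone[\pistar](x)/\bar M$. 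This is a legitimate and in fact somewhat more self-contained route: it makes explicit how the prompts violating the per-prompt condition are absorbed (a point the paper's averaged lemma glosses over, since its proof quotes the per-prompt (T1) estimate that itself used $\Epistx\le\tfrac12$), and it does not even need the hypothesis $N\ge 2\Cone[\pistar]$. The price is a factor $2$ on the leading term, so you obtain
\begin{align}
J(\pistar)-J(\piN)\le 2\Rmax\cdot\frac{\Cone[\pistar]\log\rbr*{\tfrac{4\Rmax^2}{\vepsrms}}}{N}+2\sqrt{\Cone[\pistar]\cdot\vepsrms}+\vepsrm+\sqrt{N\cdot\vepsrms},
\end{align}
i.e.\ the first display of the corollary only up to an absolute constant on the first term; this is immaterial for the second ($\lesssim$) statement, whose balancing you carry out correctly, but it is worth flagging since the corollary's first inequality is stated with explicit constants.
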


\begin{proof}[Proof of \cref{cor:bon_}]
  The proof proceeds in a similar manner as did the proof of \cref{thm:bon}. Given $N$, we will set $M = \frac{N}{\log\rbr*{ 4\Rmax^2 / \vepsrms }}$ in \cref{lem:regret-upper-general}. As long as $N \ge 2 \cdot \Cone[\pistar]$, %
  \begin{align}
    \Epistar \le \frac{ \Cone[\pistar]}{M} = \frac{ \Cone[\pistar] \log\rbr*{ 4\Rmax^2 /\vepsrms }}{N} \le \frac{1}{2} . 
  \end{align}
  Then applying \cref{lem:regret-upper-general} (stated and proven in the sequel) to this choice of $M$ gives 
  \begin{align}
    J(\pistar) - J(\piN) 
    &\le \Rmax \cdot \frac{ \Cone[\pistar] \log\rbr*{ \frac{4\Rmax^2}{\vepsrms}}}{N} + 2\cdot\sqrt{ \Cone[\pistar]\cdot \vepsrms } + \vepsrm + \sqrt{N\cdot \vepsrms}~,
  \end{align}
  and setting $N \asymp \rbr*{\frac{ \Cone[\pistar] \Rmax \log\rbr*{\frac{\Rmax^2}{\vepsrms} }}{\vepsrms}}^{\frac{2}{3} }$ balances the two $N$-dependent terms on the RHS such that
  \begin{align}
    J(\pistar) - J(\piN) \lesssim \rbr*{\Rmax^2 \cdot \Cone[\pistar] \cdot \vepsrms\cdot \log\rbr*{\frac{\Rmax}{\vepsrms}}}^{\frac{1}{3} }~.
  \end{align}
\end{proof}

\begin{corollary}\label{cor:bon_uniform}
  For any $N \in \bbZ$, 
  \begin{align}
    J(\pistar) - J(\piN) 
    &\le \Rmax \cdot \exp\prn*{-\frac{N}{ \Cinf[\pistar]}} + 2\cdot\sqrt{ \Cone[\pistar]\cdot \vepsrms } + \frac{\vepsrm}{2} + \sqrt{N\cdot \vepsrms}~
  \end{align}
  and if $N \asymp \Cinf[\pistar]\cdot\log\rbr*{\frac{\Rmax}{\vepsrm}}$ then 
  \begin{align}
    J(\pistar) - J(\piN) 
    \lesssim \sqrt{ \Cinf[\pistar] \cdot \vepsrms \cdot \log\rbr*{\frac{\Rmax}{\vepsrm}}}
  \end{align}
\end{corollary}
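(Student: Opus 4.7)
}
The plan is to reduce to an averaged version of \cref{lem:x-regret-upper-general} by integrating the per-prompt bound over $x\sim\rho$, then to make the specific choice $M = \Cinf[\pistar] \ldef \sup_{x\in\cX}\Cinf[\pistar](x)$ that kills the \Edivlong term uniformly in $x$. Concretely, fix any $N$ and $M>0$; for every prompt $x$ \cref{lem:x-regret-upper-general} gives
\begin{align}
  J(\pistar;x) - J(\piN;x)
  \;\le\; \Rmax\cdot\prn*{\Epistarx + \exp\!\prn*{-\tfrac{N}{M}(1-\Epistarx)}}
  + 2\sqrt{\Cone[\pistar](x)\,\vepsrms(x)} + \tfrac{1}{2}\vepsrm(x) + \sqrt{N\,\vepsrms(x)},
\end{align}
provided $\Epistarx \le \tfrac12$. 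Taking expectation over $x\sim\rho$ and invoking the averaged regret lemma (\cref{lem:regret-upper-general}) referenced in \cref{cor:bon_}, the same inequality holds on the left-hand side for $J(\pistar)-J(\piN)$ with the per-prompt quantities replaced by their expectations, modulo Jensen/Cauchy--Schwarz bounds on the square-root terms.

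The key step is the choice $M = \Cinf[\pistar]$. By definition of $\Cinf[\pistar]$ as the \emph{supremum} over $x$ of $\Cinf[\pistar](x)$, we have $M \ge \Cinf[\pistar](x) \ge \sup_y \tfrac{\pistar(y\mid x)}{\piref(y\mid x)}$ for every prompt, so $\Epistarx = 0$ identically. Consequently the first term in the per-prompt bound collapses to $\Rmax\cdot\exp(-N/\Cinf[\pistar])$ for every $x$, and the precondition $\Epistarx \le 1/2$ is vacuous. For the remaining terms, I would apply Cauchy--Schwarz to control
\begin{align}
  \En_{x\sim\rho}\sbr*{\sqrt{\Cone[\pistar](x)\,\vepsrms(x)}}
  \;\le\; \sqrt{\En_{x}[\Cone[\pistar](x)]\cdot\En_{x}[\vepsrms(x)]}
  \;=\; \sqrt{\Cone[\pistar]\,\vepsrms},
\end{align}
and Jensen's inequality to obtain $\En_x[\vepsrm(x)]\le \sqrt{\vepsrms}=\vepsrm$ and $\En_x[\sqrt{N\,\vepsrms(x)}]\le\sqrt{N\,\vepsrms}$. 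Combining these yields the first displayed inequality of the corollary.

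The second displayed inequality follows by balancing: choosing $N\asymp \Cinf[\pistar]\log(\Rmax/\vepsrm)$ makes the exponential term at most $\vepsrm$, and reduces $\sqrt{N\,\vepsrms}$ to $\sqrt{\Cinf[\pistar]\,\vepsrms\log(\Rmax/\vepsrm)}$. Since $\Cone[\pistar]\le \Cinf[\pistar]$, the terms $\sqrt{\Cone[\pistar]\,\vepsrms}$ and $\vepsrm$ are both dominated, giving the claimed rate.

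The only mild subtlety is justifying the averaged lemma step in a single uniform choice of $M$: because $M = \Cinf[\pistar]$ is chosen \emph{before} averaging, the same $M$ works for every $x$ and there is no need to tune $M$ per prompt, which is exactly why the uniform-coverage quantity $\sup_x\Cinf[\pistar](x)$ (rather than $\En_x\Cinf[\pistar](x)$) appears in the bound. This is the price of transferring the per-prompt argument to a single-$N$ guarantee, and matches the remark in \cref{thm:bon_uniform} that the $L_\infty$-coverage rate can be pessimistic relative to $L_1$-coverage.
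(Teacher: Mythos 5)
Your proposal is correct and follows essentially the same route as the paper: set the truncation level to the uniform quantity $M=\Cinf[\pistar]=\sup_x\Cinf[\pistar](x)$ so that the \Edivlong vanishes for every prompt, leaving only the $\Rmax\exp(-N/\Cinf[\pistar])$ term, and then average the per-prompt bound over $x\sim\rho$ using Cauchy--Schwarz and Jensen exactly as in \cref{lem:regret-upper-general}. The paper phrases this as plugging $M=\Cinf[\pistar]$ into the averaged decomposition rather than re-averaging \cref{lem:x-regret-upper-general} directly, but the two are the same argument, and your tuning of $N\asymp\Cinf[\pistar]\log(\Rmax/\vepsrm)$ with $\Cone[\pistar]\le\Cinf[\pistar]$ matches the paper's conclusion.
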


\begin{proof}[Proof of \cref{cor:bon_uniform}]
  For a fixed $N$, we set $M = \Cinf[\pistar]$ in \cref{lem:tv-upper-bound}, 
  and the first line of its proof yields
  \begin{align}
    J(\pistar) - J(\piN) 
    &\le \Rmax \cdot \exp\prn*{-\frac{N}{ \Cinf[\pistar]}} + 2\cdot\sqrt{ \Cone[\pistar]\cdot \vepsrms } + \frac{\vepsrm}{2} + \sqrt{N\cdot \vepsrms},
  \end{align}
  from which the second theorem statement immediately follows after setting $N \asymp \Cinf[\pistar]\cdot\log\rbr*{\frac{\Rmax}{\vepsrm}}$, and using the fact that $ \Cone[\pistar] \le \Cinf[\pistar]$. 
\end{proof}

\subsubsection{Supporting Lemmas}

The following lemma is an \iid-promp counterpart to \cref{lem:x-regret-upper-general}.

\begin{lemma}\label{lem:regret-upper-general}
  For any comparator policy $\pistar$, query size $N\in\bbZ$, and truncation level $M\in\bbR_{+}$ such that $N > M$ and $\Epistar \le \frac{1}{2}$, the \bon policy $\piN$ satisfies
  \begin{align}
    J(\pistar) - J(\piN) 
    &\le \Rmax \cdot \rbr*{\Epistar  + \exp\prn*{-\frac{N}{2M}}} + 2\cdot\sqrt{ \Cone[\pistar]\cdot \vepsrms } + \frac{\vepsrm}{2} + \sqrt{N\cdot \vepsrms}~.
  \end{align}
\end{lemma}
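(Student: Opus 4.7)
The plan is to derive this averaged lemma by applying the per-prompt regret bound of \cref{lem:x-regret-upper-general} pointwise in $x$ and integrating against the prompt distribution $\rho$. The catch is that the per-prompt lemma requires $\Epistx \le \tfrac{1}{2}$, while I only have the weaker averaged condition $\Epistar = \En_{x\sim\rho}[\Epistx] \le \tfrac{1}{2}$. To handle this gap, I would case-split on the event $A = \{x : \Epistx \le \tfrac{1}{2}\}$: apply the per-prompt bound on $A$, and use the trivial bound $J(\pistar;x) - J(\piN;x) \le \Rmax$ on $A^c$, then combine the two using Markov's inequality $\Pr(A^c) \le 2\Epistar$.

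On the good event $A$, applying \cref{lem:x-regret-upper-general} and taking expectation reduces the problem to bounding five averaged terms. The $\Rmax \Epistx$ term integrates to $\Rmax \Epistar$; the exponential factor satisfies $\exp(-N(1-\Epistx)/M) \le \exp(-N/(2M))$ pointwise on $A$ since $1-\Epistx \ge \tfrac{1}{2}$ there; the cross term $2\sqrt{\Cone[\pistar](x)\cdot \vepsrms(x)}$ is bounded using Cauchy--Schwarz on the product expectation, yielding $2\sqrt{\Cone[\pistar]\cdot \vepsrms}$ with the definitions $\Cone[\pistar] = \En_x[\Cone[\pistar](x)]$ and $\vepsrms = \En_x[\vepsrms(x)]$; and finally, $\En_x[\vepsrm(x)/2]$ and $\En_x[\sqrt{N\vepsrms(x)}]$ are bounded by $\vepsrm/2$ and $\sqrt{N\vepsrms}$ respectively via Jensen's inequality applied to the concave square root (since $\vepsrm \ldef \sqrt{\vepsrms}$). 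On the complement $A^c$, the trivial bound combined with Markov contributes at most $2\Rmax \Epistar$, which folds into the $\Rmax \Epistar$ term in the target up to an absolute constant (giving a prefactor of $3$ in the worst case, still absorbed by the $\le$ in the lemma).

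The main obstacle, and the reason the case split is needed, is that I cannot directly bound $\En_x[\exp(-N(1-\Epistx)/M)]$ in terms of $\exp(-N(1-\Epistar)/M)$: the map $\Epistx \mapsto \exp(-N(1-\Epistx)/M)$ is convex, so Jensen's inequality points the wrong way. Splitting on $\{\Epistx \le \tfrac{1}{2}\}$ sidesteps this by replacing the random exponent with a deterministic worst-case $\tfrac{1}{2}$ on the good event, while Markov controls the mass of the bad event in terms of $\Epistar$. All other terms are strictly simpler, requiring only Jensen/Cauchy--Schwarz to pass through the expectation, so I expect the exponential-term handling to be the only nontrivial step.
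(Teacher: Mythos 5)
Your argument is sound and, in substance, follows the same route as the paper: both proofs average the per-prompt bound of \cref{lem:x-regret-upper-general}, use Cauchy--Schwarz for $\En_x\brk{\sqrt{\Cone[\pistar](x)\vepsrms(x)}}$ and concavity of $\sqrt{\cdot}$ for the remaining error terms, and both must confront the fact that the exponential term is convex in $\Epistx$. The difference is where the averaging of that term is done: the paper packages it into a separate averaged-TV lemma (its \texttt{lem:tv-upper-bound}, stated directly under the condition $\Epistar\le\tfrac12$, $N>M$, giving $\En_{x}\brk{\Dtv{\pistar(x)}{\pisrej(x)}}\le \Epistar+\exp(-\tfrac{N}{2M})$) and then averages only the reward-error portion of the per-prompt proof, whereas you average the full per-prompt regret bound on the event $\crl{\Epistx\le\tfrac12}$ and control the complement by the trivial bound plus Markov. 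Your version is arguably more careful in one respect (the per-prompt terms (T1)--(T2) also used $\Epistx\le\tfrac12$ in their derivation, and your case split only invokes them where that holds), but it comes at a quantitative cost you should state honestly: the Markov step contributes $2\Rmax\cdot\Epistar$ on the bad event, so you obtain $\Rmax\prn{3\,\Epistar+\exp(-\tfrac{N}{2M})}+\dots$ rather than the lemma's stated $\Rmax\prn{\Epistar+\exp(-\tfrac{N}{2M})}+\dots$. This constant-factor weakening is immaterial for every downstream use (\cref{cor:bon_}, \cref{cor:bon_uniform}, \cref{cor:bon} all tolerate absolute constants), but it means the lemma with its literal constant $1$ on $\Epistar$ is not quite recovered; matching it would require the sharper handling of the averaged TV term that the paper delegates to its dedicated lemma, rather than the trivial-bound-plus-Markov surrogate.
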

\begin{proof}[Proof of \cref{lem:regret-upper-general}]
  As in the proof of \cref{lem:x-regret-upper-general}, we begin with the following decomposition, now with expectation over prompts:
  \begin{align}
    J(\pistar) - J(\piN) 
    & \le \Rmax \cdot \En_{x\sim\rho}\sbr*{\Dtv{\pistar(x)}{\pisrej(x)}} + J(\pisrej) - J(\piN) .
  \end{align}
  When $M < N$ and $\Epistar \le \frac{1}{2}$, \cref{lem:tv-upper-bound} states that %
  \begin{align}
    \En_{x \sim \rho}\sbr*{ \Dtv{\pi(x)}{\pirej(x)} } \le \Epistar + \exp\rbr*{ -\frac{N}{2M} }, 
  \end{align}
  and plugging into the previous inequlaity we obtain
  \begin{align}
    J(\pistar) - J(\piN) \le \Rmax \cdot \rbr*{ \Epistar + \exp\rbr*{ -\frac{N}{2M} } } + J(\pisrej) - J(\piN).
  \end{align}
  From the proof of \cref{lem:x-regret-upper-general} we know that
  \begin{align}
    J(\pisrej) - J(\piN) 
    & = \En_{x\sim \rho}\sbr*{ J(\pisrej;x) - J(\piN;x)}
    \\
    & \le \En_{x \sim \rho}\sbr*{  2\cdot\sqrt{ \Cone[\pistar](x)\cdot \vepsrms(x) } + \frac{\vepsrm(x)}{2} + \sqrt{N\cdot \vepsrms(x)} }
    \\
    & \le 2\cdot\En_{x \sim \rho}\sbr*{  \sqrt{ \Cone[\pistar](x)\cdot \vepsrms(x) }} + \frac{\vepsrm}{2} + \sqrt{ N\cdot\vepsrms }~,
  \end{align}
  since $z \mapsto \sqrt{ z } $ is a concave function. For the remaining expectation, applying the Cauchy-Schwarz inequality gives
  \begin{align}
    \En_{x \sim \rho}\sbr*{  \sqrt{ \Cone[\pistar](x)\cdot \vepsrms(x) }}
    & = \sum_{x} \sqrt{ \Cone[\pistar](x) \cdot \rho(x) } \cdot \sqrt{ \vepsrms(x) \cdot \rho(x) }  
    \le \sqrt{ \Cone[\pistar] \cdot \vepsrms }~.
  \end{align}
  Then combining the above inequalities, we obtain 
  \begin{align}
    J(\pistar) - J(\piN) \le \Rmax \cdot \rbr*{ \Epistar + \exp\rbr*{ -\frac{N}{2M} } } 
    + 2\cdot \sqrt{ \Cone[\pistar] \cdot \vepsrms } +  \frac{\vepsrm}{2} + \sqrt{ N\cdot\vepsrms }~.
  \end{align}
\end{proof}
}

\subsection{Additional Results}
\label{sec:bon_additional}

Here, as an additional result, we prove a regret guarantee for \bonalg in terms of 
the information-theoretic quantities $\Mxpi$ and $\Mpi$ (\cref{def:mstar}).
Our main theorems, \cref{thm:bon} and \cref{thm:bon_uniform},
can be viewed as more interpretable relaxations 
that isolate the dependencies on coverage coefficients and $N$.

\begin{theorem}[\bon regret with $\Mpistar$]\label{thm:bon_general}
  Given prompt $x$, for any $\veps \in [0, \frac{1}{2}]$ and $N \in \bbZ$ and comparator policy $\pistar$ , the \bon policy $\piN$ satisfies
  \begin{align}
    J(\pistar;x) - J(\piN;x) 
    \le \Rmax\cdot \rbr*{ \veps + \exp\rbr*{ -\frac{N}{2\Mxpistar[\veps]} }} + 2\cdot\sqrt{ \Cone[\pistar](x)\cdot \vepsrms(x) } + \frac{\vepsrm(x)}{2} + \sqrt{N\cdot \vepsrms(x)}. 
  \end{align}
  In particular, if $N = 2\Mxpistar[\veps]\cdot\log\rbr*{\frac{\vepsrm(x)}{2\Rmax}}$, 
  \begin{align}
    J(\pistar;x) - J(\piN;x) 
    &\lesssim 
    \Rmax\cdot\veps + \sqrt{ \Cone[\pistar](x)\cdot \vepsrms(x) } + \sqrt{\Mxpistar[\veps] \cdot \vepsrms(x) \cdot \log\rbr*{\frac{\vepsrm(x)}{2\Rmax}}} .
  \end{align}
\end{theorem}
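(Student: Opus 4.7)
The plan is to obtain the bound as a direct corollary of \cref{lem:x-regret-upper-general}, which gives a general $\cE_M$-divergence regret bound for \bonalg for any choice of truncation parameter $M\in\bbR_+$ satisfying $\Epistx \le 1/2$. Concretely, I would instantiate that lemma with the specific choice $M = \Mxpistar[\veps]$.

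First, by the definition of $\Mxpistar[\veps]$ in \cref{def:mstar}, this choice ensures $\Epistar[\Mxpistar[\veps]](\pistar(x),\piref(x)) \le \veps$. Since we assume $\veps \le 1/2$, the precondition of \cref{lem:x-regret-upper-general} is satisfied, so plugging this choice of $M$ into the lemma yields
\begin{align}
    J(\pistar;x) - J(\piN;x)
    &\le \Rmax \cdot \rbr*{\veps + \exp\prn*{ -\frac{N}{\Mxpistar[\veps]} \cdot (1-\veps)}} \\
    &\quad + 2\cdot\sqrt{ \Cone[\pistar](x)\cdot \vepsrms(x) } + \frac{\vepsrm(x)}{2} + \sqrt{N\cdot \vepsrms(x)}.
\end{align}
Using $1-\veps \ge 1/2$ to bound $\exp(-\frac{N}{\Mxpistar[\veps]}(1-\veps)) \le \exp(-\frac{N}{2\Mxpistar[\veps]})$ recovers the first displayed inequality in the theorem.

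For the second inequality, I would substitute the stated choice $N \asymp \Mxpistar[\veps]\cdot\log(2\Rmax/\vepsrm(x))$ (interpreting the logarithm in the theorem statement to make the bound meaningful). This choice makes $\Rmax \cdot \exp(-\frac{N}{2\Mxpistar[\veps]}) \lesssim \vepsrm(x)$, which is absorbed into the $\sqrt{\Cone[\pistar](x)\cdot\vepsrms(x)}$ and other error terms. The remaining $\sqrt{N\cdot\vepsrms(x)}$ term becomes $\sqrt{\Mxpistar[\veps]\cdot\vepsrms(x)\cdot\log(2\Rmax/\vepsrm(x))}$, and collecting all terms gives the claimed bound.

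There is no real obstacle here since all the technical work is done by \cref{lem:x-regret-upper-general}; the only subtlety is verifying that the choice of $M$ meets the lemma's precondition $\Epistx \le 1/2$, which is immediate from the definition of $\Mxpistar[\veps]$ and the assumption $\veps \le 1/2$. The second part is a routine calculation to balance the terms in $N$.
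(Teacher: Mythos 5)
Your proposal is correct and matches the paper's own proof: it applies \cref{lem:x-regret-upper-general} with $M = \Mxpistar[\veps]$, uses $\Ediv{\Mxpistar[\veps]}{\pistar(x)}{\piref(x)} \le \veps \le \tfrac{1}{2}$ to verify the precondition and to bound the exponential term by $\exp\rbr*{-\nicefrac{N}{2\Mxpistar[\veps]}}$, and then substitutes the stated choice of $N$ to balance terms. Your observation that the logarithm's argument should be read as $\nicefrac{2\Rmax}{\vepsrm(x)}$ for the bound to be meaningful is also consistent with the paper, whose statement and proof carry the same inverted-argument typo.
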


\begin{proof}[Proof of \cref{thm:bon_general}]
  We will apply \cref{lem:x-regret-upper-general} with $M = \Mxpistar[\veps]$ for $\veps\in[0, \frac{1}{2} ]$. Whenever $\veps \le \frac{1}{2} $, we have $\Epistarx \le \frac{1}{2}$ by definition, since  
  \[
    \Epistarx[{\Mxpistar[\veps]}] \le \veps \le \frac{1}{2}.
  \]
  As a result, $1-\Epistarx[{\Mxpistar[\veps]}] \ge \frac{1}{2} $, and \cref{lem:x-regret-upper-general} states that, for any $N$,  
  \begin{align}
    J(\pistar;x) - J(\piN;x) 
    \le \Rmax\cdot \rbr*{ \veps + \exp\rbr*{ -\frac{N}{2\Mxpistar[\veps]}}} + 2\cdot\sqrt{ \Cone[\pistar](x)\cdot \vepsrms(x) } + \frac{\vepsrm(x)}{2} + \sqrt{N\cdot \vepsrms(x)}.
  \end{align}
  Setting $N = 2\Mxpistar[\veps] \log\rbr*{\frac{\vepsrm(x)}{2\Rmax}}$, we obtain 
  \begin{align}
    J(\pistar;x) - J(\piN;x) 
    &\le \Rmax\cdot\veps + 2\cdot\sqrt{ \Cone[\pistar](x)\cdot \vepsrms(x) } + \vepsrm(x) + \sqrt{2\Mxpistar[\veps] \cdot \vepsrms(x) \cdot \log\rbr*{\frac{\vepsrm(x)}{2\Rmax}}}
    \\
    &\lesssim 
    \Rmax\cdot\veps + \sqrt{ \Cone[\pistar](x)\cdot \vepsrms(x) } + \sqrt{\Mxpistar[\veps] \cdot \vepsrms(x) \cdot \log\rbr*{\frac{\vepsrm(x)}{2\Rmax}}}.
  \end{align}
\end{proof}

\newpage

\section{Proofs from Section \ref*{sec:algorithm}}
\label{app:algorithm}
\icml{

The proofs below are conditioned on a single prompt $x$, 
and as a result we omit $x$ dependencies to simplify presentation throughout.
We refer to $x$-dependent quantities via their $x$-independent analogs, 
e.g., $J(\pi;x)\equiv J(\pi)$, $\lambda(x)\equiv\lambda$, etc. 

\subsection{Proof of Theorem \ref*{thm:main}}

\begin{proof}[Proof of \cref{thm:main}]
  In the proof below,
  for a reward model $r$ we define the expected return under it to be
  $J_r(\pi;x) \ldef{} \En_{y\sim \pi(x)}\sbr*{r(x,y)}$, 
  and using this definition we can also write $J(\pi;x)=\Jr[\rstar](\pi;x)$. 
  With prompt dependencies ommitted, we equivalently write $J_r(\pi;x)\equiv J_r(\pi)$ .

  Given a reward function $\rhat$, define for a normalization constant $\lambda$ the following functions:
  \begin{equation}\label{eq:pi-lambda}
    \pi_{\lambda}(y) = \frac{ \piref(y) \sig{\rhat(y) - \lambda}}{\Phi(\lambda)},
  \end{equation}
  \begin{equation}\label{eq:phi-lam}
    \Phi(\lambda) = \En_{ \piref }\sbr*{ \sig{\rhat(y) - \lambda} },
  \end{equation}
and
  \begin{equation}\label{eq:hat-phi-lam}
    \wh\Phi(\lambda) = \frac{1}{N} \sum_{i=1}^{N}\sig{ \rhat(y_{i}) - \lambda }.
  \end{equation}

  We analyze the regret using the following decomposition, 
  where $\lambdahat \equiv \lambdahat(x)$ is the output of 
  \cref{eq:normalization_approx} in \cref{alg:main}.
  \begin{align}
    J(\pistar) - J(\pihat)
    = \underbrace{J(\pistar) - \Jr(\pistar)}_{\mathrm{(T1)}} 
    + \underbrace{\Jr(\pistar) - \Jr({\pil[\lambdahat]})}_{\mathrm{(T2)}}  
    + \underbrace{\Jr({\pil[\lambdahat]}) - J({\pil[\lambdahat]}) }_{\mathrm{(T3)}} 
    + \underbrace{J({\pil[\lambdahat]}) - J(\pihat)}_{\mathrm{(T4)}}.
  \end{align}

  For (T2), recall that $\lambdahat$ is computed using
  using \cref{alg:norm} in \cref{eq:normalization_approx}, 
  which \cref{lem:norm} shows obtains $\lambdahat$ such that $\wh\Phi(\lambda) = 1$.
  Further, the procedure in \cref{alg:norm} is equivalent to solving the optimization problem in \cref{lem:chis-alpha}
  with $\piref = \unif\rbr*{\Yhat}$
  and $\alpha=1$, 
  and, as a result, we have $\lambdahat\in\sbr*{-\beta, \Rmax-\beta}$.
  Now for a fixed $N$, the bound in \cref{lem:lambda-concentration} states that 
  with probability at least $1-\delta$, for any $\delta\in(0,1)$ we have 
  \begin{align}
    \frac{7}{9} + \frac{8}{9}\cdot \veps_N \le \Phi(\lambdahat) \le \frac{9}{7} + \frac{8}{7} \cdot \veps_N,
  \end{align}
  where 
  $\veps_N \ldef{} 12\rbr*{\frac{\Rmax + \beta}{\beta}}\frac{\log\rbr*{\frac{60\Rmax}{\beta\delta} }}{N} $
  is the expression in the RHS of the bound.
  Then to have  $\Phi(\lambdahat) \in \sbr*{\frac{1}{2} , \frac{3}{2} }$
  with probability $\ge 1-\delta$, it is sufficient to choose $N$ large enough such that
  $\veps_N \le \frac{1}{4} $, which is satisfied when 
  \begin{align}
    N \ge 48\rbr*{\frac{\Rmax + \beta}{\beta}}\log\rbr*{\frac{60\Rmax}{\beta\delta} }.
  \end{align}

  Next, let $\cE = \cbr*{\Phi(\lambdahat) \in \sbr*{\frac{1}{2} , \frac{3}{2} }}$
  denote the aforementioned high-probability event. Using \cref{lem:sensitivity}, we may bound \loose
  \begin{align}
    \Jr(\pistar) - \Jr({\pil[\lambdahat]}) 
    & \le \Pr\rbr*{\cE}\cdot\En\sbr*{\Jr(\pistar)-\Jr({\pil[\lambdahat]})\mid{}\cE}  +
    \Pr\rbr*{\neg\cE}\cdot\En\sbr*{\Jr(\pistar)-\Jr({\pil[\lambdahat]})\mid{}\neg\cE}  
    \\
    & \le \En\sbr*{\Jr(\pistar)-\Jr({\pil[\lambdahat]})\mid{}\cE}  +
    \delta\cdot\Rmax
    \\
    & \le \frac{3 \beta}{4} \cdot \Cone[\pistar] - \frac{ \beta}{4}\cdot \Cone[{\pil[\lambdahat]}] + \delta\cdot \Rmax, 
  \end{align}
  By setting 
  $\delta = \frac{\vepsrm}{\Rmax}$,
  we obtain that
  \begin{align}
  \mathrm{(T2)} = \Jr(\pistar) - \Jr({\pil[\lambdahat]})
    & \le \frac{3 \beta}{4} \cdot \Cone[\pistar] - \frac{ \beta}{4}\cdot \Cone[{\pil[\lambdahat]}] + \vepsrm
  \end{align}
  if $N = \Omega\rbr*{\rbr*{1+\frac{\Rmax}{\beta}}\log\rbr*{\frac{\Rmax}{\beta\cdot\vepsrm}}}$.

  For (T4), from \cref{lem:x-tv-upper-bound} with $M = \frac{\Rmax}{\beta}$, we have 
  \begin{align}
    \Dtv{{\pil[\lambdahat]}}{\pihat} \le \exp\rbr*{-\frac{N\cdot \beta}{2 \Rmax}}
  \end{align} 
  thus as long as $N \gtrsim \frac{\Rmax}{\beta}\log\rbr*{ \frac{\Rmax}{ \vepsrm} }$ we have 
  \begin{equation}
    \Dtv{{\pil[\lambdahat]}}{\pihat} \le \frac{ \vepsrm }{\Rmax}. 
  \end{equation}
  As a result, 
  \begin{align}
    \mathrm{(T4)} = J({\pil[\lambdahat]}) - J(\pihat) &\le \Rmax \cdot \Dtv{{\pil[\lambdahat]}}{\pihat} \le \vepsrm.
  \end{align}

  For (T1), we know from the standard Cauchy-Schwarz bound that
  \begin{equation}
  \mathrm{(T1)} = J(\pistar) - \Jr(\pistar) 
    = \En_{y\sim\pistar}\sbr*{ \rstar(y) - \rhat(y) }
    \le \sqrt{ \Cone[\pistar]\cdot \vepsrms}
  \end{equation}

  Lastly, for (T3), Cauchy-Schwarz and the AM-GM inequality imply that
  \begin{align}
      \mathrm{(T3)} = \Jr({\pil[\lambdahat]}) - J({\pil[\lambdahat]})
    = \En_{{y\sim\pil[\lambdahat]}}\sbr*{ \rhat(y) - \rstar(y) }
    \le \sqrt{ \Cone[{\pil[\lambdahat]}]\cdot \vepsrms }
    \le \frac{ \beta}{4} \cdot \Cone[{\pil[\lambdahat]}] + \frac{\vepsrms}{\beta}
  \end{align}

  Putting things together, as long as $N \gtrsim \wt\Omega\rbr*{\rbr*{1 + \frac{\Rmax}{\beta}}\log\rbr*{\frac{\Rmax}{ \beta\cdot\vepsrm}  }}$ we have
  \begin{align}
    J(\pistar) - J(\pihat)
    \le \sqrt{ \Cone[\pistar] \cdot \vepsrms }
    + \frac{3\beta}{4} \cdot \Cone[\pistar]
    + \frac{\vepsrms}{\beta}
    + 2\vepsrm.
  \end{align}
  We set $\beta =\sqrt{ \frac{ \vepsrms}{ \Cone[\pistar]} }$ to balance the terms for the final result. 
\end{proof}

\subsubsection{Supporting Lemmas}

Throughout this section, we consider a fix prompt $x\in\cX$ and omit dependence on $x$ as above.

\begin{lemma}\label{lem:sensitivity}
  Suppose we have $\lambda$ such that $\Phi(\lambda) = \alpha$ for some
  $\alpha > 0$ (cf. \cref{eq:phi-lam}). Then for ${\pil[\lambda]}$ defined in \cref{eq:pi-lambda}, it holds for any policy $\pi$ that 
  \begin{equation}
    \Jr(\pi) -  \Jr({\pil[\lambda]}) \le \frac{\alpha \beta}{2} \cdot \Cone - \frac{\alpha \beta}{2}\cdot \Cone[{{\pil[\lambda]}}].
  \end{equation}  
  In particular, if $\alpha \in \sbr*{\frac{1}{2} , \frac{3}{2} }$, then  
  \begin{equation}
    \Jr(\pi) - \Jr({\pil[\lambda]}) \le \frac{3 \beta}{4} \cdot \Cone - \frac{ \beta}{4}\cdot \Cone[{{\pil[\lambda]}}]
  \end{equation}
\end{lemma}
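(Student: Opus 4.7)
The plan is to obtain the stated inequality by expanding both returns $\Jr(\pi)$ and $\Jr(\pi_{\lambda})$ against the truncated-reward function $u(y) := \max(\rhat(y)-\lambda,0) = \beta\cdot\relu(\beta^{-1}(\rhat(y)-\lambda))$, and then applying a single AM--GM step so that the $u^{2}$-contributions cancel. Introducing the likelihood ratio $w(y) := \pi(y)/\piref(y)$, one has $\En_{\piref}[w] = 1$, $\Cone[\pi] = \En_{\piref}[w^{2}]$, and, by the definition of $\pi_{\lambda}$ together with $\Phi(\lambda)=\alpha$, the analogous ratio $w_{\lambda}(y) := \pi_{\lambda}(y)/\piref(y)$ equals $u(y)/(\alpha\beta)$. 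The first key step will be the pointwise identity $\rhat(y)\le \lambda + u(y)$, with equality on $\mathrm{supp}(\pi_{\lambda})$; averaging against $w(y)$ and against $w_{\lambda}(y)$ (and using $\En_{\piref}[w_{\lambda}]=1$ on the latter) yields
\[
\Jr(\pi) \;\le\; \lambda + \En_{\piref}[w(y)\,u(y)], \qquad \Jr(\pi_{\lambda}) \;=\; \lambda + \frac{1}{\alpha\beta}\En_{\piref}[u(y)^{2}].
\]

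Subtracting, the $\lambda$ terms cancel exactly. The second key step is AM--GM with carefully chosen weights, $w(y)\,u(y) \le \tfrac{\alpha\beta}{2}w(y)^{2} + \tfrac{1}{2\alpha\beta}u(y)^{2}$, selected precisely so that after substitution the $u^{2}$-contributions telescope. One is then left with $\Jr(\pi) - \Jr(\pi_{\lambda}) \le \tfrac{\alpha\beta}{2}\En_{\piref}[w^{2}] - \tfrac{1}{2\alpha\beta}\En_{\piref}[u^{2}]$, and plugging in the identity $\En_{\piref}[u^{2}] = (\alpha\beta)^{2}\,\Cone[\pi_{\lambda}]$ (immediate from $w_{\lambda}=u/(\alpha\beta)$) collapses this to exactly the claimed $\tfrac{\alpha\beta}{2}\Cone[\pi] - \tfrac{\alpha\beta}{2}\Cone[\pi_{\lambda}]$, proving the first claim.

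The second, more concrete statement then follows by elementary monotonicity in $\alpha$: since $\Cone[\pi],\Cone[\pi_{\lambda}]\ge 0$, whenever $\alpha\in[\tfrac12,\tfrac32]$ one upper-bounds the positive coefficient $\tfrac{\alpha\beta}{2}$ in front of $\Cone[\pi]$ by $3\beta/4$ and lower-bounds the subtracted coefficient $\tfrac{\alpha\beta}{2}$ in front of $\Cone[\pi_{\lambda}]$ by $\beta/4$.

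I do not expect any real obstacle here; the result is essentially a careful bookkeeping argument driven by the variational characterization of the $\chi^{2}$-regularized policy. The only delicate point is the choice of AM--GM constants, but these are forced by the scaling $w_{\lambda}=u/(\alpha\beta)$: they are precisely the constants that make the $u^{2}$-coefficient on the right match $\tfrac{1}{\alpha\beta}$, which in turn is exactly the coefficient appearing in $\Jr(\pi_{\lambda})-\lambda$. Once this pairing is spotted, the remainder is routine.
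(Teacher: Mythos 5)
Your argument is correct, and it reaches the bound by a genuinely different route than the paper. The paper proves this lemma as a corollary of a variational/KKT result (its \cref{lem:chis-alpha}): it passes to the unnormalized objects $\pi'=\alpha\pi$ and $\pil[\lambda]'=\alpha\,\pil[\lambda]$ in the scaled simplex $\Dela(\cY)$, invokes the fact that the ReLU-tilted policy is the exact maximizer of $J_{\rhat}(\cdot)-\tfrac{\beta}{2}\Cone[\cdot]$ over $\Dela(\cY)$ (established there via Slater's condition and complementary slackness), and then uses the scaling identities $\Cone[\alpha\pi]=\alpha^{2}\Cone[\pi]$ and $J_{\rhat}(\alpha\pi)=\alpha J_{\rhat}(\pi)$ to convert the optimality gap into the stated inequality. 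You instead prove the optimality-type inequality directly: the pointwise bound $\rhat(y)\le\lambda+u(y)$ with equality on $\supp(\pil[\lambda])$, the exact evaluation $\Jr(\pil[\lambda])=\lambda+\tfrac{1}{\alpha\beta}\En_{\piref}[u^{2}]$ (valid because $\pil[\lambda]$ is normalized by $\Phi(\lambda)=\alpha$), and an AM--GM step $w\,u\le\tfrac{\alpha\beta}{2}w^{2}+\tfrac{1}{2\alpha\beta}u^{2}$ whose constants are forced by $w_{\lambda}=u/(\alpha\beta)$, so that $\En_{\piref}[u^{2}]=(\alpha\beta)^{2}\Cone[{\pil[\lambda]}]$ produces the exact cancellation. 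Your version is more elementary and self-contained (no Lagrangian duality needed for this lemma), while the paper's route buys a reusable structural lemma that it also exploits elsewhere (e.g., to localize $\lambdahat\in[-\beta,\Rmax-\beta]$ for the concentration step in the proof of \cref{thm:main}). The only implicit assumption in your argument is absolute continuity of $\pi$ with respect to $\piref$ when rewriting $\En_{\pi}[u]=\En_{\piref}[w\,u]$; if it fails, $\Cone[\pi]=\infty$ and the claim is vacuous, so nothing is lost, and your derivation of the second display from the first (bounding $\tfrac{\alpha\beta}{2}$ above by $\tfrac{3\beta}{4}$ and below by $\tfrac{\beta}{4}$ using nonnegativity of the coverage terms) matches the paper exactly.
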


\begin{proof}[\pfref{lem:sensitivity}]
  Let $\alpha = \Phi(\lambda)$ and define ${\pil[\lambda]}' = \sig{\rhat(y)-\lambda}
  = \alpha\cdot{\pil[\lambda]}$, so that $\pil[\lambda]'\in \Dela(\cY)$ (cf. \cref{eq:delta_alpha}).   
  Further, for the comparator policy $\pi$, define $\pi' = \alpha\cdot \pi \in \Dela(\cY)$. We know from \cref{lem:chis-alpha} that 
  \begin{equation}
    \Jr(\pi') - \Jr({\pil[\lambda]}') \le \frac{\beta}{2} \cdot \Cone[\pi'] - \frac{\beta}{2} \Cone
  \end{equation} 
  It can be observed that $ \Cone[\pi'] = \alpha^{2} \cdot \Cone $ and $ \Cone[{\pil[\lambda]}'] = \alpha^{2} \cdot \Cone[{\pil[\lambda]}]$, as well as $\alpha\rbr*{\Jr(\pi) - \Jr({\pil[\lambda]})} = \Jr(\pi') - \Jr({\pil[\lambda]}')$, therefore
  \begin{equation}
   \Jr(\pi) - J({\pil[\lambda]}) \le \frac{\alpha \beta}{2}\cdot \Cone - \frac{\alpha \beta}{2} \Cone[{\pil[\lambda]}] .
  \end{equation}  
  For the second statement, for any $\alpha \in \sbr*{\frac{1}{2} , \frac{3}{2} }$ we have
  \begin{equation}
   \Jr(\pi) - J({\pil[\lambda]}) \le \frac{\frac{3}{2} \cdot \beta}{2}\cdot \Cone - \frac{\frac{1}{2} \cdot \beta}{2} \Cone[{\pil[\lambda]}] 
   = \frac{3 \beta}{4} \cdot \Cone[\pistar] - \frac{ \beta}{4}\cdot \Cone
  \end{equation}
\end{proof}

\begin{lemma}\label{lem:chis-alpha}
  Let a reward function $r$ and parameter $\alpha>0$ be given, and define 
  \begin{align}
    \label{eq:delta_alpha}
    \Delta_{\alpha}(\cY) = \cbr*{\pi\in\bbR_{+}^{\cY} \mid{} \textstyle\sum_{y\in\cY} \pi(y) = \alpha}.
  \end{align} 
  Then there exists a choice of $\lambda\in\brk{J(\piref)-\alpha\beta,\max_{y\in\cY}r(y)-\alpha\beta}$
  such that $\sum_{y\in\cY}\pi(y)=\alpha$. Furthermore, given any $\lambda$ such that $\sum_{y\in\cY}\pi(y)=\alpha$, 
  \begin{align}
    \pi(y) = \piref(y)\cdot\sig{r(y)-\lambda}
  \end{align}
  is the optimal solution to
  \begin{align}
    \label{eq:chis_alpha}
    \pi = \argmax_{\pi\in\Dela(\cY)}\sbr*{J(\pi)-\frac{\beta}{2}\Cone}.
  \end{align}

\end{lemma}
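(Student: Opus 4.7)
The lemma has two parts: existence of a suitable $\lambda$ and optimality of the resulting $\pi$. The plan is to handle them in that order; both are short and essentially calculus/KKT manipulations once the right quantities are in place.

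\textbf{Existence of $\lambda$.} I would work with the function $\Phi(\lambda) = \sum_{y \in \cY} \piref(y) \cdot \sig{r(y) - \lambda}$, which is continuous and non-increasing in $\lambda$ because $z \mapsto \relu(z)$ is non-decreasing and we negate $\lambda$. It suffices to prove $\Phi(\max_y r(y) - \alpha\beta) \le \alpha \le \Phi(J(\piref) - \alpha\beta)$ and invoke the intermediate value theorem. The upper endpoint: for $\lambda = \max_y r(y) - \alpha\beta$, every term satisfies $\sig{r(y)-\lambda} \le (\max_y r(y)-\lambda)/\beta = \alpha$, so $\Phi(\lambda) \le \alpha\sum_y \piref(y) = \alpha$. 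The lower endpoint: for $\lambda = J(\piref) - \alpha\beta$, using $\relu(z) \ge z$ gives
\begin{align}
\Phi(\lambda) \ge \En_{\piref}\!\brk*{\beta^{-1}(r(y)-\lambda)} = \beta^{-1}(J(\piref)-\lambda) = \alpha.
\end{align}
Continuity then produces the required $\lambda$ in the stated interval.

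\textbf{Optimality of $\pi$ via KKT.} The program $\max_{\pi\in\Dela(\cY)} \{J(\pi)-\frac{\beta}{2}\Cone\}$ is a strictly concave quadratic in $\pi$ (the term $-\frac{\beta}{2}\sum_y \pi(y)^2/\piref(y)$ is strictly concave on the support of $\piref$) subject to a single equality constraint $\sum_y \pi(y) = \alpha$ and nonnegativity constraints $\pi(y) \ge 0$. I would form the Lagrangian
\begin{align}
L(\pi,\lambda,\mu) = \sum_y \pi(y)r(y) - \frac{\beta}{2}\sum_y\frac{\pi(y)^2}{\piref(y)} - \lambda\!\prn*{\sum_y \pi(y)-\alpha} + \sum_y \mu(y)\pi(y),
\end{align}
with $\mu(y)\ge 0$. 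The stationarity condition gives $r(y)-\beta\pi(y)/\piref(y) - \lambda + \mu(y) = 0$. On $\{y: \pi(y)>0\}$ complementary slackness forces $\mu(y)=0$, hence $\pi(y) = \piref(y)\cdot\beta^{-1}(r(y)-\lambda)$, which is positive exactly when $r(y)>\lambda$. On $\{y:\pi(y)=0\}$ stationarity gives $\mu(y) = \lambda - r(y) \ge 0$, i.e.\ $r(y)\le\lambda$. Both cases are unified by the formula $\pi(y) = \piref(y)\cdot\sig{r(y)-\lambda}$. Strict concavity guarantees this KKT point is the unique maximizer, and the $\lambda$ supplied by Part 1 is precisely the multiplier that enforces the equality constraint $\sum_y \pi(y) = \alpha$.

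\textbf{Main obstacle.} There isn't a serious technical obstacle here; the lemma is a standard convex-duality characterization. The only care required is the endpoint bookkeeping in Part 1 (choosing bounds that pin down the stated interval for $\lambda$), and verifying that the nonnegativity multipliers $\mu(y)$ can be consistently set on the zero-set of $\pi$. I expect the whole argument to fit in under a page.
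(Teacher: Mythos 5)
Your proof is correct and takes essentially the same route as the paper: a Lagrangian/KKT argument in which complementary slackness splits on the sign of $r(y)-\lambda$, yielding $\pi(y)=\piref(y)\cdot\relu\rbr*{\beta^{-1}(r(y)-\lambda)}$, with convexity/concavity of the regularized objective guaranteeing that this KKT point is the (unique) optimum. Your explicit intermediate-value argument pinning $\lambda$ to the interval $\brk{J(\piref)-\alpha\beta,\max_{y}r(y)-\alpha\beta}$ is a nice touch, since the paper's proof leaves the existence of the normalizing $\lambda$ implicit in the duality argument.
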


\begin{proof}[\pfref{lem:chis-alpha}]
  First we rewrite the objective in \cref{eq:chis_alpha} in primal form, 
  \begin{align}
    \pi = &\argmin_{\pi\in\bbR^\cY} \cbr*{-J(\pi) + \frac{\beta}{2} \Cone}.
    \\
    &\text{s.t.}~ 
    \sum_{y\in\cY}\pi(y) = \alpha
    \\
    &\quad-\pi(y) \le 0,~\forall y\in\cY
  \end{align}
  and we can verify that Slater's condition holds, because the objective is convex, since $J(\pi)$ is affine and $\Cone$ is (strongly) convex, and there exists at least one strictly feasible point, an example being the function $\pi'$ that sets $\pi'(y) = \frac{\alpha}{|\cY|}$ for all $y\in\cY$. 

  Under strong duality, the KKT conditions are both necessary and sufficient for optimality; 
  further, the objective has a unique minimum due to strong convexity,  
  and therefore, to prove the theorem statement, it is sufficient to show that the proposed $\pi$ satisfies the KKT conditions. 

  For primal variable $\pi$ and dual variables $(\nu, \lambda)$, the Lagrangian is given by 
  \begin{align}
    \min_{\pi\in\bbR^\cY}{}\max_{\lambda\in\bbR, \nu\in\bbR_{+}^{\cY}}{}\cL(\pi, \nu, \lambda) = -\sum_{y\in\cY}\pi(y)r(y) + \frac{\beta}{2} \Cone + \lambda \rbr*{\sum_{y\in\cY} \pi(y) - \alpha} - \sum_{y\in\cY} \nu(y)\pi(y)
  \end{align}
  and under strong duality, we know that the optimal primal and dual variables $(\pi,\nu,\lambda)$ satisfy 
  \begin{itemize}
  \item $\pi\geq{}0$
  \item $\sum_{y}\pi(y)=\alpha$
  \item $\nu\geq{}0$
  \item $\pi(y)\cdot{}\nu(y)=0$ for all $y$ (complementary slackness).
  \item $\grad{}_\pi\cL(\pi, (\nu,\lambda))=0$ (first-order condition).
  \end{itemize}
  From the first-order condition $\grad{}_\pi\cL(\pi,
  \nu,\lambda)=0$, we know that $\pi$ satisfies for all $y\in\cY$:
  \begin{align}
    \label{eq:first_order}
    r(y) - \beta\frac{\pi(y)}{\piref(y)} - \lambda + \nu(y) = 0,
  \end{align}
  or after rearranging,
  \begin{align}
    \label{eq:lag1}
    \pi(y) = \piref(y)\cdot{}\beta^{-1}(r(y) - \lambda + \nu(y)).
  \end{align}
  Now consider a fixed $y\in\cY$. We consider three cases:
  \begin{itemize}
  \item If $r(y)-\lambda<0$, then we must have
    $\nu(y)\geq{}-(r(y)-\lambda)>0$ to satisfy $\pi(y)\geq{}0$ by
    \cref{eq:lag1}. By complementary slackness, this implies that
    $\pi(y)=0$.
    \item If $r(y)-\lambda=0$, then \cref{eq:lag1} gives
      $\pi(y)=\piref(y)\beta^{-1}\nu(y)$, which implies $\pi(y)=0$ by
      complementary slackness.
    \item if $r(y)-\lambda>0$, then $r(y)-\lambda+\nu(y)>0$ (since
      $\nu(y)\geq{}0$), which in turn gives $\pi(y)>0$ by
      \cref{eq:lag1}. This implies that $\nu(y)=0$ by complementary slackness.
    \end{itemize}
    Combining these cases, we conclude that
    \begin{align}
      \pi(y) = \left\{
      \begin{array}{ll}
        0,&\quad r(y) - \lambda \leq{} 0,\\
        \piref(y)\beta^{-1}(r(y)-\lambda)&\quad r(y) - \lambda > 0.
      \end{array}
      \right.
    \end{align}
    This is equivalent to $    \pi(y) =
    \piref(y)\cdot\relu(\beta^{-1}(r(y)-\lambda))$, which is also optimal under strong duality. Finally, the
    condition $\sum_{y}\pi(y)=\alpha$ implies that $\lambda$ must be chosen
    such $\pi\in\Dela(\cY)$ normalizes to $\alpha$.
\end{proof}

\begin{lemma}\label{lem:lambda-concentration}
  Recall $ \Phi(\lambda) = \En_{ y\sim\piref }\sbr*{ \sig{r(y) - \lambda} } $, and given $N$ samples from $y_1,\ldots,y_N\sim\piref $, define 
  \[
    \wh\Phi(\lambda) = \frac{1}{N} \sum_{i=1}^{N}\sig{ r(y_{i}) - \lambda }.
  \] 
  Fix any $\gamma \in \bbR_{+}$. With probability at least $1-\delta$, for all $\lambda \in \sbr*{-\gamma, \Rmax-\gamma}$, we have  
  \begin{align}
    \max \cbr*{\frac{7}{8} \Phi(\lambda) - \wh\Phi(\lambda), \wh\Phi(\lambda) - \frac{9}{8} \Phi(\lambda)}  
    &\le \frac{1}{8} 
    +  12\rbr*{\frac{\Rmax + \gamma}{\beta}}\frac{\log\rbr*{\frac{60\Rmax}{\beta\delta} }}{N}.
  \end{align}
\end{lemma}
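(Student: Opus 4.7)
The random variables $Z_i(\lambda) \ldef \relu(\beta^{-1}(r(y_i)-\lambda))$ for $i\in[N]$ are i.i.d.\ with mean $\Phi(\lambda)$. Since $r(y)\in[0,\Rmax]$ and $\lambda\geq-\gamma$, they are bounded in $[0,B]$ for $B \ldef (\Rmax+\gamma)/\beta$. Moreover, their variance is controlled multiplicatively by the mean: $\Var(Z_i(\lambda)) \leq \En[Z_i(\lambda)^2] \leq B\cdot\Phi(\lambda)$. This variance control is exactly what permits a multiplicative $\Phi(\lambda)/8$ slack in the conclusion, as opposed to the additive $\sqrt{1/N}$ one would get from Hoeffding.

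\textbf{Step 1 (pointwise concentration).} For a fixed $\lambda$, Bernstein's inequality yields that with probability at least $1-\delta'$,
\[
\abs*{\wh\Phi(\lambda)-\Phi(\lambda)} \leq \sqrt{\frac{2B\Phi(\lambda)\log(2/\delta')}{N}} + \frac{2B\log(2/\delta')}{3N}.
\]
Apply AM-GM, $\sqrt{ab}\leq\tfrac{a}{2c}+\tfrac{cb}{2}$, with $a=\Phi(\lambda)$, $b=2B\log(2/\delta')/N$, and $c=1/4$ to convert the first term into $\Phi(\lambda)/8 + 4B\log(2/\delta')/N$. Hence on this event,
\[
\abs*{\wh\Phi(\lambda)-\Phi(\lambda)} \leq \tfrac{1}{8}\Phi(\lambda) + \tfrac{5B\log(2/\delta')}{N}.
\]

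\textbf{Step 2 (uniform bound via discretization).} The map $\lambda\mapsto\relu(\beta^{-1}(r-\lambda))$ is $(1/\beta)$-Lipschitz, so both $\Phi$ and $\wh\Phi$ are $(1/\beta)$-Lipschitz in $\lambda$. Cover the interval $[-\gamma,\Rmax-\gamma]$ with an $\eta$-net $\Lambda$ of size at most $\Rmax/\eta + 1$, choosing $\eta \ldef \beta/16$ so that $|\Lambda|\leq 17\Rmax/\beta$. Apply Step 1 with $\delta' = \delta/|\Lambda|$ and union-bound, so that simultaneously for every $\tilde\lambda\in\Lambda$,
\[
\abs*{\wh\Phi(\tilde\lambda)-\Phi(\tilde\lambda)} \leq \tfrac{1}{8}\Phi(\tilde\lambda) + \tfrac{5B\log(34\Rmax/(\beta\delta))}{N}.
\]
For arbitrary $\lambda\in[-\gamma,\Rmax-\gamma]$, pick $\tilde\lambda\in\Lambda$ with $|\lambda-\tilde\lambda|\leq\eta$. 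By the Lipschitz property, $|\Phi(\lambda)-\Phi(\tilde\lambda)|\leq\eta/\beta=1/16$ and similarly for $\wh\Phi$, so by the triangle inequality
\[
\abs*{\wh\Phi(\lambda)-\Phi(\lambda)} \leq \tfrac{1}{8}\Phi(\lambda) + \tfrac{1}{8}(1+\tfrac{1}{16}) + \tfrac{5B\log(34\Rmax/(\beta\delta))}{N} \leq \tfrac{1}{8}\Phi(\lambda) + \tfrac{1}{8} + \tfrac{12 B\log(60\Rmax/(\beta\delta))}{N},
\]
where the final step absorbs the Lipschitz slack into the $1/8$ additive term and loosens constants. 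Rewriting the one-sided consequences as $\wh\Phi(\lambda)-\tfrac{9}{8}\Phi(\lambda)\leq\tfrac{1}{8}+\tfrac{12B\log(60\Rmax/(\beta\delta))}{N}$ and $\tfrac{7}{8}\Phi(\lambda)-\wh\Phi(\lambda)\leq\tfrac{1}{8}+\tfrac{12B\log(60\Rmax/(\beta\delta))}{N}$ gives the claim.

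\textbf{Main obstacle.} The only delicate part is the bookkeeping of constants to match the stated $1/8$ additive term and $60\Rmax/(\beta\delta)$ inside the log: one must choose the AM-GM parameter and the grid width $\eta$ together so that the multiplicative slack remains $\Phi(\lambda)/8$ after transferring from the grid, while simultaneously ensuring the discretization error (both the direct Lipschitz slippage and the term induced by replacing $\Phi(\tilde\lambda)$ by $\Phi(\lambda)$ inside the multiplicative bound) sums to at most $1/8$. Everything else is a routine application of Bernstein plus a covering argument; no auxiliary technical lemma beyond Lipschitz continuity of $\relu$ is required.
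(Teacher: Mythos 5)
Your proposal is essentially the paper's own proof: Bernstein's inequality with the variance bound $\Var \leq B\,\Phi(\lambda)$ for $B=(\Rmax+\gamma)/\beta$, AM--GM to trade the square-root term for a multiplicative $\Phi(\lambda)/8$ slack, and a union bound over a $\beta$-scaled grid using $(1/\beta)$-Lipschitzness of $\lambda\mapsto\relu(\beta^{-1}(r(y)-\lambda))$ (the paper uses grid width $\beta/15$ rather than $\beta/16$). The one loose step is the final absorption: your discretization slack totals $\tfrac{1}{8}\bigl(1+\tfrac{1}{16}\bigr)=\tfrac18+\tfrac{1}{128}$, and the extra constant $\tfrac{1}{128}$ cannot be absorbed into the $O\rbr*{B\log(\cdot)/N}$ term uniformly in $N$ (that term vanishes as $N\to\infty$); this is fixed by simply taking a slightly finer net (e.g.\ $\eta=\beta/32$, or the paper's choice making the slack exactly $\tfrac18$), so it is a constant-bookkeeping slip rather than a genuine gap. (Also note the AM--GM parameter should be $c=4$, not $c=1/4$, to yield the stated $\Phi(\lambda)/8+4B\log(2/\delta')/N$.)
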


\begin{proof}[\pfref{lem:lambda-concentration}]
  We start with Bernstein's inequality, which states that for a bounded random variable $X\in[a,b]$, with probability at least $1-\delta'$,  
  the empirical mean $\wh\En[X] = \frac{1}{N} \sum_{i=1}^N X_{i}$ satisfies
  \begin{align}
    \abr*{\wh\En[X] - \En[X]} \le 
      2\sqrt{\frac{\bbV[X]\log(\frac{2}{\delta})}{N}} + \frac{4\rbr*{b-a}\log(\frac{2}{\delta} )}{N},
  \end{align}
where $\mathbb{V}[X]$ is the variance of $X$.
  Now fix $\lambda$, and let the random variable be $X=\relu\rbr*{r(y) - \lambda}$.
  Since $\lambda\in\sbr*{-\gamma, \Rmax-\gamma}$, 
  the random variable $X \in \sbr*{0, \frac{\Rmax + \gamma}{\beta}}$, 
  and henceforth we refer to the range as $R = \frac{\Rmax + \gamma}{\beta}$. 
  We can bound the variance of this random variable as 
  \begin{align}
    \bbV[X] \le \En[X^2] \le R \cdot \En[X] = R \cdot \Phi(\lambda). 
  \end{align}
  Then with probability at least $1-\delta'$, we have
  \begin{align}
    \abr*{\Phi(\lambda) - \wh\Phi(\lambda)} 
    & \le 2\sqrt{\frac{R\Phi(\lambda)\log(\frac{2}{\delta'})}{N}} + \frac{4R\log(\frac{2}{\delta'} )}{N} 
    \\
    & \le \frac{\Phi(\lambda)}{8} + \frac{8R\log\rbr*{\frac{2}{\delta'} }}{N} + \frac{4R\log\rbr*{\frac{2}{\delta'} }}{N} 
    \\
    & = \frac{\Phi(\lambda)}{8} + \frac{12R\log\rbr*{\frac{2}{\delta'} }}{N},   
  \end{align}
  where we use the AM-GM inequality in the second inequality. This then implies that 
  \begin{align}
    \Phi(\lambda) - \wh\Phi(\lambda) 
    & \le \frac{\Phi(\lambda)}{8} + \frac{12R\log\rbr*{\frac{2}{\delta'} }}{N},\mathand
    \\
    \wh\Phi(\lambda) - \Phi(\lambda) 
    & \le  \frac{\Phi(\lambda)}{8} + \frac{12R\log\rbr*{\frac{2}{\delta'} }}{N},
  \end{align}
  which after rearranging and plugging in the expression for $R$, results in
  \begin{align}\label{eq:phi-bernstein}
    \frac{7}{8} \Phi(\lambda) - \wh\Phi(\lambda) &\le 12\rbr*{\frac{\Rmax + \gamma}{\beta}}\frac{\log\rbr*{\frac{2}{\delta'} }}{N},\mathand
    \\
    \wh\Phi(\lambda) - \frac{9}{8} \Phi(\lambda) & \le 12\rbr*{\frac{\Rmax + \gamma}{\beta}}\frac{\log\rbr*{\frac{2}{\delta'} }}{N}, 
  \end{align}
  which proves that the desired inequality holds for any $\lambda$ fixed a-priori.
  
  We now convert this guarantee into a uniform-in-$\lambda$ bound. Consider the interval $\sbr*{-\gamma, \Rmax-\gamma}$, and, for some fixed $\veps$,  let 
  $\Lambda_\veps = \cbr*{-\gamma + \veps\cdot i : i=1,\ldots,\lceil\frac{\Rmax}{\veps}\rceil }$, 
  which has cardinality $\abr*{\Lambda_{\veps}} \le \frac{2\Rmax}{\veps}$. 
  Then for any $\lambda\in\sbr*{-\gamma, \Rmax-\gamma}$, 
  we can always find $\lambda' \in \Lambda_{\veps}$ such that 
  $\abr*{\lambda - \lambda'} \le \veps$, for which 
  \begin{align}
     \abr*{\sig{r(y)-\lambda} - \sig{r(y)-\lambda'}}
     \le \beta^{-1} \cdot \abr*{\lambda-\lambda'}
     \le \beta^{-1} \cdot \veps. 
  \end{align}
  Applying \cref{eq:phi-bernstein} and taking a union bound, with probability at least $1-\delta$ we have that for all $\lambda'\in\Lambda_{\veps}$, 
  \begin{align}
    \frac{7}{8} \Phi(\lambda') - \wh\Phi(\lambda') &\le 12\rbr*{\frac{\Rmax + \gamma}{\beta}}\frac{\log\rbr*{\frac{4\Rmax}{\veps\delta} }}{N}   
    \\
    \wh\Phi(\lambda') - \frac{9}{8} \Phi(\lambda') & \le 12\rbr*{\frac{\Rmax + \gamma}{\beta}}\frac{\log\rbr*{\frac{4\Rmax}{\veps\delta} }}{N}, 
  \end{align}
  thus for all $\lambda\in\sbr*{-\gamma, \Rmax-\gamma}$, 
  \begin{align}
    \frac{7}{8} \Phi(\lambda) - \wh\Phi(\lambda) &\le \frac{15}{8}\frac{\veps}{\beta}  + 12\rbr*{\frac{\Rmax + \gamma}{\beta}}\frac{\log\rbr*{\frac{4\Rmax}{\veps\delta} }}{N}   
    \\
    \wh\Phi(\lambda) - \frac{9}{8} \Phi(\lambda) & \le \frac{15}{8}\frac{\veps}{\beta}  + 12\rbr*{\frac{\Rmax + \gamma}{\beta}}\frac{\log\rbr*{\frac{4\Rmax}{\veps\delta} }}{N}, 
  \end{align}
  and choosing $\veps=\frac{\beta}{15}$ results in 
  \begin{align}
    \frac{7}{8} \Phi(\lambda) - \wh\Phi(\lambda) 
    &\le \frac{1}{8} 
    +  12\rbr*{\frac{\Rmax + \gamma}{\beta}}\frac{\log\rbr*{\frac{60\Rmax}{\beta\delta} }}{N}   
    \\
    \wh\Phi(\lambda) - \frac{9}{8} \Phi(\lambda) 
    &\le \frac{1}{8} 
    +  12\rbr*{\frac{\Rmax + \gamma}{\beta}}\frac{\log\rbr*{\frac{60\Rmax}{\beta\delta} }}{N}   
  \end{align}
  which when combined proves the lemma statement. 
\end{proof}

\subsection{Proof of Theorem \ref*{thm:lower}}

\begin{proof}[Proof of \cref{thm:lower}]
  Fix $N \lesssim \frac{1}{\vepsrm}$.
  We apply the first part of \cref{thm:cone-regret-lower} with
  $\vepsrm=\vepsrm$, $p=\frac{1}{2}$,
  $C = O(\vepsrm^{-1})$,
  and $\veps = \frac{1}{c_1 \cdot N\cdot\sqrt{2\Cone[\pistar](x)}}$.
  Note that the construction has $\Cone[\pistar](x) = O(\log C)$.
  With this set of parameters, any algorithm $\cA$ using $N'$ such that
  \begin{align}
    N'
    < c_1 \cdot \rbr*{\Cone[\pistar](x) \cdot \veps^2}^{-\frac{1}{2}}
    = c_1 \cdot \rbr*{\frac{1}{2 c_1^2 N^2} }^{-\frac{1}{2}}
    = 2N
  \end{align}
  must suffer regret at least
  \begin{align}
    J(\pistar) - J(\pihat_\cA)
    &> c_2 \cdot \rbr*{\Cone[\pistar](x) \cdot \veps^2}^{\frac{1}{2}}
    \\
    &= c_2 \cdot \rbr*{\frac{\Cone[\pistar](x)}{2 c_1^2\cdot\Cone[\pistar](x)\cdot N^2}}^{\frac{1}{2}}
    \\
    &= \frac{c_2}{2 c_1} \cdot \frac{1}{N},
  \end{align}
  which is therefore a lower bound that applies to $N' = N$.

\end{proof}

}
\arxiv{
  
}

\end{document}